\DeclareRobustCommand\onedot{\futurelet\@let@token\@onedot}
\def\@onedot{\ifx\@let@token.\else.\null\fi\xspace}
\def\eg{\emph{e.g}\onedot} 
\def\ie{\emph{i.e}\onedot}
\def\parencite{\citep}
\def\textcite{\citet}
\providecommand{\main}{.}
\newcommand{\scE}{\mathrm{E}}
\newcommand{\sS}{\mathcal{S}}
\newcommand{\sR}{\mathcal{R}}
\newcommand{\sD}{\mathcal{D}}
\newcommand{\sU}{\mathcal{U}}
\newcommand{\sV}{\mathcal{V}}
\newcommand{\sE}{\mathcal{E}}
\newcommand{\sG}{\mathcal{G}}
\newcommand{\sW}{\mathcal{W}}
\newcommand{\sT}{\mathcal{T}}
\newcommand{\sM}{\mathcal{M}}
\newcommand{\zeros}{\boldsymbol{\mathrm{0}}}
\newcommand{\ones}{\boldsymbol{\mathrm{1}}}
\newcommand{\bz}{\textbf{z}}
\newcommand{\bc}{\textbf{c}}
\newcommand{\bgh}{\hat{\boldsymbol{g}}}
\newcommand{\gh}{\hat{g}}
\newcommand{\bg}{\boldsymbol{g}}
\newcommand{\bx}{\textbf{x}}
\newcommand{\br}{\textbf{r}}
\newcommand{\bzh}{\hat{\boldsymbol{z}}}
\newcommand{\bw}{\textbf{w}}
\newcommand{\bu}{\textbf{u}}
\newcommand{\bus}{\bu^{(s)}}
\newcommand{\uis}{u_i^{(s)}}
\newcommand{\dis}{d_i^{(s)}}
\newcommand{\ds}{d^{(s)}}
\newcommand{\uTil}{\tilde{u}}
\newcommand{\uTils}{\uTil^{(s)}}
\newcommand{\zTil}{\tilde{z}}
\newcommand{\zTils}{\tilde{z}^{(s)}}
\newcommand{\bzTil}{\tilde{\bz}}
\newcommand{\bzTils}{\tilde{\bz}^{(s)}}
\newcommand{\brho}{\boldsymbol{\rho}}
\newcommand{\fz}{f_z}
\newcommand{\Eps}{\boldsymbol{\epsilon}}
\newcommand{\Nz}{\overline{\textbf{z}}}
\newcommand{\nz}{\overline{z}}
\newcommand{\Nu}{\overline{\textbf{u}}}
\newcommand{\Nus}{\overline{\textbf{u}}^{(s)}}
\newcommand{\Ntheta}{\overline{\boldsymbol{\theta}}}
\newcommand{\ntheta}{\overline{\theta}}
\newcommand{\paddedtheta}{\overset{}{\theta}}
\newcommand{\Zeta}{\boldsymbol{\zeta}}
\newcommand{\nZeta}{\overline{\boldsymbol{\zeta}}}
\newcommand{\R}{\mathbb{R}}
\newcommand{\E}[2]{\underset{\scriptscriptstyle #1}{\mathbb{E}}\left[#2\right]}
\newcommand{\Var}[1]{\mathrm{Var}\left[#1\right]}
\newcommand{\Corr}[1]{\mathrm{Corr}\left[#1\right]}
\newcommand{\Cov}[1]{\mathrm{Cov}\left[#1\right]}
\newcommand{\mP}{\mathscr{P}}
\newcommand{\corners}{\{0,1\}^d}
\newcommand{\hypercube}{[0,1]^d}
\newcommand{\argmin}[1]{\underset{#1}{\mathrm{arg\ min}}\ }
\newcommand{\argmax}[1]{\underset{#1}{\mathrm{arg\ max}}\ }
\newcommand{\mymin}[1]{\underset{#1}{\mathrm{min}}\ }
\newcommand{\mymax}[1]{\underset{#1}{\mathrm{max}}\ }
\newcommand{\lz}[1]{\|#1\|_0 }
\newcommand{\diff}{\mathrm{d}}
\newcommand{\ind}{\mathbbm{1}}
\newcommand{\euler}{\mathrm{e}}
\newcommand{\sigmoid}{\sigma}
\newcommand{\logit}{\mathrm{Logit}}
\newcommand{\Ber}{\mathrm{Ber}}
\newcommand{\Exp}{\mathrm{Exp}}
\newcommand{\btheta}{\boldsymbol{\theta}}
\newcommand{\bzs}{\bz^{(s)}}
\newcommand{\bzsp}{\bz^{(s')}}
\newcommand{\zsi}{z_i^{(s)}}
\newcommand{\OrdSetMinusZs}{(\bzsp)_{s' \neq s}}
\newcommand{\Jh}{\hat{J}}
\def\multiset#1#2{\ensuremath{\left(\kern-.3em\left(\genfrac{}{}{0pt}{}{#1}{#2}\right)\kern-.3em\right)}}
\newsavebox{\smlmatAll}
\newsavebox{\smlmatA}
\newsavebox{\smlmatB}
\newsavebox{\smlmatC}
\newsavebox{\smlmatD}
\setlist[enumerate]{leftmargin=12pt,itemsep=0pt}
\setlist[itemize]{leftmargin=12pt,itemsep=0pt}
\newcommand{\eqlabelleft}{(}
\newcommand{\eqlabelright}{)}
\newcommand{\Crefalt}[1]{%
	\begingroup%
	\renewcommand{\eqlabelleft}{}%
	\renewcommand{\eqlabelright}{}%
	\Cref{#1}%
	\endgroup%
}
\newcommand{\PreserveBackslash}[1]{\let\temp=\\#1\let\\=\temp}
\newcolumntype{C}[1]{>{\PreserveBackslash\centering}p{#1}}
\newcolumntype{R}[1]{>{\PreserveBackslash\raggedleft}p{#1}}
\newcolumntype{L}[1]{>{\PreserveBackslash\raggedright}p{#1}}
\begin{document}

\title{What to Do When Your Discrete Optimization Is the Size of a Neural Network?}

\author{\name Hugo Silva \email hugoluis@ualberta.ca\\
		\name Martha White \email whitem@ualberta.ca \\
	\addr Department of Computing Science, Alberta Machine Intelligence Institute (Amii)\\
	University of Alberta\\
	Edmonton, Alberta, Canada
}

\editor{XXXX}

\maketitle

\begin{abstract}
Oftentimes, machine learning applications using neural networks involve solving discrete optimization problems, such as in pruning, parameter-isolation-based continual learning and training of binary networks. Still, these discrete problems are combinatorial in nature and are also not amenable to gradient-based optimization. Additionally, classical approaches used in discrete settings do not scale well to large neural networks, forcing scientists and empiricists to rely on alternative methods. Among these, two main distinct sources of top-down information can be used to lead the model to good solutions: (1) extrapolating gradient information from points outside of the solution set (2) comparing evaluations between members of a subset of the valid solutions. We take continuation path (CP) methods to represent using purely the former and Monte Carlo (MC) methods to represent the latter, while also noting that some hybrid methods combine the two. The main goal of this work is to compare both approaches. For that purpose, we first overview the two classes while also discussing some of their drawbacks analytically. Then, on the experimental section, we compare their performance, starting with smaller microworld experiments, which allow more fine-grained control of problem variables, and gradually moving towards larger problems, including neural network regression and neural network pruning for image classification, where we additionally compare against magnitude-based pruning.
\end{abstract}

\begin{keywords}
  pseudo-Boolean, discrete optimization, Monte Carlo, neural networks, numerical continuation, gradient estimation
\end{keywords}

\section{Introduction}

Problems in the mathematical sciences often involve the minimization of a function $J(\cdot)$ over some set $\mathcal{S}$. Initializing some tentative parameters, then changing them according to how $J(\cdot)$ behaves locally is one way to approach these problems. Assuming the current local direction in which $J(\cdot)$ decreases the most leads to reasonable next values of the tentative parameters, we can recalculate and follow it iteratively until local improvement no longer seems possible. If such a direction is determined using the gradient, methods following the above procedure belong to the field of gradient-based optimization \parencite{chong2013introduction, boyd2004convex}.

Gradient-based optimization, however, is not applicable in many problems of interest to mathematicians. Firstly, the objective function has to be differentiable, otherwise it is not possible to calculate the gradients. Furthermore, following the direction of the gradient often involves changing the tentative parameters slightly in the desired direction. In general, for the next potential solutions to also belong to $\mathcal{S}$, all points in a small enough neighbourhood around the current parameter values have to also belong to $\mathcal{S}$. Tabular problems, where $S$ consists of a finite set of values, for example, are not suitable for gradient-based optimization, since no two elements of $S$ are arbitrarily close to each other.

Pseudo-Boolean (PB) functions are mappings from $d$ dimensional binary vectors to $\R$ and correspond to tabular functions that map each of the $2^d$ possible entries to a real number \parencite{boros2002pseudo}. They are called pseudo because they map to the reals, whereas a Boolean function has binary outputs (for an overview of the latter, see \citealt{o2014analysis}). Exact optimization of a PB function is of combinatorial nature: for the general case, where there is no special structure one can take advantage of, finding the best binary value involves trial-and-error evaluation of all elements in the set, which can be prohibitive in case $d$ is also large \parencite{korte2011combinatorial}. 

Then again, such problems appear in many different fields, including game theory \parencite{hammer1992approximations}, computer science \parencite{karp2010reducibility, madhulatha2012overview}, VLSI design \parencite{junger1994quadratic, boros1999optimal}, statistical mechanics \parencite{phillips1994quadratic}, finance \parencite{hammer1971applications}, manufacturing \parencite{kubiak1995new} and operations research \parencite{picard1978cut}. The combinatorial nature of finding the optimal point in some of these large problems can sometimes make it necessary to forfeit exact optimization in favour of approximate solutions. Common approaches used in these types of problems are local search methods \parencite{hansen1986steepest, johnson1988easy, glover1990tabu}, tabu search \parencite{glover1989tabu, glover1990tabu}, quadratization \parencite[Section 4.4]{boros2002pseudo}, branch-and-bound \parencite{lawler1966branch} and genetic algorithms \parencite{kramer2017genetic}. Occasionally, some of these can work in polynomial time. More recently, to cope with this high complexity, studies have started applying quantum computing to speed up optimization of some PB problems \parencite{mcgeoch2013experimental, montanaro2016quantum}. Still, these studies are preliminary and of limited application.

One particular case of such problems appears when combining discrete optimization with neural networks (NN). The latter have become increasingly popular in the last decade, specially since convolutional networks achieved breakthrough performance on Imagenet in 2012 (a work later published by \citealt{krizhevsky2017imagenet}), gaining even more popularity recently after the announcement of ChatGPT, based on the transformer model proposed by \textcite{vaswani2017attention}. Improving certain functionalities of NN sometimes involves minimizing PB functions. This is the case, for example, when trying to regularize the network with pruning \parencite{hoefler2021sparsity}; reduce memory and computation requirements via training binary networks \parencite{qin2020binary}, mixed-precision quantization \parencite[section IV-B]{gholami2022survey}, or, again, by pruning; when allocating sub-networks to different sub-problems, such as in sequential task learning \parencite{parisi2019continual, de2021continual, mai2022online} or multitask learning \parencite{crawshaw2020multi, caruana1997multitask}. PB optimization also appears in ticket search, with applications in transfer learning \parencite{morcos2019one, mehta2019sparse, van2019using}. In such cases, the dimensionality of the desired binary vector can scale with the size of the model, which can reach billions or trillions of parameters, such as in \textcite{fedus2022switch, lepikhin2020gshard}.

When confronted with these kinds of problems, machine learning researchers tend to adopt solutions different from the ones mentioned previously. In this paper, we have roughly categorized the approaches in three distinct groups: continuation path (CP) methods \parencite{allgower2003introduction}, which start from a smooth problem and change it gradually back to the discrete one; Monte Carlo (MC) gradient estimation \parencite{mohamed2020monte}, which model the problem with a probability distribution and use sampling to track good solutions; hybrid methods, which combine the two ideas above by relying on sampling, but also using the gradient of the cost function in some way. Admittedly, these will not cover all existing methods. For example, sometimes one can use problem-specific approaches instead. However, this categorization captures fundamentally distinct ways to approximately solve the problems that have each gathered sufficient attention to warrant further understanding.

The goal, therefore, is twofold: we first aim at understanding whether these approaches used for NN problems are consistent from a more general PB optimization perspective and then we study how they compare against each other. For the first goal, we will introduce each category, expanding on where the ideas come from while also presenting analytical results and simple counter-examples to illustrate their strengths and weaknesses. For the second goal, we have designed multiple benchmarks to investigate how their relative performances and properties change as we increase the dimensionality of the problem. Initial experiments are set in smaller scenarios, providing additional insight and allowing us to connect results to the theory by using some closed-form expressions, which are normally impossible to compute. Then, to both understand the effect of overparametrization and to contrast the methods in more practical scenarios, later experiments are set in larger problems. We will mainly focus in comparing MC and CP approaches, but will also include hybrid approaches in some of these experiments.

Our analysis suggests that both CP and MC methods fall short of being robust options for PB optimization. CP, despite its origins in the well-explored domain of numerical continuation---which aims to identify the roots of nonlinear systems---loses its reliability when extended to PB optimization. This inadequacy is evident even in simple counter-examples, and is later confirmed by our microworld experiments. However, our larger-scale experiments reveal that integrating CP with expansive neural networks significantly enhances performance. This improvement may be linked to the Lipschitz constant of the loss function, a connection hinted at by surveyed literature. We hypothesize that the neural network's functional form, combined with overparametrization, may alter derivative magnitudes in a way that benefits PB optimization. In contrast, MC methods exhibit a different trend. Unlike CP approaches, their objective is equivalent to that of the original PB optimization and they perform well in smaller-scale experiments. Yet, in larger settings, CP methods decisively outperform them. This superiority persists regardless of adjustments to their parametrization, estimator or even large increases in sample size compared to CP methods---some of these larger experiments used $100$ samples for MC methods, but to no avail. Our analytical findings underscore two major limitations of MC methods: their dependency on the current distribution and a disruptive interaction among potential solutions to PB optimization. We demonstrate that the latter can lead to catastrophic outcomes, even with an infinite number of samples.

\subsection{Outline and Contributions}

We start \Cref{ch:pb_opt} by formalizing the pseudo-Boolean optimization objective and presenting examples specific to machine learning with neural networks, followed by introducing some basic notation and definitions. Then, the section concludes by categorizing and describing methods used to approximately solve PB optimization problems of a more general scope than the NN problems presented previously.

The subsequent sections are about the approaches used with neural networks. \Cref{ch:numerical_continuation} talks about CP methods, which are rooted in numerical continuation. The section starts by explaining how numerical continuation surfaces as a way of finding roots of a system of nonlinear equations. Then we expand the discussion by talking about one way of adapting it to be used with neural networks. We conclude the section by designing some simple counter-examples where the intuition behind these approaches does not hold, causing the method to fail.

In \Cref{ch:monte_carlo}, we talk about MC gradient estimation, where we first introduce the probabilistic formulation of the problem and how it is approximated by sampling. We then discuss three main distinct ways of estimating the gradient. Afterwards, we introduce the estimators to be used in our experiments. When doing so, we derive some new algorithms, including iterative variants of REINFORCE, ARMS and LOORF that do not have memory cost scaling with the number of samples, something we will need in the larger experiments. We also derive the closed-form expression of the intractable constant baseline that can reduce the variance the most, which we compare against in our smaller experiments (\Cref{thm:bStarOne,thm:bStarTwo}). Next, we talk about the main drawbacks of this method, where we present counter-examples. For one of them, we derive a bound stating that the minimum value of the cost function has to go to minus infinity exponentially with respect to the problem size, otherwise such problem will not be solvable via the probabilistic formulation (\Cref{thm:mc_bad_gen_2} and \Cref{cor:mc_bad_gen_2}). This happens because of an effect which we call \emph{unwanted generalization} between solutions, which cannot be overcome by reducing the variance with more samples. Finally, we conclude the section by presenting multiple ways of parametrizing the solution, all of which we will compare in the experiments. \Cref{ch:other} briefly talks about hybrid approaches, which combine the two ideas from the previous sections.

\Cref{ch:experiments} corresponds to the experimental part. We start on two newly introduced smaller benchmarks, which we refer to as microworlds. The first defines candidate solutions based an exponential distribution and the second inputs the binary entry to a fixed neural network, which acts directly as the loss function. Afterwards, we move to a larger regression setting and we introduce yet another new benchmark, where a neural network with fixed weights has to learn the mapping from a more complex NN architecture by varying only binary masks. Finally, we move to the pruning setting, where we perform experiments on MNIST and CIFAR-10. We experiment with both training the binary mask of a fixed backbone network as well as jointly training weights and masks. We compare different estimators and parametrizations of the MC approach, as well as perform comparisons between MC, CP and hybrid methods. We conclude the paper with a discussion of the main takeaways from the study in \Cref{ch:conclusion}.

\section{Pseudo-Boolean Optimization}
\label{ch:pb_opt} 

In this section, we start by exemplifying how some problems involving neural networks correspond to pseudo-Boolean optimization objectives. Then, we present basics of PB optimization. Finally, we briefly go over some common classes of algorithms used to find (approximate) solutions. We emphasize, that, although such algorithms are common in more general PB optimization, they are not so prevalent in problems involving neural networks. The subsequent sections of this paper will describe some of the main approaches for the latter case, which will also be the ones we will use in our experiments.

\subsection{Neural Networks and PB Optimization}
\label{sec:pb_nn_examples}
Pseudo-Boolean optimization involves solving the following minimization:
\begin{equation}
	\label{eq:pb_prob}
	\bz^* = \argmin{\bz \in \corners} J(\bz).
\end{equation}
This is a tabular problem whose solution requires a combinatorial search over $2^d$ candidates (for the general case). We will now present some examples of PB optimization involving neural networks, whereas in \Cref{sec:pb_examples} we present more general cases belonging to fields such as graph optimization and computational complexity theory.

When dealing with neural networks, we can expect $d$ to be a very large number and the underlying problem to have no special structure that can be explored. Inception-V3 \parencite{szegedy2016rethinking}, for example, has 5.7 billion arithmetic operations and 27 million parameters, whereas GPT-3 \parencite{brown2020language} requires 175 billion parameters. Some works even have models requiring trillions of parameters \parencite{fedus2022switch, lepikhin2020gshard}. In the examples below, $\bz$ scales with the size of the model.

\begin{example}[Training of Binary Neural Networks]
	\label{ex:pb_binary}
	Considering problems where the model is parametrized by a neural network, minimizing the loss corresponds to:\footnote{We emphasize that some problems have an infimum, but not a minimum. Still, a solution close to the lower bound is often desired even in these cases. When we say the goal is to minimize the problems we refer to both situations.}
	\begin{equation*}
		\text{minimize } J(\bw) = \E{X,Y \sim \sD_s}{\ell(f(X; \bw), Y)}, \qquad \text{for }\bw\in \sW.
	\end{equation*}	
	Where $X$ and $Y$ are the data and labels respectively, $\sD_s$ is the data distribution, $\bw$ represents the  weights of the neural network, which belong to the set $\sW$. The neural network evaluation is denoted by $f(\cdot)$ and $\ell(\cdot)$ is the loss, which can be, for instance, a regression or a classification loss.
	
	Sometimes it is desirable to constrain neural network weights to only have a discrete set of possible values, which can be due to memory constraints or to obtain hardware speedups. Work done in \textcite{courbariaux2015binaryconnect, hubara2016binarized}, for example, allows weights to be either $1$ or $-1$. The problem becomes finding:
	\begin{equation*}
		\argmin{\bz \in \corners} \E{X,Y \sim \sD_s}{\ell(f(X; 2\bz - \ones ), Y)}.
	\end{equation*}
	
\end{example}

\begin{example}[Neural Network Pruning]
	\label{ex:pruning}
	This problem consists of finding a smaller subnetwork that performs as well as possible, which can serve two main purposes: regularization or model compression. The search of such a subset of weights corresponds to the following problem:
	\begin{equation}
		\label{eq:pruning_l0_raw}
		\text{minimize } \E{X,Y \sim \sD_s}{\ell(f(X; \bw), Y)} + \lambda \lz{\bw}, \qquad \text{for }\bw\in \sW,
	\end{equation}
	where the $L_0$ norm is given by:
	\begin{equation*}
		\lz{\bw} = \sum_{i=1}^d \ind[w_i \ne 0].
	\end{equation*}
	We can defer the $L_0$ norm of the original weights to an auxiliary discrete variable $\bz$, obtaining:
	\begin{equation}
		\text{minimize } \E{X,Y \sim \sD_s}{\ell(f(X; \bw \odot \bz ), Y)} + \lambda \sum_{i=1}^d z_i, \qquad \text{for }\bw\in \sW,\ \bz \in \corners,
		\label{eq:pruning_l0_deferred}
	\end{equation}
	which, in turn, consists of the concurrent minimization with respect to the main weights $\bw$, as well as the discrete variable $\bz$. The latter is an instance of PB optimization. In case we only optimize $\bz$ and leave $\bw$ fixed, we say that the problem is of finding supermasks.
\end{example}

\begin{example}[Sequential Task Learning]
	This setting is often used as a simplification of the more general continual learning setting \parencite{parisi2019continual, de2021continual, mai2022online}. The model has to solve tasks from an incoming stream, one at a time, and the final performance across all tasks is measured. Considering supervised learning and a sequence of $T$ tasks, the raw objective then is:
	\begin{equation*}
		\text{minimize } \sum_{t=1}^T \E{\sD_s^{(t)}}{\ell(f_t(X^{(t)}; \bw), Y^{(t)})}, \qquad \text{for }\bw\in \sW.
	\end{equation*}
	Where $X^{(t)}$, $Y^{(t)}$, $\sD_s^{(t)}$ and $f_t$ represent the examples, labels, data set and network function for task $t$. One of the main challenges that appear in this context is catastrophic forgetting, where the adaptation to the later tasks causes forgetting of the earlier ones \parencite{sutton1986two, mccloskey1989catastrophic, french1999catastrophic}. In other words, the plasticity necessary to solve further tasks often comes at the cost of stability.
	
	Protecting weights important to the initial tasks is one way of balancing plasticity and stability. There is a class of methods called parameter isolation that does so by allocating parts of the network to different tasks. Since this approach consists of a search over binary vectors indicating whether weights should be allocated to some task $t$ or not, it is also an instance of PB optimization. In fact, HAT \parencite{serra2018overcoming} and Packnet \parencite{mallya2018packnet} are two parameter isolation approaches representative of the CP methods from \Cref{ch:numerical_continuation} and the magnitude-based approaches we used in our pruning experiments respectively, whereas Pathnet \parencite{fernando2017pathnet} uses genetic algorithms to approximately solve this problem.
\end{example}

The problem of having a model trained concurrently to solve multiple tasks, instead of sequentially, is called multitask learning and also includes approaches that allocate subsets of the network to different tasks \parencite{crawshaw2020multi, caruana1997multitask}. Yet another PB optimization problem using neural networks is that of finding lottery tickets, which consist of sparse subnetworks that, when trained in isolation, can reach performances similar to the dense model, even outperforming it in some cases \parencite{pensia2020optimal, lee2019signal, frankle2018lottery}. Interestingly, these tickets can sometimes be transferred between data sets \parencite{morcos2019one, mehta2019sparse, van2019using}.

Finally, mixed-precision NN quantization \parencite[section IV-B]{gholami2022survey} involves finding the best precision to use for each layer of a neural network. Notably, the model performance will be more affected by changing some of them to be of lower precision than others. The search for the right combination with respect to the performance-memory tradeoff is again a discrete search problem that falls in the PB optimization framework.

\subsection{Basics}
\label{sec:pb_basics} 

This section presents some basic concepts of PB optimization. We will often use boldface to denote vectors, unless otherwise noted. A pseudo-Boolean function $J(\bz)$ is a mapping $\corners \rightarrow \R$. One simple way to identify such a function is to note that it is a table containing $2^d$ entries, each entry corresponding to one of the possible inputs. Additionally, there is a one-to-one correspondence between each such input and the binary representation of the first $2^d$ natural numbers. To denote this relation, we will use $\Zeta_h$ to represent the vector in $\corners$ satisfying:
\begin{equation*}
	\sum_{i=1}^d 2^{i-1} (\Zeta_h)_{i} = h \qquad \text{ for } h \in \{0,...,2^d-1\},
\end{equation*}
where the notation $(\Zeta_h)_i$ is used to index the i-th element of the vector $\Zeta_h$. Sometimes, to avoid clutter, we will denote the i-th element of a vector $\bz$ as simply $z_i$. To exemplify, considering $d=2$ we have:
\begin{align*}
	\Zeta_0 = \begin{bmatrix} 0 \\ 0 \end{bmatrix};
	\Zeta_1 = \begin{bmatrix} 1 \\ 0 \end{bmatrix};
	\Zeta_2 = \begin{bmatrix} 0 \\ 1 \end{bmatrix};
	\Zeta_3 = \begin{bmatrix} 1 \\ 1 \end{bmatrix}
\end{align*}
and $J(\bz)$ can be represented by the table listing all possible $J(\Zeta_h)$. Notably, any alternative function $J': [0,1]^d \rightarrow \R$ satisfying:
\begin{equation}
	\label{eq:different_J}
	J(\Zeta_h) = J'(\Zeta_h), \qquad \text{ for } h \in \{0,...,2^d-1\}
\end{equation}
can also be used to generate the same mapping, regardless of how $J'(\cdot)$ behaves outside of $\corners$. Methods from the next sections will use different functions which will all correspond to the same problem, but result in very different optimization behaviour, specially outside of the binary values. We can additionally represent $J(\bz)$ with the following equivalent form:
\begin{equation}
	\label{eq:multi_poly}
	\mP_J(\bz) = \sum_{h=0}^{2^d-1} \left( \prod_{i=1}^d z_i^{(\Zeta_h)_i} \nz_i^{(\nZeta_h)_i} \right) J(\Zeta_h),
\end{equation}
where we denote $\Nz=\ones-\bz$, for $\ones = [1,\cdots,1]^\top$, and we adopt the convention $0^0=1$. For reference, $\nz$ and $z$ are often called literals. We note that $\mP_J(\bz) = J(\bz)$ for all $\Zeta_h$, since the only summand that is $\neq 0$ for $\bz = \Zeta_k$ is the iterate where $h=k$ and the inner product inside the parenthesis will evaluate to $1$ for this summand.

Note also that, for two different $J'(\cdot)$ and $J''(\cdot)$ satisfying \Cref{eq:different_J}, $\mP_{J'}(\bu) = \mP_{J''}(\bu)$ for $\bu \in [0,1]^d$ (as opposed to simply for $\bu \in \corners$). This will be important later in this work, when we explore the behavior of $\mP_J(\cdot)$ outside of $\corners$. In particular, $\mP_J(\cdot)$ is the objective for the MC methods from \Cref{ch:monte_carlo}, with optimization happening over real values. Sometimes we will have some control over $J(\cdot)$ and changing it to, for example, facilitate optimization can be a good choice, for it may not interfere with $\mP_J(\bz)$.

The form in \Cref{eq:multi_poly} is a multilinear polynomial, since its terms never appear exponentiated by more than a factor of $1$. Considering $\sD = \{1,2,...,d\}$, the polynomial can be alternatively denoted as:
\begin{equation}
	\label{eq:multi_poly_set}
	\mP_J(\bz) = \sum_{\sS \subseteq \sD} w_{\sS}\ \prod_{i \in \sS} z_i.
\end{equation}
Where $w_{\sS}$ are weights assigned to subsets of $\sD$ and the product is one for the empty set. The degree of the PB function is the size of the largest $\sS \subseteq \sD$ for which $w_{\sS} \neq 0$ in \Cref{eq:multi_poly_set}. See \Cref{sec:pb_basics_appendix} for additional properties of the multilinear polynomial, as well as a simple illustration of these forms for two dimensions. Finally, in PB optimization, some local search methods use the following alternative definition of derivative:
\begin{align} \label{eq:pd_der}
	\Delta_i (\bz) = J(z_1,\cdots,&z_{i-1} , 1, z_{i+1},\cdots,z_{d}) - J(z_1,\cdots,z_{i-1}, 0, z_{i+1},\cdots,z_{d}).
\end{align}
In addition to being used in local search, we will see that this difference will appear is some results from \Cref{ch:monte_carlo}. On this study, whenever we mention the term derivative, it will always refer to the conventional derivative, not the one in \Cref{eq:pd_der}, unless specifically stated.

\subsection{Algorithmic Approaches}
\label{sec:pb_algs}

To give the reader a broad overview of common approaches used more generally in PB optimization problems, \Cref{tab:pb_approaches} briefly lists and describes some of them. \Cref{sec:pb_algs_detailed} further explains such approaches and gives some references in case the reader wants to know more about each of them. We mention that these approaches are common in general PB optimization, but not so much in NN problems. For these problems, variants of the methods discussed in the subsequent sections are more prevalent. Sometimes, the methods from \Cref{tab:pb_approaches} can leverage special structure and find solutions in polynomial time. We reiterate, however, that more general cases may not contain such structure to be exploited, limiting the applicability of these methods. This is likely to be the case for many NN problems. Furthermore, in our case, $d$ is high and a single pass through all of the dimensions might be prohibitive, causing even the best-case polynomial algorithms to be unsuitable. Namely, $d$ will be the number of weights on a neural network, sometimes reaching the order of billions or trillions. High dimensions therefore warrant alternative approaches, some of which are explored in this study.
\begin{table}
	\centering
	\begin{tabular}[htb!]{|@{}C{0.2\textwidth}@{}|p{0.7\textwidth}|}
	\hline
	\textbf{Method} & \centering \textbf{Description} \tabularnewline
	\hline
	Local search & Greedily searches the current neighbourhood, keeping the best solution found thus far. \\ \hline
	Tabu search & Modifies local search methods, allowing further exploration when local improvement no longer seems possible. \\ \hline
	Simulated annealing & Controls the exploration with a temperature parameter, which is gradually reduced throughout training. \\ \hline
	Graph cuts & Reinterpret the PB optimization as a graph problem and then leverage results from graph optimization. \\ \hline
	Genetic algorithms & Searches in a structured way, taking inspiration from biological evolution. \\ \hline
	Branch-and-bound & Subdivides the search space and searches sub-regions individually. \\ \hline
	Quadratization & Directly works with \Cref{eq:multi_poly_set}, reducing the degree of the polynomial. \\ \hline
	Approximation algorithms & Change the original problem for an approximation of it that admits more efficient solutions. \\
	\hline
	\end{tabular}
	\caption{Common approaches used in classic PB optimization problems, but not so much in neural network PB optimization problems.}
	\label{tab:pb_approaches}
\end{table}

\section{Numerical Continuation}
\label{ch:numerical_continuation}
This section introduces concepts of numerical continuation, which serve as the basis for the continuation path methods (CP) that we will study in this work. \Cref{sec:cp_basics} introduces basics of numerical continuation, \Cref{sec:pb_application} explains how to use numerical continuation to approximately solve pseudo-Boolean optimization problems. We note that there are alternative ways to approach PB optimization with CP methods, but we will restrict our discussion to its recent use on pruning and sequential task learning. Finally, \Cref{sec:cp_drawbacks} presents very simple failure cases of CP applied to PB optimization and we reason about when these methods can fail.

\subsection{Basics}
\label{sec:cp_basics}
Continuation methods, sometimes called embedding or homotopy methods, are useful when solving a system of equations of the form
\begin{equation}
	\label{eq:cont_desired}
	F(\bx) = \zeros
\end{equation}
where $F: \R^n \rightarrow \R^n$ is smooth. There exist iterative methods for finding roots of $F(x)$, such as Newton-Raphson, where
\begin{equation*}
	\bx_{t+1} = \bx_{t} -  (\nabla F(\bx_t)) ^{-1} F(\bx_t)
\end{equation*}
with $\nabla F(\bx_t)$ being the $n \times n$ Jacobian matrix evaluated at $\bx_t$. These methods often depend on a good initial value $\bx_0$ and are unlikely to work if such an initialization is not available. Continuation methods start with a simpler problem, where finding (reasonable approximations of) the solution is relatively easy, and then transition iteratively to the original problem. The solutions found for the $k$-th problem should serve as reasonable initial values when applying methods akin to Newton-Raphson to the $k+1$-th problem, simplifying the search. Namely, we construct a smooth homotopy or continuation path $H: \R^{n+1} \rightarrow \R^n$, where

\begin{equation*}
	H(\bx, 0)  = F(\bx), \quad H(\bx, 1)  = G(\bx)
\end{equation*}	
and $G(\bx)$ is some simpler function where finding roots is easier. One possible homotopy, for example, is to simply use the interpolation between $G(\bx)$ and $F(\bx)$. Based on that idea, \Cref{alg:cp_alg}, often called embedding algorithm, describes the most direct way of approaching the problem.

\begin{algorithm}
	\caption{General procedure for embedding algorithms}
	\label{alg:cp_alg}
	\begin{algorithmic}
		\State \textbf{Input}: initial value $\bx_{init}$, homotopy $H(\cdot)$;
		\State Define $\tau$ schedule $1=\tau_0 > \tau_1 > \cdots > \tau_K = 0$
		\For{$k=0 \cdots K$}
		\State Find $\hat{\bx}_{k} \approx \argmin{\bx \in \R^n}{H(\bx, \tau_k)}$, starting with $\bx_{init}$ \\
		\Comment \qquad (\eg, using Newton-Raphson)
		\State $\bx_{init} \leftarrow \hat{\bx}_{k}$
		\EndFor
		\State \textbf{Return:} $\hat{\bx}_{K}$
	\end{algorithmic}
\end{algorithm}

Denoting by $H^{-1}(\zeros)$ the set of solutions of $H(\cdot)$ such that
\begin{equation*}
	H^{-1}(\zeros) = \{ (\bx, \tau) \mid H(\bx, \tau) = \zeros \},
\end{equation*}
implicitly, we attempt to trace a curve with elements $\in H^{-1}(\zeros)$ that starts with an easy to find solution $(\bx_{0}^*,1)$ and ends with $(\bx^*,0)$, where $\bx^*$ is a solution to \Cref{eq:cont_desired}. We denote this curve by $c(s)$, with $c:\R \rightarrow \R^{n+1}$. Note that this curve does not have to be parametrized with $s = \tau$. \emph{Predictor-corrector}, for instance, parametrize $s$ using the arclength instead \parencite[Chapter 1]{allgower2012numerical}.

Problems need to satisfy some assumptions for these methods to be guaranteed to work. In particular, the existence of such a smooth curve $c(s)$ starting from $c(0) = (\bx_{0}^*,1)$ is ensured if the Jacobian $\nabla H(\cdot)$ is full-rank at $c(0)$. Furthermore, this curve will have non-zero derivative at $s=0$. If the Jacobian is also full-rank for the other points from $H^{-1}(\zeros)$, the curve will have similar structural properties to a circle. Additionally, for the curve to also contain $(\bx^*,0)$, as opposed to going to infinity or returning to $(\bx_{0}^*,1)$, the problem must satisfy some boundary conditions. For more details, see \textcite[Chapter 11]{allgower2012numerical}.

Alternatively to \Cref{alg:cp_alg}, some methods approach the problem by varying the $s$ parameter directly. Particularly, one can write $H(c(s)) = \zeros$ and differentiate both sides with respect to $s$, combining the resulting equation with the requirement that $H(\bx_{0}^*,1) = \zeros$ to arrive at an initial value problem. Approximately solving the resulting differential equation can then yield a solution close to  $c(s)$. One of the main branches of numerical continuation methods, called \emph{predictor-corrector} methods, alternate between coarse numerical integration of the differential equation and using a local stabilizing step to eventually arrive at a solution close to $(\bx^*,0)$. Another main branch of homotopy methods, the \emph{piecewise linear} methods, instead consider piecewise linear approximation of the homotopy map. These are more general than \emph{predictor-corrector}, but they tend to perform worse in cases where both are applicable. Our discussion, nevertheless, will be centered in approaches following \Cref{alg:cp_alg}.

For a more detailed presentation of numerical continuation methods, see \textcite{allgower2012numerical}. \textcite[Chapter Homotopy Methods, Section 2]{horst2013handbook} additionally list many different applications of CP methods for engineering, economics and overall mathematical problems.

\subsection{Application to PB Optimization}
\label{sec:pb_application}

There are multiple ways to apply CP methods for PB optimization. One could, for example, take \Cref{eq:multi_poly} and use $\tau$ as a multiplicative factor zeroing out the higher order terms on the easier problems, then recovering them as $\tau \rightarrow 0$. Alternatively, one could attempt to approach the problem as constrained optimization, use Lagrange multipliers and then have $\tau$ control the strength of the constraints. We will, however, focus on parametrizing $\bz$ as a sigmoid. This idea surfaces by firstly noting that
\begin{equation*}
	\min_{\bz \in \corners} J(\bz) = \min_{\bx \in \R^d} J(\ind[\bx \geq 0])
\end{equation*}
where $\ind[\cdot]$ is the indicator function, which is applied element-wise and can represent any $\bz$ as long as we choose $\bx$ in the appropriate quadrant. Secondly, we note that
\begin{equation*}
	\lim_{\tau \rightarrow 0} \sigmoid \Big(\frac{x}{\tau}\Big) = \begin{cases} \ind[x > 0] & x \neq 0 \\ 0.5 & x=0 \end{cases}, \quad \text{where} \quad \sigmoid(x) = \frac{1}{1+\euler^{-x}}
\end{equation*}
is the logistic sigmoid function, which here we consider to be applied element-wise when the input is a vector. We will ignore the case $x = 0$, as it rarely occurs in practice when the algorithms get to $\tau \approx 0$. Combining the two observations, we can then define an auxiliary quantity $\tau$ and write the problem as
\begin{equation}
	\label{eq:sigmoid_cp}
	\text{minimize } \lim_{\tau \rightarrow 0} J\Big( \sigmoid \Big(\frac{\bx}{\tau}\Big)\Big), \qquad \text{for }\bx\in \R^d.
\end{equation}

\Cref{fig:sig_per_temp} shows how the plot of $\sigmoid(\cdot/\tau)$ changes as we vary the temperature $\tau$, as well as what happens to its derivative. Notably, the derivative becomes large near $x=0$ and almost zero everywhere else, causing gradient-based optimization to become infeasible for $\tau$ close to zero.

\begin{figure}[tb!]
	\centering
	\begin{tabular}{c}
		\includegraphics[height=1.9em]{\main/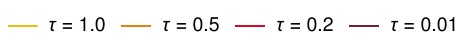}
	\end{tabular}
	
	\begin{tabular}{c c}
		\begin{subfigure}{0.4\columnwidth}
			\includegraphics[width=\columnwidth]{\main/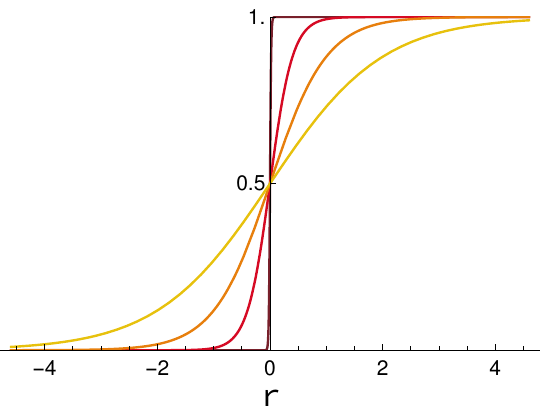}
			\caption{$\sigmoid \Big(\frac{x}{\tau}\Big)$}
		\end{subfigure}
		&
		\begin{subfigure}{0.4\columnwidth}
			\includegraphics[width=\columnwidth]{\main/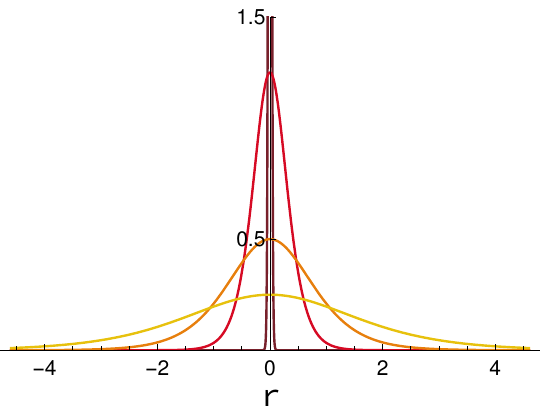}
			\caption{$\frac{\diff}{\diff x} \sigmoid \Big(\frac{x}{\tau}\Big)$}
		\end{subfigure}		
	\end{tabular}
	\caption{Effect of varying the temperature of a sigmoid.}
	\label{fig:sig_per_temp}
\end{figure}

Nonetheless, we can take an approach similar to the one from \Cref{sec:cp_basics}. Starting from  $\tau = 1$, which corresponds to a problem amenable to gradient-based optimization and iteratively changing $\tau$ to be closer to zero, we make the problem gradually closer to the desired one. Conveniently, the procedure also yields good starting solutions for the problems corresponding to the intermediate values of $\tau$. To make the connection to \Cref{sec:cp_basics} more explicit, for an arbitrary $\tau$, our goal is to find $\bx$ close to stationary, which satisfies
\begin{equation}
	\label{eq:}
	\nabla_\bx J\Big( \sigmoid \Big(\frac{\bx}{\tau}\Big)\Big) \approx \zeros.
\end{equation}
We reiterate that the gradient might not ever be exactly zero. For example, if $d = 1$ and $J(\cdot)$ is the identity mapping, the gradient can get arbitrarily close, but it will never be zero. Despite that, we can take:
\begin{equation*}
	H(\bx, \tau) = \nabla_\bx J\Big( \sigmoid \Big(\frac{\bx}{\tau}\Big)\Big)
\end{equation*}
and then follow \Cref{alg:cp_alg}. Instead of using Newton-Raphson, here it is more straightforward to simply use stochastic gradient descent (SGD).

Finally, the reader may have noted that we cannot use $\tau = 0$, as that involves division by zero. Be that as it may, a small enough $\tau$ will have the same effect, for limited computer precision will cause $\sigmoid(x / \tau) \in \{0,1\}$ if $x \neq 0$. Interestingly, this method is reminiscent of curriculum learning, where the learner starts with an easier problem that also becomes harder over time. \textcite{savarese2020winning,  yuan2020growing, luo2020autopruner} have successfully used this homotopy for pruning, whereas \textcite{azarian2020learned} also suggest it, but adopt a simplified version. \textcite{serra2018overcoming} have used it for sequential task learning.

\subsection{Drawbacks}
\label{sec:cp_drawbacks}

One concerning aspect of PB methods from \Cref{sec:pb_application} is the extrapolation of information from evaluations inside the hypercube (\ie, $\hypercube$) to the differences between evaluations at the vertices (\ie, $\corners$). To clarify, $\bx$ is initialized to some finite value, a reasonable choice being $\zeros$, so that $\sigmoid(x_i/\tau) = 0.5$, and SGD uses gradients evaluated there to arrive at a new $\bx$. The goal, however, is to find one of the vertices where $J(\cdot)$ is lower than in other elements from $\corners$. Since the gradient only indicates changes in a small neighbourhood around the current value, can one reliably use it to draw conclusions about $J(\cdot)$ in regions far from the current $\bx$?

Furthermore, as we mentioned in \Cref{sec:pb_basics}, any function whose evaluations on the corners are the same as from the multilinear polynomial will correspond to the same PB optimization problem (\Crefalt{eq:different_J} and the subsequent discussion). \Cref{fig:different_J} illustrates one such example, for $d=1$. The $J(\cdot)$ at hand will often not be the multilinear polynomial $\mP(\cdot)$ from \Cref{eq:multi_poly}. Since these derivatives can be completely different depending on the choice of $J(\cdot)$ is their information really relevant? For example, the derivative at $x=0$ can make it seem that the right choice is to go towards $z = 1$, but the (negative) gradient might as well have pointed to $z=0$ had a different $J(\cdot)$ been used. With all of these in mind, we present some failure cases for CP methods implementing the ideas from \Cref{sec:pb_application}.

\begin{figure}[tb!]
	\centering
	\includegraphics[width=0.95\columnwidth]{\main/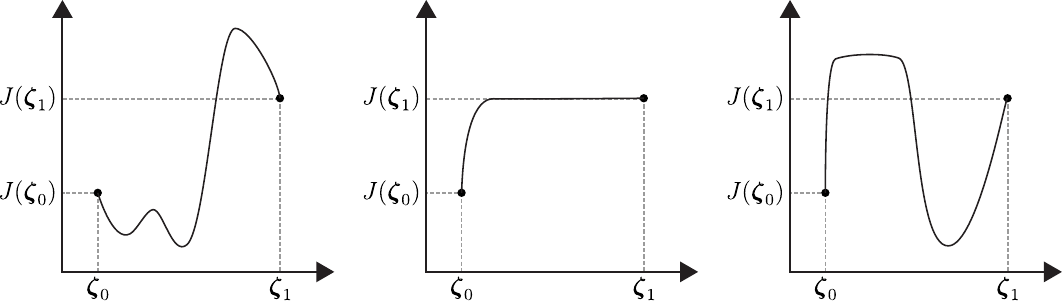}
	\caption{Completely different choices of $J(\cdot)$ lead to the same PB optimization problem.}
	\label{fig:different_J}
\end{figure}

\begin{figure}[tb!]
	\centering
	\begin{subfigure}{0.4\columnwidth}
		\centering
		\includegraphics[width=0.7\columnwidth]{\main/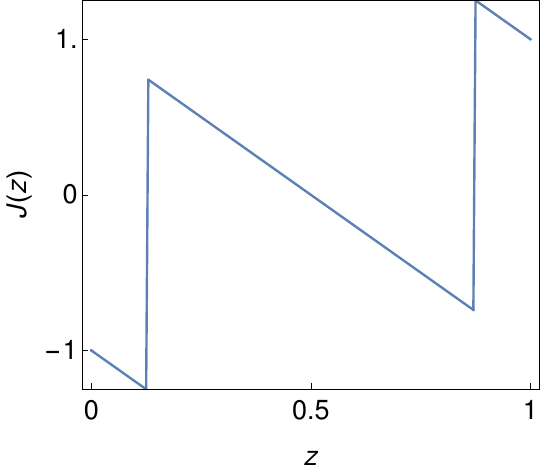}
		\caption{\Cref{ex:deceiving_piecewise}}
		\label{sb:deceiving_piecewise}
	\end{subfigure}%
	\begin{subfigure}{0.4\columnwidth}
		\centering
		\includegraphics[width=0.7\columnwidth]{\main/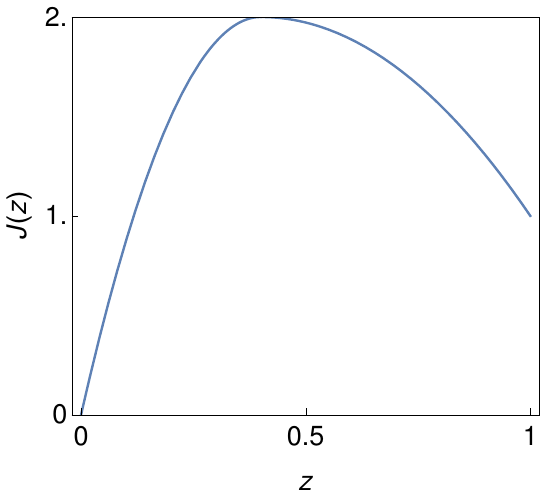}
		\caption{\Cref{ex:deceiving_squared}}
		\label{sb:deceiving_squared}
	\end{subfigure}		
	\caption{$J(\cdot)$ for the counter examples.}
	\label{fig:sig_per_temp}
\end{figure}

\begin{example}
	\label{ex:deceiving_piecewise}
	
	
	Consider the function shown in \Cref{sb:deceiving_piecewise}, whose expression we omit for conciseness. In this case, we have $J(0) < J(1)$. However, for most of the $(0,1)$ interval, the negative gradient points towards $z=1$. Taking conclusions about the behaviour of $J(\cdot)$ in $\{0,1\}$ using these derivatives is therefore incorrect. 
	
	Luckily, however, for $z \in [0.125,0.13)\cup[0.87,0.875)$, the negative gradient has a large magnitude and points towards $z=0$. Therefore, depending on the learning rate, the model might be able to reach these regions, which will in turn help lead it to the correct solution. \Cref{fig:deceiving_piecewise_runs} depicts running the CP algorithm for this loss starting with $x=0$ and training for $100$ iterations before updating $\tau$. On these examples, we use $J(\sigmoid(x/\tau))$ during training, but the curves only show $J(\ind[x > 0])$ at each timestep, since only $0$ and $1$ are valid solutions to the problem. Only the higher learning rates got to the correct solution.
	\begin{figure}[htb!]
		\centering
		\begin{subfigure}{0.25\columnwidth}
			\includegraphics[width=\columnwidth]{\main/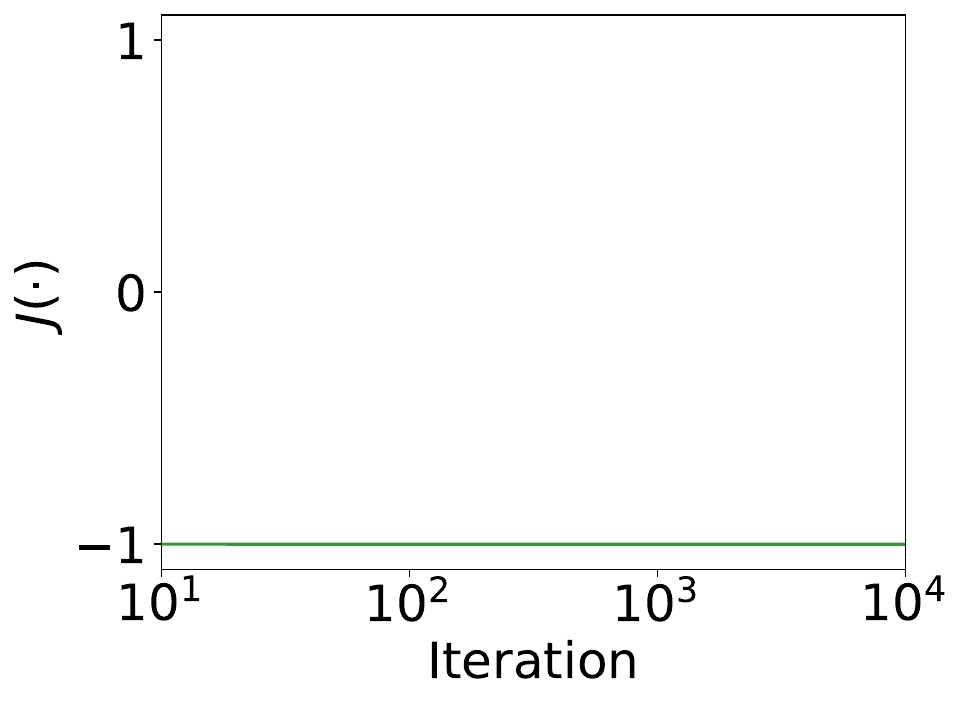}
			\caption{$\alpha = 0.5$}
		\end{subfigure}%
		\begin{subfigure}{0.25\columnwidth}
			\includegraphics[width=\columnwidth]{\main/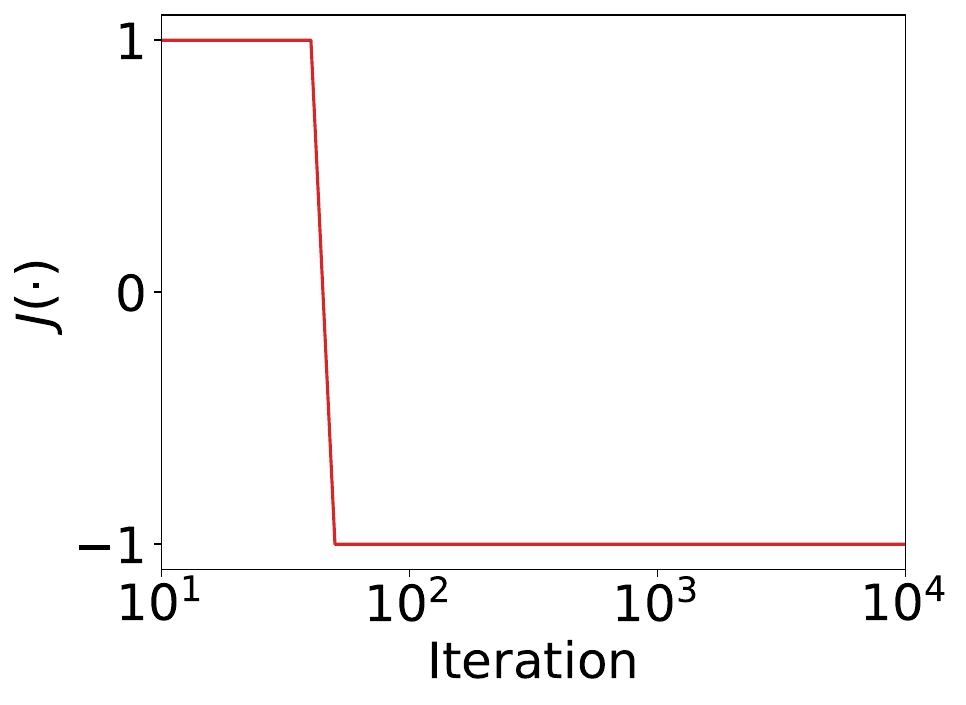}
			\caption{$\alpha = 0.1$}
		\end{subfigure}%
		\begin{subfigure}{0.25\columnwidth}
			\includegraphics[width=\columnwidth]{\main/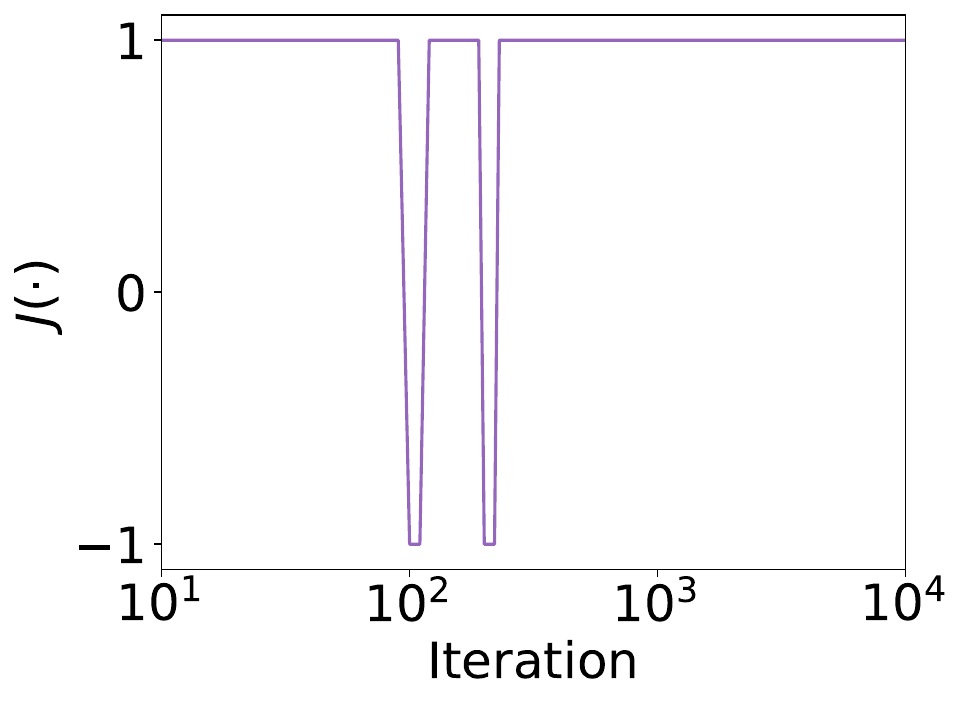}
			\caption{$\alpha = 0.05$}
		\end{subfigure}%
		\begin{subfigure}{0.25\columnwidth}
			\includegraphics[width=\columnwidth]{\main/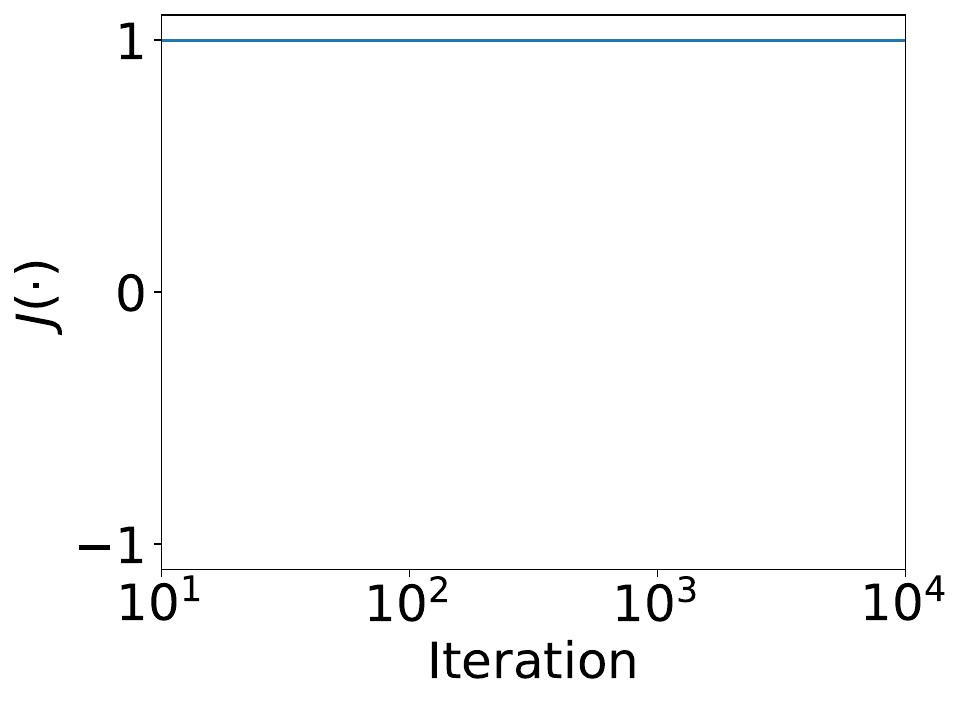}
			\caption{$\alpha = 0.01$}
		\end{subfigure}%
		\caption{Performance of different learning rates ($\alpha$) for the problem in \Cref{ex:deceiving_piecewise}.}
		\label{fig:deceiving_piecewise_runs}
	\end{figure}
	
\end{example}

\begin{example}
	\label{ex:deceiving_squared}
	Consider the function depicted in \Cref{sb:deceiving_squared}, whose expression is again omitted for conciseness. Whenever $z = \sigmoid(x/\tau)$ is greater than $0.4$, the derivative is negative and gradient descent will go towards $z=1$, even though the correct solution is $z=0$. This will happen regardless of the learning rate being used. For the CP algorithm to get to the correct solution, it has to be initialized with $x_{init} < \logit(0.4)$. Since in general we cannot assume prior knowledge about the problem, it is customary to use $x_{init} = 0$, making it impossible to solve the problem with this CP method. For problems like the ones in this section, initializing the method in the right region amounts to knowing what the solution is before running the method, which would defeat the purpose of using CP.
\end{example}

On a more positive note, however, \textcite[Proposition B.5.]{shekhovtsov2021reintroducing} derive some conditions under which extrapolations from $\hypercube$ can be reasonable. Their bound is (indirectly) associated with the Lipschitz constant of $J(\cdot)$, a value representing how fast the function can change. Revisiting \Cref{ex:deceiving_squared}, the reason we had $J(0)$ less than $J(1)$ was because the left side of the curve started moving towards $-\infty$ quicker than the right side. If, instead, the function was unable to change as fast, approximating differences between vertices with gradients inside the hypercube might have been reasonable. Still, understanding how current models control this rate of change and how it interacts with CP methods to produce reasonable solutions still requires more investigation.

\section{Monte Carlo Gradient Estimation}
\label{ch:monte_carlo}
This section starts by presenting an alternative way of writing pseudo-Boolean optimization by optimizing probabilities instead (\Cref{sec:prob_framework}). Then, in \Cref{sec:using_samples}, we talk about how to use Monte Carlo estimation on this framework. \Cref{sec:mc_overview} starts by categorizing MC gradient estimation methods and explaining why some of them cannot be used here, then explaining the ones this work will focus on. \Cref{sec:mc_drawbacks} presents counter-examples elucidating some of the drawbacks of the stochastic approach, as well as presenting a bound that the loss must satisfy for the problem to be solvable, regardless of the variance reduction achieved. Finally, \Cref{sec:alternative_parametr} introduces alternative ways of parametrizing the probabilities, all of which will be investigated in this work.
\subsection{Basics}
The current section presents the basic framework used as basis for all methods from this section by firstly showing how to write the PB optimization problem using probabilities. Then we discuss how to make such a framework more practical by estimating expectations with samples.
\subsubsection{A Probabilistic Framework}
\label{sec:prob_framework}
As mentioned before, in pseudo-Boolean optimization, we want to find:
\begin{gather*}
	\bz^* = \argmin{\bz \in \corners} J(\bz) = \argmin{\bz \in \corners} \mP_J(\bz).
\end{gather*}
The probabilistic approach involves parametrizing the dimensions as factorized Bernoulli random variables and optimizing their expectation instead. The equivalence becomes more explicit after noticing the following:
\begin{theorem}[Restatement of Proposition 5 of {\citealt[Section 4.2]{boros2002pseudo}}]
	Assuming a vector $\btheta \in \hypercube$ and considering $\bz \in \corners$ such that $z_i \sim Ber(\theta_i)$ and $z_i$, $z_j$ independent for $i \neq j$, with $\mP_J(\cdot)$ as defined in \Cref{sec:pb_basics} we have:
	\begin{equation}
		\label{eq:equivalence}
		\E{z_i \sim \Ber[\theta_i]}{J(\bz)} = \mP_J(\btheta).
	\end{equation}
\end{theorem}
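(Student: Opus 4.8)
The plan is to expand the expectation directly as a finite sum over the $2^d$ binary vectors and then recognize, term by term, that it coincides with the definition of $\mP_J$ in \Cref{eq:multi_poly}. Since $\bz$ ranges over the finite set $\corners$, I would use the bijection $h \mapsto \Zeta_h$ from \Cref{sec:pb_basics} to write
\[
	\E{z_i \sim \Ber[\theta_i]}{J(\bz)} = \sum_{h=0}^{2^d-1} P_{\btheta}(\Zeta_h)\, J(\Zeta_h),
\]
where $P_{\btheta}(\cdot)$ denotes the joint probability mass function of the factorized Bernoulli. The whole argument then reduces to computing $P_{\btheta}(\Zeta_h)$ and matching it against the coefficient of $J(\Zeta_h)$ in $\mP_J(\btheta)$.

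First I would compute the PMF. Because the coordinates are independent with $z_i \sim \Ber(\theta_i)$, the joint mass factorizes as $P_{\btheta}(\bz) = \prod_{i=1}^d \theta_i^{z_i}(1-\theta_i)^{1-z_i}$. Evaluating at $\bz = \Zeta_h$ and writing $\ntheta_i = 1-\theta_i$ together with $(\nZeta_h)_i = 1 - (\Zeta_h)_i$ (which holds because $\nZeta = \ones - \Zeta$), this becomes
\[
	P_{\btheta}(\Zeta_h) = \prod_{i=1}^d \theta_i^{(\Zeta_h)_i}\, \ntheta_i^{(\nZeta_h)_i}.
\]
Substituting back into the sum and comparing with \Cref{eq:multi_poly} evaluated at $\bz = \btheta$, the two expressions are literally identical: the product $\prod_{i} \theta_i^{(\Zeta_h)_i} \ntheta_i^{(\nZeta_h)_i}$ is exactly the factor multiplying $J(\Zeta_h)$ in $\mP_J(\btheta)$, which establishes \Cref{eq:equivalence}.

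The calculation has no real obstacle; the only thing that requires care is the bookkeeping of the exponents. In particular, I would want to make explicit that $\theta_i^{(\Zeta_h)_i}(1-\theta_i)^{1-(\Zeta_h)_i}$ selects $\theta_i$ when $(\Zeta_h)_i = 1$ and $1-\theta_i$ when $(\Zeta_h)_i = 0$, consistent with the convention $0^0 = 1$ already adopted in \Cref{sec:pb_basics}. Conceptually, the result is transparent once one observes that the factorized Bernoulli PMF is itself a multilinear function of $\btheta$ whose value at each vertex $\Zeta_h$ is the corresponding indicator weight; thus taking the expectation of $J(\bz)$ is the same linear operation as forming the multilinear interpolant $\mP_J$, and the two must agree on all of $\hypercube$, not merely on $\corners$.
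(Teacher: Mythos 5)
Your proof is correct: expanding the expectation as $\sum_h P_{\btheta}(\Zeta_h)J(\Zeta_h)$ and observing that the factorized Bernoulli mass $\prod_i \theta_i^{(\Zeta_h)_i}\ntheta_i^{(\nZeta_h)_i}$ is precisely the coefficient of $J(\Zeta_h)$ in \Cref{eq:multi_poly} evaluated at $\btheta$ establishes the identity. The paper itself gives no proof --- it defers entirely to the cited Proposition 5 of \textcite{boros2002pseudo} --- and your direct term-by-term matching is exactly the standard argument that citation stands in for, so there is nothing to reconcile.
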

We rewrite the objective as:
\begin{equation}
	\label{eq:prob_objective}
	\text{minimize }\E{z_i \sim \Ber[\theta_i]}{J(\bz)}, \qquad \text{for }\btheta\in \hypercube		.
\end{equation}
It is straightforward to verify the equivalence. On the one hand, by \Cref{eq:equivalence}, the optimization in \Cref{eq:prob_objective} has the same objective function as the previous one: $\mP_J(\cdot)$, but now it considers the whole hypercube, not only the vertices. Therefore:
\begin{align*}
	\mymin{\btheta\in \hypercube} \mP_J(\btheta) \leq \mymin{\bz \in \corners} \mP_J(\bz) \\
	\mymin{\btheta\in \hypercube} \E{z_i \sim \Ber[\theta_i]}{J(\bz)} \leq \mymin{\bz \in \corners} J(\bz).
\end{align*}
On the other hand, any such expectation is a weighted average of the evaluations in the corners and thus cannot be smaller than the lowest of those values:
\begin{equation*}
	\mymin{\btheta\in \hypercube} \E{z_i \sim \Ber[\theta_i]}{J(\bz)} \geq \mymin{\bz \in \corners} J(\bz).
\end{equation*}
Therefore, the two minima have to be the same. The main change from the original problem is that, in case of multiple maximizing $\bz$ values, the probabilistic form will also include solutions that sample randomly between them. \textcite[Theorem 1]{daulton2022bayesian} formalize this result and extend it to ordinal and categorical formulations.

Using the probabilistic objective instead of the discrete one benefits from the fact that we can now piggyback on advances from related fields which solve similar problems. One can, for example, attempt to discover latent structure on some simpler space and find the solution using variational inference \parencite{blei2017variational}. Alternatively, one can also attempt to adapt novelty search approaches from machine learning methods or exploration strategies from reinforcement learning \parencite{ladosz2022exploration}. In this section, we will focus on applying Monte Carlo methods, described in the next parts.

\subsubsection{Using Samples}
\label{sec:using_samples}
Although we can rewrite the problem using the probabilistic formulation, we remind the reader that the expectation in \Cref{eq:prob_objective} consists of a sum of $2^d$ terms and therefore still suffers from the problems of the original formulation. Nonetheless, we can use Monte Carlo methods, which approximate the expected value by sampling. In its simplest form, we could, for example, approximate the expected cost by using
\begin{equation}
	\label{eq:simplest_mc}
	\E{z_i \sim \Ber[\theta_i]}{J(\bz)} \approx \frac{1}{n} \sum_{s=1}^{n} J(\bzs), \qquad \text{for }\bzs \sim \prod_{i=1}^d \Ber[\theta_i],
\end{equation}
where $\bzs$ indicates the $s$-th sample and $\bz \sim \prod_{i=1}^d \Ber[\theta_i]$ is an alternative notation to indicate that $z_i$ are independent Bernoulli random variables. Here, in addition to the $d$ per-sample coordinates (\ie, $\zsi$ for sample $s$) being independent, the $n$ samples (\ie, $\bzs$ for $s\in\{1\dots n\}$) are iid.

By the law of large numbers, the RHS of \Cref{eq:simplest_mc} will converge to the true expectation as $n \rightarrow \infty$. We can also show that\footnote{Where $\{\bzs \sim p_{\btheta}(\cdot)\}_{s=1}^n$ denotes that $\bzs \sim p_{\btheta}(\cdot)$ for $s \in \{ 1, \dots, n \}$. We will also use $\{\bzs\}_{s=1}^n$ to denote $\{\bz^{(1)}, \dots, \bz^{(n)}\}$.}
\begin{align}
	\E{ \{\bzs \sim p_{\btheta}(\cdot)\}_{s=1}^n }{\frac{1}{n} \sum_{s=1}^{n} J(\bzs)} = \E{z_i \sim \Ber[\theta_i]}{J(\bz)} \quad \text{and} \quad \Var{\frac{1}{n} \sum_{s=1}^{n} J(\bzs)} &= \frac{1}{n}\Var{J(\bz)} \text{.}
	\label{eq:var_simple_mc}
\end{align}
Alternative estimators to that of the RHS in \Cref{eq:simplest_mc} are also possible, where each technique has its own pros and cons. In case the expected value of the estimator is equal to the desired quantity, the estimator is called unbiased. In some cases, a biased estimator may have lower variance or better convergence rate than unbiased alternatives. In this work, however, we will focus on unbiased estimators. Monte Carlo estimation is an active area of research and techniques improving estimation include: control variates, Rao-Blackwellization, stratification, importance sampling and antithetic sampling. For a more in-depth introduction to these, refer to \textcite{owen2013monte}.

In our problem, however, we do not simply aim at estimating the objective for some fixed $\btheta$, but at finding the minimizing $\btheta$. For this, we can apply a SGD procedure where we choose an initial $\btheta_0$, estimate the gradient of the objective and use it to update $\btheta$. Note, however, that, on typical SGD usage in machine learning problems we have that
\begin{equation*}
	\nabla_{\bw} \E{X,Y \sim \sD_s}{\ell(f(X, \bw), Y)} \approx \frac{1}{B}\sum_{b=1}^{B} \nabla_{\bw} \ell(f(X^{(b)}, \bw), Y^{(b)}) \text{.}
\end{equation*}
This estimator is unbiased because the gradient can simply be moved outside (or inside) the expectation:
\begin{align*}
	\E{X^{(b)},Y^{(b)} \sim \sD_s}{\frac{1}{B}\sum_{b=1}^{B} \nabla_{\bw} \ell(f(X^{(b)}, \bw), Y^{(b)})} &=  \nabla_{\bw} \E{X^{(b)},Y^{(b)} \sim \sD_s}{\frac{1}{B}\sum_{b=1}^{B} \ell(f(X^{(b)}, \bw), Y^{(b)})} \\
	&= \nabla_{\bw} \E{X,Y \sim \sD_s}{\ell(f(X, \bw), Y)} \text{.}
\end{align*}
Interchanging the order of expectation and gradient is only possible because there is no dependency between the sampling probabilities and the desired parameter (\ie, $\sD_s$ is not a function of $\bw$ in the above example). If there was such a dependency, the estimator would change. For example:
\begin{align}
	\label{eq:score_der}
	\nabla_{\btheta} \E{\bz \sim p_{\btheta}(\cdot)}{f(\bz, \btheta)} &= \nabla_{\btheta} \int p_{\btheta}(\bz) f(\bz, \btheta) \diff \bz \nonumber \\
	&= \int \left(\nabla_{\btheta} p_{\btheta}(\bz)\right) f(\bz, \btheta) + p_{\btheta}(\bz) \nabla_{\btheta} f(\bz, \btheta) \diff \bz \nonumber \\
	&= \int p_{\btheta}(\bz) \left(\nabla_{\btheta} \log p_{\btheta}(\bz)\right) f(\bz, \btheta) + p_{\btheta}(\bz) \nabla_{\btheta} f(\bz, \btheta) \diff \bz \nonumber \\
	&= \E{\bz \sim p_{\btheta}(\cdot)} {f(\bz, \btheta) \nabla_{\btheta} \log p_{\btheta}(\bz)}
	+ \E{\bz \sim p_{\btheta}(\cdot)}{\nabla_{\btheta} f(\bz, \btheta)} \nonumber \\
	&\neq \E{\bz \sim p_{\btheta}(\cdot)}{\nabla_{\btheta} f(\bz, \btheta)}.
\end{align}
Estimating $\nabla_{\btheta} \E{}{f(\bz, \btheta)}$ with:
\begin{align*}
	\frac{1}{n}\sum_{s=1}^{n} \nabla_{\btheta} f(\bzs, \btheta) \qquad\text{for }s=\{1,\dots,n\}
\end{align*}
like before would only account for $\E{}{\nabla_{\btheta} f(\bz, \btheta)}$. The cost function in \Cref{eq:prob_objective} presents a dependency of the probability on the gradient parameter. Therefore, gradient estimators have to account for that. Particularly, given samples $\bz^{(1)}, \dots, \bz^{(n)}$ and denoting $(\bz^{(s)})_{s=1}^n$ as the ordered set containing these samples, we want $\bgh(\cdot)$ such that:
\begin{equation*}
	\E{}{\bgh((\bz^{(s)})_{s=1}^n; \btheta)} = \nabla_{\btheta} \E{z_i \sim \Ber[\theta_i]}{J(\bz)}
\end{equation*}
\Cref{sec:mc_overview} will present some different $\bgh(\cdot)$ to estimate the gradient. When estimating scalars, such as in \Cref{eq:simplest_mc}, different unbiased estimators are usually chosen based simply on how much they can reduce the variance in the desired problems. For gradient estimation, nonetheless, $d$ is usually greater than one and there are multiple scalar variances, one per dimension. In that case, variance is usually considered element-wise or as a sum of these $d$ individual scalar variances. This sum is equal to the expected squared $L_2$-norm of the vector corresponding to the error between the estimated gradient and the true gradient.

\subsection{Methods}
\label{sec:mc_overview}
Firstly, we give a broad overview of Monte Carlo gradient estimation, dividing them in three main categories: score function estimation, pathwise gradient estimation and measure-valued gradients. Then, we detail the methods used in our experiments: REINFORCE, LOORF, ARMS and $\beta^*$. Given the varied scale of our experiments---ranging from tens to thousands of samples---we derived iterative versions of the estimators. For REINFORCE and LOORF this mostly involved modifications in how values are accumulated, whereas ARMS required the derivation of an equivalent sampling procedure. $\beta^*$ is a closed-form estimator which we derived based on the optimal constant from \textcite[Section 8.9]{owen2013monte}. We only used it in the smaller settings, as it is intractable to compute otherwise.

\subsubsection{Overview}
On a recent survey, \textcite{mohamed2020monte} categorized approaches for MC gradient estimation, arriving in three fundamentally distinct strategies: score function estimation, pathwise gradient estimation and measure-valued gradients. The first one follows the derivation outlined in \Cref{eq:score_der}, substituting $f(\bz, \btheta)$ for $J(\bz)$, this estimator becomes:
\begin{align}
	\label{eq:score_function_est}
	\nabla_{\btheta} \E{\bz \sim p_{\btheta}(\cdot)}{J(\bz)} &= \E{\bz \sim p_{\btheta}(\cdot)} {J(\bz) \nabla_{\btheta} \log p_{\btheta}(\bz)} \nonumber \\
	&\approx \frac{1}{n} \sum_{s=1}^n J(\bzs) \nabla_{\btheta} \log p_{\btheta}(\bzs).
\end{align}
For our purposes, $p_{\btheta}(\cdot)$ is a discrete distribution, but the estimator is also applicable in the continuous case.\footnote{\textcite[Section 4.3.2]{mohamed2020monte} explain that this estimator can be biased and also give a simple example. For our use cases, however, it is unbiased.} In practice, this estimator is known to have high variance. In some cases, it is possible to instead use pathwise gradient estimators, also known as reparametrization trick, by writing:
\begin{equation}
	\bz = \fz(\Eps; \btheta) \qquad\text{for }\Eps \sim p(\cdot).
	\label{eq:rpm_cond}
\end{equation}
Where the underscript in $f_z(\cdot)$ is merely to indicate that it maps to $\bz$ (\ie, not a parameter relation, like in $p_{\btheta}(\cdot)=p(\cdot; \btheta)$). Note that $p(\cdot)$ above no longer depends on $\btheta$. In that case, we have:
\begin{align}
	\nabla_{\btheta} \E{\bz \sim p_{\btheta}(\cdot)}{J(\bz)} &= \nabla_{\btheta} \E{\Eps \sim p(\cdot)}{J(\fz(\Eps; \btheta))} \nonumber \\
	&= \E{\Eps \sim p(\cdot)}{\nabla_{\btheta} J(\fz(\Eps; \btheta))} \nonumber \\
	&= \E{\Eps \sim p(\cdot)}{\nabla_{\btheta} \fz(\Eps; \btheta) (\nabla_{\bz} J(\bz))\big\rvert_{\bz = \fz(\Eps; \btheta)} } \nonumber \\
	&\approx \frac{1}{n} \sum_{s=1}^n \nabla_{\btheta} \fz(\Eps^{(s)}; \btheta) (\nabla_{\bz} J(\bz))\big\rvert_{\bz = \fz(\Eps^{(s)}; \btheta)}.
	\label{eq:rpm_function_est}
\end{align}
One very popular distribution where it is possible to write $\bz$ as in \Cref{eq:rpm_cond} is the multivariate Normal. This method is well-known empirically for having lower variance compared to the score function estimator from \Cref{eq:score_function_est}, with successes in generative modeling \parencite{kingma2013auto} and reinforcement learning \parencite{haarnoja2018soft}. More recently \textcite{lan2021model} derived alternatives to some of the main theorems in reinforcement learning to use pathwise gradient estimators instead of the score function counterparts.

Interestingly, the variance of the reparametrized estimators can be shown to be bounded by the squared Lipschitz constant of the cost function $J(\cdot)$ (\citealt[Section 7.2.2]{glasserman2004monte} and \citealt[Section 10]{fan2015fast}) without the presence of the dimension $d$ in the bound. Conversely, as we will see in \Cref{sec:mc_drawbacks}, the dimensionality of $\bz$ can have catastrophic consequences for the score function estimators from \Cref{eq:score_function_est}. \textcite{xu2019variance} compares both estimators theoretically in a simplified variational inference setting, attributing the increased variance of the score function estimators to the presence of higher order terms in the variance formulas. These successes led to generalizations of the reparametrization trick, including relaxations allowing $p(\cdot)$ in \Cref{eq:rpm_cond} to also be a function of the $\btheta$ parameter,\footnote{In this case, more terms appear in the formula to correct for the dependency.} which in turn allowed the method to be applicable to the Beta, Gamma and Dirichlet distributions \parencite{ruiz2016generalized, naesseth2017reparameterization}.

Nonetheless, this method requires differentiable cost functions and continuous inputs, while $\bz \in \corners$ in our problem setting. As mentioned in \textcite{bengio2013estimating}, a reparametrization exists for Bernoulli variables, since, for $\bu \sim \prod_{i=1}^{d}U[0,1]$:
\begin{align}
	\nabla_{\btheta} \E{z_i \sim \Ber[\theta_i]}{J(\bz)} = \nabla_{\btheta} \E{u_i \sim U[0,1]}{J(\ind[\bu \leq \btheta])} = \E{u_i \sim U[0,1]}{\nabla_{\btheta} J(\ind[\bu \leq \btheta])}. \label{eq:bernoulli_rpm}
\end{align}
Still, similarly to what happened in \Cref{sec:pb_application}, the coordinates of the gradient of $J(\ind[\bu \leq \btheta])$ are undefined for $\theta_i = u_i$ and zero everywhere else, rendering the reparametrization trick unusable.

Moreover, variance when using reparametrization is not guaranteed to be lower than when using the score function. As mentioned before, it depends on the Lipschitz constant of $J(\cdot)$, which can act as a double-edged knife and cause them to perform worse than the score function methods \parencite[figure 3, Section 3 and Section 5.3.2]{mohamed2020monte}. Despite all that, multiple methods attempt to incorporate the reparametrization trick in discrete settings somehow, culminating in estimators that are hybrid, with characteristics from both the CP approaches from \Cref{sec:pb_application} as well as approaches from the current section. These approaches will be discussed in \Cref{ch:other}.

Finally, measure-valued gradients, which correspond to the third category, are not very common in machine learning. They are also unbiased and their derivation uses Hahn decomposition theorem to write the gradient of some signed probability measure as a difference of two unsigned measures \parencite[Chapter 6]{mohamed2020monte}. However, these estimators require $O(nd)$ evaluations of $J(\cdot)$, the same problem the PB optimization methods from \Cref{sec:pb_algs} suffer from. For the reasons discussed, this section will focus only on score function estimators.

\subsubsection{REINFORCE}
REINFORCE \parencite{williams1992simple}, which also sometimes appears in the literature with alternative names \parencite{rubinstein1990optimization, glynn1990likelihood}, corresponds to the simplest form of score function estimator, which we derived and discussed above. For convenience, we repeat its equation below:
\begin{equation}
	\bgh_{REINFORCE}((\bz^{(s)})_{s=1}^n; \btheta) = \frac{1}{n} \sum_{s=1}^n J(\bzs) \nabla_{\btheta} \log p_{\btheta}(\bzs), \qquad\text{for }\bzs \sim p_{\btheta}(\cdot).
	\label{eq:reinforce}
\end{equation}
In our case, we also have:
\begin{align}
	p_{\btheta}(\bz) = \prod_{i=1}^d \Ber[\theta_i] = \prod_{i=1}^d \paddedtheta_i^{z_i} \ntheta_i^{\nz_i}. \label{eq:reinforce_p}
\end{align}
For our experimental section, each $\bz$ will be a mask applied element-wise to large neural network weights. To avoid storing all $n$ samples in memory simultaneously, which would have memory cost $O(nd)$, we resort to iterative versions of the estimators, changing the memory cost to $O(d)$. For REINFORCE, this simply corresponds to following \Cref{alg:iter_reinforce}.
\begin{algorithm}
	\caption{Iterative form of REINFORCE}
	\label{alg:iter_reinforce}
	\begin{algorithmic}
		\State $\bgh \leftarrow \zeros$;
		\For{$s=1 \cdots n$}
		\State Sample $\bzs$ according the distribution from \Cref{eq:reinforce_p} 
		\State $\bgh \leftarrow \frac{s-1}{s}\bgh + \frac{1}{s} J(\bzs) \nabla_{\btheta} \log p_{\btheta}(\bzs)$
		\EndFor
		\State \textbf{Return:} $\bgh$
	\end{algorithmic}
\end{algorithm}

As mentioned above, REINFORCE has very high variance. For this reason, it is common to apply some form of variance reduction technique to $\bgh_{REINFORCE}$. One of the main ones, which will be used in most of the methods discussed in this paper, is called control variates. Briefly going back to the case of estimating $\E{}{J(\bz)}$ instead of its gradient, notice that, for arbitrary $h(\cdot)$ and considering:
\begin{equation*}
	\E{\bz \sim p_{\btheta}(\cdot)}{h(\bz)} = \mu_h,
\end{equation*}
where $\mu_h$ a known constant,\footnote{We omit its dependence on $\btheta$ to simplify the notation.} we can write:
\begin{align*}
	\E{\bz \sim p_{\btheta}(\cdot)}{J(\bz)} &= \E{\bz \sim p_{\btheta}(\cdot)}{J(\bz) - h(\bz)} + \E{\bz \sim p_{\btheta}(\cdot)}{h(\bz)} \\
	&\approx \left(\frac{1}{n} \sum_{s=1}^n J(\bzs) - h(\bzs)\right) + \mu_h.
\end{align*}
The above quantity will be an unbiased estimator of $\E{}{J(\bz)}$ and its variance is equal to:
\begin{equation*}
	\frac{1}{n} \Var{J(\bz) - h(\bz)}.
\end{equation*}
Comparing it to \Cref{eq:var_simple_mc}, we can see that, if the difference $J(\bz) - h(\bz)$ has smaller variance than $J(\bz)$, using control variates will be beneficial. Going back to estimating gradients, we denote:
\begin{align*}
	\bg_h = \nabla_{\btheta} \E{\bz \sim p_{\btheta}(\cdot)}{h(\bz)}
\end{align*}
and similarly, by adding and subtracting this quantity, we can arrive at:
\begin{align}
	\label{eq:main_control_variates}
	\nabla_{\btheta} \E{\bz \sim p_{\btheta}(\cdot)}{J(\bz)} &= \E{\bz \sim p_{\btheta}(\cdot)} {(J(\bz) - h(\bz)) \nabla_{\btheta}\log p_{\btheta}(\bz)} + \nabla_{\btheta} \E{\bz \sim p_{\btheta}(\cdot)}{h(\bz)} \nonumber \\
	&\approx \left(\frac{1}{n} \sum_{s=1}^n (J(\bzs) - h(\bzs)) \nabla_{\btheta} \log p_{\btheta}(\bzs)\right) + \bg_h.
\end{align}
By using this principle, some methods select $h(\cdot)$ close to $J(\cdot)$, but whose corresponding $\bg_h$ is feasible to compute, such as a Taylor expansion of $J(\cdot)$ \parencite{gu2015muprop}. More generally, other methods subtract and re-add the expectation, but use a reparametrization estimator instead of $\bg_h$, combining score function and pathwise gradient estimation and resulting in hybrid methods \parencite{tucker2017rebar, grathwohl2017backpropagation}. As mentioned before, some of these methods will be discussed in \Cref{ch:other}. In this section, we consider control variates which do not rely on $\nabla J(\cdot)$.

\subsubsection{LOORF}
Proposed initially by \textcite{kool2019buy}, this method arises by deriving a control variate similarly to \Cref{eq:main_control_variates}, but now considering all of the samples simultaneously when designing the baseline to be subtracted. Particularly, for a single $s \in \{1,\dots,n\}$, we denote the ordered set containing all other samples by
\begin{equation*}
	\OrdSetMinusZs = (\bzsp \mid s' \in \{1, \dots, n\} \setminus \{s\}).
\end{equation*}
Then, considering $\bz^{(1)}, \dots, \bz^{(n)}$ iid, we have:
\begin{align}
	\E{\bzs \sim p_{\btheta}(\cdot)}{ J(\bzs) \nabla_{\btheta} \log p_{\btheta}(\bzs)} &= \E{}{(J(\bzs) - h(\OrdSetMinusZs))
		\nabla_{\btheta}\log p_{\btheta}(\bzs)} + \nonumber \\
	&\hspace{5em} \E{}{ h(\OrdSetMinusZs) \nabla_{\btheta} \log p_{\btheta}(\bzs)}
	\nonumber \\
	&= \E{}{(J(\bzs) - h(\OrdSetMinusZs))
		\nabla_{\btheta}\log p_{\btheta}(\bzs)} + \nonumber \\ \nonumber
	&\hspace{5em} \E{}{ h(\OrdSetMinusZs)} \cancelto{0}{\E{}{\nabla_{\btheta} \log p_{\btheta}(\bzs)}} \\ 
	&= \E{}{(J(\bzs) - h(\OrdSetMinusZs))
		\nabla_{\btheta}\log p_{\btheta}(\bzs)}. \label{eq:intermediate_loorf}
\end{align}
Where we used that the expected value of the score function is zero. This means that we can use any function of the other $n-1$ samples to compose the control variate. To keep $h(\cdot)$ and $J(\cdot)$ close, \textcite{kool2019buy} use:
\begin{equation*}
	h(\OrdSetMinusZs) = \frac{1}{n-1}\sum_{s' \neq s} J(s'),
\end{equation*}
for iid $\bzs \sim p_{\btheta}(\cdot)$. By linearity of expectations, we can average \Cref{eq:intermediate_loorf} for all $s$ and the estimator then becomes:
\begin{align}
	\bgh_{LOORF}((\bz^{(s)})_{s=1}^n; \btheta) &= \frac{1}{n} \sum_{s=1}^n \Big(J(\bzs) - \frac{1}{n-1} \sum_{s' \neq s} J(\bzsp) \Big)\nabla_{\btheta} \log p_{\btheta}(\bzs) \nonumber \\
	&= \frac{1}{n-1} \sum_{s=1}^n \Big(J(\bzs) - \frac{1}{n} \sum_{s=1}^n J(\bzsp) \Big)\nabla_{\btheta} \log p_{\btheta}(\bzs).
	\label{eq:loorf}
\end{align}
Where the proof of equivalence between the two forms is shown in \textcite{kool2019buy}. Although this is a simple estimator, it is a strong baseline which has outperformed more advanced variance reduction techniques \parencite{dimitriev2021arms}. \textcite{richter2020vargrad} shows an alternative derivation of this estimator using a variational inference perspective and also enumerates conditions in which it behaves closely to the optimal control variate. Similarly to REINFORCE, we require a way of sampling iteratively, which is shown in \Cref{alg:iter_loorf}.
\begin{algorithm}
	\caption{Iterative form of LOORF}
	\label{alg:iter_loorf}
	\begin{algorithmic}
		\State Start accumulators:
		\State $\Jh \leftarrow 0$;
		\State $\Lambda_{\nabla \log} \leftarrow \zeros$;
		\State $\Lambda_{J \nabla \log} \leftarrow \zeros$;
		\For{$s=1 \cdots n$}
		\State Sample $\bzs$ according the distribution from \Cref{eq:reinforce_p} 
		\State $\Jh \leftarrow \frac{s-1}{s}\Jh + \frac{1}{s} J(\bzs) $
		\State $\Lambda_{\nabla \log} \leftarrow \frac{\mymax{}(s-2, 1)}{\mymax{}(s-1, 1)}\Lambda_{\nabla \log} + \frac{1}{\mymax{}(s-1, 1)} \nabla_{\btheta} \log p_{\btheta}(\bzs)$
		\State $\Lambda_{J \nabla \log} \leftarrow \frac{\mymax{}(s-2, 1)}{\mymax{}(s-1, 1)}\Lambda_{J \nabla \log} + \frac{1}{\mymax{}(s-1, 1)} J(\bzs) \nabla_{\btheta} \log p_{\btheta}(\bzs)$		
		\EndFor
		\State $\bgh \leftarrow \Lambda_{J \nabla \log} - \Lambda_{\nabla \log} \Jh$
		\State \textbf{Return:} $\bgh$
	\end{algorithmic}
\end{algorithm}
\subsubsection{ARMS}
This estimator extends the idea of antithetic sampling from Monte Carlo estimation to gradient estimation. The base principle is to use samples that are opposite in some way and rely on their error cancellation to reduce variance. To understand it, we once more go back to the case where we are estimating a scalar quantity. Considering a set $\sU$, say we want to estimate $\E{}{f(\bu)}$, for $\bu \in \sU$ (instead of $\corners$). Furthermore, assume that $\bu \sim p(\cdot)$, where $p(\cdot)$ is a symmetric density with respect to point $\bc$. Namely, we define a reflection of $\bu$ through $\bc$, here called $\Nu$, such that:
\begin{equation*}
	\bu - \bc = -(\Nu - \bc).
\end{equation*}
For $p(\cdot)$ to be symmetric, we must have:
\begin{equation*}
	p(\bu) = p(\Nu).
\end{equation*}
Notice that requiring such a density is not too restrictive. The uniform density on $\sU = \hypercube$ with $\bc = [0.5\enspace 0.5 \dots]^\top$ and $\Nu = \ones - \bu$, for example, satisfies this condition. This density is commonly used as the basis for sampling from some distributions, since applying specific functions to uniform random variables, such as when using inverse transform sampling or the Gumbel max trick, is often statistically equivalent to sampling from the desired distributions \parencite[Chapter 4]{owen2013monte}. The antithetic sampling estimate is obtained by:
\begin{align*}
	\E{\bu \sim p(\cdot)}{f(\bu)} \approx \frac{1}{n} \sum_{s=1}^{n/2} {f(\bus) + f(\Nus)}.
\end{align*}
The efficacy of this estimator for variance reduction will depend heavily on how $f(\cdot)$ behaves. Its variance is equal to:
\begin{align*}
	\frac{1}{n}\Var{f(\bu)}(1 + \Corr{f(\bu), f(\Nu)}),
\end{align*}
as opposed to $(1/n)\Var{f(\bu)}$ as in simple Monte Carlo estimation. In a monotonic function, for instance, the correlation should be closer to $-1$, causing antithetic sampling to be efficient. More generally, we can write $f(\cdot)$ as a sum of even part $f_E(\cdot)$ and odd part $f_O(\cdot)$:\footnote{Even and odd are meant with respect to $\bc$, not $\zeros$.}
\begin{align*}
	f(\bu) = \underbrace{\frac{f(\bu) + f(\Nu)}{2}}_{f_E(\bu)} + \underbrace{\frac{f(\bu) - f(\Nu)}{2}}_{f_O(\bu)}.
\end{align*}
The odd part has expectation zero and the even part has the same expectation as $f(\cdot)$. Antithetic sampling has benefit of eliminating the variance from the odd part, but has the drawback of doubling the variance from the even part. Going back to the discrete case, estimating the expected loss can be done as follows:
\begin{align*}
	\E{z_i \sim \Ber[\theta_i]}{J(\bz)} &= \E{u_i \sim U[0,1]}{J(\ind[\bu \leq \btheta] )} \\
	&\approx \frac{1}{n} \sum_{s=1}^{n/2} J(\ind[\bus \leq \btheta]) + J(\ind[\Nus \leq \btheta]).
\end{align*}
For $\bus \sim \prod_{i=1}^d U[0,1]$ and $\Nus = \ones - \bus$.

We now move the discussion back to gradient estimation. In the context of neural networks, the first paper that tried to incorporate this technique was ARM \parencite{yin2018arm}, which was later improved by two concurrent works: DisARM \parencite{dong2020disarm} and U2G \parencite{yin2020probabilistic}, both discovering an equivalent improvement of ARM independently. Notably, however, these methods seem to underperform LOORF for larger $n$ as noted by \textcite{dimitriev2021arms}. Specifically, this work argues that LOORF leverages all possible combinations of pairs among the $n$ samples, whereas ARM/U2G/DisARM only combine the antithetic pairs.

To use the idea of antithetic sampling while still taking advantage of all pairs, \textcite{dimitriev2021arms} then propose ARMS. They derive the estimator by first assuming $d=1$ and $n=2$ and then generalize to arbitrary $n$ and $d$. The authors start with LOORF, but use importance sampling to make sampling antithetic, introducing dependency between the variables and ending up with a generalization of DisARM. Normally, combining importance sampling and high dimensions can lead to catastrophic outcomes, but, by further restricting the importance distribution such that its marginals remain the same as in the nominal distribution, they keep the variance under control. Denoting the variable sampled from the modified distribution as $\zTil$, their final expression is as follows:
\begin{align}
	\bgh_{ARMS}((\bzTil^{(s)})_{s=1}^n; \btheta) &= \frac{1}{1-\brho} \bgh_{LOORF}((\bzTil^{(s)})_{s=1}^n; \btheta), \\ &\hspace{2em}\text{where }\rho_i = \Corr{\zTil_i^{(s)}, \zTil_i^{(s')}}\text{, for } s \neq s'. \nonumber
\end{align}
Importantly, for each dimension $i \in \{1, \dots, d\}$, they assume that the correlation between all pairs of scalar samples $(\zTil_i^{(s)}, \zTil_i^{(s')})$ will be the same value.

To obtain samples satisfying such a structure, they rely on copula sampling \parencite[Section 5.6]{owen2013monte}. To summarize, applying a CDF of some random scalar varible to that same variable results in an output distributed according to $U[0,1]$. Therefore, if we sample a $d$-dimensional random variable and apply each marginal CDF to the corresponding dimension, we get $d$ (marginally) uniform random variables. If the original distribution causes the $d$ original variables to have mutual negative dependence, the uniforms should keep some of this dependence. Then, these uniform variables can be used to produce Bernoulli samples by the reparametrization from \Cref{eq:bernoulli_rpm}.

\textcite{dimitriev2021arms} propose two different ways to get these uniforms: Dirichlet copula and Gaussian copula. On their experiments, the Dirichlet copula performed the best, and for that reason it is the one we are going to use in this work. The full procedure as proposed in their paper is summarized \Cref{alg:orig_arms}. As before, we require the algorithm to be iterative and therefore modify their sampling procedure\footnote{Note that $\sum_{s'=1}^{n} \log u_i^{(s')}$ in \Cref{alg:orig_arms} requires all $n$ samples to have been computed beforehand.} to the one outlined in \Cref{alg:iter_dir}, shown only for $d=1$ to simplify exposition. Proof of equivalence between the two sampling procedures is provided in \Cref{sec:arms_iter_proof}, as well as the full iterative algorithm to compute the ARMS estimator, obtained by combining \Cref{alg:iter_loorf,alg:orig_arms,alg:iter_dir}.

\begin{algorithm}
	\caption{Original ARMS with Dirichlet copulas}
	\label{alg:orig_arms}
	\begin{algorithmic}
		\State $\triangleright$ This loop can be parallelized with vectorized implementations
		\For{$i \in 1 \dots d$}
		\State Sample $\uis \overset{iid}{\sim} U[0,1]$ for $s \in \{1, \dots, n\}$ 
		\State $\dis \leftarrow \frac{\log \uis}{\sum_{s'=1}^{n} \log u_i^{(s')}}$ \Comment{Convert iid uniforms to Dirichlet r.v.}
		\State $\uTils_i \leftarrow 1 - (1-\dis)^{n-1}$ \Comment{Apply marginal CDF}
		\If{$\theta_i > 0.5$}  \Comment{Additional steps from ARMS paper}
		\State $\zTils_i \leftarrow \ind[\uTils_i \leq \theta_i]$
		\State  $\rho_i \leftarrow \frac{\max(0, 2(1 - \theta_i)^{\frac{1}{n-1}} - 1)^{n - 1} - (1 - \theta_i)^2}{\theta_i(1 - \theta_i)}$ \Comment{Correlation from ARMS paper}
		\Else
		\State $\zTils_i \leftarrow \ind[1 - \uTils_i \leq \theta_i]$
		\State $\rho_i \leftarrow \frac{\max(0, 2\theta_i^{1 / (n-1)} - 1)^{n - 1} - \theta_i^2 }{\theta_i(1 - \theta_i)}$ \Comment{Correlation from ARMS paper}
		\EndIf
		\EndFor
		\State $\bgh \leftarrow \bgh_{ARMS}((\bzTil^{(s)})_{s=1}^n; \btheta)$
		\State \textbf{Return:} $\bgh$
	\end{algorithmic}
\end{algorithm}

\begin{algorithm}
	\caption{Iterative sampling from Dirichlet copula ($d=1$)}
	\label{alg:iter_dir}
	\begin{algorithmic}
		\State Sample $\sum_{s'}E^{(s')} \sim \mathrm{Gamma}[n,1]$
		\State $R \leftarrow \sum_{s'}E^{(s')}$
		\For{$s \in 1 \dots n$}
		\If{$s < n$}
		\State Sample $U \sim U[0,1]$
		\State $E^{(s)} \leftarrow -R U^{\frac{1}{n-s}} + R$
		\Else
		\State $E^{(s)} \leftarrow R$
		\EndIf
		\State $R \leftarrow R - E^{(s)}$
		\State $\triangleright$ Get single Dirichlet r.v. (recall that $\sum_{s'}E^{(s')}$ was already computed)
		\State $\ds \leftarrow \frac{E^{(s)}}{\sum_{s'}E^{(s')}}$
		\State $\uTils \leftarrow 1 - (1-\ds)^{n-1}$ \Comment{Apply marginal CDF}
		\State \textbf{yield }$\uTils$ 
		\EndFor
	\end{algorithmic}
\end{algorithm}

\subsubsection{Beta$^*$}
In this section, we derive a closed-form expression for the constant control variate with lowest variance. As shown in \Cref{eq:intermediate_loorf}, if the control variate is of the form $h(\cdot) \nabla_{\btheta} \log p_{\btheta} (\cdot)$, as long as $h(\cdot)$ is not a function of the current sample $\bzs$, the correction summand is simply zero. We here consider only functions of the form $h(\cdot)=\beta$, but allow different $\beta_i$ for different dimensions. \Cref{thm:bStarOne} shows a closed-form expression for the optimal control variate in this case.
\begin{restatable}{theorem}{bStarOne}\label{thm:bStarOne}
	Consider the problem of estimating
	\begin{equation*}
		\frac{\partial}{\partial \theta_i} \E{\bz \sim p_{\btheta}(\cdot)}{J(\bz)} = \E{\bz \sim p_{\btheta}(\cdot)}{\frac{\partial \log p_{\btheta}(\bz)}{\partial \theta_i} J(\bz)}
	\end{equation*}
	by sampling $\bzs \overset{iid}{\sim} p_{\btheta}(\cdot)$, for $s \in \{1, \dots, n\}$ and using:
	\begin{equation}
		\gh_{\beta_i}((\bz^{(s)})_{s=1}^n; \btheta) = \frac{1}{n} \sum_{s=1}^n \Big(J(\bzs) - \beta_i \Big) \frac{\partial \log p_{\btheta}(\bzs)}{\partial \theta_i}.
		\label{eq:bstar_thm_estimator}
	\end{equation}
	The optimal $\beta_i^*$ such that
	\begin{equation*}
		\beta_i^* = \argmin{\beta_i \in \R} \Var{\gh_{\beta_i}((\bz^{(s)})_{s=1}^n; \btheta)}
	\end{equation*}	
	is given by
	\begin{align*}
		\beta_i^* = \E{\bz \sim q_i(\cdot; \btheta)}{J(\bz)}, &&\text{ where }q_i(\cdot; \btheta) \propto p_{\btheta}(\cdot) \left( \frac{\partial \log p_{\btheta}(\cdot)}{\partial \theta_i} \right)^2.
	\end{align*}
\end{restatable}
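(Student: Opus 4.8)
The plan is to treat this as a textbook control-variate variance minimization, exploiting two facts: the summands are i.i.d., and the score function has zero mean. First I would note that because $\bz^{(1)}, \dots, \bz^{(n)}$ are i.i.d., the variance of the average is $1/n$ times the variance of a single summand, so it suffices to minimize $\Var{(J(\bz) - \beta_i) S_i(\bz)}$ over $\beta_i \in \R$, where I write $S_i(\bz) = \frac{\partial \log p_{\btheta}(\bz)}{\partial \theta_i}$ for the score. The key preliminary observation is that, since $\E{\bz \sim p_{\btheta}(\cdot)}{S_i(\bz)} = 0$ (the expected score vanishes, as already used in \Cref{eq:intermediate_loorf}), the mean of each summand satisfies $\E{\bz \sim p_{\btheta}(\cdot)}{(J(\bz) - \beta_i) S_i(\bz)} = \E{\bz \sim p_{\btheta}(\cdot)}{J(\bz) S_i(\bz)}$ and does not depend on $\beta_i$. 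Hence the estimator is unbiased for every $\beta_i$, and since $\Var{X} = \E{}{X^2} - (\E{}{X})^2$ with the second term constant in $\beta_i$, minimizing the variance is equivalent to minimizing the second moment $m(\beta_i) = \E{\bz \sim p_{\btheta}(\cdot)}{(J(\bz) - \beta_i)^2 S_i(\bz)^2}$.

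Next I would expand $m(\beta_i) = \E{}{J(\bz)^2 S_i(\bz)^2} - 2\beta_i \E{}{J(\bz) S_i(\bz)^2} + \beta_i^2 \E{}{S_i(\bz)^2}$, a quadratic in $\beta_i$ whose leading coefficient $\E{}{S_i(\bz)^2} \ge 0$ makes it convex. Differentiating and setting $m'(\beta_i) = 0$ then yields the unique stationary point (assuming $\E{}{S_i(\bz)^2} > 0$)
\[
	\beta_i^* = \frac{\E{\bz \sim p_{\btheta}(\cdot)}{J(\bz) S_i(\bz)^2}}{\E{\bz \sim p_{\btheta}(\cdot)}{S_i(\bz)^2}},
\]
which convexity guarantees to be the global minimizer.

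Finally, I would recognize this ratio as an expectation under the reweighted distribution $q_i$. Since $q_i(\cdot; \btheta) \propto p_{\btheta}(\cdot) S_i(\cdot)^2$, its normalizing constant is exactly $\sum_{\bz \in \corners} p_{\btheta}(\bz) S_i(\bz)^2 = \E{\bz \sim p_{\btheta}(\cdot)}{S_i(\bz)^2}$, so
\[
	\E{\bz \sim q_i(\cdot; \btheta)}{J(\bz)} = \sum_{\bz \in \corners} \frac{p_{\btheta}(\bz) S_i(\bz)^2}{\E{\bz \sim p_{\btheta}(\cdot)}{S_i(\bz)^2}} J(\bz) = \frac{\E{}{J(\bz) S_i(\bz)^2}}{\E{}{S_i(\bz)^2}} = \beta_i^*,
\]
which is the claimed form.

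The computations here are routine; the only genuine subtlety, and where I would be most careful, is the well-definedness of both $q_i$ and the minimizer. Both require $\E{}{S_i(\bz)^2} > 0$, i.e.\ the score does not vanish $p_{\btheta}$-almost surely, which fails in degenerate cases such as $\theta_i \in \{0,1\}$; I would state this nondegeneracy as a standing assumption or note the degenerate case separately. I would also double-check the i.i.d.\ reduction, confirming that the cross terms between distinct samples vanish so that the per-sample minimization indeed optimizes the full estimator, and that the convexity argument rules out the stationary point being a maximum or saddle.
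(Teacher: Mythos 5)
Your proof is correct and follows essentially the same route as the paper: the paper invokes the standard optimal control-variate coefficient $\beta^* = \mathrm{Cov}[f,h]/\mathrm{Var}[h]$ from Owen with $f = J\,\partial_{\theta_i}\log p_{\btheta}$ and $h = \partial_{\theta_i}\log p_{\btheta}$, then uses the zero-mean score to reduce it to the same ratio $\E{}{J S_i^2}/\E{}{S_i^2}$ and the same identification with $\E{\bz \sim q_i}{J(\bz)}$, whereas you simply re-derive that coefficient inline via the convex quadratic in $\beta_i$. Your explicit handling of the degenerate case $\E{}{S_i^2}=0$ is a small point of care the paper leaves implicit.
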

\begin{proof}
	See \Cref{sec:bstar_proofs}.
\end{proof}
The quantity $p_{\btheta}(\cdot) \left( \frac{\partial \log p_{\btheta}(\cdot)}{\partial \theta_i} \right)^2$ above is the integrand of the (diagonal) Fisher information metric for the corresponding $i$. The Fisher information corresponds to the integral of this function and quantifies how much a random variable $\bz$ distributed according to $p_{\btheta}(\cdot)$ is predictive of the parameter $\theta_i$. If this quantity is high then it should take less observations of $\bz$ to compute $\theta_i$ accurately via Monte Carlo. The value of $q_i(\Zeta_h; \btheta)$, for $h \in \{0, \dots, 2^d - 1\}$, weights the relative contribution of the value $\Zeta_h$ among the possible values $\bz$ can assume.

To exemplify, if $d=1$ and $\theta = 0.999$, sampling $z$ will yield multiple $1$ values, but it is the zeros that are going to enable better discernment of whether $\theta$ is $0.9$, $0.99$ or $0.999$. \Cref{thm:bStarTwo} shows what the optimal $\beta_i^*$ is when $\bz$ is sampled from a factorized Bernoulli.
\begin{restatable}{theorem}{bStarTwo}\label{thm:bStarTwo}
	For the same setting as in \Cref{thm:bStarOne}, but with the additional condition that $p_{\btheta}(\cdot) = \prod_{i=1}^d \Ber[\theta_i]$, the optimal $\beta_i^*$ becomes:
	\begin{equation*}
		\beta_i^* = \E{\bz \sim p_{\btheta}(\cdot)}{J(z_1, \dots, z_{i-1}, 1 - z_{i}, z_{i+1}, \dots)}.
	\end{equation*}
\end{restatable}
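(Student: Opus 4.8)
The plan is to specialize the characterization from \Cref{thm:bStarOne}, namely $\beta_i^* = \E{\bz \sim q_i(\cdot;\btheta)}{J(\bz)}$ with $q_i(\cdot;\btheta) \propto p_{\btheta}(\cdot)\big(\partial \log p_{\btheta}(\cdot)/\partial\theta_i\big)^2$, and to show that the reweighting $q_i$ is nothing more than $p_{\btheta}$ with its $i$-th marginal flipped. Everything reduces to computing the score and its square on the two possible values of $z_i$, so no new machinery beyond \Cref{thm:bStarOne} is needed.

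First I would write out the log-likelihood of the factorized Bernoulli, $\log p_{\btheta}(\bz) = \sum_{j=1}^d \big[z_j\log\theta_j + (1-z_j)\log(1-\theta_j)\big]$, and differentiate to get $\partial\log p_{\btheta}(\bz)/\partial\theta_i = z_i/\theta_i - (1-z_i)/(1-\theta_i)$. Since $z_i\in\{0,1\}$, this equals $1/\theta_i$ when $z_i=1$ and $-1/(1-\theta_i)$ when $z_i=0$, so its square is $1/\theta_i^2$ and $1/(1-\theta_i)^2$ respectively.

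Next I would exploit the factorization. Because the squared score depends only on $z_i$, the product $p_{\btheta}(\bz)\big(\partial\log p_{\btheta}/\partial\theta_i\big)^2$ factors as (a function of $z_i$) times $\prod_{j\neq i}\Ber[\theta_j]$. Hence under $q_i$ the coordinates $j\neq i$ keep their original marginals $\Ber[\theta_j]$, and only the $i$-th marginal changes. Evaluating the $i$-th factor gives unnormalized weights $\theta_i\cdot\theta_i^{-2}=1/\theta_i$ at $z_i=1$ and $(1-\theta_i)\cdot(1-\theta_i)^{-2}=1/(1-\theta_i)$ at $z_i=0$; normalizing by their sum $1/(\theta_i(1-\theta_i))$ yields $q_i(z_i=1)=1-\theta_i$ and $q_i(z_i=0)=\theta_i$. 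In other words, the $i$-th marginal of $q_i$ is exactly $\Ber[1-\theta_i]$.

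Finally I would convert this flipped marginal into the statement of the theorem. Sampling $z_i\sim\Ber[1-\theta_i]$ is distributionally identical to taking $z_i\sim\Ber[\theta_i]$ and replacing it by $1-z_i$, since $1-z_i\sim\Ber[1-\theta_i]$. Applying this change of variable inside the expectation, while leaving the independent coordinates $j\neq i$ untouched, turns $\E{\bz\sim q_i}{J(\bz)}$ into $\E{\bz\sim p_{\btheta}}{J(z_1,\dots,z_{i-1},1-z_i,z_{i+1},\dots)}$, which is the claim. I do not anticipate a genuine obstacle: the only thing to watch is the bookkeeping in normalizing the $i$-th marginal, and it is the clean cancellation $\theta_i\cdot\theta_i^{-2}$ versus $(1-\theta_i)\cdot(1-\theta_i)^{-2}$ that makes the flipped-Bernoulli interpretation emerge. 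The conceptual content is simply that the Fisher-information reweighting of a single Bernoulli coordinate inverts that coordinate's success probability.
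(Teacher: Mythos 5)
Your proposal is correct and follows essentially the same route as the paper's proof: both specialize the $q_i$ from \Cref{thm:bStarOne} to the factorized Bernoulli, observe that the squared score depends only on $z_i$ so the coordinates $j\neq i$ retain their marginals, normalize the $i$-th factor to obtain the flipped marginal $\Ber[1-\theta_i]$, and finish with the change of variable $z_i \mapsto 1-z_i$. The computations (unnormalized weights $1/\theta_i$ and $1/(1-\theta_i)$, normalizer $1/(\theta_i(1-\theta_i))$) match the paper's exactly.
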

\begin{proof}
	See \Cref{sec:bstar_proofs}.
\end{proof}

This control variate changes the input on the $i$-th coordinate from $z_i$ to $1 - z_i$ while keeping the other dimensions as they were, bearing some resemblance to the alternative definition of derivative used in classical PB approaches from \Cref{eq:pd_der}.

Since it is not feasible to compute this estimator in closed-form for the larger experiments, we will only use it for the smaller ones. It should be seen as an upper-bound of variance reduction achievable by the other methods, as well as an indicator of the implications of such variance reduction when solving the desired PB optimization problems.

\subsection{Drawbacks}
\label{sec:mc_drawbacks}
We now overview issues with the methods presented throughout this section.
\subsubsection{Dependence on the Current Distribution}
When put into the perspective of solving a PB optimization problem, one of the main limitations of the methods presented in this section is their dependence on $p_{\btheta}(\cdot)$. To exemplify, we note that the REINFORCE expression
\begin{equation*}
	\frac{1}{n} \sum_{s=1}^n J(\bzs) \nabla_{\btheta} \log p_{\btheta}(\bzs),
\end{equation*}
corresponds to a weighted average of multiple $J(\Zeta_h)\nabla_{\btheta} \log p_{\btheta}(\Zeta_h)$ terms, for $h \in \{0, \dots, 2^{d}-1\}$, where some of the summands are likely to have higher weights if $p_{\btheta}(\Zeta_h)$ is higher. In other words, we can rewrite it as:
\begin{equation}
	\label{eq:mc_as_avg}
	\sum_{h=0}^{2^d -1} \Big( \frac{n_{\Zeta_h}}{n} \Big) J(\Zeta_h) \nabla_{\btheta} \log p_{\btheta}(\Zeta_h) =  \sum_{h=0}^{2^d -1} \Big( \frac{n_{\Zeta_h}}{n} \Big) \Big( \frac{1}{p_{\btheta}(\Zeta_h)} \Big) J(\Zeta_h) \nabla_{\btheta} p_{\btheta}(\Zeta_h).
\end{equation}
Where $n_{\Zeta_h}$ is the number of samples satisfying $\bzs = \Zeta_h$. In each summand, the term $\nabla_{\btheta} p_{\btheta}(\Zeta_h)$ is the direction that increases the probability of $\Zeta_h$ being sampled again, while $J(\Zeta_h)$ scales that direction according to the cost function and $(1/p_{\btheta}(\Zeta_h))$ up-scales rare values. We can see the sum as multiple $\Zeta_h$ values attracting $\btheta$ towards themselves (or repelling if $J(\cdot) < 0$), as illustrated in \Cref{sub:theta_dep_sampled}.

A big difference between MC methods and the ones using true gradients, is that, in the latter case, the sum becomes:
\begin{align}
	\nabla_{\btheta} \E{\bz \sim p_{\btheta}(\cdot)}{J(\bz)} &= \nabla_{\btheta} \sum_{h=0}^{2^d -1} p_{\btheta}(\Zeta_h) J(\Zeta_h) \nonumber \\
	&= \sum_{h=0}^{2^d -1} J(\Zeta_h) \nabla_{\btheta} p_{\btheta}(\Zeta_h).  \label{eq:true_as_avg}
\end{align}

Notably, it includes all $\Zeta_h$ values, whereas a value that is unlikely to be sampled may not be present in \Cref{eq:mc_as_avg} (see \Cref{sub:theta_dep_true}). Hence, while $\mathbb{E}[J(\cdot)]$ might possibly induce a smooth optimization surface leading towards the minimizing $\bz^*$, optimizing it with (non-stochastic) gradient descent accounts for the entirety of $\corners$. Conversely, MC implementations may never sample $\bz^*$ with low $p_{\btheta}(\cdot)$. Even in the rare occasion that they do, some optimizers normalize the gradient updates, diminishing the joint effect of $J(\Zeta_h)(1/p_{\btheta}(\Zeta_h))$ and making it less likely that $p_{\btheta}(\cdot)$ will get close to placing significant probability mass on $\bz^*$.
\begin{figure}[tb!]
	\centering
	\begin{subfigure}{0.3\textwidth}
		\centering
		\includegraphics[width=0.8\columnwidth]{\main/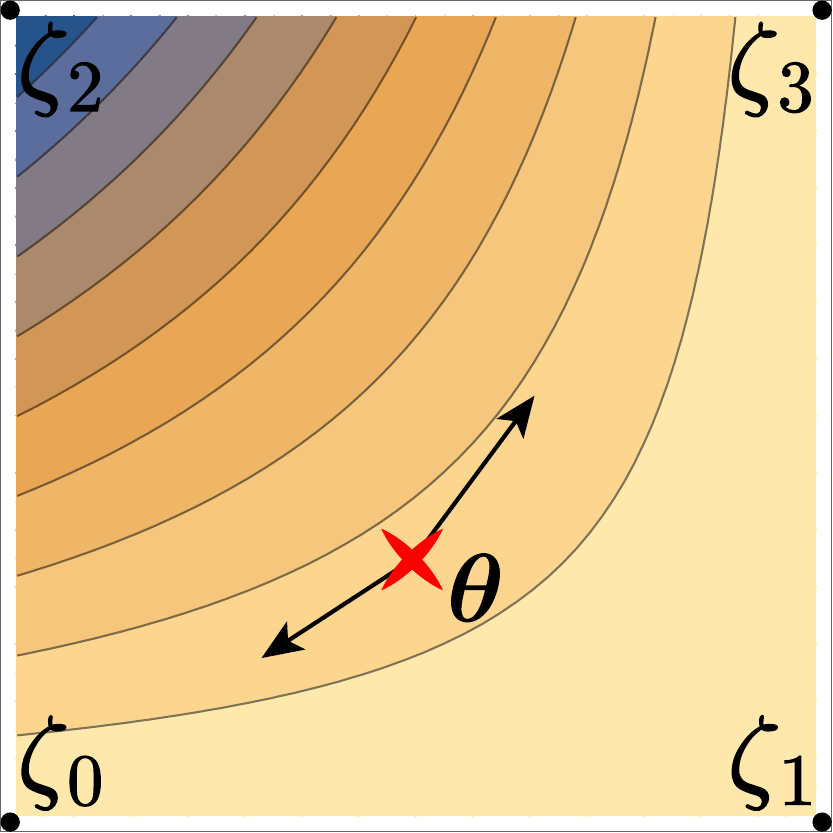}
		\caption{\Cref{eq:mc_as_avg}, $n=2$}
		\label{sub:theta_dep_sampled}
	\end{subfigure}%
	\begin{subfigure}{0.3\textwidth}
		\centering
		\includegraphics[width=0.8\columnwidth]{\main/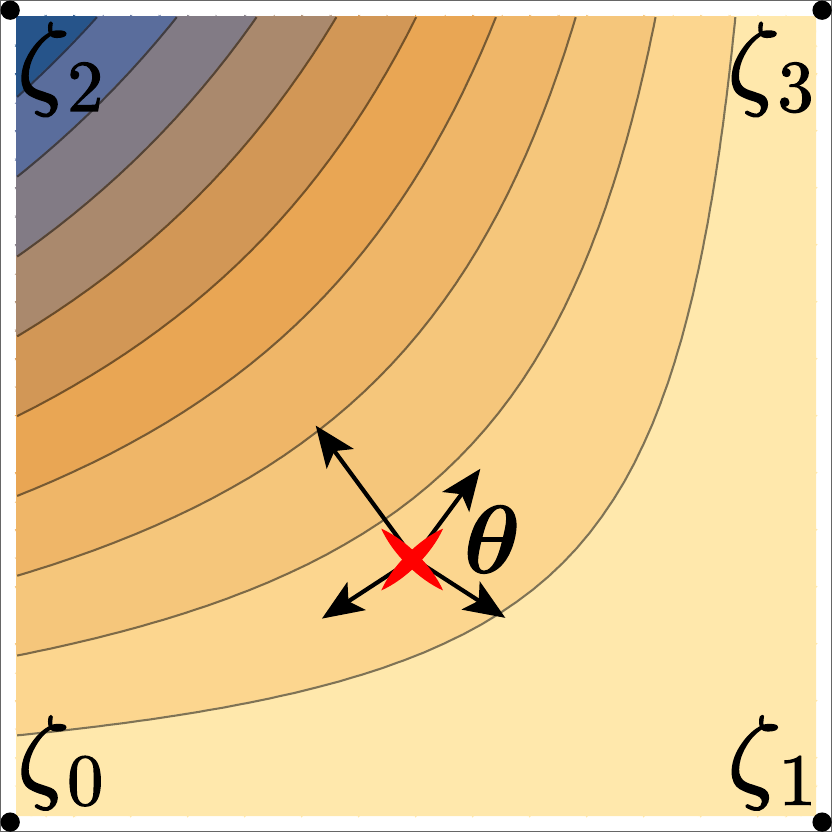}
		\caption{\Cref{eq:true_as_avg}}
		\label{sub:theta_dep_true}
	\end{subfigure}	
	\begin{subfigure}{0.3\textwidth}
		\centering
		\includegraphics[width=0.9\columnwidth]{\main/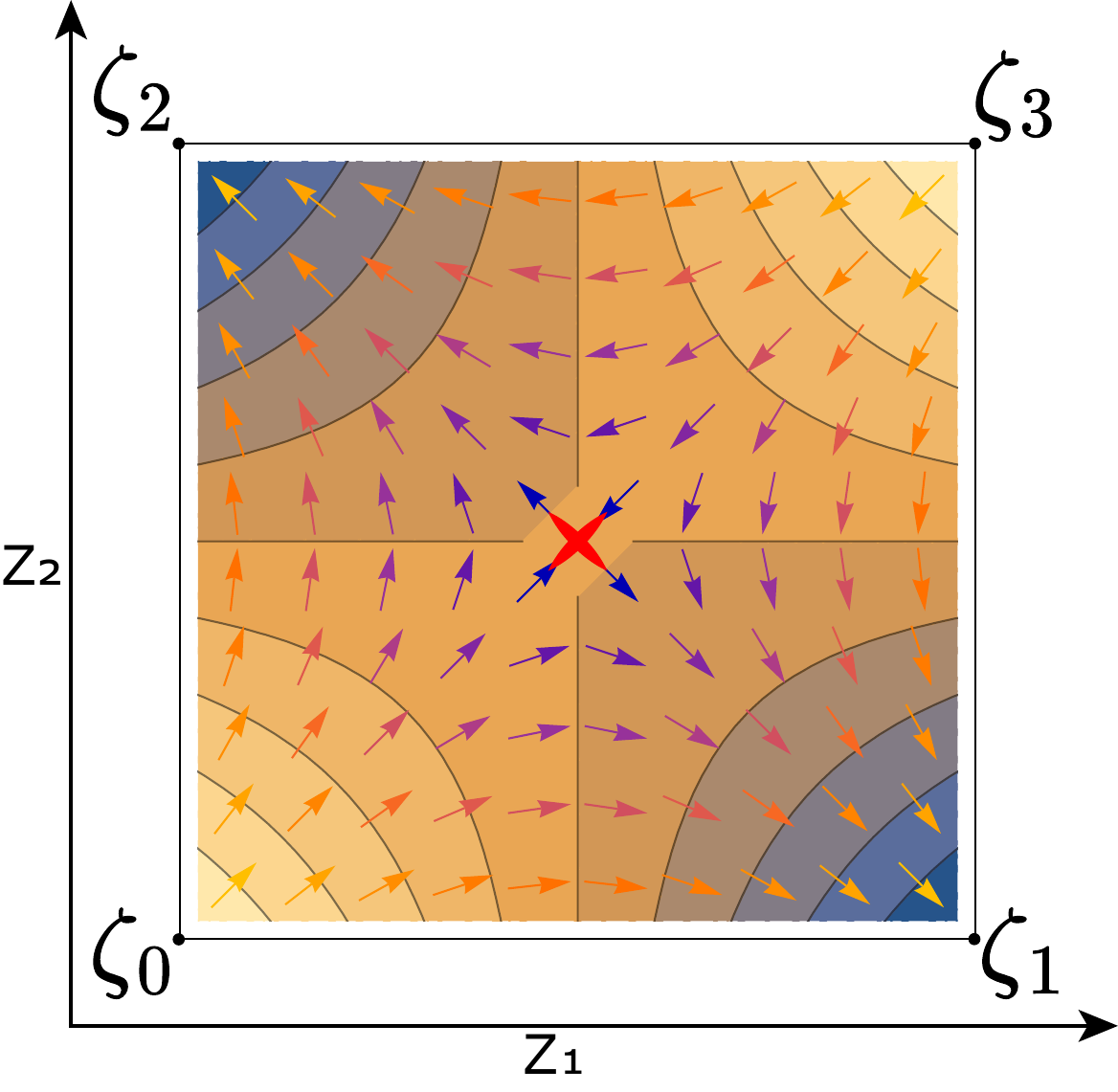}
		\caption{\Cref{ex:mc_bad_gen_1}}
		\label{sub:mc_bad_gen_1}
	\end{subfigure}		
	\caption{Contour plots of $\mathbb{E}[J(\cdot)]$ for $d=2$, blue regions correspond to lower values. Left (a,b): Arrows indicate the summands from the equation indicated in the caption. If $p_{\btheta}(\Zeta_2)$ is low, the gradient may not point towards $\bz^* = \Zeta_2$, such as in (a). Right (c): Illustration for \Cref{ex:mc_bad_gen_1}. Small arrows correspond to the gradient field. Darker arrows correspond to lower gradient magnitudes.}
	\label{fig:theta_dep_ill}
\end{figure}
Although one might consider using importance sampling to make the optimization more exploratory, a large $d$ severely limits its applicability. To exemplify, in case the importance distribution is a factorized Bernoulli with parameters $\btheta_q$, the sampling estimate will be
\begin{gather*}
	\frac{1}{n} \sum_{s=1}^n \left( \frac{p_{\btheta}(\bzTils)}{p_{\btheta_q}(\bzTils)} \right) J(\bzTils) \nabla_{\btheta} \log p_{\btheta}(\bzTils) = \\
	\frac{1}{n} \sum_{s=1}^n \left( \prod_{i=1}^{d} \frac{(\btheta)_i^{\zTils_i} (\Ntheta)_i^{1 - \zTils_i}}{(\btheta_q)_i^{\zTils_i} (\Ntheta_q)_i^{1-\zTils_i}} \right) J(\bzTils) \nabla_{\btheta} \log p_{\btheta}(\bzTils).
\end{gather*}

The IS ratios will be large whenever a $\bzTils$ that is more likely under $p_{\btheta}(\cdot)$ than under $p_{\btheta_q}(\cdot)$ is sampled. Since there are $d$ per-sample coordinates where this can happen, with $d$ assumed to be a high number, it might be infeasible to avoid exploding gradients without limiting $\btheta_q$ or resorting to some biased approach, such as weighted importance sampling \parencite[Section 9.2]{owen2013monte}.\footnote{One possibility to control the variance introduced by IS while retaining unbiasedness is to use defensive importance sampling, where the importance distribution is designed to be a mixture between the nominal distribution and some desired alternative \parencite[Section 9.11]{owen2013monte}.}

Generally it is a rule of thumb to not use IS for large dimensions \parencite{au2003important, li2005curse}. See \textcite[Example 9.3]{owen2013monte} for an example using mean zero Gaussian distributions where slight changes in variance from the importance distribution result in massive variance increments for large $d$. ARMS manages to bypass this problem because of the condition that the marginals remain unchanged, leading to some cancellation. Nonetheless, since the search space is so large, it is infeasible to cover it entirely, causing the design of an adequate importance distribution to either be heuristic or to rely on domain knowledge.

\subsubsection{Unwanted Generalization}
\label{sec:unwanted_gen}
The problems outlined in the previous section revolved mostly around how hard it can be for the sampling to get close to minimizing values. In this section, we show that issues with the probabilistic formulation from \Cref{sec:prob_framework} go beyond the stochasticity introduced by sampling. To understand why, we remind the reader that the problem involves finding the best solution among $2^d$ candidates and that, in general, the value of a single $J(\Zeta_h)$ does not imply anything about evaluations in the rest of the set.

A straightforward approach to solve this problem might be to store a logit vector with $2^d$ entries, one corresponding to each vertex, and increase or decrease them according to comparisons between multiple $J(\cdot)$ evaluations. Naturally, these logits could map to probabilities by the use of some function, a common choice being the softmax. That way, increasing the probability of some solution $\bz$ would not affect the probabilities of unseen $\bz'$ relative to each other. 

Yet, because we cannot store such a large vector, we resorted to using multiple Bernoulli in this section. Although this choice can make gradient-based optimization possible, it also means that increasing the probability of some solution $\bz$ will also increase $p_{\btheta}(\cdot)$ in points that are close to it. When we say close, we refer to the Hamming distance between both vectors, which corresponds to the number of dimensions in which they differ. In our particular case, the Hamming distance between $\bz$ and $\bz'$ is given by:
\begin{equation*}
	d_H(\bz, \bz') = \sum_{i=1}^{d} z_i (1 - z_i') + (1-z_i) z_i'.
\end{equation*}
In \Cref{ex:mc_bad_gen_1}, we show a simple case where this is problematic.
\begin{example}
	\label{ex:mc_bad_gen_1}
	For $d=2$, we take $J(\cdot)$ such that:
	\begin{align*}
		\begin{bmatrix}
			J(\zeta_2) & J(\zeta_3) \\ 
			J(\zeta_0) & J(\zeta_1) 
		\end{bmatrix} =  \begin{bmatrix}
			m & M \\ 
			M & m 
		\end{bmatrix}, &&\text{where }m < M.
	\end{align*}
	Assume the update rule is:\footnote{In general, $\btheta_t$ would additionally have to be projected to $\hypercube$ in case it falls outside of this region. In this section we assume that it will always stay there.}
	\begin{equation*}
		\btheta_t = \btheta_{t-1} - \alpha \left. \left(\nabla_{\btheta} \E{\bz \sim p_{\btheta}(\cdot)}{J(\bz)}\right) \right\rvert_{\btheta = \btheta_{t-1}}.
	\end{equation*}
	If we initialize $\btheta_0 = \begin{bmatrix}0.5 & 0.5\end{bmatrix}^\top$, we have that:
	\begin{equation*}
		\left. \left(\nabla_{\btheta} \E{\bz \sim p_{\btheta}(\cdot)}{J(\bz)}\right) \right\rvert_{\btheta = \btheta_{0}} = \zeros
	\end{equation*}
	and the optimization will be stuck at $\btheta_0$. In that case, the final solution will have $50\%$ change of sampling $M$ instead of $m$. \Cref{sub:mc_bad_gen_1} shows the contour plot for this problem.	
\end{example}
Initializing $\btheta$ in the middle of the hypercube is a reasonable choice when nothing else is known about the optimal solution, since it is equidistant to all vertices. For this simple example, using MC instead of true gradients would actually help getting the optimization away from the saddle point. Nonetheless, it illustrates how different $\Zeta_h$ can interfere with each other. \Cref{thm:mc_bad_gen_2} shows another example of undesired behaviour due to this generalization.
\begin{restatable}{theorem}{mcBadGen}\label{thm:mc_bad_gen_2}
	For some arbitrary $\bz^* \in \corners$, define $J:\corners \rightarrow \R$ as:
	\begin{equation*}
		J(\bz) = \left\{ \begin{aligned}[c]
			&m &&\hspace{1em}\text{if }\bz=\bz^*\\
			&M_0 - d_H(\bz, \bz^*) \frac{\Delta M}{d} &&\hspace{1em}\text{otherwise}
		\end{aligned},
		\right.
	\end{equation*}
	where $M_0 \in \R$, $\Delta M \in \R_{>0}$ and $m = \min_{\bz \in \corners}{J(\bz)}$ is unique. Particularly, this is only satisfied if:
	\begin{gather*}
		m < M_0 - \Delta M,
	\end{gather*}
	where $J(\ones - \bz^*) = M_0 - \Delta M$ is the second lowest value. For any arbitrary dimension $i$ and assuming $p_{\btheta}(\cdot)$ is a factorized Bernoulli with parameter $\btheta$, we have:
	\begin{alignat}{2}
		\label{eq:mc_bad_gen_inner}
		&{} & \left(-\nabla_{\btheta}\E{\bz \sim p_{\btheta}(\cdot)}{J(\bz)}\right)_i \left(\nabla_{\btheta}p_{\btheta}(\bz^*)\right)_i \geq 0&\\	
		&\iff & m\leq M_0 - \frac{\Delta M}{d \prod_{j \neq i} p_{\theta_j}(z_j^*)}.& \nonumber	
	\end{alignat}
\end{restatable}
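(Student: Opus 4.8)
The plan is to evaluate both factors on the left-hand side of the stated equivalence in closed form and then simplify the resulting sign condition into the scalar inequality on the right. My foundation is the exact-gradient identity \Cref{eq:true_as_avg}, so that I only need per-coordinate derivatives of the factorized Bernoulli mass. Writing $p_{\theta_i}(z_i) = \theta_i^{z_i}(1-\theta_i)^{1-z_i}$, the two elementary facts I will lean on are $\frac{\partial}{\partial \theta_i}p_{\theta_i}(z_i^*) = 2z_i^* - 1$ and, by factorization,
\begin{equation*}
\left(\nabla_{\btheta} p_{\btheta}(\bz^*)\right)_i = (2z_i^* - 1)\prod_{j \neq i} p_{\theta_j}(z_j^*).
\end{equation*}
I assume $\btheta$ lies in the interior of $\hypercube$, so that this product is strictly positive; this is exactly the condition under which the right-hand side of the equivalence is even well-defined, and it lets me cancel the positive product out of every inequality later on. In particular the sign of $\left(\nabla_{\btheta} p_{\btheta}(\bz^*)\right)_i$ is carried entirely by $(2z_i^*-1)$.

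To handle the other factor without summing over all $2^d$ vertices, I would rewrite the cost as a Hamming-distance background plus an indicator spike at the optimum,
\begin{equation*}
J(\bz) = M_0 - \frac{\Delta M}{d}\, d_H(\bz, \bz^*) + (m - M_0)\,\ind[\bz = \bz^*],
\end{equation*}
which reproduces both branches of the definition (the spike corrects the background value $M_0$ at $\bz^*$ down to $m$). Linearity of expectation then gives $\E{}{d_H(\bz,\bz^*)} = \sum_k \left(1 - p_{\theta_k}(z_k^*)\right)$, since coordinate $k$ disagrees with probability $1 - p_{\theta_k}(z_k^*)$, together with $\E{}{\ind[\bz = \bz^*]} = p_{\btheta}(\bz^*)$. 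Differentiating in $\theta_i$ and invoking the two facts above, only the $i$-th term of the Hamming sum and the spike survive, and both carry a common factor $(2z_i^*-1)$:
\begin{equation*}
\frac{\partial}{\partial \theta_i}\E{}{J(\bz)} = (2z_i^* - 1)\left[\frac{\Delta M}{d} + (m - M_0)\prod_{j \neq i} p_{\theta_j}(z_j^*)\right].
\end{equation*}

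Finally I would multiply the two computed factors. The product of the $(2z_i^*-1)$ terms is $(2z_i^*-1)^2 = 1$, so that sign disappears entirely, and dividing out the strictly positive $\prod_{j \neq i} p_{\theta_j}(z_j^*)$ reduces the condition $\left(-\nabla_{\btheta}\E{}{J(\bz)}\right)_i \left(\nabla_{\btheta} p_{\btheta}(\bz^*)\right)_i \geq 0$ to $\frac{\Delta M}{d} + (m - M_0)\prod_{j \neq i} p_{\theta_j}(z_j^*) \leq 0$. Rearranging and dividing once more by the positive product yields precisely $m \leq M_0 - \frac{\Delta M}{d\prod_{j \neq i} p_{\theta_j}(z_j^*)}$. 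I do not anticipate a hard step; the proof is a controlled computation, and the only real subtlety is bookkeeping the $(2z_i^*-1)$ factors so that their cancellation makes the final criterion independent of the bit $z_i^*$. The Hamming-plus-spike decomposition is the device that exposes this cancellation and keeps both the expectation and its derivative free of any explicit $2^d$-term sum, and I would state the interior-of-$\hypercube$ assumption explicitly to keep the cancelled product nonzero.
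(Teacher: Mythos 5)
Your proposal is correct: the closed-form expectation you derive, $\E{\bz\sim p_{\btheta}(\cdot)}{J(\bz)} = M_0 - \tfrac{\Delta M}{d}\sum_k\bigl(1-p_{\theta_k}(z_k^*)\bigr) + (m-M_0)\,p_{\btheta}(\bz^*)$, differentiates to exactly the quantity the paper obtains, namely $(2z_i^*-1)\bigl[\tfrac{\Delta M}{d}+(m-M_0)\prod_{j\neq i}p_{\theta_j}(z_j^*)\bigr]$, and the remaining sign analysis matches. The route is genuinely different, though. The paper never computes $\E{}{J}$ in closed form; it expands the gradient as the sum over all $2^d$ vertices, groups the vertices into pairs differing only in coordinate $i$, and reduces the $i$-th gradient component to $-\,\mathbb{E}_{\{z_j\}_{j\neq i}}\bigl[J(\dots,z_i^*,\dots)-J(\dots,1-z_i^*,\dots)\bigr]$, which it then evaluates by a two-case analysis on whether the pair contains $\bz^*$. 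Your ``Hamming background plus spike'' decomposition $J(\bz)=M_0-\tfrac{\Delta M}{d}d_H(\bz,\bz^*)+(m-M_0)\ind[\bz=\bz^*]$ replaces that pairing argument with linearity of expectation, which is arguably cleaner bookkeeping and yields the global expected loss as a byproduct; what you lose is the paper's explicit exhibition of the gradient component as an expected discrete derivative in the sense of \Cref{eq:pd_der}, which is the form the surrounding discussion of ``unwanted generalization'' leans on. One small point to keep: you correctly flag that $\prod_{j\neq i}p_{\theta_j}(z_j^*)>0$ must be assumed before dividing, which is the same caveat the paper states at its final step.
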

\begin{proof}
	See \Cref{sec:mc_bad_gen_2}.
\end{proof}
The LHS of \Cref{eq:mc_bad_gen_inner} corresponds to the $i$-th summand of the inner product between the direction used in gradient descent (\ie, $-\nabla_{\btheta}\mathbb{E}[J(\bz)]$) and the direction that increases the probability of $\bz^*$ the most (\ie, $\nabla_{\btheta}p_{\btheta}(\bz^*)$). Their sign being the same implies the (negative) gradient is pointing towards the optimal solution in that dimension. \Cref{tab:mc_bad_gen_2} illustrates one $J(\cdot)$ following the description from the theorem. The intuition behind $J(\cdot)$ is that, if not for $\bz^*$, the optimal value would be $\ones-\bz^*$, the furthest point from the true solution. Additionally, the closer $\bz$ is to $\ones - \bz^*$, the better $J(\bz)$ gets. To overcome the joint effect where most of the gradients (in the PB perspective of \Crefalt{eq:pd_der}) evaluated on the set $\corners \setminus {\bz^*}$ point away from $\bz^*$, $m$ must be much less than $M_0 - \Delta M$.

\begin{table}
	\centering
	\begin{tabular}{ | c | c| c | } 
		\hline
		$\bz$ & $d_H(\bz, \bz^*)$ & $J(\bz)$ \\
		\hline
		$\begin{bmatrix}0 & 0 & 0 \end{bmatrix}^\top$ & $3$ & $M_0 - \Delta M$ \rule{0pt}{3ex} \\
		$\begin{bmatrix}0 & 0 & 1 \end{bmatrix}^\top$ ; $\begin{bmatrix}0 & 1 & 0 \end{bmatrix}^\top$ ; $\begin{bmatrix}1 & 0 & 0 \end{bmatrix}^\top$& $2$ & $M_0 - 2 \Delta M/3$ \\
		$\begin{bmatrix}0 & 1 & 1 \end{bmatrix}^\top$ ; $\begin{bmatrix}1 & 1 & 0 \end{bmatrix}^\top$ ; $\begin{bmatrix}1 & 0 & 1 \end{bmatrix}^\top$& $1$ & $M_0 - \Delta M/3$ \\
		$\begin{bmatrix}1 & 1 & 1 \end{bmatrix}^\top$ & $0$ & $m$ \rule[-1.5ex]{0pt}{0pt} \\
		\hline
	\end{tabular}
	\caption{Example of $J(\cdot)$ following the conditions from \Cref{thm:mc_bad_gen_2} for $d=3$ and $\bz^* = \ones$.}
	\label{tab:mc_bad_gen_2}
\end{table}

\begin{corollary}
	\label{cor:mc_bad_gen_2}
	For the setting described in \Cref{thm:mc_bad_gen_2}, define error $\epsilon$ such that
	\begin{equation*}
		\max_{i \in \{1,\dots,d\}}p_{\theta_i}(z_i^*) = 1-\epsilon
	\end{equation*}
	and assume $\epsilon $ is greater than zero.\footnote{Similar results can be derived if some coordinates have error zero, but we opt to use these assumptions to simplify the exposition.} \Cref{eq:mc_bad_gen_inner} is satisfied if and only if:
	\begin{align*}
		m \leq - \Theta\left(\frac{1}{d} \left(\frac{1}{1-\epsilon}\right)^d \right) = -\Theta\left(\frac{c^d}{d}\right), \quad \text{for }c>1.
	\end{align*}
\end{corollary}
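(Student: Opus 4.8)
The plan is to reduce the whole statement to the equivalence already established in \Cref{thm:mc_bad_gen_2}. That theorem says that, for the fixed dimension $i$ under consideration, \Cref{eq:mc_bad_gen_inner} holds if and only if
\[
	m \;\leq\; M_0 - \frac{\Delta M}{d \prod_{j \neq i} p_{\theta_j}(z_j^*)} \;=:\; T_i .
\]
So the corollary is really a claim about the magnitude of the threshold $T_i$, and it suffices to show that $T_i = -\Theta(c^d/d)$ with $c = 1/(1-\epsilon)$. First I would substitute into $T_i$ and isolate the only genuinely $d$-dependent quantity, the product $\prod_{j\ne i} p_{\theta_j}(z_j^*)$; the constants $M_0$ and $\Delta M$ do not affect the exponential rate and will be absorbed into the $\Theta$ at the very end.

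For the main (necessity) direction I would bound the product from above. By the definition of $\epsilon$ we have $\max_k p_{\theta_k}(z_k^*) = 1-\epsilon$, so \emph{every} factor satisfies $p_{\theta_j}(z_j^*) \le 1-\epsilon$, whence $\prod_{j\ne i} p_{\theta_j}(z_j^*) \le (1-\epsilon)^{d-1}$. Substituting and writing $c = 1/(1-\epsilon)$ gives
\[
	T_i \;\le\; M_0 - \frac{\Delta M}{d}(1-\epsilon)^{-(d-1)} \;=\; M_0 - \frac{\Delta M(1-\epsilon)}{d}\,c^d .
\]
Since $\epsilon>0$ forces $c>1$, the term $c^d/d$ diverges and dominates the constant $M_0$, so for large $d$ we obtain $T_i \le -\Theta(c^d/d)$. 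Combined with the theorem's equivalence, this yields the direction ``\Cref{eq:mc_bad_gen_inner} $\Rightarrow m \le -\Theta(c^d/d)$,'' which is the quantitatively meaningful half: even to make the gradient point toward $\bz^*$ in this coordinate, the optimum $m$ must be exponentially negative in $d$.

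To obtain the converse and hence the two-sided $\Theta$ in the ``if and only if,'' I need a matching lower bound on $-T_i$, i.e.\ a lower bound on the product. The cleanest route is to fold the actual geometric mean of the retained probabilities into the base: setting $c := \big(\prod_{j\ne i} p_{\theta_j}(z_j^*)\big)^{-1/(d-1)}$ makes $\prod_{j\ne i} p_{\theta_j}(z_j^*) = c^{-(d-1)}$ exactly, so that $T_i = M_0 - \frac{\Delta M}{d}\,c^{d-1}$, which is $-\Theta(c^d/d)$ two-sidedly once $M_0$ is dominated; the arithmetic--geometric mean inequality together with $\max_k p_{\theta_k}(z_k^*)=1-\epsilon$ guarantees $c \ge 1/(1-\epsilon) > 1$. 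Equivalently, in the balanced-error situation being described (all coordinates attaining the extremal probability $1-\epsilon$, up to the simplifications flagged in the footnote), the product equals $(1-\epsilon)^{d-1}$ exactly and the $\Theta$ is tight with the stated $c = 1/(1-\epsilon)$.

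The routine parts are the substitution and the asymptotic absorption of $M_0$ and $\Delta M$ into $\Theta$. The one real subtlety, and the step I would handle most carefully, is that the hypothesis $\max_k p_{\theta_k}(z_k^*) = 1-\epsilon$ only controls the product from \emph{above}; by itself this delivers exactly the necessity direction. Upgrading this to the full two-sided $\Theta$ requires also lower-bounding $\prod_{j\ne i} p_{\theta_j}(z_j^*)$, which is precisely where one must either pass to the geometric-mean constant $c$ or invoke the balanced/simplified setting, since in full generality a single very small coordinate probability could make the product (and thus $|T_i|$) much larger than $c^d/d$.
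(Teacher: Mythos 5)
Your proposal is correct and follows the same route the paper intends: the corollary carries no separate proof and is read off directly from the threshold $m \le M_0 - \Delta M /\bigl(d\prod_{j\ne i}p_{\theta_j}(z_j^*)\bigr)$ of \Cref{thm:mc_bad_gen_2} together with the bound $\prod_{j\ne i}p_{\theta_j}(z_j^*)\le(1-\epsilon)^{d-1}$, which yields the $-\Theta\bigl(c^d/d\bigr)$ rate with $c=1/(1-\epsilon)>1$ once $M_0$ and $\Delta M$ are absorbed. Your caveat that the hypothesis only bounds the product from above---so the two-sided $\Theta$ in the ``if and only if'' strictly requires either the balanced case or redefining $c$ via the inverse geometric mean---is accurate and identifies an informality the paper elides.
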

In accordance to the previous discussion, \Cref{cor:mc_bad_gen_2} states that the minimum has to go to minus infinity at a rate exponential in $d$ for the problem to be solvable via the probabilistic formulation. It is worth it to mention that $m$, $M_0$ and $\Delta M$ are pre-defined values and none of them needs to have any dependence on $d$ for $J(\cdot)$ to be of the described form (\eg, $m$ does not need to go to $-\infty$ as $d \to \infty$ to be the minimum). Moreover, $J(\cdot)$ will always be bounded to the interval $[m, M_0)$. What happens as $d$ increases is that the differences between evaluations of neighbours (\eg, $J(\bz) - J(\bz') = \Delta M/d$) become smaller and multiple $(J(\bz), J(\bz'))$ become closer together. Because the number of vertices increases exponentially, but $\Delta M/d$ only decreases linearly, relative contribution of the $2^d-1$ points towards $\ones - \bz^*$ will increase at a large rate in this example.

From an optimization perspective, this means that, asymptotically, the solution will not converge to $\bz^*$, even if $p_{\btheta}(\cdot)$ is initialized close to it. Once more, we point out that this is all considering true gradients, which correspond to either using MC with $n \to \infty$ or using $2^d$ evaluations per timestep. This number of evaluations of $J(\cdot)$ would be enough to solve the problem by simply searching among solutions, without even needing any method. Therefore, the probabilistic approach has a hidden assumption that good solutions must also be close to each other according to the Hamming distance, or, alternatively, the best solution must be low enough to compensate for the difference. Although we presented results on a global scale, it is likely that the same effect will occur locally in the optimization landscape. The assumption could very well be broken in large neural networks, given that they operate in complex ways.

Another point that becomes apparent after looking at this example is the dependence of these methods on the scale of the loss. The relative ordering of $J(\bz)$ remains the same, regardless of which $m$, $M_0$ and $\Delta M$ are chosen. Still, values of $m$ that do not comply with the bound from \Cref{thm:mc_bad_gen_2} will cause the method to fail, which would not be the case had $m$ been small enough.

To finalize this discussion, we contrast our analysis with recent positive results published by \textcite{daulton2022bayesian} in the context of Bayesian optimization. Particularly, their paper focuses on smaller problems, with $d$ consisting of a few dozen units at most. Theorem 2 from that paper considers running stochastic gradient descent with unbiased gradient estimators in the probabilistic objective. It states that running infinitely many initializations for infinitely many steps yields the optimal solution for the PB optimization problem. The paper then presents some experimental results using a few parallel runs with different initializations and achieving good results.

Although their analysis was adequate for the problems they focus on, they are likely not reflective of the neural network problems from \Cref{sec:pb_nn_examples}. Namely, \Cref{cor:mc_bad_gen_2} and the subsequent discussion, investigate behaviour as $d \to \infty$ for a constant number of initializations, whereas \textcite{daulton2022bayesian} consider a fixed size problem with only the number of initializations and time steps going to infinity. In the overparametrized regime, we should have a $d$ much larger than the number of possible parallel runs, which is closer to our assumptions.

\subsection{Alternative Parametrizations}
\label{sec:alternative_parametr}
Throughout this section, we focused on the case where:
\begin{align*}
	p_{\btheta}(\bz) = \prod_{i=1}^{d} p_{\theta_i}(z_i), \hspace{0.1\textwidth}\text{for }p_{\theta_i}(z_i) =	
	\begin{cases}
		\theta_i & \text{if }z_i=1 \\
		1 - \theta_i & \text{if }z_i=0 \end{cases},
\end{align*}
which we alternatively denoted as:
\begin{align*}
	p_{\btheta}(\bz) = \prod_{i=1}^{d} \paddedtheta_i^{z_i} \ntheta_i^{\nz_i}.
\end{align*}
Note that, when updating $\btheta$, care must be taken so that it does not fall out of $\hypercube$. It is common practice to, instead of updating $\btheta$ directly, reparametrize it by writing:
\begin{equation*}
	\btheta = \theta(\br),\hspace{0.1\textwidth}\text{for }\br \in \sR,
\end{equation*}
where $\theta(\cdot)$ is applied element-wise and usually chosen such that its range is $[0,1]$. The most common choice for $\theta(\cdot)$ is the sigmoid function due to its simplicity.

Nevertheless, there are some results in the literature questioning the effectiveness of the sigmoid for this purpose. We note that it corresponds to the softmax function when we have only two outputs. \textcite{mei2020escaping} establish, both theoretically and empirically, two problems that appear whenever optimizing an expectation with respect to the softmax: high sensitivity to the initialization (``softmax gravity well'') and slow convergence (``softmax damping''). \textcite{li2020arm} use sigmoid gates as probability masks on NN pruning, where they point out the slow transition between ones and zeros, which they then attempt to reduce by using fixed temperature parameters. Similarly, \textcite{serra2018overcoming} apply annealed sigmoid gates directly as NN masks in sequential task learning, and also point out that the low gradient magnitudes of the sigmoid (with $\tau = 1$) harmed performance, which led them to add a compensation to their annealing schedule.

Based on these observations, our experimental sections will also include the following alternative choices for $\theta(\cdot)$:
\begin{itemize}
	\item Direct parametrization: where $\br = \btheta$, similarly to what was done previously in this section, with the caveat that each $\theta_i$ has to be clamped to $[0,1]$ after being updated.\footnote{In practice we had to clamp it to $[\epsilon, 1-\epsilon]$ instead, for some small $\epsilon$. This avoided the gradient becoming zero prematurely.}
	\item Sinusoid parametrization: we normalize a sinusoid to lie in the $[0,1]$ interval. Qualitatively, its main distinction from the sigmoid is that high values of $r_i$ oscillate between one and zero, rather than causing $z_i$ to get arbitrarily close to deterministic.
	\item Escort: \textcite{mei2020escaping} propose this as an alternative parametrization to the softmax to avoid the described problems. In this work, we simply use the version they propose in their paper, where, for a categorical random variable with two classes, $\theta(\cdot)$ is input two scalars (\ie, $(\br)_i = \br_i \in \R^2$), instead of a single scalar as in sigmoid, where one of the logits from the corresponding softmax is fixed at zero. Following the authors, we use $P=4$ in the experiments.
\end{itemize}
For all of the gradient estimators presented in this section, changing the parametrization simply amounts to changing the score function $\nabla_{\btheta} \log p_{\btheta}(\cdot)$ for $\nabla_{\br} \log p(\cdot; \theta(\br))$ in all $\hat{\bg}(\cdot)$ formulas. In our experiments, the methods will also require inverting given probabilities to initialize $\br$ to the desired values. Sometimes it is possible to invert the mapping in multiple ways, in which case we chose an arbitrary inverse map. \Cref{tab:alternative_param} summarizes all estimators used, as well as their expressions, inverse maps and score functions.

\newcommand{\myMcTableHeader}[1]{
	\begingroup
	\renewcommand{\arraystretch}{1.0}
	\begin{tabular}{@{}c@{}}
		#1 
	\end{tabular}
	\endgroup
}

\begingroup
\renewcommand{\arraystretch}{1.5}

\begin{table}
	\centering
	\resizebox{\textwidth}{!}{%
		\begin{tabular}{ | c | c| c | c | c | } 
			\hline
			Name & \myMcTableHeader{Domain \\ $\br_i \in$} & \myMcTableHeader{Expression \\ $\theta(\br_i)$} & \myMcTableHeader{Inverse map* \\ $\br^{-1}(\theta_i)$} & \myMcTableHeader{Score Function** \\ $(\nabla_{\br} \log p(\bz; \theta(\br)))_i$}
			\\
			\hline
			Cosine & $\R$ & $\frac{1}{2}(1-\cos(r_i))$ & $- \arccos(2\theta_i -1) + \pi$ & $\frac{\cos(r_i) + (2 z_i - 1)}{\sin(r_i)}$ \\
			Direct & $\R$ & $r_i$ & $\theta_i$ & $\frac{z_i - \theta(r_i)}{\theta(r_i)(1-\theta(r_i))}$ \\
			Sigmoid & $\R$ & $\frac{1}{1+\euler^{-r_i}}$ & $\log \theta_i - \log(1 - \theta_i)$ & $z_i - \theta(r_i)$ \\
			Escort & $\R^2$ & $\frac{|(\br_i)_1|^P}{|(\br_i)_1|^P + |(\br_i)_2|^P}$ & $\begin{bmatrix} \left( \frac{\theta_i}{1 - \theta_i} \right)^{\frac{1}{P}} & 1 \end{bmatrix}^\top$ &  $\begin{bmatrix} -\frac{(\theta(r_i) - z_i)P}{(\br_i)_1} & \frac{(\theta(r_i) - z_i)P}{(\br_i)_2} \end{bmatrix}^\top$\\
			\hline
		\end{tabular}
	}
	\caption{Summary of alternative parametrizations. (*) Inverse map assumes $\theta_i \notin \{0,1\}$. (**) If $\theta(\br_i) \in \{0,1\}$, we use zero instead of these formulas.}
	\label{tab:alternative_param}
\end{table}
\endgroup

\section{Hybrid Approaches}
\label{ch:other}
\Cref{ch:numerical_continuation} discussed CP approaches where a smooth function is iteratively annealed and minimized until it becomes close to the desired discrete function, whereas \Cref{ch:monte_carlo} discussed MC methods, which store a parametrized probability distribution that tracks regions with seemingly better performances using a gradient-based procedure. These gradients are essentially comparisons between evaluations at the corners. There, we mentioned the reparametrization trick and explained why it is not applicable here. Nonetheless, due to the aforementioned successes of pathwise gradient estimation in other contexts, the belief that incorporating them in PB optimization should be beneficial has led to attempts to do so. In this section, we briefly mention some of these attempts, while emphasizing that they are not the main focus of this study. Despite that, we include them in some of the experiments. Intuitively, they correspond to hybrid versions which use both the gradients, like CP approaches, but also rely on sampling and comparing evaluations on $\corners$, like MC methods.

One of the simplest and most widely used such method is straight-through (ST) estimation. Originally introduced by \textcite{bengio2013estimating} as a way of adding stochastic units to neural networks, the forward pass consists of a sampling operation,\footnote{Some works simplify this even further by using a threshold instead of sampling.} but the backward pass treats the sampling operation as if it had been the identity. Specifically, they propose to use:
\begin{alignat*}{3}
	&\text{Forward pass: }&  &\mathbb{E}[J(\bz)] \approx J(\bz) && \text{for }\bz \sim \prod_{i=1}^d \Ber[\sigma(r_i)]	\\
	&\text{Backward pass: }& &\frac{\partial \mathbb{E}[J(\bz)]}{\partial r_i} \approx \frac{\partial J(\bz)}{\partial z_i}.
\end{alignat*}

Note that, in addition to treating the sampling operation as if it had been an identity, the above steps ignore the derivative of the sigmoid with respect to $r_i$. \textcite{bengio2013estimating} claimed that including the additional derivative harmed performance. Despite this estimator being biased, many works have used it throughout the years, mainly motivated by its empirical performance rather than analytical results \parencite{zhou2019deconstructing, srinivas2017training, bulat2019improved, bethge2019back, bulat2020high, martinez2020training}. 

More recently, \textcite{shekhovtsov2021reintroducing} presented theoretical reasoning behind straight-through estimation considering a more general framework that includes the method above. Particularly, ST estimation corresponds roughly to approximating the PB derivative of \Cref{eq:pd_der} by using a first order Taylor expansion instead. Additionally, the paper proves that, for dimension $i$, if the expected $f_i(\bz) = \partial J(\bz)/\partial z_i$ has absolute value larger than the Lipschitz constant of $f_i(\bz)$, the corresponding (negative) straight-through gradient will be a descent direction.

Another very popular way of combining gradients and sampling in PB optimization is via the Gumbel-softmax estimator \parencite{jang2016categorical, maddison2016concrete}, a technique that is inspired by the Gumbel max trick. In general, sampling from a categorical distribution with probabilities $(\theta_c)_{c=1}^C$ can be equivalently done with this trick as follows:
\begin{enumerate}
	\item Sample $u_c \sim U[0,1]$.
	\item Transform it using inverse transform sampling: $T_c = \log \theta_c - \log(- \log u_c)$.
	\item Output $\argmax{c=\{1,\dots, C\}} T_c$ (sometimes in one-hot vector form).
\end{enumerate}
Samples from the Gumbel-softmax distribution, also called Concrete distribution, are obtained by changing the last step for $\exp(T_c/\tau)/\sum \exp(T_{c'}/\tau)$. Naturally, relaxing the estimation of an expectation with respect to a categorical distribution by instead using the Gumbel-softmax results in a biased estimator. On the other hand, this estimator can be promptly combined with the reparametrization trick. Furthermore, this distribution can be annealed towards the categorical as $\tau$ goes to zero.\footnote{In fact, the same logic used to motivate CP methods could also be used with Gumbel-softmax. We choose to focus on the deterministic algorithms in \Cref{ch:numerical_continuation} due to their more pronounced recent successes as well as to better separate sampling from gradients as two different sources of top-down information.} Some methods use this biased approach directly \parencite{louizos2017learning, paulus2020rao, zhou2021effective}. \textcite{maddison2014sampling} extend the Gumbel-softmax to continuous random variables and \textcite{paulus2020gradient} extend it to more general combinatorial spaces.

Some unbiased approaches also incorporate the Gumbel-softmax to the design of control variates. When computing the estimator following \Cref{eq:main_control_variates}, $h(\bz)$ is taken to be $J(\bz_G)$, where $\bz_G$ is dependent on $\bz$, but is ultimately sampled from the Gumbel-softmax. The correction is often done using a reparametrization-based estimator, instead of using closed-form $\bg_h$. Popular methods approaching the problem this way are REBAR \parencite{tucker2017rebar} and RELAX \parencite{grathwohl2017backpropagation}.

Other works use Taylor expansions as control variates for unbiased estimation, therefore also relying on first-order derivatives, such as Muprop \parencite{gu2015muprop} and the control-variate used in the experiments from \textcite{mohamed2020monte}, referred to as delta method. More recently \textcite{titsias2022double} propose combining a two-level application of LOORF and Taylor expansions, which was later improved by \textcite{shi2022gradient} by also incorporating Stein operators to the two-level estimator, but ultimately also relying on inclusion of $\partial J(\bz)/\partial z_i$ in the control variates.

Some of these methods have known pitfalls. \textcite{andriyash2018improved} point out that the raw Gumbel-sotmax estimators are highly dependent on tuning the temperature and that the apparent improvement obtained by such tuning can be replicated in other methods via simple entropy regularization. Additionally, \textcite{tucker2017rebar} show that these same Gumbel estimators fail in a very simple one dimensional problem where $J(z) = (z - C)^{2}$ and $C$ is chosen to be very close to $0.5$. We also verified empirically that the version of straight-through estimation presented above fails in this same problem setting.

Comparisons between unbiased hybrid methods and MC methods that do not use $\nabla_{\bz} J(\bz)$ (\ie, methods from \Cref{ch:monte_carlo}) show mixed results. In a non-exhaustive analysis of recent works, we found five papers reporting mostly superior results of variants of ARMS\footnote{\textcite{yin2018arm, dimitriev2021arms, yin2019arsm, shi2022gradient, dong2020disarm}.} while three reported superior performances of RELAX/REBAR.\footnote{\textcite{andriyash2018improved, lorberbom2019direct, dong2021coupled}.} We mention that both REBAR and RELAX compute three forward passes and two backward passes for each sample $s = \{1, \dots, n\}$. Perhaps this computation would have been better used with larger $n$ instead. In more general comparisons between pathwise gradient estimators and score function estimators, \textcite{mohamed2020monte} noted that the latter seem to benefit more from larger $n$. In our experiments, we will occasionally include some of the hybrid methods from this section.

\section{Experiments}
\label{ch:experiments}
Now that we understand the ideas behind CP and MC methods, the next step is to see how well the intuitions developed in the previous sections generalize to more practical settings. By first scaling experiments down to smaller $d$, we can compute closed-form expressions for expectations and variances for the MC methods. Furthermore, non-overparametrized settings can help us understand how reliant the described CP methods are on large neural networks. Accordingly, we start with two problems where $\bz$ is directly input to arbitrary $J(\cdot)$, for $d \leq 10$. Afterwards, we scale up by moving to a regression problem, where $d = 8,050$ and the multiple $z_i$ act as element-wise masks to the weights of a fixed backbone neural network. Finally, we scale up even more by comparing approaches in the problems of pruning and supermasks, where we additionally compare against magnitude-based pruning (MP). We note that, even though there are papers already comparing CP and MP, MC is not well represented in the pruning literature. Namely, Monte Carlo gradient estimation is an active area of research, lots of new approaches to reduce variance during estimation and improve convergence to the true gradient have been proposed in the last few years, as described in \Cref{ch:monte_carlo}. Studies comparing against these approaches for pruning, however, are scarce, and mostly focus on straight-through estimators from \Cref{ch:other}. Code to reproduce all experiments is publicly available on GitHub.\footnote{\url{https://github.com/hsilva664/discrete_nn}.}

\subsection{Microworld}
\label{sec:microworld}
In this section, we restrict ourselves to $d \in \{4,10\}$ and $n \in \{1,4,10\}$, as well as using true gradients, which corresponds to $n \to \infty$. Our experiments aim at first comparing the different estimators from \Cref{sec:mc_overview}, then comparing the different parametrizations proposed in \Cref{sec:alternative_parametr} and finally comparing MC and CP methods, as well as comparing both against some of the hybrid methods from \Cref{ch:other}. We always initialize $\btheta_0 = [0.5, \dots, 0.5]^\top$ in these experiments and optimize with SGD.

\subsubsection{Benchmarks}
Most MC gradient estimation papers start by comparing estimators in problems of the form $J(z) = (z - C)^{2}$, for $C$ close to $0.5$ and then moving to larger ones involving discrete Variational Autoencoders, usually combined with image data sets. The former setting serves to eliminate confounding factors present in the latter, while also allowing computation of closed-form expressions. Still, we believe it might be overly simplistic and as a result not bring to light the differences between the estimators. For example, a simple second order Taylor expansion is enough to completely reconstruct the quadratic $J(\cdot)$ everywhere.

On the other hand, examples from \Cref{sec:mc_drawbacks,sec:cp_drawbacks} were adversarial, designed specifically to highlight some of the main flaws from MC and CP methods. We propose new benchmarks that, while compatible with small-scale settings, also require some additional search, either because of the presence of local minima or because $\nabla_{\bz} J(\bz)$ might not be too informative of $J(\cdot)$ away from the current $\bz$.

\begin{itemize}
	\item \textbf{ExponentialTabularLoss:} we first sample $2^d$ values, where $E_h \overset{iid}{\sim} \Exp[1.5]$ and $h \in \{0,\dots, 2^{d}-1\}$. Then, we calculate the costs by normalizing $\{E_h\}_{h=0}^{2^d - 1}$ such as to map $\max_h E_h$ to $-1$ and $\min_h E_h$ to 1. Because of the exponential distribution, most $\bz$ points will have $J(\cdot)$ closer to $1$, with a few rare exceptions closer to $-1$, meaning the solutions will be harder to find. As sampling is agnostic to the Hamming distance, $d_H(\bz, \bz^*)$ will not be indicative of anything about $J(\cdot)$ in $\bz' \notin \{\bz, \bz^*\}$. In fact, sampling might result in multiple local solutions in different parts of $\corners$. Importantly, this benchmark is not differentiable, so we will only evaluate MC methods using it.
	\item \textbf{NNLoss:} we input $\bz$ directly to a fixed neural network with a scalar output corresponding to the cost. In addition to the input and output layers, this network has $9$ fully-connected hidden layers with $20$ neurons each. Normalization and LeakyRelu follow the respective linear operations of all but the output layer, which simply maps to a scalar that is then only normalized.
	
	We initialize weights to be either $1$ or $-1$ with $50\%$ chance and pre-process the input $\bz$ by mapping it to the $[-1,1]^d$ range. The normalization uses one dimensional ``batch norm'' layers \parencite{ioffe2015batch} without affine parameters. One should run these layers in evaluation mode only, otherwise the output $J(\bz)$ will depend on the whole input batch instead of only on $\bz$. Before fixing moving normalization statistics, we initialize them by running some forward passes on random uniform points from $\hypercube$.
	
\end{itemize}

In both cases, for any fixed $d$, we initialize the loss only once and reuse it across methods and runs from the same method.

\subsubsection{Estimators}
\label{sec:microworld_compare_estimators}
Experiments in this section use only sigmoid parametrization. We start by comparing the variances of different MC estimators. As mentioned when we introduced control variates in \Cref{sec:mc_overview}, they will only reduce $\Var{(\bgh(\cdot))_i}$ if $(J(\bz) - h(\bz)) \partial \log p_{\btheta}(\bz)/ \partial \theta_i$ has smaller variance than $J(\bz) \partial \log p_{\btheta}(\bz)/ \partial \theta_i$. Analytical guarantees of variance reduction only exist in limited contexts. \textcite{richter2020vargrad}, for example, show that LOORF can have lower variance than REINFORCE, but enumerate some conditions in their proof. Similarly, \textcite{dimitriev2021arms} prove that the ARMS gradient will have lower variance than LOORF for negatively correlated $z$ and $z'$, given that $d=1$ and $n=2$. This however, does not automatically imply that the same is true for general $n$ and $d$.\footnote{\textcite{dimitriev2021arms} show that, for general $n>2$ and $d=1$, both LOORF and ARMS correspond to an average of their respective $\bgh((z, z'); \theta)$ (\ie, $n=2$ estimates). Still, the variance for $n > 2$ also has to account for the correlation between these summands.}

With that in mind, we run a single trajectory starting from $\btheta_0$ in the middle of the hypercube and then following the true gradient of the expected cost for $10,000$ steps. This trajectory serves as a guide, providing $\btheta_t$ for different timesteps. For each of them, we calculate the variances for all estimators and compare their across-dimension sums. We consider two settings: $d=4$ with $n=4$ and $d=10$ with $n=10$. In the first, we compute the variances in closed-form after leveraging some combinatorial analysis, whereas in the second we estimate them by using $10,000$ per-iteration evaluations of $\bgh(\cdot)$. More details can be found in \Cref{sec:app_true_var}. Importantly, computing ARMS variance in closed-form requires knowing the IS weights, which we derive algebraically.

\begin{figure}[!htb]
	\centering
	\includegraphics[height=1.5em]{\main/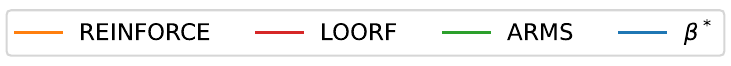}							
	\includegraphics[width=0.3\textwidth]{\main/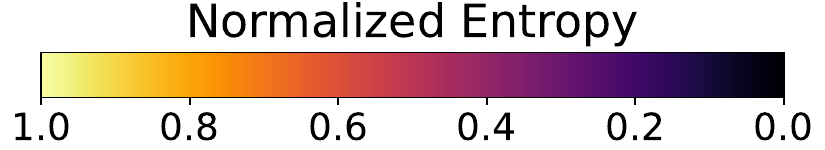}
	\par\vspace{1em}
	\includegraphics[width=0.5\textwidth]{\main/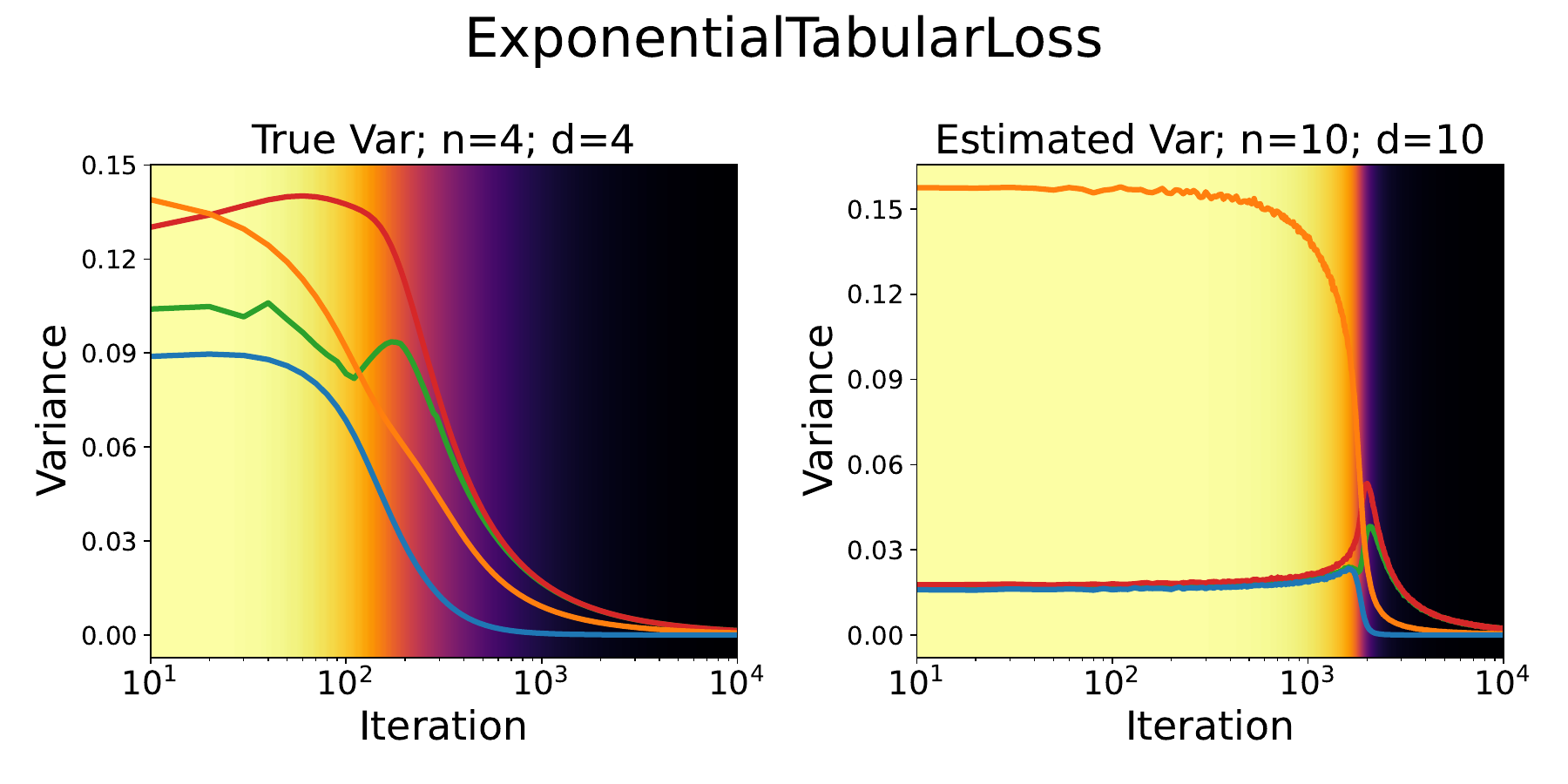}%
	\includegraphics[width=0.5\textwidth]{\main/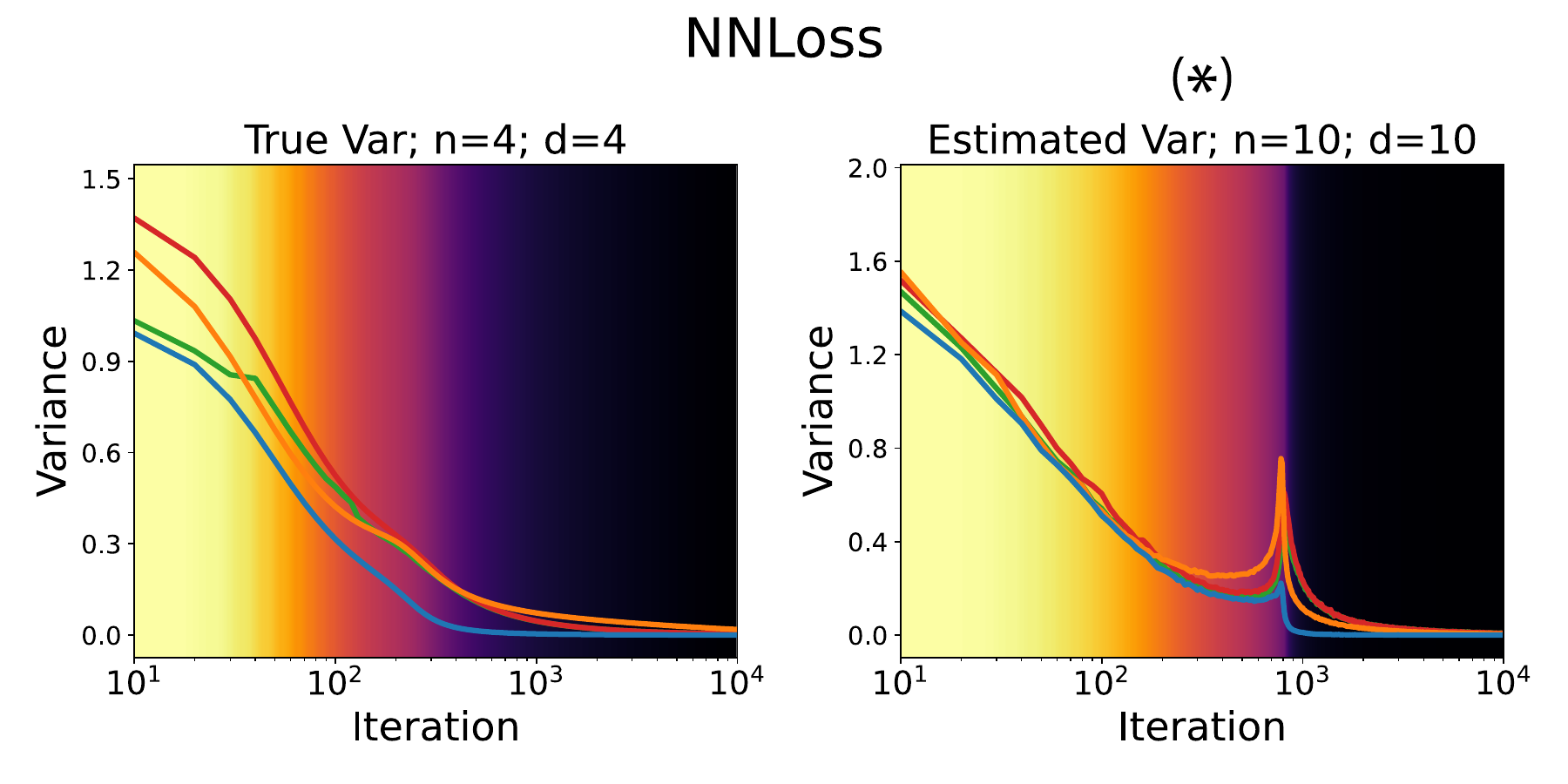}				
	\caption{Comparing estimator variances following a fixed trajectory on Microworld domains. The model was not able to solve the setting marked with (*).}
	\label{fig:microworld_variance}
\end{figure}

\Cref{fig:microworld_variance} shows the results for both settings and benchmarks, considering only learning rate $0.1$. We can see that REINFORCE indeed has lower variance than ARMS or LOORF in some cases. By overlaying the plots with the entropy, which is shared by all methods, we see that REINFORCE tends to perform better when $p_{\btheta}(\cdot)$ is almost deterministic.

Despite analytical proof of ARMS superior variance reduction being restricted to $n=2$ and $d=1$, here it indeed seems to perform better than LOORF, even when using closed-form results instead of estimates. As expected, $\beta^*$ always had the lowest variance, indicating perhaps the lower bound achievable by using control variates without further problem-specific information.

\begin{figure}[htb!]
	\centering
	\includegraphics[height=1.5em]{\main/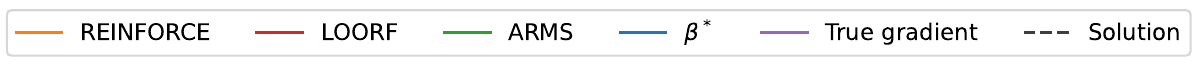}	
	\par\vspace{1em}
	\includegraphics[width=0.5\textwidth]{\main/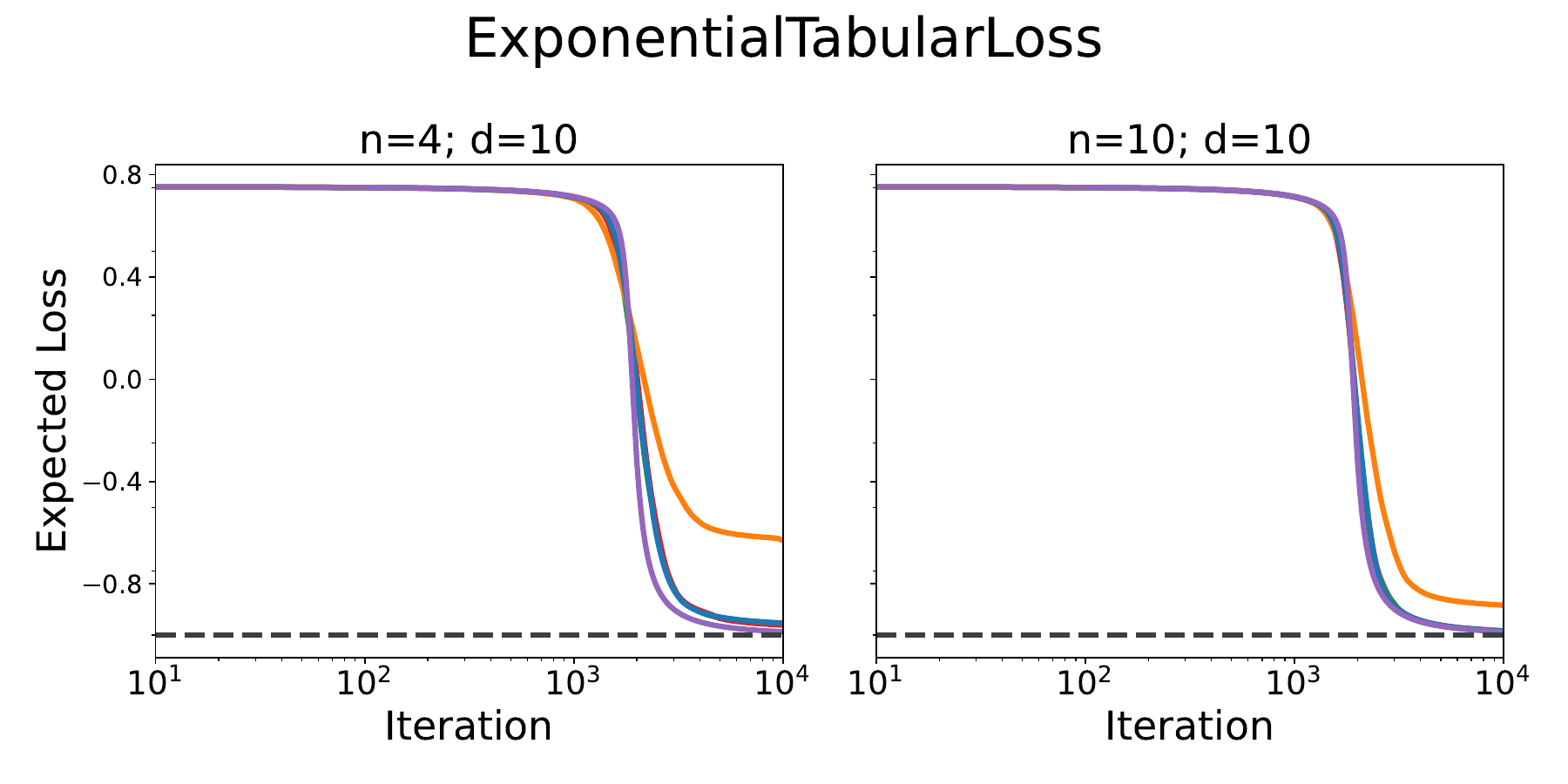}%
	\includegraphics[width=0.5\textwidth]{\main/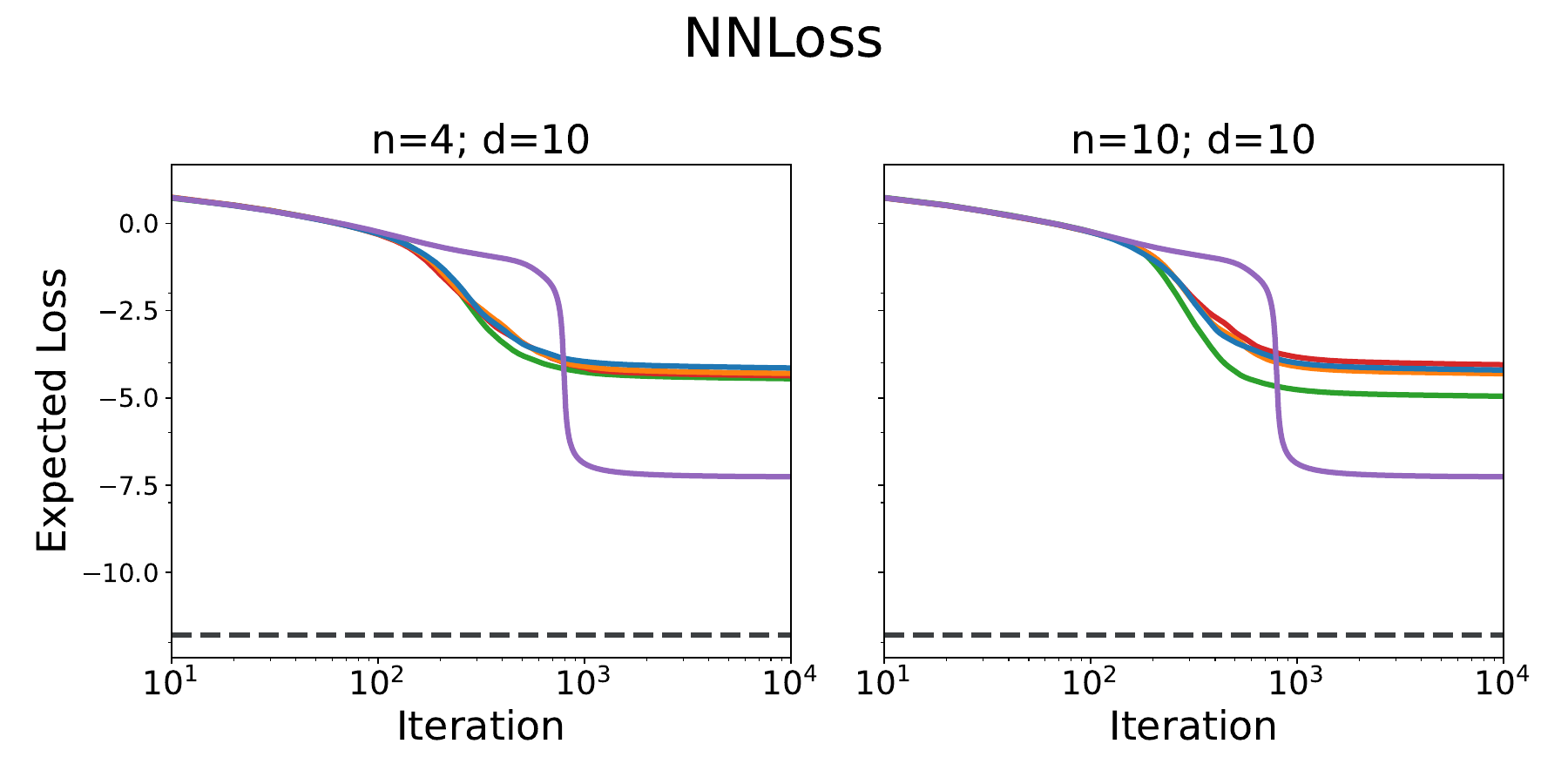}			
	\caption{Comparing estimators on microworld domains.}
	\label{fig:microworld_estimators}
\end{figure}

After analyzing the variances, the next question is how that impacts the solution of the PB optimization problems on self-generated trajectories. Since the setting with $d=4$ is not too challenging and all methods perform similarly well on it, we restrict ourselves to $d=10$ and compare the estimators for varying $n$. \Cref{fig:microworld_estimators} shows the results averaged across $100$ seeds.

REINFORCE now performs clearly worse than the other methods. We recall that it had high variance whenever entropy was high in \Cref{fig:microworld_variance} and that, because of the initialization, entropy is high at first. Since trajectories are no longer supplied, this higher initial variance was likely responsible for the model ending up far from where it would have gone had it followed the true gradient, justifying its poor performance. For the other estimators, variance reduction relative to each other did not seem to matter much. Despite using the optimal control variate, $\beta^*$ failed to perform better than LOORF and ARMS in both problems. On NNLoss, even the model using true gradients failed to reach the correct solution, although eliminating the variance still led it to a better local minimum than the MC estimators. We can explain this observation using the generalization discussion from \Cref{sec:unwanted_gen}, where interactions between $\Zeta_h$ impede the model from reaching the correct solution.

\subsubsection{Parametrizations}

\begin{figure}[htb!]
	\centering
	\includegraphics[height=1.5em]{\main/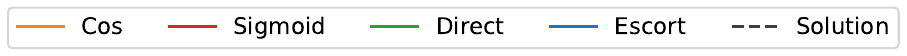}	
	\par\vspace{1em}
	\includegraphics[width=\textwidth]{\main/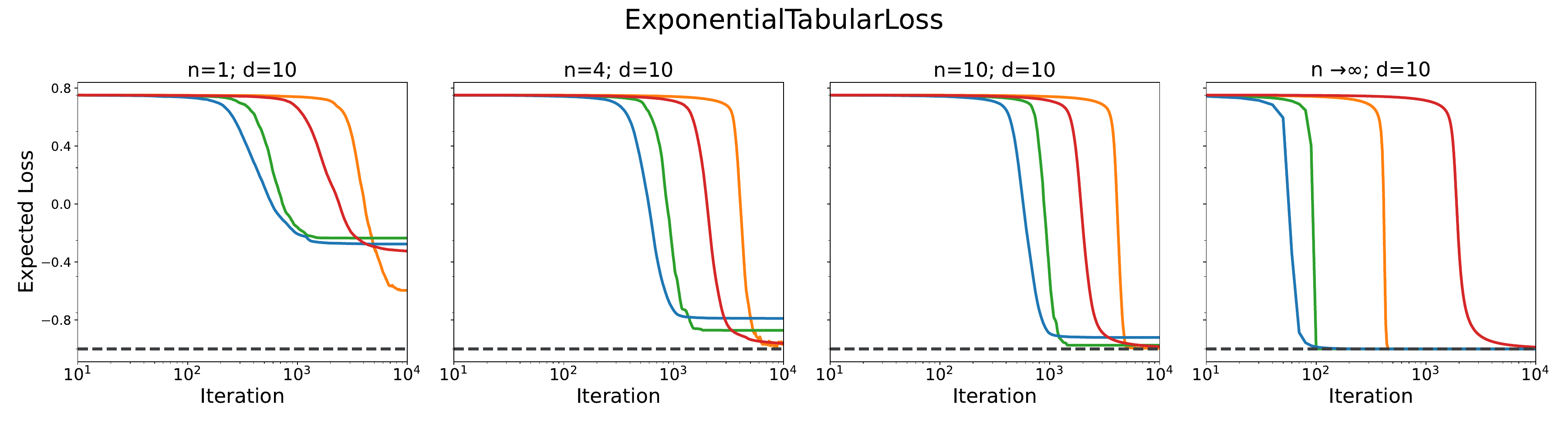}	
	\includegraphics[width=\textwidth]{\main/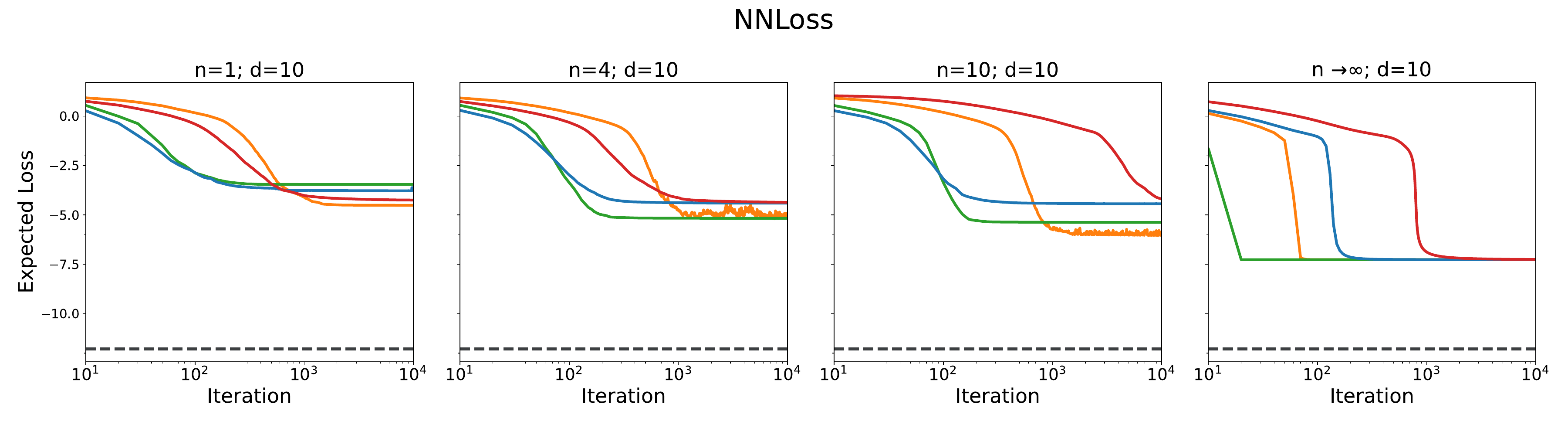}			
	\caption{Comparing parametrizations on microworld domains.}
	\label{fig:microworld_params}
\end{figure}

In this section, we run the same experiments as before, but keep the estimator fixed and vary the parametrization instead. Namely, we use REINFORCE for $n=1$ and LOORF for $n \in \{4, 10\}$. We sweep learning rates in $\{0.1, 0.01\}$ and select the best one based on the final loss. \Cref{fig:microworld_params} shows the results averaged over $100$ seeds.

By inspecting the figure, we note that the parametrizations can roughly be categorized in two groups: direct and escort; sigmoid and cosine. The first converges much faster, but often to worse values, whereas the second converges slower, but to better final values. As more samples are used, the faster group seems to maintain superior convergence speed while arriving at better final solutions.

\begin{figure}[htb!]
	\centering
	\includegraphics[width=\textwidth]{\main/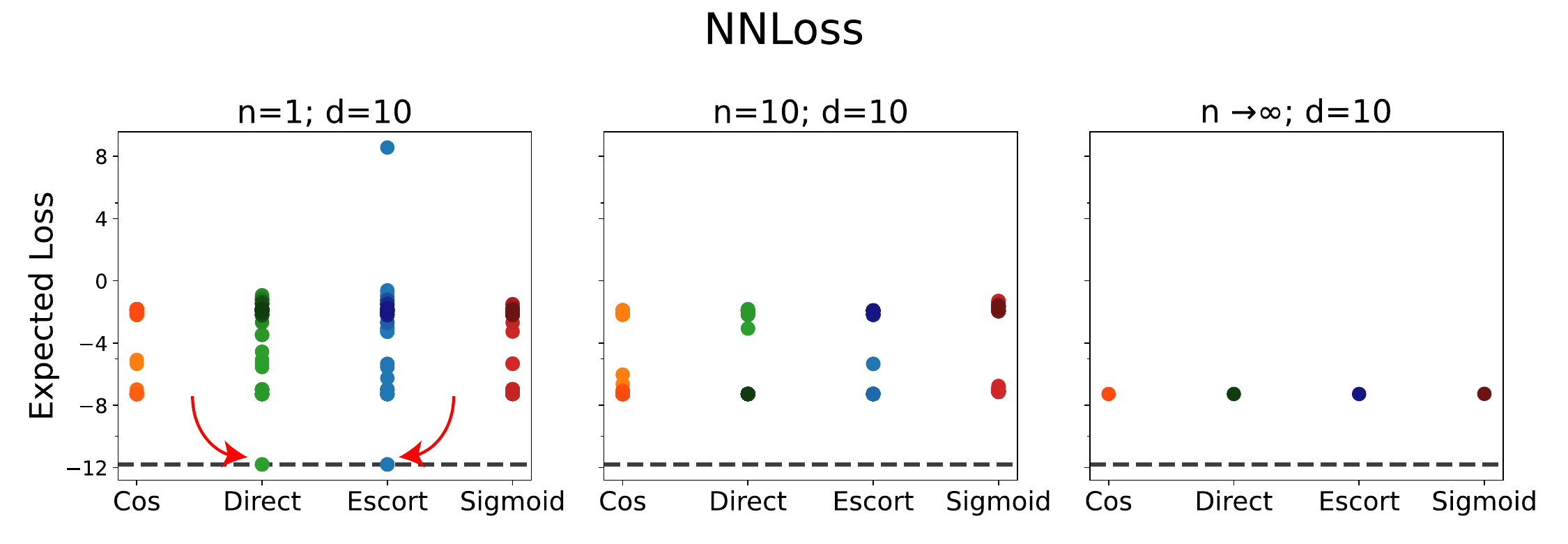}			
	\caption{Comparing parametrizations on microworld domains. Alternative visualization of NNLoss from \Cref{fig:microworld_params} showing the final loss for each seed.}
	\label{fig:microworld_params_scatter}
\end{figure}

\Cref{fig:microworld_params_scatter} depicts NNLoss scatter plots for this same experiment, where each circle corresponds to the final per-seed loss. Losses are more spread out for low $n$, but converge towards $-8$ as samples increase. For $n=1$, the higher variance caused the model to be more exploratory and even find $\bz^*$ in some runs, indicated with arrows. However, these ouliers stopped appearing as variance was reduced with increasing n and the generalization problem stopped the model from ever reaching $\bz^*$.

\subsubsection{Approaches}
\label{sec:microworld_approaches}
\begin{figure}[htb!]
	\centering
	\includegraphics[height=1.5em]{\main/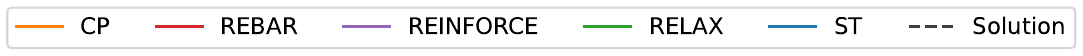}			
	\par\vspace{1em}
	\includegraphics[width=0.7\textwidth]{\main/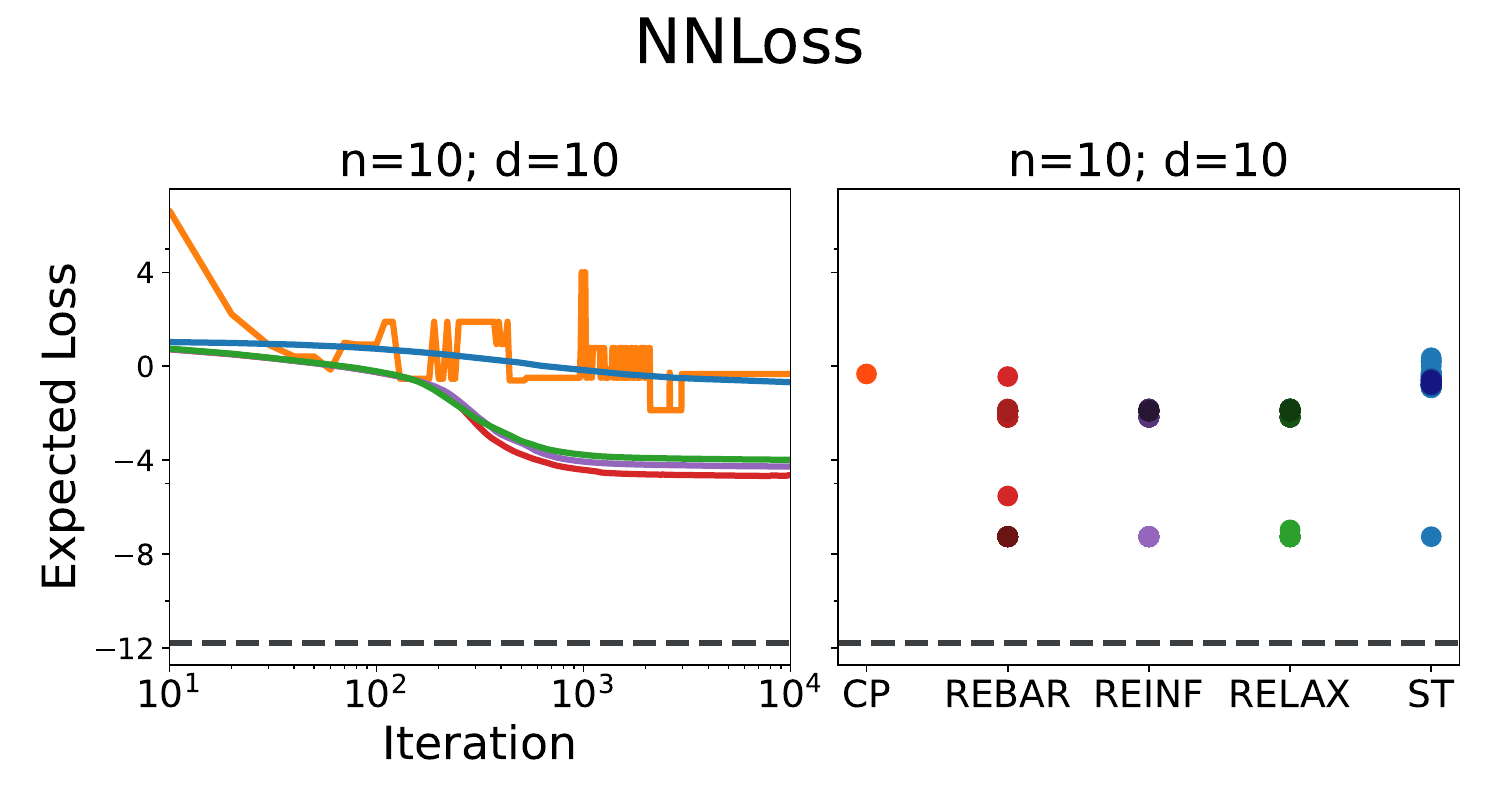}
	\caption{Comparing approaches on microworld domains.}
	\label{fig:microworld_grad}
\end{figure}
\label{sec:microworld_exp_approaches}
Finally, we compare MC and CP in \Cref{fig:microworld_grad}. We also include hybrid approaches from \Cref{ch:other}, where ST is a biased method and both REBAR and RELAX are unbiased, but use $\nabla_{\bz} J(\bz)$ in their control variates. Similarly to the above, we sweep learning rates in $\{0.1, 0.01\}$ and use $100$ seeds on non-deterministic algorithms. Additional details about hyperparameters particular to each method are deferred to \Cref{sec:app_microworld_hypers}. We parametrize REINFORCE with sigmoid. Differently from the previous experiments, selecting hyperparameters based on the lowest final loss, led to poor choices. Settings often had similar final losses and tie-breaking sometimes chose models that converged much slower. Therefore, we select by comparing the average value from iteration $10$ onward instead.

By inspecting the plots, we can see that CP has a ``jagged'' format, likely due to sudden temperature changes, causing the model to reach a new solution quickly and stay there until $\tau$ changes again. Overall, it did not seem that $\nabla_{\bz} J(\bz)$ was helpful here: ST and CP, which arguably rely on it the most, were stuck in poor local solutions, whereas the methods that combine MC and gradients had similar performances to merely using REINFORCE. Alternative parametrizations and estimators already outperformed this version of REINFORCE for this same benchmark in the previous experiments.

We can again put these results in light of previous discussions. In particular, \Cref{sec:cp_drawbacks} mentioned that, when applied to PB optimization in the described manner, numerical continuation methods tend to extrapolate local information from inside the hypercube to differences between its corners. The same is true for the other methods relying on the gradient of $J(\cdot)$. This extrapolation is only heuristic and appears to be inaccurate in this benchmark.

\subsection{Neural Network Regression}
The goal of this section is to perform comparisons similar to the previous ones, but on a larger scale, although not as large as in image classification tasks. This time, it is not feasible to compute expectations in closed-form, so we restrict ourselves to $n \in \{2,10,100,1000\}$ and report losses with respect to a sampled $\bz$. Just as before, we initialize $\btheta_0 = [0.5, \dots, 0.5]^\top$, but now we optimize parameters with RMSprop, the only exception being CP methods. For CP methods, we found it particularly beneficial to sweep over both RMSprop and SGD, which we do from this section onwards (the stochastic methods were often harder to train with SGD). Results are averaged over $10$ seeds and $d$ is equal to $8,050$. We select hyperparameters by comparing the average loss in the second half of training.

\subsubsection{Benchmark}
For these experiments, we use a single benchmark, called MaskedNNRegression, consisting of two fixed neural networks: a backbone NN and a target NN. $\bz$ is a binary mask applied weight-wise to the backbone NN and the goal is to learn the input-output mapping from the target NN while only changing the masks. Furthermore, the target has higher capacity than the backbone and a more complexity-inducing initialization, so these experiments are not in the overparametrized regime yet.

First, we sample some random input data uniformly from $[-1,1]^{10}$ and map each $10$-dimensional vector to a scalar output using the target network. We fix this data and split it into training and validation data sets. Importantly, we sample data sets, backbone and target networks only once and re-use them across different random seeds. The loss is the expected (absolute) difference between target and output of the masked backbone network. More details about the architectures used and experimental setup are in \Cref{sec:app_maskednnregression}

\subsubsection{Estimators}
\begin{figure}[htb!]
	\centering
	\includegraphics[height=1.5em]{\main/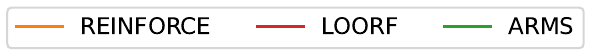}			
	\par\vspace{1em}
	\includegraphics[width=\textwidth]{\main/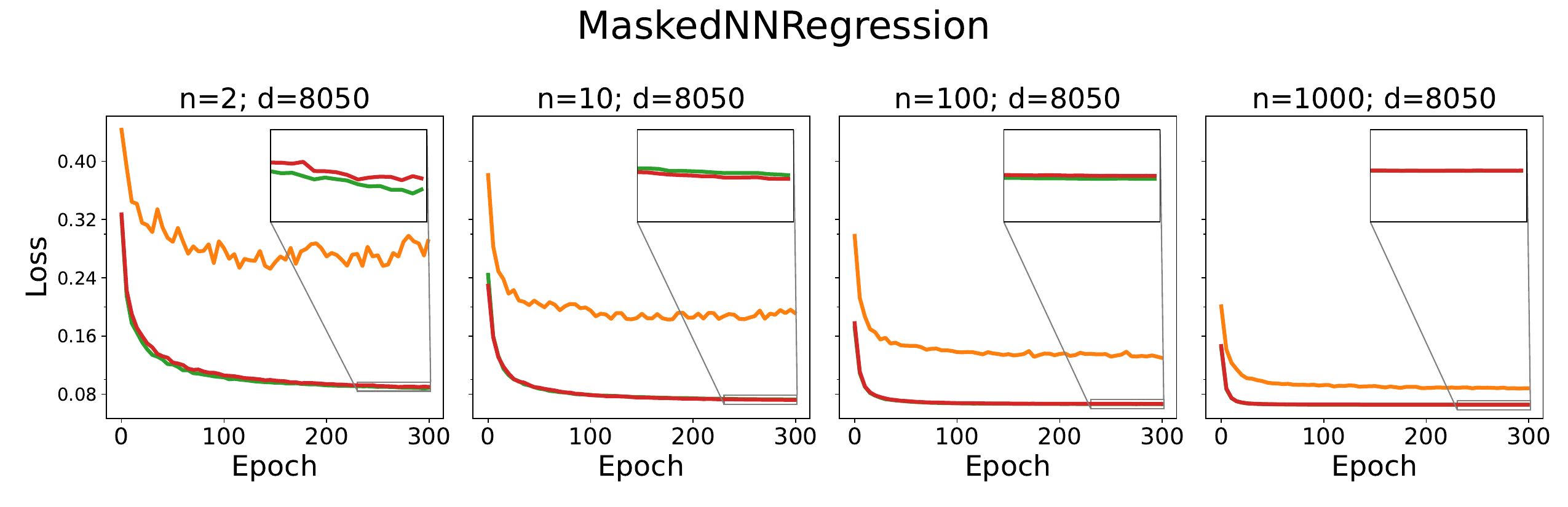}
	\caption{Comparing estimators on masked NN regression.}
	\label{fig:maskednnregression_est}
\end{figure}

As we show in \Cref{fig:maskednnregression_est} (for learning rate $=0.1$ and sigmoid parametrization), REINFORCE still performs worse than ARMS and LOORF, but this time the difference is much higher. Recall that the entropy of a random variable that can assume $2^d$ values is at most $d \log 2$, which is also the initial entropy in these experiments. Considering the previous REINFORCE results, where entropy, variance and loss were positively correlated, this more entropic initial model should indeed lead to relative worsening of its performance.

Despite that, lower entropy does not necessarily imply lower variance. The former corresponds roughly to the spread of a random variable with respect to the set of values it can assume, whereas the latter concept is tied to a scalar quantity measuring how much it changes as a consequence of the changes from this variable.

ARMS and LOORF again performed similarly. The ratio between $n$ and samples needed to solve the PB optimization (\ie, $n/2^d$) will get even worse in the later experiments. Perhaps ARMS being superior for $n=2$ in \Cref{fig:maskednnregression_est} and also having lower variance in \Cref{fig:microworld_variance} is indicative that it will perform slightly better than LOORF in the larger experiments.

\subsubsection{Parametrizations}
\begin{figure}[htb!]
	\centering
	\includegraphics[height=1.5em]{\main/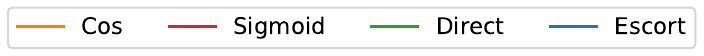}
	\par\vspace{1em}
	\includegraphics[width=\textwidth]{\main/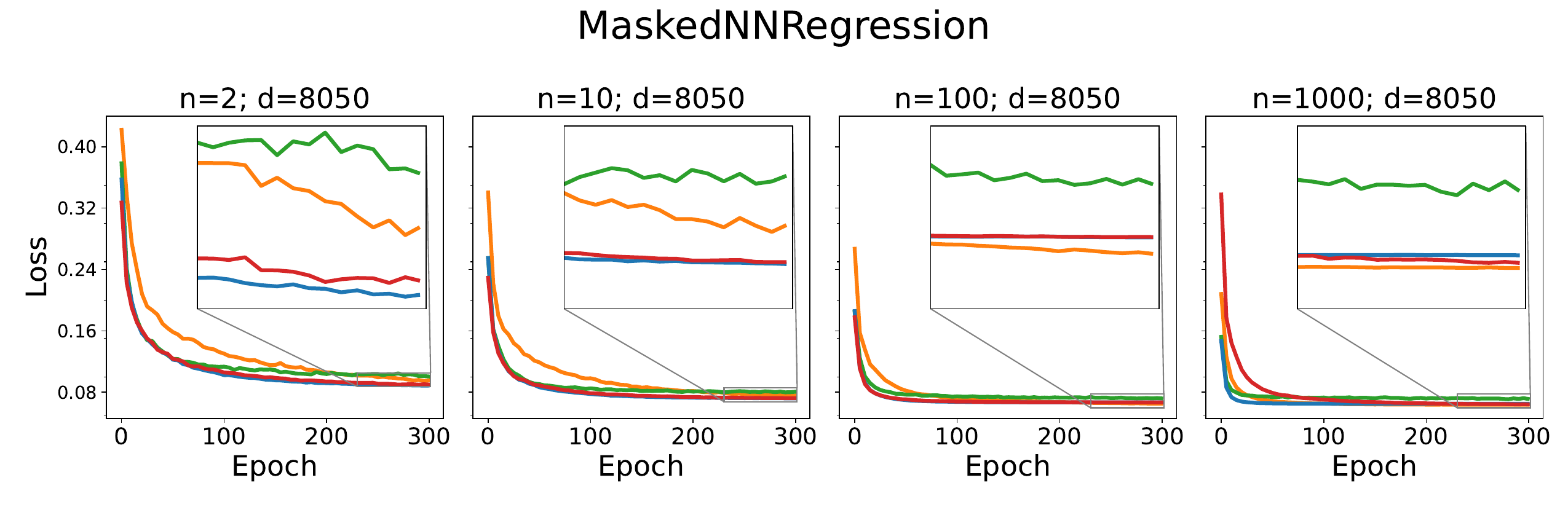}
	\caption{Comparing parametrizations on masked NN regression.}
	\label{fig:maskednnregression_param}
\end{figure}

\Cref{fig:maskednnregression_param} depicts the results of comparing parametrizations, where we sweep learning rates in $\{0.1, 0.01\}$ and use LOORF (approach-specific parameters are as before). Similarly to the previous experiments, direct parametrization is quick to converge, but to sub-optimal values. This effect is even more pronounced now. Cosine parametrization is still slow, but, in contrast with the previous section, its final values are worse than those of the other estimators. Escort and sigmoid were the best performing parametrizations. The first maintained its convergence speed, but reached better final values, while the latter got faster. On the larger experiments, we will only use sigmoid and escort.

\subsubsection{Approaches}
\begin{figure}[htb!]
	\centering
	\includegraphics[height=1.5em]{\main/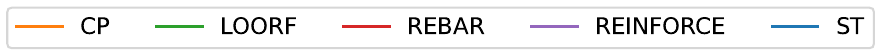}
	\par\vspace{1em}
	\includegraphics[width=0.7\textwidth]{\main/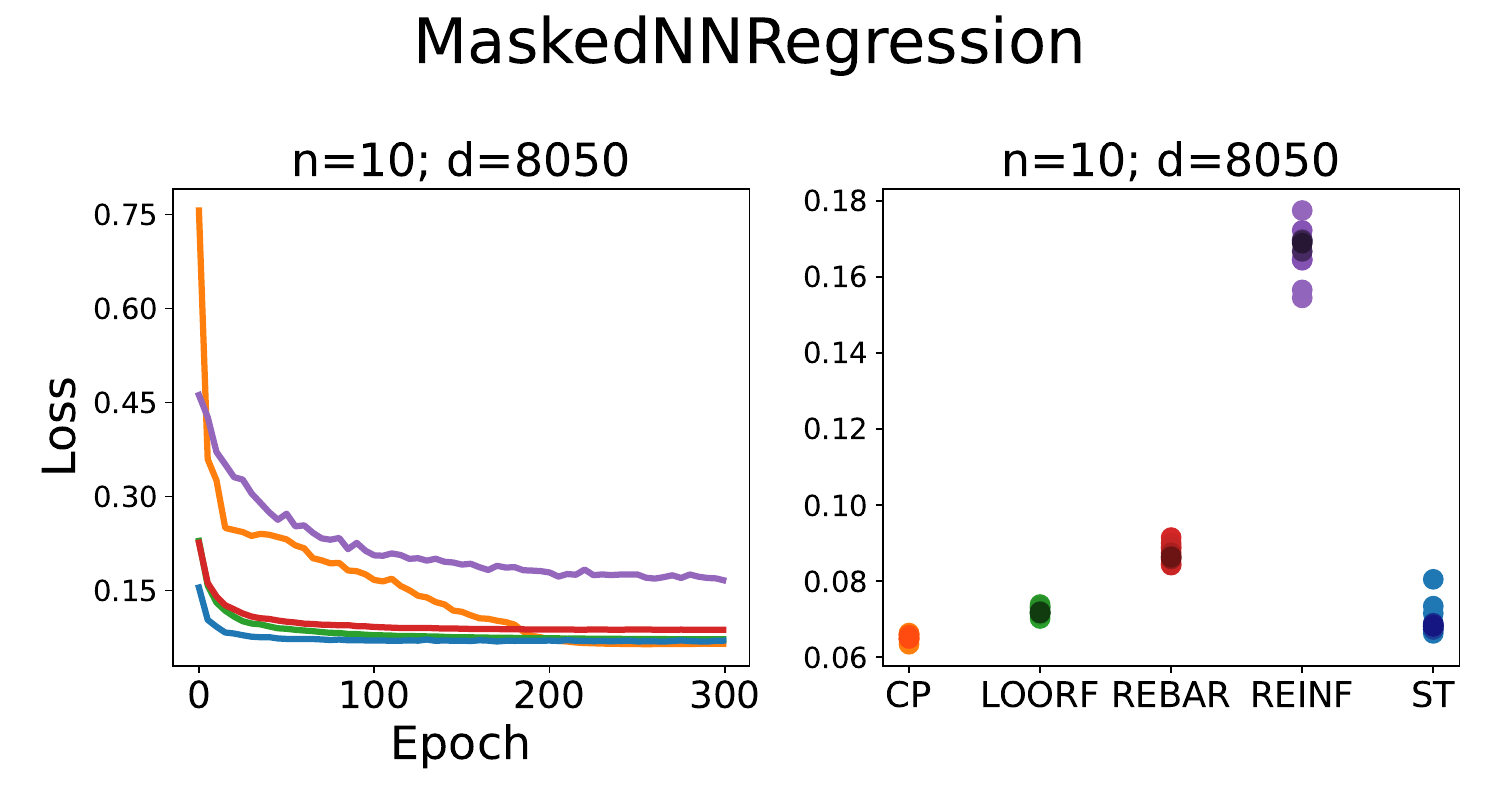}
	\caption{Comparing approaches on masked NN regression.}
	\label{fig:maskednnregression_approaches}
\end{figure}

As seen in \Cref{fig:maskednnregression_approaches} (for learning rates in $\{0.1, 0.01\}$, $n=10$ for the non-deterministic methods and sigmoid parametrization), results from this experiment were different from those of \Cref{sec:microworld_approaches}. The combination of larger $d$ and neural network structure is somewhat compatible with approaches reliant on $\nabla_{\bz} J(\bz)$. Differently from last time, simply using REINFORCE was much worse than combining MC and gradients, such as in REBAR and ST. The best performance was from CP, followed by ST and LOORF. ST converged the fastest, but CP and LOORF had more runs converging to the lower loss values (see the scatter plot). CP, however, converged slower than the other methods.

The depicted CP runs use SGD. Although we do not report results with varying optimizers, we note that all MC methods usually had better performances when using RMSprop compared to SGD. This was consistent across experimental settings, even on the larger ones from the next parts. For CP methods, on the other hand, the best optimizer varied significantly between settings. For MaskedNNRegression, for example, the best performance of CP with SGD was almost as bad as REINFORCE. For that reason, we recommend sweeping over optimizers for CP methods.

\subsection{Pruning}
\subsubsection{Benchmark}

As explained in \Cref{ex:pruning}, in pruning we have two often-conflicting objectives: to improve performance while making the network sparse. The latter often serves to regularize the network, as well as to reduce memory and computation. In this work we will focus on unstructured weight sparsity, where pruning happens in a indiscriminate manner. This is opposed to structured sparsity, where the structure often comes in the form of entire layers or filters and has the benefit of being more simply translated to practical performance benefits. One may not immediately see such performance on unstructured sparsity, since general-purpose hardware and implementations will often store the data structure as dense regardless. Still, it serves as an upper bound on the possible sparsity and as a way to advance pruning research more easily by focusing more on the algorithmic aspects and less on the deployment limitations.

Pruning methods attempt to solve the objective from \Cref{eq:pruning_l0_raw}, either implicitly or explicitly. As \Cref{eq:pruning_l0_deferred} consists of a PB optimization in the search for masks, we can apply the methods discussed here to solve it. We run experiments in both the case that the backbone network is fixed throughout training (supermask) as well as the case where masks and main weights are trained jointly (joint pruning). When optimizing the objective with MC, we only estimate the main objective, as we found using the regularization in closed-form performs better. We do not include hybrid approaches in these experiments, while noting that multiple works already investigate them (see below).

One advantage of using the $L_0$ norm as opposed to $L_1$ or $L_2$ norms, is that it more directly represents what we desire from the optimization. Other approaches, such as lasso \parencite{tibshirani1996regression}, rarely cause weights to be exactly zero, requiring subsequent thresholding steps for pruning to occur. Furthermore, $L_1$ and $L_2$ regularization are both incompatible with batch normalization layers \parencite{ioffe2015batch}, since the affine parameters can simply undo the regularization.

To give some context, the combination of the $L_0$-regularized objective and CP approaches has been successfully implemented by \textcite{savarese2020winning,  yuan2020growing, luo2020autopruner}, while \textcite{azarian2020learned} also suggested it. In fact, \textcite[Section 8.7]{hoefler2021sparsity} later surveyed some results from the past years and classified it as one of the most parameter-efficient approach on Resnet-50. Most works that address the problem using (approximate) $L_0$-regularization directly, however, rely on the hybrid methods from \Cref{ch:other}. Some examples are the studies by \textcite{louizos2017learning, mccarley2019structured, zhou2021effective, srinivas2017training, zhou2019deconstructing}. Some works report instabilities when training some of these hybrid methods \parencite{savarese2020winning, gale2019state, li2020arm}. \textcite{savarese2020winning}, however, recently performed experiments where straight-through performed reasonably well when used as a baseline, although worse than CP. Finally, literature using pure MC approaches for this problem is scarce, the main work being done by \textcite{li2020arm} using ARM with 2 samples.

Still, for pruning problems, the most popular approach is to heuristically use the magnitude of the weights as a proxy for their importance, thus choosing the smallest ones for removal.\footnote{\textcite[figure 18a]{hoefler2021sparsity} categorize 157 papers published between 1988 and 2020, concluding that magnitude pruning is the most common category, comprising 48 of the surveyed papers.}. Although there are some simple examples where the logic behind magnitude pruning does not hold (see \Crefalt{sec:mp_failure}), when taken in conjunction with gradient-based training, it yields good empirical results. In addition to its simplicity, the absence of need for storing additional parameters makes them specially convenient for very large models, which can occupy most of the memory available already. We include two magnitude-based baselines in our experiments: the vanilla approach from \textcite{han2015learning} (MP) and the gradual magnitude schedule from \textcite{zhu2017prune} (GMP).

We make the transition between smaller and larger scale settings more gradual by first running supermask experiments \parencite{zhou2019deconstructing} on simpler architectures and then train both backbone weights and masks together. We run experiments for $200$ epochs and optimize $\br$ with RMSprop for MC methods, while again initializing $\btheta$ in the middle of the hypercube. For CP methods, we again found it beneficial to include the choice between RMSprop and SGD as a hyperparameter when optimizing $\br$. Results reported for each setting correspond to averages over 5 seeds. We report results on the test set, while noting that the training results were similar.

\subsubsection{Supermask}

We run experiments on two architecture-data set combinations: Lenet \parencite{lecun1998gradient} on MNIST and a 6 layer convolutional neural network on CIFAR-10 \parencite{krizhevsky2009learning}. We exclude MP methods from the comparison, as they rely on training the main network. Further, we only show results for ARMS and escort, since they performed slightly better than sigmoid and LOORF. Additional experimental details can be found in \Cref{sec:app_pruning_setup,sec:app_supermask_setup}.

\begin{figure}[htb!]
	\centering
	\includegraphics[height=1.5em]{\main/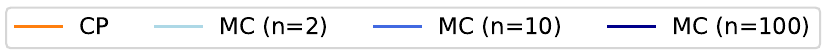}
	\par\vspace{1em}
	\includegraphics[width=0.85\textwidth]{\main/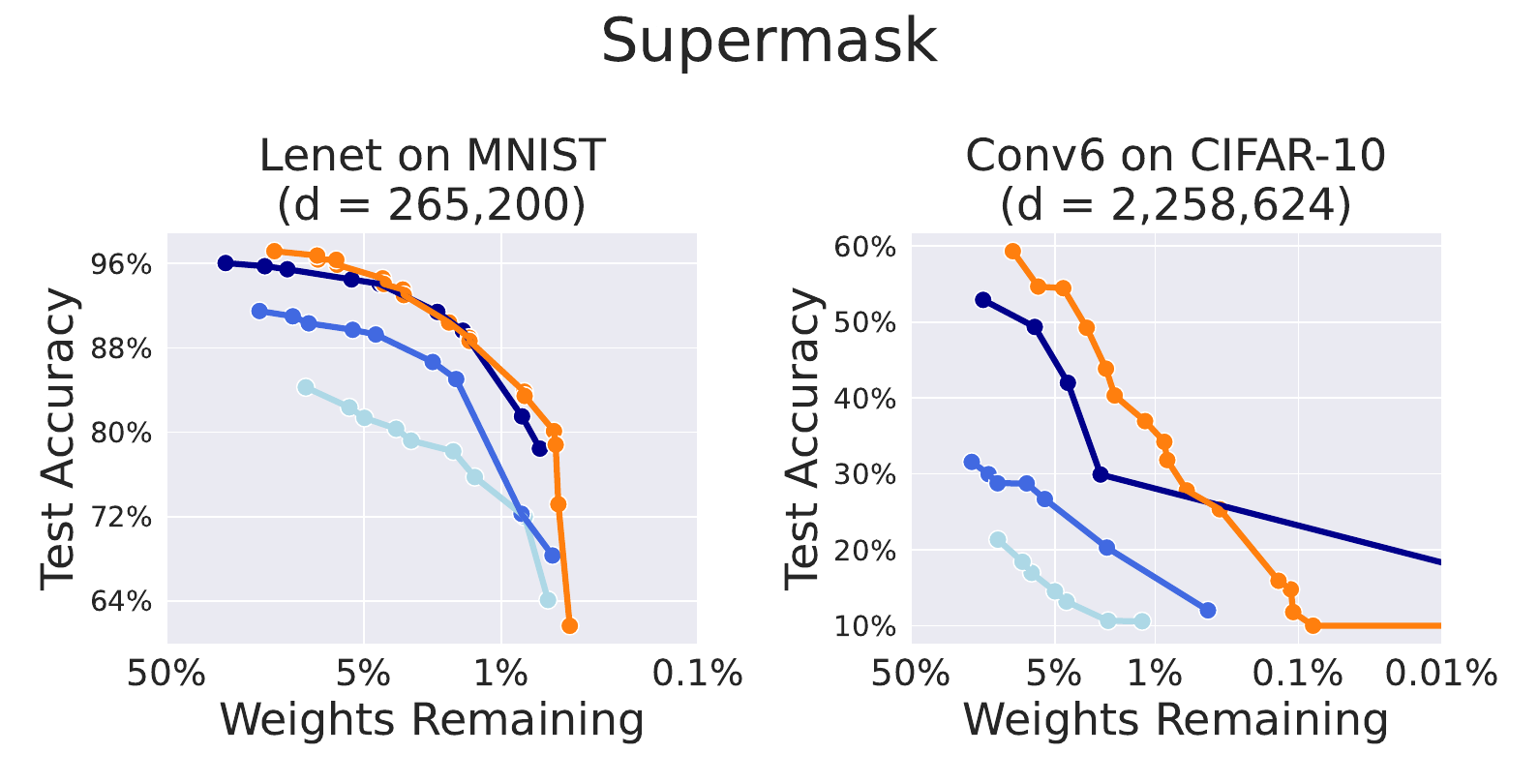}
	\caption{Pareto front for supermask experiments.}
	\label{fig:supermask_results}
\end{figure}

In \Cref{fig:supermask_results}, we show the Pareto front for CP and MC. For multiple objectives and a set of solutions $\sS$, this is the largest subset where no element surpasses another in all objectives. Visually, it is a curve with the best performances for each sparsity level.

These results reaffirm those of MaskedNNRegression: it seems that large neural networks change the discrete problem in a way that benefits approaches reliant on $\nabla_{\bz} J(\bz)$. In fact, the effect is even more pronounced now. Note, for example, how $n=10$, which was competitive with CP before, now performs much worse. Extrapolations from evaluations outside of $\corners$, which were misleading before (\Cref{sec:cp_drawbacks,sec:microworld_exp_approaches}), now produce good results. On top of that, this happens even without the co-adaptation of the backbone network. On the other hand, it seems that the difficulty of finding better solutions by mere search, such as in MC methods, increases significantly with larger $d$. Even using $100$ samples is now worse than using a single combination of forward and backward passes on CP.

The contrast between these results and those from \Cref{sec:microworld} can be connected to previous discussions. \Cref{ch:numerical_continuation} showed the unreliability of the gradient extrapolations from CP methods on simple examples, later confirmed by microworld experiments. However, there were also signs that performance of such methods might be more correlated with properties of $J(\cdot)$ other than $d$. As an example, when introducing pathwise gradients from \Cref{sec:mc_overview}, we noted that the reparametrization estimators had their variance linked to the Lipschitz constant of $J(\cdot)$ instead. Similarly, in \Cref{ch:other}, we also cited analytical results linking the Lipschitz constant and ``correctness'' of ST.

Conversely, the drawbacks discussed in \Cref{sec:mc_drawbacks} (\ie, dependence on the current $\btheta$ and generalization between $\Zeta_h$), are both worsened by high $d$. The introduction of NN, combined with the initializations, activations and losses used, might have helped control how fast $J(\cdot)$ changes in $(0,1)^d$, which benefits CP methods, but not MC. Nonetheless, this does not explain what exactly changes, nor how overparametrization leads CP to good solutions or how far these solutions are from the best possible. Finally, we should mention that CP with RMSprop was significantly better than with SGD in these experiments, differently from the MaskedNNRegression results.

\subsubsection{Joint Pruning}
Next, we move to deeper convolutional networks and start training the main weights. Since running sweeps with $n=100$ on these larger models is challenging, we reuse the best hyperparameters from $n \in \{2,10\}$. See \Cref{sec:app_supermask_hypers} for alternative visualizations of data from supermask experiments suggesting this is a valid extrapolation. Other experimental details are in \Cref{sec:app_joint_pruning_setup}.

\begin{figure}[htb!]
	\centering
	\includegraphics[height=1.5em]{\main/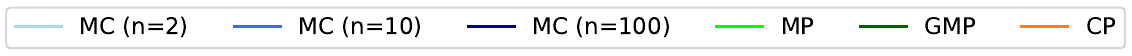}
	\par\vspace{1em}	
	\begin{subfigure}{264.99pt}
		\includegraphics[height=132.495pt]{\main/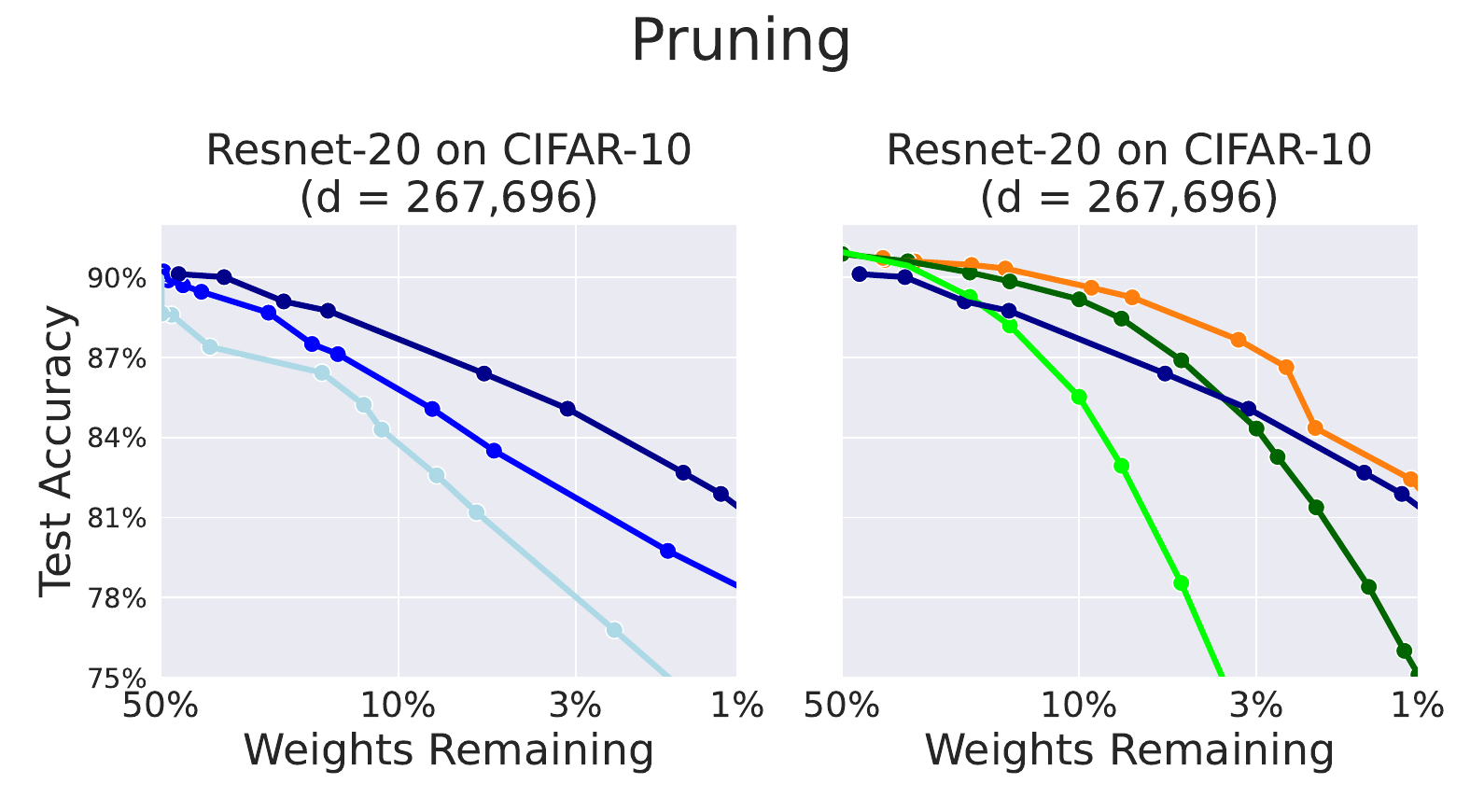}
		\includegraphics[height=132.495pt]{\main/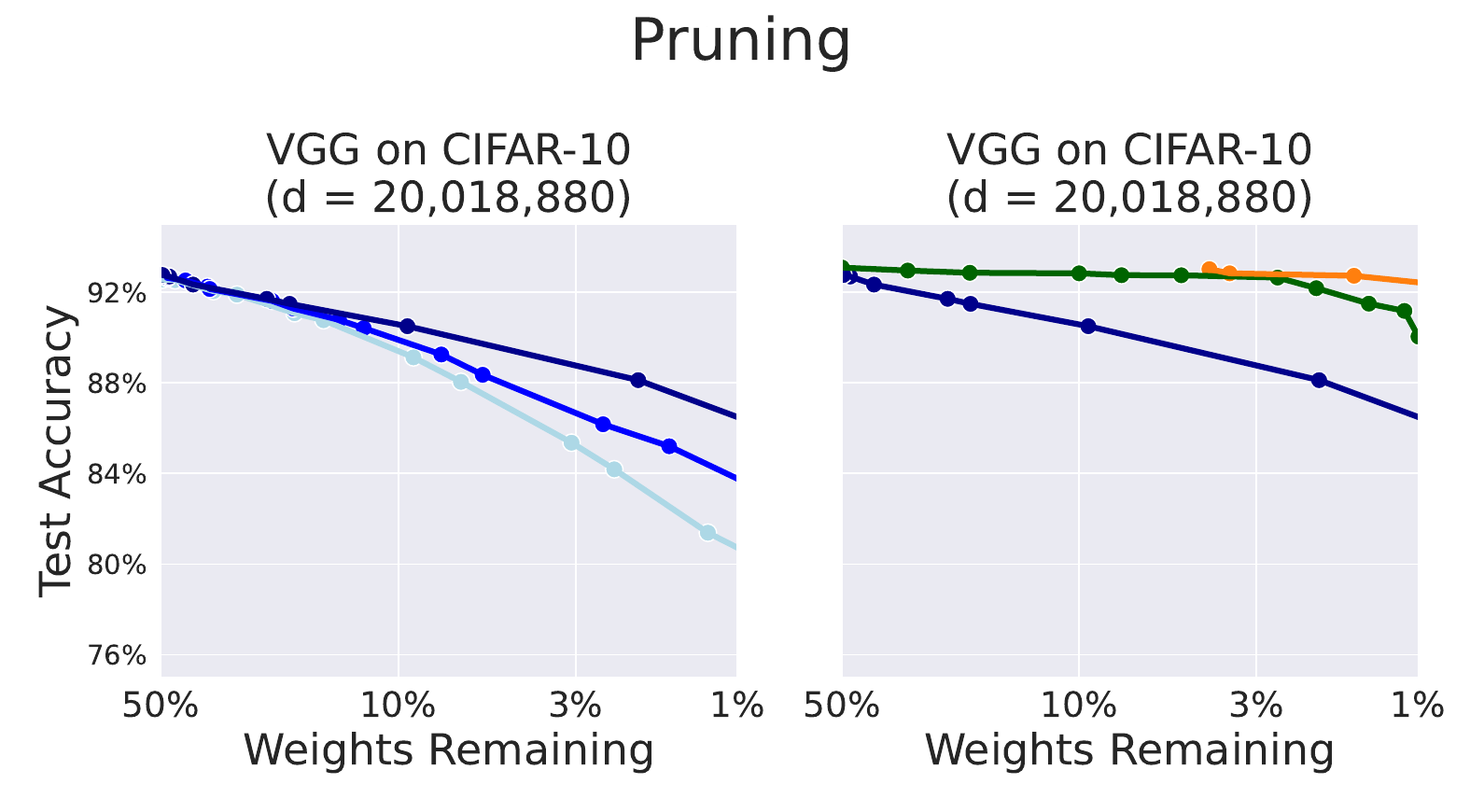}
		\caption{Moderate sparsity}
	\end{subfigure}%
	\begin{subfigure}{132.495pt}
		\includegraphics[height=132.495pt]{\main/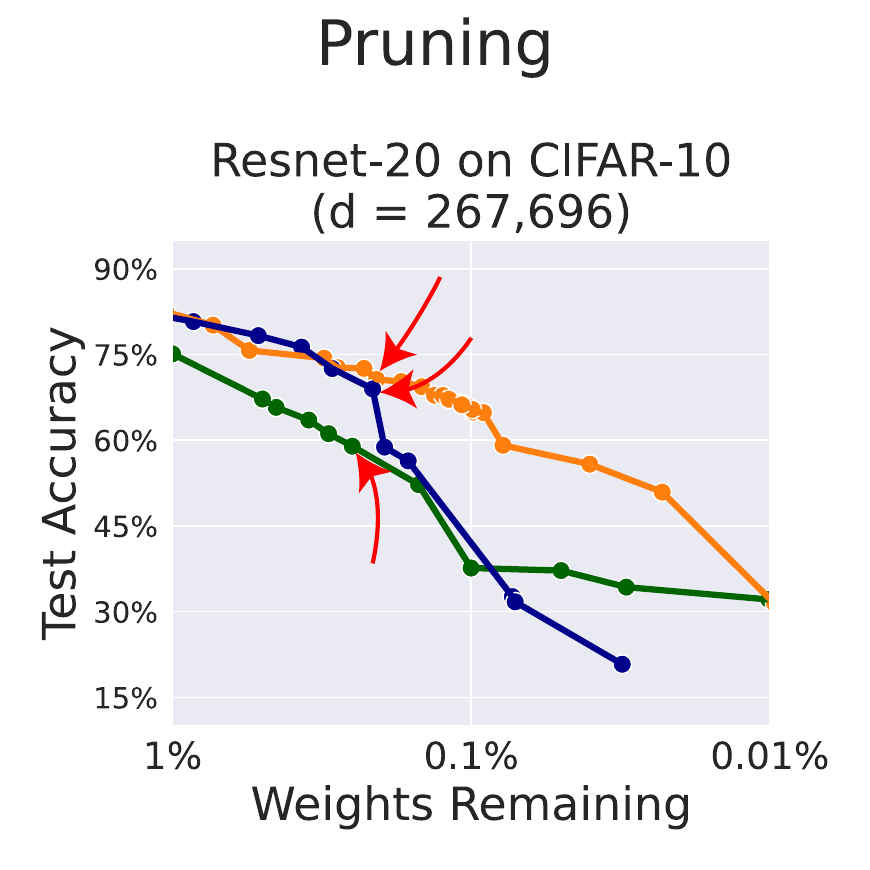}
		\includegraphics[height=132.495pt]{\main/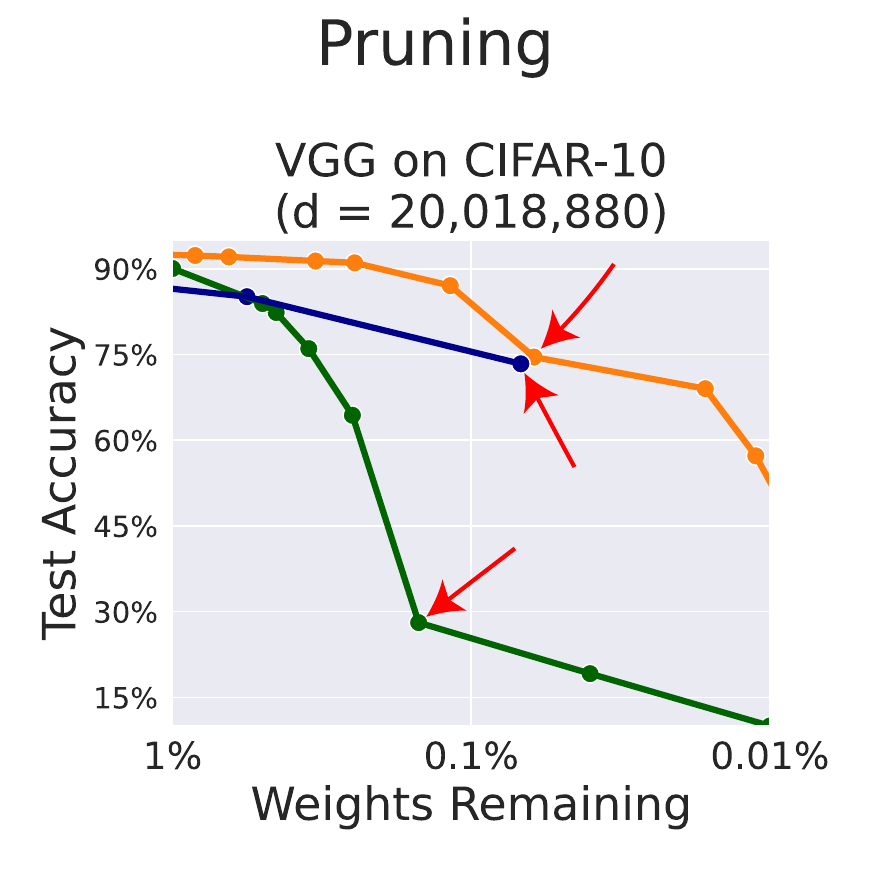}
		\caption{High sparsity}
	\end{subfigure}
	\caption{Pareto front for joint pruning experiments. We analyzed settings marked with arrows more deeply in \Cref{sec:sparsities_throughout}.}
	\label{fig:joint_pruning_results}
\end{figure}

In \Cref{fig:joint_pruning_results} we first plot MC settings only, then we show MC (n=100) together with CP and the magnitude-based approaches. Lastly, we plot only the best performing methods from each category on extreme sparsity ranges. MP does one-shot pruning and GMP follows a cubic interpolation.

In all these experiments, CP was very dominant against the other methods. It should be noted, however, that this was only when using SGD, in accordance to MaskedNNRegression and in contrast with the supermask experiments. Although we do not plot it, results with RMSprop were significantly worse (but still better than MC). It seems that the best optimizer is problem-dependent for CP approaches, which, we repeat, is in contrast with our observations for MC, where RMSprop was always better (except for the microworlds, where SGD was reasonable for all methods used due to the problems being much smaller). In practice, it seems beneficial to try both optimizers for CP, but we need further understanding of its inner workings.

In \Cref{sec:sparsities_throughout}, we visualize the sparsity allocation of the runs indicated with arrows in \Cref{fig:joint_pruning_results} throughout training. After inspecting the plots, it becomes evident that GMP and CP have much more freedom in changing the initial (\ie, dense) configuration. MC, perhaps due to difficulty of the search, does not change the structure as much: the majority of the computation from the dense network happens on high level layers initially, MC methods mostly keep it this way.

\FloatBarrier

\section{Conclusion}
\label{ch:conclusion}
Perhaps one of the main difficulties of adapting PB optimization to neural network contexts is that most of their knowledge base was built around smooth optimization. Techniques such as backpropagation, initialization schemes, optimizers, regularization, normalization, architectures, activations and even loss functions were designed with continuous optimization in mind. In multivariate calculus, the gradient of the loss is already indicative of the best solution in a small neighbourhood. At least locally, a short step in that direction should give the desired outcome. Then, researchers and practitioners formulate their objectives in terms of such functions, while attempting to make the problem as ``convex''-like as possible. Finally, neural network design connects this objective with a computational graph that takes advantage of this differentiable framework.

In contrast, a similar set of techniques is lacking when combining larger-scale PB optimization and neural networks. Problem formulation will not necessarily involve functions that interact particularly well with discrete parameters. The unavailability of equivalent closed-form expressions for the best immediate direction signifies searching exhaustively for such a solution. Still, PB optimization problems resurface when attempting to improve certain aspects of machine learning.

As a consequence of the referred disparity, approaches trying to extend smooth optimization to discrete contexts have to overcome the incompatibility between the two settings. One such approach involves using the gradient of the loss and annealing the problem, as described in this study. Unfortunately, most such methods are afterthoughts and still poorly understood. Perhaps one future direction for improving the PB optimization solutions is to adapt problem and architecture designs by also considering the discrete variables.

\Cref{tab:app_results_summ} summarizes all experimental results from this work. By analyzing them and the background discussion, CP and MC may not be the best choices for general PB optimization. Recall that a simple greedy search will definitely find the optimal solution if given enough computation. The same may not be true for MC methods, as per the discussion on \Cref{sec:unwanted_gen}. These methods are greatly harmed by larger dimensionality. In addition to the inherent difficulty of searching in a large space, exploding importance sampling ratios and unwanted generalization between evaluations make it even less applicable. Despite renewed interest in proposing new estimators with increasingly lower variance, our underwhelming results using optimal control-variates or even true gradients suggest this might be insufficient. Notably, even using $100$ samples was not enough to outperform CP in the pruning experiments.

In spite of concerns regarding the sigmoid, our results indicate that using it is unlikely to be the bottleneck for this use case. Widely different parametrizations appear to have the same limitations. Alternative ideas for improving MC estimation include modeling dependency between the discrete variables and mapping a smaller parameter vector to $\bz$, instead of working directly with $d$ dimensions.\footnote{As a sidenote, depending on how one implement these changes, the equivalence of the PB and MC objectives discussed in \Cref{sec:prob_framework} might be lost.} As none of these avoid the limitations discussed, we believe they are unlikely to lead to long-term advances, serving at best to slightly improve results on some benchmark. Clearly, regardless of having lower-dimensional parameters or dependent variables, the resulting method will still ultimately be searching a complex function in an enormous space by mere trial-and-error.

On the other hand, there is no specific reason why adapting numerical continuation ideas to bridge the gap between continuous and discrete domains should work. Studies often rely on the general intuition of curriculum learning instead. As explained in \Cref{sec:cp_basics}, homotopy methods were designed to find roots of a non-linear system of equations. Given some assumptions, which we overviewed then, a path between the simplified problem and the original one will exist. Gradient-based optimization has a similar motivation: the system equations correspond to the dimensions of the gradient vector and the roots are the critical points. Thus, when CP methods follow the the negative gradient of the smooth loss, there is an implicit attempt to find such points for increasingly lower temperatures. Still, it is not clear why finding a critical point is desirable for infinitely small temperature, or how that connects to the differences between multiple $\Zeta_h$. The original assumptions from numerical continuation are likely false in this context.

What ends up happening in these methods is the extrapolation of local behaviour. Although reconstructing $J(\cdot)$ away from the current point is no new endeavour, with Taylor expansions and Padé approximant being some examples, precision decays for far-away regions, which are exactly the ones needed here. Moreover, accurate results require higher-order terms, whose computation is prohibitive if $d$ is large. Our results indicate that these extrapolations are indeed unreliable.

\begin{figure}[tb!]
	\centering
	\begin{tabular}{c}
		\includegraphics[height=1.9em]{\main/img/numerical_continuation/legend.pdf}
	\end{tabular}
	
	\begin{tabular}{c c}
		\begin{subfigure}{0.3\columnwidth}
			\includegraphics[width=\columnwidth]{\main/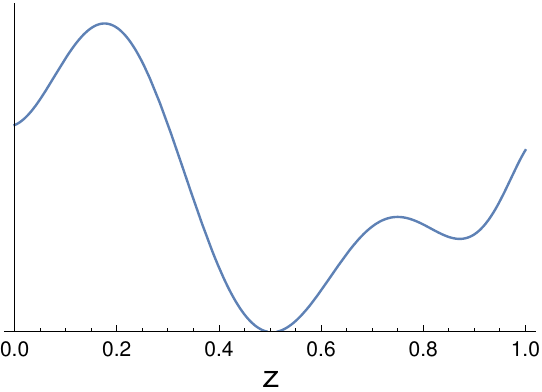}
			\caption{$J(z)$}
		\end{subfigure}
		&
		\begin{subfigure}{0.3\columnwidth}
			\includegraphics[width=\columnwidth]{\main/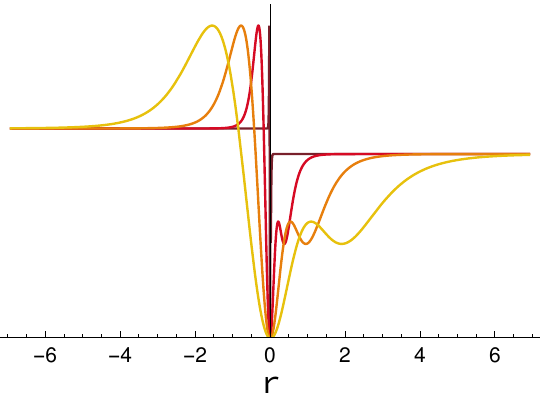}
			\caption{$J\left(\sigmoid \Big(\frac{r}{\tau}\Big)\right)$}
		\end{subfigure}		
	\end{tabular}
	\caption{Temperature annealing collapses $J(\cdot)$ at the origin.}
	\label{fig:collapse_of_j}
\end{figure}

In addition to that, CP methods resulted in solutions oscillating considerably throughout training in some experiments (see \Cref{fig:deceiving_piecewise_runs,fig:microworld_grad,fig:joint_pruning_per_layer}) and one optimizer performing much better than the other depending on the problem. To attempt to understand why, we can take $d=1$ and note that the chained function $J(\sigma(\cdot/\tau))$ has an active range $(-\Delta r,\Delta r)$, where behaviour is similar to that of $J(\cdot)$ when restricted to $[0,1]$. For $r$ outside of this range, the chained function will be constant: $J(\zeta_0)$ for lower values or $J(\zeta_1)$ for higher values. As $\tau \to 0$, the whole active range collapses towards the origin, causing $r_t$ to traverse many of the critical points from $J(\cdot)$. The parameter will then change direction depending on whether its per-timestep neighbourhood is ascending or descending. The interplay between these dynamics and the optimizer might have resulted in the sensitivity we observed. See \Cref{fig:collapse_of_j} for an illustration.

We did not discuss magnitude-based approaches much in this work. Since the (implicit) masks come from training the main weights, which use the gradient of the loss, they are naturally closer to CP methods than to MC. The connection between the two goes beyond that. Considering a single weight $w$ from the neural network, CP rewrites it as:
\begin{equation*}
	w' = w \sigma\left(\frac{r}{\tau}\right)
\end{equation*}
and weights with $r>0$ by the end of training are not pruned. The chain rule in this case gives
\begin{equation*}
	-\frac{\partial J}{\partial r} = w \frac{\partial \sigma(r/\tau)}{\partial r} \left(-\frac{\partial J}{\partial w'}\right).
\end{equation*}
The derivative of the sigmoid is never negative, regardless of $\tau$, whereas the rightmost factor corresponds to the feedback signal from the problem to $w'$. The LHS is the direction used when updating $r$. From this expression, $r$ increases in two situations: when $w$ is positive and the gradient feedback indicates it should go up, or when $w$ is negative and the same signal indicates it should go down.

Interestingly, a positive weight that always goes up or a negative weight that always goes down will tend to have larger magnitude, which explains the connection. When used explicitly, this exact criterion: pruning weights by removing those moving toward zero, was shown to work by itself multiple times \parencite{zhou2019deconstructing, sanh2020movement, bellec2017deep}. This observation can perhaps help explain recent successes when generating sparse masks via CP approaches.

One remaining open question is what would happen to joint training if computation was not a problem and we could find the optimal solution of the PB optimization. Using pruning as an example, on the one hand, this perhaps could lead to major improvements and neural networks significantly smaller than the current state-of-the-art would perform closely to dense models. In contrast, it could also be the case that training of the main network ends up being robust to worse mask choices, which could then imply that using the optimal $\bz^*$ does not improve the current state-of-the-art much.

One key point for future work is answering what changed for the methods using $\nabla_{\bz}J(\bz)$ when combined with neural networks. Despite the main NN being fixed, MaskedNNRegression and supermask had them performing well. We believe studying similar phenomena more analytically is paramount for future progress, as it provides understanding that might not be achievable by more practical investigation on larger models. Broadly speaking, the scale of such systems and the excess of confounding factors limit the range of possible tools when interpreting results.

Of equal importance is the design of smaller experiments that, despite the scale, still capture the essence of the problem. Relative computational feasibility allows for more trial-and-error of new ideas, while also benefiting from a wider range of tools. Using pruning as an example once more, if the input-output mapping is part of the experimental design, we can compare the size of solutions found to the actual number of degrees of freedom, which is not possible for image classification problems.

Better understanding the aforementioned behaviour is a more immediate step that perhaps can lead to more conscious model and algorithmic design. These advances, in turn, can likely be adapted to a wide array of different fields and problems, including continual learning, multi-task learning, pruning, transfer-learning, ticket search, Bayesian optimization, training binary/ternary NN and mixed-precision NN quantization.


\acks{We gratefully acknowledge funding from the Natural Sciences and Engineering Research Council of Canada (NSERC), 
	the Canada CIFAR AI Chair program, and the Alberta Machine Intelligence Institute (Amii).}


\newpage

\appendix

\section{Additional Discussion}
This section provides more details about points mentioned in the main text.

\subsection{Additional PB Optimization Examples}
\label{sec:pb_examples}

Here, we complement \Cref{sec:pb_nn_examples} by presenting some general PB optimization examples. Sometimes the problem will be presented in terms of a maximization. Recall, however, that we can always find an equivalent minimization, since

\begin{equation*}
	\max_x J(x) = - \min_x -J(x).
\end{equation*}

\begin{example}[Maximum Satisfiability]
	This example is adapted from Section 3 of \textcite{boros2002pseudo}. Maximum satisfiability is a very frequently studied problem in applied mathematics/theoretical computer science. The input is a family $\mathscr{C}$ of clauses $C$, where each of them is a set $C \subseteq \{z_1,\nz_1,\cdots,z_d,\nz_d\}$. A clause is satisfied when any of its literals evaluate to one. In logic notation, considering $u_i \in C$ as the literals in an arbitrary clause, the clause is satisfied when:
	\begin{equation*}
		\prod_{u \in C} \overline{u} = 0,
	\end{equation*}
	the problem corresponds to the following minimization:
	\begin{equation*}
		\argmin{\bz \in \corners} \sum_{C \in \mathscr{C}} \prod_{u \in C} \overline{u}.
	\end{equation*}	
\end{example}

\begin{example}[Maximum Independent Set]
	The following is also adapted from Section 3 of \textcite{boros2002pseudo}. Considering a graph $\sG=(\sV,\sE)$, with vertex set $\sV$ and edge set $\sE$, where $\sV = \{1,\cdots, d\}$, an independent set is one in which no two vertices belong to the same edge $e \in \sE$. The problem becomes finding:
	\begin{equation*}
		\argmax{\sU \subseteq \sV} |\sU| \qquad \text{s.t.\ } N(v) \cap \sU = \emptyset,\  \forall v \in \sU.
	\end{equation*}	
	Where $N(v)$ represents the neighbours of vertex $v$. \textcite[Theorem 2.1]{tavares2008new} shows that the maximum of this problem is equal to:
	\begin{equation*}
		\max_{\bz \in \corners} \sum_{i \in \sV} z_i - \sum_{(i,j) \in \sE} z_i z_j.
	\end{equation*}
	Conversion of an arbitrary $\bz = \bz^*$ of this size to an independent set will take at most $(|V| - \sum z_i^*)$ iterations.
\end{example}

\begin{example}[Image segmentation]
	Computer vision problems often involve assigning discrete labels for every pixel in an image. For example, one may want to discern foreground from background, or to identify pixels corresponding to human skin in an image. In tracking, the goal is to follow some object on a sequence of frames, which also involves labeling pixels as either pertaining to the object or not \parencite{szeliski2022computer}.
	
	Assuming we have two labels, such as in binary image segmentation, the loss can often be formulated using energy minimization \parencite{lecun2006tutorial, szeliski2008comparative}, and the problem becomes finding the following quantity:
	\begin{equation*}
		\argmin{\bz \in \corners} E_d(\bz) + \lambda E_s(\bz).
	\end{equation*}
	Where $E_d$ is the data energy, which penalizes deviation from the data, whereas $E_s(\bz)$ is the smoothness energy, which enforces spacial coherence, and $\lambda$ is just a tradeoff parameter.
\end{example}

\subsection{Additional Properties of the Multilinear Objective}
\label{sec:pb_basics_appendix}
We here give some additional details of the objective from \Cref{sec:pb_basics}. The following property gives us an additional tool to recognize functions of this form.

\begin{restatable}{theorem}{mth}\label{thm:multilinear_hessian}
	An arbitrary twice differentiable function $f:\R^d \rightarrow \R$ will have a Hessian whose diagonal is equal to $\zeros = [0,\cdots,0]^\top$ if and only if it can be represented in the form of \Cref{eq:multi_poly_set}.
\end{restatable}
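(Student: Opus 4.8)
The plan is to prove both implications, the reverse one being immediate and the forward one carrying the content. For the easy direction, suppose $f$ has the form \Cref{eq:multi_poly_set}. Each monomial $\prod_{i \in \sS} z_i$ is affine in every individual coordinate: if $i \notin \sS$ the monomial is independent of $z_i$, and if $i \in \sS$ it is linear in $z_i$. In either case $\frac{\partial^2}{\partial z_i^2} \prod_{j \in \sS} z_j = 0$, so summing over $\sS$ gives $\frac{\partial^2 f}{\partial z_i^2} \equiv 0$ for every $i$; that is, the diagonal of the Hessian vanishes identically.

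For the forward direction the key idea is to convert the pointwise condition $\frac{\partial^2 f}{\partial z_i^2} \equiv 0$ into an exact linear-interpolation identity, one coordinate at a time. This reduction to a one-dimensional fact is what lets the argument go through under the bare ``twice differentiable'' hypothesis. Fixing all coordinates except the $i$-th and regarding $f$ as a univariate function $\phi$ of $z_i$, the hypothesis says $\phi'' \equiv 0$, so $\phi'$ is constant and $\phi$ is affine; an affine function is pinned down by its values at $0$ and $1$ via $\phi(z_i) = (1-z_i)\phi(0) + z_i\,\phi(1)$. Applying this for $i=1$ yields $f(\bz) = (1-z_1)\,f(0,z_2,\dots,z_d) + z_1\,f(1,z_2,\dots,z_d)$.

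I would then iterate. The two functions $f(0,\cdot)$ and $f(1,\cdot)$ are themselves restrictions of $f$, hence still affine in each remaining coordinate, so the same identity applies in $z_2$, then $z_3$, and so on. After $d$ steps this expands $f$ into $f(\bz) = \sum_{h=0}^{2^d-1} \big(\prod_{i=1}^d z_i^{(\Zeta_h)_i}\,\nz_i^{(\nZeta_h)_i}\big)\, f(\Zeta_h)$, using $\nz_i = 1 - z_i$ to collect the $(1-z_i)$ factors; this is exactly the multilinear extension \Cref{eq:multi_poly} of the table $f|_{\corners}$. As observed just after \Cref{eq:multi_poly}, that expression can be rewritten in the form \Cref{eq:multi_poly_set}, which finishes the proof.

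The main obstacle I anticipate is bookkeeping rather than depth: I must verify at each stage that the interpolation is applied to genuine restrictions of $f$, so the ``affine in each variable'' property is inherited throughout, and that the accumulating product of literals assembles correctly into the weights $\prod_{i=1}^d z_i^{(\Zeta_h)_i}\,\nz_i^{(\nZeta_h)_i}$. The deliberate one-variable reduction is what side-steps the mixed-partial or $C^2$ regularity that a naive induction on $\frac{\partial f}{\partial z_d}$ would appear to demand, since such an induction would need $\frac{\partial f}{\partial z_d}$ to itself be twice differentiable.
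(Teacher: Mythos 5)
Your proof is correct, but it takes a genuinely different route from the paper's. The paper argues by induction on the dimension: it writes $f$ as a multilinear polynomial in the coordinates other than $x_i$ with coefficients $w_{\sS}(x_i)$ depending on $x_i$, differentiates those coefficients, and invokes a separate uniqueness-type lemma (\Cref{lem:multilinear_hessian_proof_lemma}) to conclude that each $\partial w_{\sS}(x_i)/\partial x_i$ is constant, hence each $w_{\sS}$ is affine. Your argument instead converts $\partial^2 f/\partial z_i^2 \equiv 0$ directly into the one-variable interpolation identity $f(\bz) = (1-z_1)f(0,z_2,\dots,z_d) + z_1 f(1,z_2,\dots,z_d)$ and iterates over coordinates, landing on the multilinear extension \Cref{eq:multi_poly} of $f|_{\corners}$ in $d$ steps. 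What your route buys: it needs no auxiliary lemma, it is constructive (the weights are explicit signed sums of values $f(\Zeta_h)$), and it sidesteps a point the paper's induction glosses over, namely that the coefficient functions $w_{\sS}(x_i)$ must be shown differentiable in $x_i$ before one may write $\partial f/\partial x_i = \sum_{\sS} (\partial w_{\sS}(x_i)/\partial x_i)\prod_{j\in\sS}x_j$. What the paper's route buys is the lemma itself, which is a uniqueness statement of mild independent interest. Your bookkeeping concern is benign: at every stage the functions being interpolated are restrictions of $f$ to affine coordinate slices, so they inherit the vanishing of the relevant diagonal second partials, and the accumulated literals assemble into exactly $\prod_{i=1}^d z_i^{(\Zeta_h)_i}\,\nz_i^{(\nZeta_h)_i}$.
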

\begin{proof}
	See \Cref{sec:multilinear_hessian_proof}.
\end{proof}	
Interestingly, the theorem means that any function with Hessian diagonal zero implicitly corresponds to this canonical form of the PB optimization objective. Inspecting \Cref{eq:multi_poly}, we can see that its Hessian has diagonal equal to zero, so, by \Cref{thm:multilinear_hessian}, it can indeed be expressed in the form of \Cref{eq:multi_poly_set}. Additionally, for an arbitrary pseudo-Boolean function $J(\cdot)$, its multilinear polynomial form is unique \parencite[Section 4.1]{boros2002pseudo}. We present an example of these forms below.

\begin{example}
	\label{ex:pb_example}
	Considering the following pseudo-Boolean function for $d=2$:
	\begin{equation*}
		J(\Zeta_0) = A ; J(\Zeta_1) = B ; J(\Zeta_2) = C ; J(\Zeta_3) = D.
	\end{equation*}
	
	Expressing $\mP_J(\bz)$ as in \Cref{eq:multi_poly} corresponds to:
	\begin{align*}
		\mP_J(\bz) &= (1 - z_1) (1 - z_2) A + z_1 (1 - z_2) B + (1 - z_1) z_2 C + z_1 z_2 D\\
		&= \nz_1 \nz_2 A + z_1 \nz_2 B + \nz_1 z_2 C + z_1 z_2 D,
	\end{align*}%
	whereas \Cref{eq:multi_poly_set} corresponds to:
	\begin{align*}
		\mP_J(\bz) &= A + (B-A) z_1 + (C-A) z_2 + (A-B-C+D) z_1 z_2.
	\end{align*}	
\end{example}

\subsection{Algorithmic Approches for PB optimization}
\label{sec:pb_algs_detailed}
Here we give some explanation of the approaches listed in \Cref{sec:pb_algs}. By reason of the exponential number of solutions, it may be necessary to waive the search for the global optimum and instead stop it when a local solution is reached. Local search methods are a broad class of iterative approaches that look for the best solution in a neighbourhood $N(\bz)$ around the current solution $\bz$ (see, for example, \citealt{schuurmans2001local} or sections 4.3 and 4.6 of \citealt{boros2002pseudo}). Considering, for instance, a neighbourhood using the Hamming distance, which here corresponds to the number of indices where two binary vectors differ, the method may return a local solution $\bzh^*$ satisfying:

\begin{equation*}
	J(\bzh^*) \leq J(\bz), \qquad \text{for } \bz \in N(\hat{\bz}^*).
\end{equation*}

The high level approach of the deterministic version of these methods consists in repeatedly choosing the best performing neighbour as the next solution. Nevertheless, a greedy search around the current value is likely to get stuck in a poor local optimal, justifying the emergence of alternative approaches that allow some degree of exploration. As an example, tabu search methods \parencite{glover1989tabu, glover1990tabu} accept changing the solution for another whose value of $J(\cdot)$ is higher, in case it is no longer possible to get improvement in the current neighbourhood. Moreover, they discourage re-visiting previous solutions by means of a technique they call \emph{prohibitions}.

Similarly, simulated annealing \parencite{salamon2002facts, van1987simulated} allows some degree of exploration, controlled by a temperature parameter. When the temperature is high, the model will have a higher probability of choosing a next solution worse than the current one. As the temperature goes to zero, the probability of the worse choice becomes arbitrarily close to zero. The method then follows a temperature schedule based on the computational budget. The name annealing comes from an technique from metallurgy, where the material goes through heating and then controlled cooling to alter its physical properties.

Yet another popular approach surfaces by interpreting some problems as graph optimization. In computer vision, for example, \textcite{szeliski2008comparative, boykov2001fast} re-cast problems involving pixel labelling as finding the minimum cut of a graph. The \emph{max-flow min-cut theorem} \parencite{dantzig2003max} draws a connection between this problem and the maximum value of the flow in a flow network, allowing the reuse of efficient algorithms from graph theory to speed-up the search for approximate solutions \parencite{edmonds1972theoretical, dinic1970algorithm}.

Alternatively, genetic algorithms take inspiration from natural selection to find the approximate solution while avoiding getting stuck on poor local optima \parencite{kramer2017genetic}. They start with a population of randomly initialized solutions (called individuals) which correspond to an iterate (called generation) of the algorithm. The method assigns each candidate solution a fitness score, which, in our case, is based on the function $J(\cdot)$. Then, selection of a subset of the individuals happens according to this fitness score\footnote{Sometimes it may be wise to also include less fit candidates to make the population more diverse.} and is succeeded by a merging phase. Merging follows a problem-specific formulation and outputs the candidates of the next generation. In our case, to illustrate, one possibility is to select a cutoff index and swap the values before and after the index between the two parents. Moreover, to further increase stochasticity, randomly changing some part of the children via mutation is common practice. The algorithm ends when a desired fitness level is achieved, the fitness plateaus or the computational budget is used up. Notably, this method can help avoid poor local minima due to the multiple distinct initializations.

Meanwhile, branch-and-bound algorithms \parencite{lawler1966branch} are also prominent. The essence of such methods is to break down the search, recursively splitting the original search space into smaller subspaces. For each subspace $\sS$, the algorithm will compute a lower bound on $J(\bz)$ for $\bz \in \sS$ and will use its value to discard subspaces where it is certain that the desired solution will not be found. The problem instance (and the sub-problem instances generated thereafter) must define three operations: (1) \emph{branching}, which breaks the instance into sub-instances; (2) \emph{bounding}, which computes the lower bound on the function for the current instance (3); \emph{returning candidate}, which returns a candidate solution from the set of values in the current instance.

Finally, quadratization methods reduce the degree of the original multilinear polynomial from \Cref{eq:multi_poly_set}, rewriting it as:
\begin{equation*}
	\mP'(\bz') = a_\emptyset + \sum_{i \in \sD'} b_i z'_i + \sum_{\substack{i,j \in \sD' \\ i < j}} c_{ij} z'_i z'_j.
\end{equation*}
Where $d \leq d'$, $\sD \subseteq \sD' = \{1, \cdots, d'\}$ and $\bz' \in \{0,1\}^{d'}$. One can always transform the higher-degree polynomial iteratively by increasing the size of the set by one each time. Adding a new element can, for example, be performed by selecting a pair from the previous iteration set and having the new member correspond to their product. Weights from the old $\mP(\cdot)$ can then be changed in a way that reduces the degree of some of the summands by one while also enforcing that the new element is indeed the product of the chosen pair. For more details of the complete example, refer to \textcite[Section 4.4]{boros2002pseudo}. \textcite{rosenberg1975reduction} shows that this procedure can be completed in polynomial time. Furthermore, if the following submodularity condition is satisfied:
\begin{equation*}
	\mP'(\ones^A) + \mP'(\ones^B) \geq \mP'(\ones^{A \cup B}) + \mP'(\ones^{A \cap B}),
\end{equation*}
where $\ones^\sS$ is the vector with ones in the positions contained in the set $\sS$, finding the solution $\mP(\bz^*)$ in polynomial time is possible \parencite{grotschel1981ellipsoid}. However, this condition is not always satisfied and the general problem is NP-hard. Nonetheless, development of algorithms tailored to the quadratic case is an active area of research. \textcite{tavares2008new} overviews some quadratic PB optimization methods.

To conclude, sometimes deriving sub-optimality bounds on an approximation of the true solution and finding this approximation in polynomial time is feasible. Algorithms that search for solutions this way are called approximation algorithms. In general, not all PB optimization problems admit polynomial-time algorithms with reasonable approximation guarantees as discussed in \textcite[Chapter 16]{williamson2011design}.

\subsection{Failure Cases of Magnitude Pruning}
\label{sec:mp_failure}

\begin{figure}[htb!]
	\centering
	\begin{subfigure}[b]{0.3\columnwidth}
		\centering
		\includegraphics[width=0.9\columnwidth]{\main/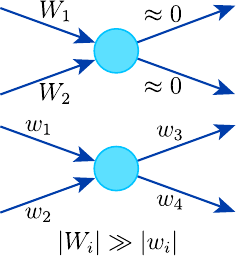}
		\caption{Output connections}
		\label{sub:mp_counter_a}
	\end{subfigure}%
	\begin{subfigure}[b]{0.7\columnwidth}
		\centering
		\includegraphics[width=0.8\columnwidth]{\main/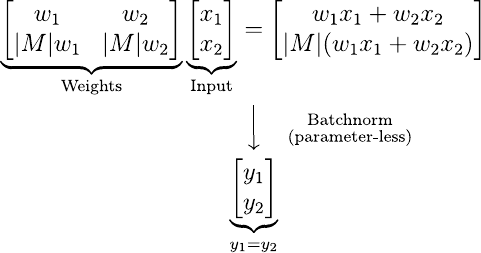}%
		\caption{Batch normalization}
		\label{sub:mp_counter_b}
	\end{subfigure}
	\caption{Examples where weight magnitudes do not correlate with their importance. On \Cref{sub:mp_counter_a}, the larger weights have less influence on the model output than the smaller ones due to their outgoing connections. In \Cref{sub:mp_counter_b}, one row is a scaled version of another. Despite the difference in magnitude, subsequent batch normalization equates their outputs.}
	\label{fig:mp_counter}
\end{figure}

\Cref{fig:mp_counter} depicts some simple examples where the assumption that a larger weight magnitude implies higher importance does not hold.

\section{Proofs}
This section contains some proofs from results in the main text.
\subsection{Multilinear Form}
\label{sec:multilinear_hessian_proof}
Although we believe that this results is known in the literature, we could not find it stated anywhere, so we derive it below. For these proofs, considering $\sD = \{1,2,...,d\}$, we will use the following form, which is a common way to represent the multilinear polynomial from \Cref{eq:multi_poly_set} \parencite[see][Section 2]{boros2002pseudo}:
\begin{equation*}
	\mP_J(\bz) = \sum_{\sS \subseteq \sD} \left( \prod_{i=1}^d z_i^{(\ones^\sS)_i} \nz_i^{(\overline{\ones^\sS})_i} \right) J(\ones^\sS),
\end{equation*}
where:
\begin{equation}
	\label{eq:index_set_notation}
	(\ones^\sS)_i = \begin{cases} 1 & \text{if } i \in \sS\\ 0 & \text{otherwise} \end{cases}.
\end{equation}
We start with the following lemma, then proceed to the main proof.
\begin{lemma}
	\label{lem:multilinear_hessian_proof_lemma}
	As in the main text, we denote $\sD = \{1,2,...,d\}$. With $\bx \in \R^d$ and arbitrary $i$, consider the following:
	\begin{equation*}
		g(\bx) = \sum_{\sS \subseteq \sD \setminus \{i\}} q_{\sS}(x_i) \prod_{j \in \sS} x_j.
	\end{equation*}
	$g(\bx)$ being independent of $x_i$ implies $q_{\sS}(x_i)$ is also independent of $x_i$ for all $\sS \subseteq \sD \setminus \{i\}$.
\end{lemma}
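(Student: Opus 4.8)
The plan is to isolate each coefficient function $q_{\sS}(x_i)$ by evaluating $g$ at suitable binary points in the coordinates $j \neq i$, exploiting the fact that each monomial $\prod_{j \in \sS} x_j$ can only take the values $0$ or $1$ there. First I would fix a subset $\sS_0 \subseteq \sD \setminus \{i\}$ and consider the point $\bx^{\sS_0}$ whose $j$-th coordinate (for $j \neq i$) equals $1$ when $j \in \sS_0$ and $0$ otherwise, while leaving $x_i$ as a free variable. At such a point $\prod_{j \in \sS} x_j = 1$ exactly when $\sS \subseteq \sS_0$ and $0$ otherwise, so that
\[
	g(\bx^{\sS_0}) = \sum_{\sS \subseteq \sS_0} q_{\sS}(x_i).
\]

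Next I would recover the individual coefficients by Möbius inversion over the Boolean lattice of subsets: since $\sS_0 \mapsto g(\bx^{\sS_0})$ is the subset-sum (zeta) transform of $\sS \mapsto q_{\sS}(x_i)$, inversion yields
\[
	q_{\sS_0}(x_i) = \sum_{\sS \subseteq \sS_0} (-1)^{|\sS_0| - |\sS|}\, g(\bx^{\sS}).
\]
I would then invoke the hypothesis: because $g$ is independent of $x_i$, each evaluation $g(\bx^{\sS})$ on the right-hand side is constant in $x_i$, hence the finite linear combination defining $q_{\sS_0}(x_i)$ is also independent of $x_i$. Since $\sS_0 \subseteq \sD \setminus \{i\}$ is arbitrary, this settles every coefficient at once.

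The only delicate point is justifying the coefficient-extraction step, namely that evaluating on $\{0,1\}$-valued points and inverting genuinely isolates $q_{\sS_0}$. This relies crucially on the multilinear structure (each $x_j$ appearing to at most the first power), which is what guarantees the clean $0/1$ evaluation and thus the validity of the Möbius inversion; I expect this to be the main thing to state carefully. Pleasantly, no analytic hypotheses are required — in particular I would not need any differentiability of the $q_{\sS}$, which matters since the lemma will be applied to coefficient functions about which only independence properties are known. An equivalent route is to observe that distinct multilinear monomials in $(x_j)_{j \neq i}$ are linearly independent and read off the coefficients directly, but the explicit inversion formula above makes the dependence of $q_{\sS_0}$ on $g$ fully transparent, so I would favour that presentation.
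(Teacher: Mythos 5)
Your proof is correct, and its core idea is the same as the paper's: isolate each coefficient $q_{\sS}$ by evaluating $g$ at points whose coordinates $j \neq i$ are $0$ or $1$, so that $g(\bx^{\sS_0}) = \sum_{\sS \subseteq \sS_0} q_{\sS}(x_i)$ becomes the subset-sum transform of the coefficients. The paper inverts this transform by strong induction on $|\sS_0|$ (base case $\emptyset$, then peeling off the proper subsets using the inductive hypothesis), whereas you write down the M\"obius inversion
\[
	q_{\sS_0}(x_i) = \sum_{\sS \subseteq \sS_0} (-1)^{|\sS_0| - |\sS|}\, g(\bx^{\sS})
\]
in closed form; these are two packagings of the same combinatorial fact. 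Your version has one genuine advantage of presentation: by keeping $x_i$ free and binary only in the other coordinates, you obtain constancy of $q_{\sS_0}$ on all of $\R$ in a single step, whereas the paper first proves only $q_{\sS}(0)=q_{\sS}(1)$ on the Boolean cube and then disposes of the real case with a brief remark that the argument extends. Your concluding observation that one could instead invoke linear independence of distinct multilinear monomials is also sound and is essentially equivalent to the inversion.
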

\begin{proof}
	To simplify exposition, we will consider only $\bz \in \corners$ instead of $\bx$. Extension to $\R^d$ follows a similar argument, which we mention in the end. Using notation from \Cref{eq:index_set_notation}, $g(\bz)$ independent of $z_i$ means:
	\begin{equation*}
		g(\ones^\sS) = g(\ones^{\sS \cup \{i\}}), \hspace{1.5em}\text{for }\sS \subseteq \sD \setminus \{i\}.
	\end{equation*}
	Also, for $\sS \subseteq \sD \setminus \{i\}$, we have:
	\begin{equation*}
		g(\ones^\sS) = \sum_{\sT \subseteq \sS} q_{\sT}(0) \hspace{1em}\text{ and }\hspace{1em} g(\ones^{\sS \cup \{i\}}) = \sum_{\sT \subseteq \sS} q_{\sT}(1) .
	\end{equation*}
	We want to show that:
	\begin{equation*}
		q_{\sS}(0) = q_{\sS}(1), \hspace{1.5em}\text{for }\sS \subseteq \sD \setminus \{i\},
	\end{equation*}
	which we prove by induction:
	\begin{itemize}
		\item \textbf{Base case:} we consider the empty set
		\begin{equation*}
			g(\ones^{\emptyset}) = q_{\emptyset}(0) \hspace{1em}\text{ and }\hspace{1em} g(\ones^{\{i\}}) = q_{\emptyset}(1).
		\end{equation*}
		By our main assumption from the statement, $g(\ones^{\emptyset}) = g(\ones^{\{i\}})$. Therefore:
		\begin{equation*}
			q_{\emptyset}(0) = q_{\emptyset}(1).
		\end{equation*}
		\item \textbf{General case:} assume that, for $\sS' \subseteq \sD \setminus \{i\}$ and $|\sS'| \leq d - 1$, we have $q_{\sS'}(1) = q_{\sS'}(0)$. Then, for $\sS \subseteq \sD \setminus \{i\}$ and $|\sS| = d$:
		\begin{align*}
			g(\ones^{\sS \cup \{i\}}) &= \sum_{\sT \subseteq \sS} q_{\sT}(1) \\
			&= q_{\sS}(1) + \sum_{\sT \nsubseteq \sS} q_{\sT}(1) \\
			&= q_{\sS}(1) + \sum_{\sT \nsubseteq \sS} q_{\sT}(0)
		\end{align*}
		and, by our main assumption, we know that:
		\begin{equation*}
			g(\ones^{\sS \cup \{i\}}) = g(\ones^\sS) = \sum_{\sT \subseteq \sS} q_{\sT}(0).
		\end{equation*}
		Combining both:
		\begin{align*}
			q_{\sS}(1) + \sum_{\sT \nsubseteq \sS} q_{\sT}(0) &= \sum_{\sT \subseteq \sS} q_{\sT}(0) \\
			q_{\sS}(1) &= q_{\sS}(0).
		\end{align*}
	\end{itemize}
	To extend this proof to $\R^d$, simply use $z_i = x'$ and $z_i = x''$ in the comparisons, instead of $0$ and $1$ and allow arbitrary $z_j$ for $j \neq i$ in place of $z_j = 1$. The additional terms due to the product of the coordinates $\neq i$ will cancel.
\end{proof}

\ifdefined\mth{
	\mth*
}
\fi

\begin{proof}
	We consider $x \in \R^d$ and start by proving:
	\begin{equation*}
		f(\bx) = \mP_J(\bx) \Rightarrow \frac{\partial^2 f(\bx) }{\partial x_i^2} = 0, \hspace{1.0em}\text{for all }i.
	\end{equation*}
	Which follows from the fact that no $x_i$ appears squared for $\mP_J(\bx)$ following \Cref{eq:multi_poly_set}. Then, we prove the reverse implication:
	\begin{equation*}
		\frac{\partial^2 f(\bx) }{\partial x_i^2} = 0 \hspace{1.0em}\text{for all }i \Rightarrow f(\bx) = \mP_J(\bx).
	\end{equation*}
	By induction:
	\begin{itemize}
		\item \textbf{Base case:} We assume $f: \R \rightarrow \R$. Using integration we have
		\begin{align*}
			\frac{\partial^2 f(x) }{\partial x^2} &= 0 \\
			f(x) &= ax + b \\
			&= w_{1} x + w_{\emptyset}.
		\end{align*}
		\item \textbf{General case:} we assume that, for $f':\R^{d-1} \rightarrow \R$
		\begin{align}
			\frac{\partial^2 f'(\bx) }{\partial x_i^2} = 0, \hspace{1.0em}\text{for }1 \leq i \leq d-1 \Leftrightarrow f'(\bx) &= \mP_{J'}(\bx) \nonumber \\
			& = \sum_{\sS \subseteq \{1,\dots,d-1\}} w'_{\sS}\ \prod_{i \in \sS} x_i. \label{eq:app_mlp_proof_induction}
		\end{align}
		Then, we consider a function $f^d: \R \rightarrow \R$. Assuming that:
		\begin{equation*}
			\frac{\partial^2 f(\bx) }{\partial x_i^2} = 0, \hspace{1.0em}\text{for all }1 \leq i \leq d,
		\end{equation*}
		we want to show that this implies $f(\cdot)$ can be written in the form of \Cref{eq:multi_poly_set}. Fix an arbitrary $i$, and note that, for any value of $x_i$, since the other $d-1$ coordinates have corresponing entries on the diagonal of the Hessian equal to zero, we can write $f(\cdot, x_i, \cdot)$ as in the RHS of \Cref{eq:app_mlp_proof_induction} (with indices possibly remapped). The specific $d-1$ multilinear polynomial from \Cref{eq:app_mlp_proof_induction} will depend on the value of $x_i$. Therefore, we can write $f(\cdot)$ as:
		\begin{equation*}
			f(\bx)	= \sum_{\sS \subseteq \sD \setminus \{i\}} w_{\sS}(x_i) \prod_{j \in \sS} x_j.
		\end{equation*}
		Taking the derivative:
		\begin{equation*}
			\frac{ \partial f(\bx)}{\partial x_i}	= \sum_{\sS \subseteq \sD \setminus \{i\}} \frac{\partial w_{\sS}(x_i)}{\partial x_i} \prod_{j \in \sS} x_j.
		\end{equation*}	
		In addition, by assumption, we also have $\partial^2 f(\bx)/\partial x_i^2 = 0$. Consequently, $\partial f(\bx)/\partial x_i$ is constant with respect to $x_i$. Using \Cref{lem:multilinear_hessian_proof_lemma}, with $g(\bx) = \partial f(\bx)/\partial x_i$ and 
		$q_{\sS}(x_i) = \partial w_{\sS}(x_i)/\partial x_i$, we have that $\partial w_{\sS}(x_i)/\partial x_i$ is also independent of $x_i$. Meaning:
		\begin{equation*}
			w_{\sS}(x_i) = w'_{\sS}	+ x_i w''_{\sS}.
		\end{equation*}
		Finally, this implies:
		\begin{align*}
			f(\bx) &= \sum_{\sS \subseteq \sD \setminus \{i\}} w_{\sS}(x_i) \prod_{j \in \sS} x_j \\
			&= \sum_{\sS \subseteq \sD \setminus \{i\}} (w'_{\sS}	+ x_i w''_{\sS}	) \prod_{j \in \sS} x_j \\
			&= \sum_{\sS \subseteq \sD} w_{\sS}\ \prod_{i \in \sS} x_i.
		\end{align*}
		Where, for any $\sS \subseteq \sD \setminus \{i\}$, we have $w_{\sS} = w'_{\sS}$ and $w_{\sS \cup \{i\}} = w''_{\sS}$.
	\end{itemize}
\end{proof}

\subsection{Iterative ARMS Procedure}
\label{sec:arms_iter_proof}
To prove that the sampling procedure is equivalent, since dimensions are independent, we restrict ourselves to $d=1$ and omit the index $i$. Assume:
\begin{equation*}
	E^{(s)} = - \log U^{(s)}, \hspace{1.5em}\text{for }U^{(1)},\dots, U^{(n)}\overset{i.i.d.}{\sim}U[0,1].
\end{equation*}
Note also that the original ARMS procedure from \Cref{alg:orig_arms} samples the Dirichlet by using:
\begin{equation}
	\label{eq:app_arms_dirichlet}
	d^{(s)} = \frac{\log U^{(s)}}{\sum_{s'=1}^{n}\log U^{(s')}} = \frac{E^{(s)}}{\sum_{s'=1}^{n}E^{(s')}}.
\end{equation}
Furthermore, for $E^{(s)}$ as defined above and considering the exponential and Gamma distributions, the following holds:
\begin{equation*}
	E^{(s)} \sim \mathrm{Exp}[1] = \mathrm{Gamma}[1,1].
\end{equation*}
The PDF of $E^{(s)}$ is:
\begin{equation*}
	f_{E^{(s)}}(x) = \euler^{-x}, \hspace{1.5em}\text{for }x \geq 0.
\end{equation*}
Additionally, the sum of these multiple independent Gamma random variables with common scale parameter satisfies
\begin{align}
	\label{eq:arms_iter_app_eq_gamma}
	\sum_{s'=1}^{n} E^{(s')} \sim \mathrm{Gamma}[n,1], && f_{\sum E^{(s')}}(x) = \frac{x^{n-1}\euler^{-x}}{(n-1)!},\hspace{1.0em}\text{for }x \geq 0.
\end{align}
The procedure involves sampling the whole $\sum_{s'=1}^{n} E^{(s')}$ sum at once and then iteratively sampling each of its composing summands $E^{(s)}$. We therefore want to know 
the PDF of $E^{(s)}$ given the sum. By Bayes rule:

\begin{align*}
	f_{E^{(s)}|\sum E^{(s')}=R}(x) &= \frac{f_{E^{(s)}}(x) f_{\sum E^{(s')}|E^{(s)}=x}(R)}{f_{\sum E^{(s')}}(R)} \\
	&= \frac{f_{E^{(s)}}(x) f_{\sum_{s' \neq s} E^{(s')}|E^{(s)}=x}(R-x)}{f_{\sum E^{(s')}}(R)} \\
	&= \frac{f_{E^{(s)}}(x) f_{\sum_{s' \neq s} E^{(s')}}(R-x)}{f_{\sum E^{(s')}}(R)} \tag*{$\triangleright$ \parbox[t]{2.0cm}{\raggedright Since $E^{(s)}$ and $E^{(s')}$ are i.i.d.}} \\
	&= \frac{\euler^{-x} \left( \frac{(R-x)^{n-2}\euler^{-(R-x)}}{(n-2)!} \right)}{\frac{R^{n-1}\euler^{-R}}{(n-1)!}}, \hspace{1.5em}\text{for }0\leq x \leq R \\
	&= \frac{n-1}{R} \left( 1 - \frac{x}{R} \right)^{n-2}.
\end{align*}
The CDF of this r.v. is:
\begin{align}
	F_{E^{(s)}|\sum E^{(s')}=R}(x) &= \int_{0}^{x} f_{E^{(s)}|\sum E^{(s')}=R}(x') \diff x' \nonumber \\
	&= 1 - \left(1-\frac{x}{R}\right)^{n-1},\hspace{1.5em}\text{for }0\leq x \leq R, \label{eq:app_arms_proof_cdf}
\end{align}
and its inverse is given by
\begin{align}
	(F_{E^{(s)}|\sum E^{(s')}=R})^{-1}(y) = -R(1-y)^{\frac{1}{n-1}}+R,\hspace{1.5em}\text{for }0\leq y \leq 1.
	\label{eq:app_arms_proof_inv_cdf}
\end{align}
By inverse transform sampling, we can sample a random variable with CDF from \Cref{eq:app_arms_proof_cdf} by first sampling $U\sim U[0,1]$ and then applying \Cref{eq:app_arms_proof_inv_cdf} to it. In that case, $1-U$ is also distributed according to $U[0,1]$, so we can use it instead. Therefore, sampling $E^{(s)}$ given $\sum_{s'=1}^{n}E^{(s')}=R$ is the same as sampling $U \sim U[0,1]$, then doing:
\begin{equation}
	E^{(s)} = -R U^{\frac{1}{n-1}} + R.
	\label{eq:arms_iter_app_cond_sample}
\end{equation}
The iterative procedure outlined in \Cref{alg:iter_dir} then consists of the following steps:
\begin{enumerate}
	\item Sample $\sum_{s'=1}^{n}E^{(s')}$ (\Crefalt{eq:arms_iter_app_eq_gamma}).
	\item Sample $E^{(s)}$ given this sum (\Crefalt{eq:arms_iter_app_cond_sample}).
	\item Compute the Dirichlet $d^{(s)}$ (\Crefalt{eq:app_arms_dirichlet}).
	\item Compute uniform $\tilde{u}$ using the marginal Dirichlet CDF (similar to \Cref{alg:orig_arms}).
	\item Repeat this process for the next iteration, but, instead of sampling $\sum_{s'=1}^{n}E^{(s')}$ again, use $R$ minus the previous $E^{(s)}$, with the caveat that \Cref{eq:app_arms_proof_inv_cdf} has to be adapted to consider one less sample.
\end{enumerate}
The final algorithm, resulting from combining \Cref{alg:iter_loorf,alg:orig_arms,alg:iter_dir} is outlined in \Cref{alg:iter_arms}.
\begin{algorithm}
	\caption{Iterative ARMS with Dirichlet copulas}
	\label{alg:iter_arms}
	\begin{algorithmic}
		\State $\triangleright$ Start accumulators:
		\State $\Jh \leftarrow 0$;
		\State $\Lambda_{\nabla \log} \leftarrow \zeros$;
		\State $\Lambda_{J \nabla \log} \leftarrow \zeros$;
		\State $\triangleright$ Start variables for iterative Dirichlet sampling:
		\State Sample $\sum_{s'}E_i^{(s')} \sim \mathrm{Gamma}[n,1]$ for $i \in \{1, \cdots, d\}$
		\State $R_i \leftarrow \sum_{s'}E_i^{(s')}$
		\State $\triangleright$ Compute correlations:
		\For{$i \in 1 \dots d$}
		\If{$\theta_i > 0.5$}
		\State  $\rho_i \leftarrow \frac{\max(0, 2(1 - \theta_i)^{\frac{1}{n-1}} - 1)^{n - 1} - (1 - \theta_i)^2}{\theta_i(1 - \theta_i)}$
		\Else
		\State $\rho_i \leftarrow \frac{\max(0, 2\theta_i^{1 / (n-1)} - 1)^{n - 1} - \theta_i^2 }{\theta_i(1 - \theta_i)}$
		\EndIf		
		\EndFor
		\State $\triangleright$ Main loop:
		\For{$s \in 1 \dots n$}
		\State $\triangleright$ The loop below can be parallelized with vectorized implementations
		\For{$i \in 1 \dots d$}
		\If{$s < n$}
		\State Sample $U \sim U[0,1]$
		\State $E_i^{(s)} \leftarrow -R_i U^{\frac{1}{n-s}} + R_i$
		\Else
		\State $E_i^{(s)} \leftarrow R_i$
		\EndIf
		\State $R_i \leftarrow R_i - E_i^{(s)}$
		\State $\triangleright$ Get single Dirichlet r.v. (recall that $\sum_{s'}E_i^{(s')}$ was already computed)
		\State $\dis \leftarrow \frac{E_i^{(s)}}{\sum_{s'}E_i^{(s')}}$
		\State $\uTils_i \leftarrow 1 - (1-\dis)^{n-1}$ \Comment{Apply marginal CDF}
		\If{$\theta_i > 0.5$}  \Comment{Additional steps from ARMS paper}
		\State $\zTils_i \leftarrow \ind[\uTils_i \leq \theta_i]$
		\Else
		\State $\zTils_i \leftarrow \ind[1 - \uTils_i \leq \theta_i]$
		\EndIf
		\EndFor
		\State $\triangleright$ Accumulate values, similarly to LOORF
		\State $\Jh \leftarrow \frac{s-1}{s}\Jh + \frac{1}{s} J(\bzTils) $
		\State $\Lambda_{\nabla \log} \leftarrow \frac{\mymax{}(s-2, 1)}{\mymax{}(s-1, 1)}\Lambda_{\nabla \log} + \frac{1}{\mymax{}(s-1, 1)} \nabla_{\btheta} \log p_{\btheta}(\bzTils)$
		\State $\Lambda_{J \nabla \log} \leftarrow \frac{\mymax{}(s-2, 1)}{\mymax{}(s-1, 1)}\Lambda_{J \nabla \log} + \frac{1}{\mymax{}(s-1, 1)} J(\bzTils) \nabla_{\btheta} \log p_{\btheta}(\bzTils)$			
		\EndFor
		\State $\bgh \leftarrow \frac{1}{1-\brho}\Big(\Lambda_{J \nabla \log} - \Lambda_{\nabla \log} \Jh\Big)$
		\State \textbf{Return:} $\bgh$
	\end{algorithmic}
\end{algorithm}

\FloatBarrier
\subsection{Beta$^*$}
\label{sec:bstar_proofs}

\ifdefined\bStarOne{
	\bStarOne*
}
\fi

\begin{proof}
	For arbitrary $f(\cdot)$ and $d=1$, consider estimating $\mathbb{E}[f(z)]$ by using control variates as follows:
	\begin{equation}
		\E{}{f(z)} \approx \left( \frac{1}{n} \sum_{s=1}^n f(z^{(s)}) - \beta h(z^{(s)}) \right) + \beta \E{}{h(z)}.
		\label{eq:app_bstar_proof_owen_form}
	\end{equation}
	From \textcite[Section 8.9]{owen2013monte}, the optimal $\beta$ is given by
	\begin{equation*}
		\beta^* = \frac{\Cov{f(z), h(z)}}{\Var{h(z)}}.
	\end{equation*}
	Estimation from \Cref{eq:bstar_thm_estimator} is equivalent to \Cref{eq:app_bstar_proof_owen_form} for the following choices:
	\begin{equation*}
		f(z) = \frac{\partial \log p_{\btheta}(\bz)}{\partial \theta_i} J(\bz) \hspace{1em}\text{ and }\hspace{1em} h(z) = \frac{\partial \log p_{\btheta}(\bz)}{\partial \theta_i}.
	\end{equation*}
	We have:
	\begin{align*}
		\beta^* &= \frac{\Cov{\frac{\partial \log p_{\btheta}(\bz)}{\partial \theta_i} J(\bz), \frac{\partial \log p_{\btheta}(\bz)}{\partial \theta_i}}}{\Var{\frac{\partial \log p_{\btheta}(\bz)}{\partial \theta_i}}} \\
		&= \frac{\E{}{\left(\frac{\partial \log p_{\btheta}(\bz)}{\partial \theta_i}\right)^2 J(\bz)}}{\E{}{\left(\frac{\partial \log p_{\btheta}(\bz)}{\partial \theta_i}\right)^2}} \tag*{$\triangleright$ \parbox[t]{5.0cm}{\raggedright Since $\E{\bz \sim p_{\btheta}(\cdot)}{\frac{\partial \log p_{\btheta}(\bz)}{\partial \theta_i}} = 0$}} \\
		&= \frac{\int p_{\btheta}(\bz) \left(\frac{\partial \log p_{\btheta}(\bz)}{\partial \theta_i}\right)^2 J(\bz) \diff \bz}{\int p_{\btheta}(\bz') \left(\frac{\partial \log p_{\btheta}(\bz')}{\partial \theta_i}\right)^2  \diff \bz'} \\
		&= \int \left(\frac{p_{\btheta}(\bz) \left(\frac{\partial \log p_{\btheta}(\bz)}{\partial \theta_i}\right)^2}{\int p_{\btheta}(\bz') \left(\frac{\partial \log p_{\btheta}(\bz')}{\partial \theta_i}\right)^2  \diff \bz'}\right) J(\bz) \diff \bz \\
		&= \E{\bz \sim q_i(\cdot; \btheta)}{J(\bz)}.
	\end{align*}
	Where:
	\begin{align*}
		q_i(\bz; \btheta) &= \frac{p_{\btheta}(\bz) \left( \frac{\partial \log p_{\btheta}(\bz)}{\partial \theta_i} \right)^2}{\int p_{\btheta}(\bz') \left(\frac{\partial \log p_{\btheta}(\bz')}{\partial \theta_i}\right)^2  \diff \bz'} \\
		&\propto p_{\btheta}(\bz) \left( \frac{\partial \log p_{\btheta}(\bz)}{\partial \theta_i} \right)^2.
	\end{align*}
\end{proof}

\ifdefined\bStarTwo{
	\bStarTwo*
}
\fi

\begin{proof}
	Since we have $p_{\btheta}(\bz) = \prod_{i=1}^d p_{\theta_i}(z_i)$, $q_i(\bz; \btheta)$ becomes:
	\begin{align*}
		q_i(\bz; \btheta) &= \frac{p_{\btheta}(\bz) \left( \frac{\partial \log p_{\btheta}(\bz)}{\partial \theta_i} \right)^2}{\int p_{\btheta}(\bz') \left(\frac{\partial \log p_{\btheta}(\bz')}{\partial \theta_i}\right)^2  \diff \bz'} \\
		&= \frac{p_{\btheta}(\bz) \left( \frac{\partial \log p_{\btheta}(z_i)}{\partial \theta_i} \right)^2}{\int p_{\btheta}(z_i') \left(\frac{\partial \log p_{\btheta}(z_i')}{\partial \theta_i}\right)^2  \diff z_i'}.
	\end{align*}
	Which, for $p_{\btheta}(\cdot) = \prod_{i=1}^{d} \Ber[\theta_i]$, becomes:
	\begin{align*}
		q_i(\bz; \btheta) &= \prod_{j\neq i} p_{\theta_j}(z_j) \frac{\frac{1}{p_{\theta_i}(z_i)}}{\frac{1}{p_{\theta_i}(z_i)}+\frac{1}{1-p_{\theta_i}(z_i)}} \\
		&= \left(\prod_{j\neq i} p_{\theta_j}(z_j)\right)(1 - p_{\theta_i}(z_i)).
	\end{align*}
	Plugging this in $\mathbb{E}_{\bz \sim q_i(\cdot; \btheta)}[J(\bz)]$ and using $\bz_{\setminus i} \sim p_{\btheta}(\cdot)$ as shorthand for $z_j \sim p_{\theta_j}(\cdot)$ for $j \neq i$:
	\begin{align*}
		\beta_i^* &= \E{\underset{z_i \sim 1 - p_{\theta_i}(\cdot)}{\bz_{\setminus i} \sim p_{\btheta}(\cdot)}}{J(\bz)} \\
		&= \E{\bz \sim p_{\btheta}(\cdot)}{J(z_1, \dots, z_{i-1}, 1 - z_{i}, z_{i+1}, \dots)},
	\end{align*}
	where the last step follows from the fact that, in general, we can sample $b' \sim \Ber[1-\theta]$ as $b' = 1-b$, for $b \sim \Ber[\theta]$.
\end{proof}

\subsection{Generalization of Stochastic Formulation}
\label{sec:mc_bad_gen_2}

\ifdefined\mcBadGen{
	\mcBadGen*
}
\fi
\begin{proof}
	As mentioned in the main text, $J(\cdot)$ is bounded to the interval $[m, M_0)$, regardless of $d$ . Since $p_{\theta_i}(z_i)$ is a factorized Bernoulli, we have:
	\begin{equation*}
		p_{\theta_i}(z_i) = \begin{cases}
			\theta_i &\text{if }z_i = 1 \\
			1-\theta_i &\text{if }z_i = 0
		\end{cases} \hspace{1.5em}\text{and}\hspace{1.5em}p_{\btheta}(\bz) = \prod_{i=1}^{d}p_{\theta_i}(z_i).
	\end{equation*}
	We also have:
	\begin{equation*}
		\frac{\partial p_{\btheta}(\bz)}{\partial \theta_i} = (2 z_i - 1)\prod_{j \neq i} p_{\theta_j}(z_j)
	\end{equation*}
	and the direction that increases $p_{\btheta}(\bz^*)$ at the $i$-th coordinate is:
	\begin{equation*}
		\frac{\partial p_{\btheta}(\bz^*)}{\partial \theta_i} = (2 z_i^* - 1)\prod_{j \neq i} p_{\theta_j}(z_j^*).
	\end{equation*}
	Whereas the negative of the true gradient at the $i$-th coordinate is:
	\begin{align*}
		-\frac{\partial}{\partial \theta_i} \E{\bz \sim p_{\btheta}(\cdot)}{J(\bz)} &= -\sum_{h=0}^{2^d-1}\frac{\partial p_{\btheta}(\Zeta_h)}{\partial \theta_i}J(\Zeta_h) \\
		&= -\sum_{h=0}^{2^d-1}\left((2 (\Zeta_h)_i - 1)\prod_{j \neq i} p_{\theta_j}((\Zeta_h)_j)\right)J(\Zeta_h).
	\end{align*}
	Therefore, the $i$-th coordinate of $-\nabla_{\btheta}\mathbb{E}_{\bz \sim p_{\btheta}(\cdot)}{J(\bz)}$ and that of $\nabla_{\btheta} p_{\btheta}(\bz^*)$ will point in the same direction if the following is $\geq 0$:
	\begin{align*}		
		\frac{\partial p_{\btheta}(\bz^*)}{\partial \theta_i} \left(-\frac{\partial}{\partial \theta_i} \E{\bz \sim p_{\btheta}(\cdot)}{J(\bz)}\right) = (2 z_i^* - 1)\prod_{j \neq i} p_{\theta_j}(z_j^*) \begin{aligned}[t]
			&\Bigg(-\sum_{h=0}^{2^d-1}\Bigg((2 (\Zeta_h)_i - 1)\\
			&\hspace{1.7em}\prod_{j \neq i} p_{\theta_j}((\Zeta_h)_j)\Bigg)J(\Zeta_h)\Bigg).
		\end{aligned}
	\end{align*}
	Since $\prod_{j \neq i} p_{\theta_j}(z_j^*)$ does not change the sign of the expression, we will ignore it. The remaining terms can be written as:
	\begingroup
	\allowdisplaybreaks
	\begin{align}
		& (2 z_i^* - 1)\Bigg(-\sum_{h=0}^{2^d-1}\left((2 (\Zeta_h)_i - 1) \prod_{j \neq i} p_{\theta_j}((\Zeta_h)_j)\right)J(\Zeta_h)\Bigg) \nonumber\\
		={}& (2 z_i^* - 1) \Big(-\sum_{z_i=0}^{1}(2 z_i - 1) \mathbb{E}_{\{z_j \sim p_{\theta_j}(\cdot)\}_{j \neq i}}[J(\bz)] \Big) \nonumber \\
		={}&  - \sum_{z_i=0}^{1} (-1)^{\ind[z_i \neq z_i^*]} \mathbb{E}_{\{z_j \sim p_{\theta_j}(\cdot)\}_{j \neq i}}[J(\bz)]  \nonumber \\
		={}& - \begin{aligned}[t]
			&\Big(\mathbb{E}_{\{z_j \sim p_{\theta_j}(\cdot)\}_{j \neq i}}[J(z_1, \dots, z_{i-1}, z_{i}^*, z_{i+1}, \dots)]\ - \\
			&\hspace{3em}\mathbb{E}_{\{z_j \sim p_{\theta_j}(\cdot)\}_{j \neq i}}[J(z_1, \dots, z_{i-1}, 1 - z_{i}^*, z_{i+1}, \dots)]\Big)
		\end{aligned} \nonumber \\
		={}& - \begin{aligned}[t]
			&\mathbb{E}_{\{z_j \sim p_{\theta_j}(\cdot)\}_{j \neq i}}\big[J(z_1, \dots, z_{i-1}, z_{i}^*, z_{i+1}, \dots)\ - \\
			&\hspace{8em}J(z_1, \dots, z_{i-1}, 1 - z_{i}^*, z_{i+1}, \dots)\big].
		\end{aligned}
		\label{eq:app_true_grad_bound_alignment}
	\end{align}
	Note that the following implication is true for the given $J(\cdot)$:
	\begin{align}
		&(d_H(\bz, \bz') = 1) \land (d_H(\bz, \bz^*) - d_H(\bz', \bz^*) = -1) \Rightarrow \nonumber \\
		&\hspace{7.5em} \begin{aligned}
			\left((\bz = \bz^*) \land \left(J(\bz) - J(\bz') = m - \left(M_0 - \frac{\Delta M}{d}\right)\right) \right) \lor&\\
			\left((\bz \neq \bz^*) \land \left(J(\bz) - J(\bz')  = \frac{\Delta M}{d}\right)\right)&
		\end{aligned}
		\label{eq:app_true_bound_logic_statement}
	\end{align}
	and the left statement from \Cref{eq:app_true_bound_logic_statement} is satisfied for all pairs inside the expectation from the RHS of \Cref{eq:app_true_grad_bound_alignment}. Therefore \Cref{eq:app_true_grad_bound_alignment} becomes:
	\begin{align*}
		&-\begin{aligned}[t]
			&\mathbb{E}_{\{z_j \sim p_{\theta_j}(\cdot)\}_{j \neq i}}\big[J(z_1, \dots, z_{i-1}, z_{i}^*, z_{i+1}, \dots)\ - \\
			&\hspace{8em}J(z_1, \dots, z_{i-1}, 1 - z_{i}^*, z_{i+1}, \dots)\big]
		\end{aligned} \\
		={}& - \Bigg(\prod_{j \neq i} p_{\theta_j}(z_j^*)\Bigg(m - \Bigg(M_0 - \frac{\Delta M}{d}\Bigg)\Bigg) + \Bigg(1-\prod_{j \neq i} p_{\theta_j}(z_j^*)\Bigg)\frac{\Delta M}{d}\Bigg).
	\end{align*}
	Finally, after some algebraic manipulation:
	\begin{align*}
		& - \Bigg(\prod_{j \neq i} p_{\theta_j}(z_j^*)\Bigg(m - \Bigg(M_0 - \frac{\Delta M}{d}\Bigg)\Bigg) + \Bigg(1-\prod_{j \neq i} p_{\theta_j}(z_j^*)\Bigg)\frac{\Delta M}{d}\Bigg) \geq 0 \\
		\iff{}& m \leq M_0 - \frac{\Delta M}{d \prod_{j \neq i} p_{\theta_j}(z_j^*)}. \hspace{2em}\triangleright\text{Assuming }\prod_{j \neq i} p_{\theta_j}(z_j^*) \neq 0
	\end{align*}
	\endgroup	
\end{proof}

\section{Experimental Details}
This section details some choices made in experiments from the main text.
\subsection{Microworlds}
This section further details some experiments from \Cref{sec:microworld}.
\subsubsection{Variance Experiments}
\label{sec:app_true_var}

Here, we further explain the variance experiments performed in \Cref{sec:microworld_compare_estimators}. We write the following in terms of $\btheta$ instead of $\br$ to avoid clutter. First, we note that none of the estimators used here require $\nabla_{\bz} J(\bz)$, so we can simply compute all $J(\Zeta_h)$ values and store them in a table to speed up computations. The variance of the gradient estimators is given by:
\begin{align}
	\Var{\bgh((\bzs)_{s=1}^n; \btheta)} &=\E{}{\bgh((\bzs)_{s=1}^n; \btheta)^2} - \E{}{\bgh(\bzs)_{s=1}^n; \btheta)}^2 \nonumber \\
	&= \E{}{\bgh((\bzs)_{s=1}^n; \btheta)^2} - \left( \nabla_{\btheta} \E{\bz \sim p_{\btheta}(\cdot)}{J(\bz)}\right)^2.
	\label{eq:app_main_var_eq}
\end{align}

For these smaller scale experiments, computing the right summand can simply be done by using \Cref{eq:true_as_avg}, which requires $2^d$ evaluations. Here, this is feasible to compute, but not so much for the left summand, which is a sum of $2^{dn}$ terms. To reduce its complexity, we leverage the structure of the expressions involved. Notably, $\bgh(\cdot)$ is order invariant with respect to $\bzs$ in all the estimators we use. Therefore, we only consider the combinations of $n$ elements from $\{\Zeta_h\}_{h=0}^{2^d-1}$ instead of all possible $n$-tuples. Since each of them is an $n$-combination of $2^d$ elements with repetition, the number of different values $\bgh(\cdot)$ can take is given by:
\begin{equation}
	\multiset{2^d}{n} = {2^d - 1 + n \choose n}.
	\label{eq:n_multisets}
\end{equation}
Therefore, to compute the left summand of \Cref{eq:app_main_var_eq}, we need to evaluate $\bgh(\cdot)^2$ only in these combinations, then multiply each evaluation by the probability of the respective combination and sum all of these terms. The probability of a combination will be the sum of the probabilities of all the permutations corresponding to it.

Since, as mentioned before, these combinations may have repetition, each of them corresponds to a set that can have repeated elements, sometimes called multisets. The number of times each value appear in the multiset is called its multiplicity. Considering a multiset $\sM$ and denoting as $m_h$ the multiplicity of $\Zeta_h$, where $m_h \geq 0$ and $\sum_{h=0}^{2^d-1} m_h = n$, we have that the number of $n$-permutations of $\sM$ is given by:
\begin{equation}
	{n \choose m_0, \dots, m_{2^d-1}} = \frac{n!}{m_0!\dots m_{2^d-1}!}.
	\label{eq:multiset_permutations}
\end{equation}

For REINFORCE, LOORF, $\beta^*$, the $n$ samples are iid, therefore, the probability of an arbitrary set of $n$ samples is simply:
\begin{equation}
	p((\bzs)_{s=1}^n; \btheta) = \prod_{s=1}^n p_{\btheta}(\bzs).
	\label{eq:app_var_z_probs}
\end{equation}

Which will be the same for all permutations corresponding to each multiset. To conclude, the left summand from \Cref{eq:app_main_var_eq} can be computed by iterating the combinations from \Cref{eq:n_multisets} and, for each of them, computing the quantity from \Cref{eq:app_var_z_probs}, multiplying by the one in \Cref{eq:multiset_permutations} and by $\bgh(\cdot)^2$, then summing all of these results. For $d=4$ and $n=4$, this reduces the number of per-step evaluations from $65,536$ to $3,876$ when calculating $\mathbb{E}[\bgh(\cdot)^2]$.

For ARMS, however, sampling follows $\Cref{alg:iter_arms}$, which introduces dependence between different $\bzs$, causing \Cref{eq:app_var_z_probs} to be no longer valid. To show how we calculate the new probabilities, we use $d=1$, $n=2$ and $(\zTil^{(1)}, \zTil^{(2)})=(1,0)$ as an example. The algorithm has two cases: $\theta > 0.5$, where the Dirchlet copula $(\uTils)_{s=1}^n$ is used; $\theta \leq 0.5$, where $(1-\uTils)_{s=1}^n$ is used. For the first case, we have:
\begin{align}
	p(\zTil^{(1)} = 1, \zTil^{(2)} = 0; \theta) &= p(\uTil^{(1)} < \theta, \uTil^{(2)} > \theta) \nonumber \\
	&= p(d^{(1)} < F^{-1}(\theta), d^{(2)} > F^{-1}(\theta)),
	\label{eq:app_var_red_d_2}
\end{align}
where $F^{-1}(\cdot)$ is the inverse of the marginal CDF of the Dirichlet distribution used to get the copula in \Cref{alg:iter_arms}:
\begin{equation*}
	F^{-1}(\theta) = 1 - (1 - \theta)^{1/(n-1)}.
\end{equation*}
This function is monotonically increasing in $[0,1]$. By the law of total probability, \Cref{eq:app_var_red_d_2} corresponds to:
\begin{equation}
	\int p(d^{(1)}, d^{(2)}) \ind[(d^{(1)} < F^{-1}(\theta))] \ind[(d^{(2)} > F^{-1}(\theta))] \diff d^{(1)} \diff d^{(2)}.
	\label{eq:mathematica_integral}
\end{equation}
We can input this integral to a symbolic equation solver, such as Mathematica, by using the PDF of the Dirichlet and multivariate integration, yielding a closed-form expression in terms of $\theta$.

To generalize the above (still on $d=1$ for now), we note that these steps are also order invariant with respect to $\zTil^{(s)}$. Therefore, the number of possible integrals such as the one in \Cref{eq:mathematica_integral} is simply the number of $n$-combinations of $\{0,1\}$ with repetition, given by:
\begin{equation*}
	\multiset{2}{n} = {2 - 1 + n \choose n} = n + 1.
\end{equation*}
Once $\theta_t$ is known, we can compute these $n+1$ values in closed-form and store them in a table to then be repeatedly consulted for each possible set $\{\zTil^{(1)}, \dots, \zTil^{(n)}\}$ when calculating the expectation $\mathbb{E}[\gh_{ARMS}(\cdot)^2]$ (taking the place of \Crefalt{eq:app_var_z_probs}), since they will not change until $\theta_t$ changes. In the second case, where $\theta \leq 0.5$, ARMS uses a different copula, so the above steps can be slightly changed to:
\begin{align*}
	p(\zTil^{(1)} = 1, \zTil^{(2)} = 0; \theta) &= p(1 - \uTil^{(1)} < \theta, 1 - \uTil^{(2)} > \theta) \\
	&= p(\uTil^{(1)} > 1 - \theta, \uTil^{(2)} < 1- \theta) \\
	&= p(d^{(1)} > F^{-1}(1-\theta), d^{(2)} < F^{-1}(1-\theta)).
\end{align*}
$\theta$ is simply exchanged by $1 - \theta$ when computing the table and the inequalities change sides when consulting it. In other words, it is the same result as running the previous steps, but using $1-\theta$ instead of $\theta$ and $(\zTil^{(1)}, \zTil^{(2)})=(0,1)$ instead of $(1,0)$.

For $d > 1$, independence permits per-dimension parallelization of these steps. The closed-form integral solutions will be the same across different $i \in \{1, \dots, d\}$, requiring only changing $\theta$ for $\theta_i$. To allow reproducibility, \Cref{tab:integral_closed} gives closed-form expressions for integrals like the one in \Cref{eq:mathematica_integral} for the case where $n=4$ and $d=1$, where  we denote $\oplus$ to be the exclusive disjunction (\ie, XOR).

For the other experiments, where $n=10$ and $d=10$, we use estimated variance rather than closed-forms. This simply amounts to changing the expectation in the left summand from \Cref{eq:app_main_var_eq} for a Monte Carlo estimate, where we computed the mean over $10,000$ evaluations of $\bgh(\cdot)^2$ for each timestep, the right summand is still used in closed-form.
\begin{table}
	\centering
	\begin{tabular}{ | c | l| } 
		\hline
		Multiset &  $\int p((d^{(s)})_{s=1}^n) \prod_{s=1}^n \ind[(d^{(s)} < F^{-1}(\theta)) \oplus (\zTil^{(s)} = 0)] \diff d^{(s)} $ \\
		\hline
		$\{0,0,0,0\}$ & 
		$
		f(\cdot; \theta) = \begin{cases}
			-(4 \theta -1)^3 & 0 \leq \theta < \frac{1}{4} \\
			0 & \text{otherwise}
		\end{cases}
		$ \\
		$\{0,0,0,1\}$ & 
		$
		f(\cdot; \theta) = \begin{cases}
			\theta  \left(37 \theta ^2-21 \theta +3\right) & 0 \leq \theta <\frac{1}{4} \\
			-(3 \theta -1)^3 & \frac{1}{4}\leq \theta <\frac{1}{3} \\
			0 & \text{otherwise}
		\end{cases}
		$ \\
		$\{0,0,1,1\}$ & 
		$f(\cdot; \theta) =  \begin{cases}
			6 (1-3 \theta ) \theta ^2 & 0\leq \theta <\frac{1}{4} \\
			46 \theta ^3-42 \theta ^2+12 \theta -1 & \frac{1}{4}\leq \theta <\frac{1}{3} \\
			-(2 \theta -1)^3 & \frac{1}{3}\leq \theta <\frac{1}{2} \\
			0 & \text{otherwise}
		\end{cases}$
		\\
		$\{0,1,1,1\}$ &
		$f(\cdot; \theta) = \begin{cases}
			6 \theta ^3 & 0\leq \theta <\frac{1}{4} \\
			-58 \theta ^3+48 \theta ^2-12 \theta +1 & \frac{1}{4}\leq \theta <\frac{1}{3} \\
			23 \theta ^3-33 \theta ^2+15 \theta -2 & \frac{1}{3}\leq \theta <\frac{1}{2} \\
			-(\theta -1)^3 & \frac{1}{2}\leq \theta \leq 1
		\end{cases}$
		\\
		$\{1,1,1,1\}$ & 
		$f(\cdot; \theta) = \begin{cases}
			0 & 0 \leq \theta < \frac{1}{4} \\
			(4 \theta -1)^3 & \frac{1}{4}\leq \theta <\frac{1}{3} \\
			-44 \theta ^3+60 \theta ^2-24 \theta +3 & \frac{1}{3}\leq \theta <\frac{1}{2} \\
			4 \theta ^3-12 \theta ^2+12 \theta -3 & \frac{1}{2} \leq \theta \leq 1 \\
		\end{cases}$
		\\
		\hline
	\end{tabular}
	\caption{Closed-form expressions for probabilities of $\{\zTil^{(s)}\}_{s=1}^n$, for $n=1$ and $d=4$. If $\theta > 0.5$, $p(\{\zTil^{(s)}\}_{s=1}^n; \theta) = f(\{\zTil^{(s)}\}_{s=1}^n; \theta)$, otherwise $p(\{\zTil^{(s)}\}_{s=1}^n; \theta) = f(\{1 - \zTil^{(s)}\}_{s=1}^n; 1 - \theta)$.}
	\label{tab:integral_closed}
\end{table}

\FloatBarrier

\subsubsection{Comparing Methods}
\label{sec:app_microworld_hypers}
Whenever possible, we follow the recommendations from the authors, either directly from the papers or at least from the provided code. For REBAR, we initialized $\log \tau$ to $0.5$ and $\eta$ to $1$, whereas for RELAX we also initialized $\log \tau$ to $0.5$, but the method does not use $\eta$. We train these parameters using Adam and the same learning rate as $\br$. Additionally, the RELAX auxiliary neural network had $1$ hidden layer of size $10$ and used tanh activations. We trained it using Adam, with learning rate $1$ and weight decay $0.001$. For CP, the temperature starts at one and follows an exponential schedule, being updated every $100$ iterations until arriving at the final value of $1/200$. We naturally show the loss for $\ind[\cdot]$ instead of $\sigma(\cdot/\tau)$ when reporting its results. Moreover, we ran CP without learning rate decay, as it hindered performance.

\subsection{MaskedNNRegression}
\label{sec:app_maskednnregression}

The backbone network contained four fully connected hidden layers of size $50$ with the linear operator followed by ``batch norm'' and then LeakyReLU. We did not normalize outputs from the last layer. Weights used Xavier normal initialization.

The target network, on the other hand, had a more complex design, consisting of five fully connected hidden layers of size $500$ with the same sequence of per-layer operations as above. Here, we normalized outputs from the final layer to $[0,1]$ by using the corresponding maximum and minimum from the training data set. We initialized weights to either $-1$ or $1$ with $50\%$ chance.

To avoid adding trainable variables, we used batch normalization without affine parameters. We also excluded moving statistics from the implementation. Although \textcite{ioffe2015batch} mention that this form can reduce the expressiveness of the unnormalized layer, we observed major improvements when including it. The trained models had much lower error for the same targets and the target NN could generate much more complex mappings, which we confirmed by experimenting with one dimensional inputs and plotting the target maps. When constructing the data sets via forward-passes, we inputted each of them in its entirety, producing good average statistics for the normalization. Although we generated them separately, our comparisons to joint generation indicated this did not impact the results.

The training data set consisted of $10,000$ samples and we used batch size $100$, whereas the validation data set consisted of $5,000$ samples, which were input at once when validating. Reported results correspond to validation data, as we saw no need to generate more data for testing. To avoid relying on a single random mask when validating MC methods, we sample a batch of $500$ data points from the training set and compare the average loss over five random masks, selecting the best one.

\subsection{Pruning}
\label{sec:app_pruning_setup}
Similarly to \textcite{zhou2019deconstructing} we apply dynamic weight rescaling to MC (but not CP, as it overall hindered performance), with division of the weights by the mean of corresponding layer masks during forward passes. This quotient is treated as a constant in the backward pass. \textcite{hinton2012improving} also used DWR in the dropout paper. We noticed that methods tend to prune too aggressively without this addition. Although not using it also leads to reasonable results, sensitivity to $\lambda$ gets higher.

When training stochastic masks with SGD, \textcite{zhou2019deconstructing} had to resort to unusual learning rates such as $20$ or $100$. We did not observe this issue with RMSprop. Our implementation of CP is very similar to the one from \textcite{savarese2020winning}, the main difference being the initialization and the $\lambda$ sweeps: their mask initialization was lower and they set $\lambda$ close to zero, whereas we used $\btheta_0 = [0.5, \dots, 0.5]^\top$ and larger $\lambda$, which we believe to be more consistent with the PB optimization discussion.

In all experiments, we divided data sets in training, validation and test, with the test sets following the default split from CIFAR-10 and MNIST, while the validation sets consisted of $5,000$ random samples. Similarly to MaskedNNRegression, to avoid relying on a single random mask, we randomly sample 10 batches from the training set and 10 masks, selecting the one with the best performance on these examples to be fixed for testing and validation.

\subsubsection{Supermask}
\label{sec:app_supermask_setup}
\begin{table}[htb!]
	\centering
	\begin{tabular}{ | c | l | l |} 
		\hline
		Method & Parameter & Values \\
		\hline
		Shared & Batch size & \{128\} \\
		& Epochs & \{200\} \\	       
		& $\btheta_0 = \theta(\br_0)$ & $\{[0.5, \dots, 0.5]^\top\}$ \\	
		& Random seed & $\{0, 1, 2, 3, 4\}$ \\	       
		\hline
		MC & Optimizer (\br) & \{RMSprop\} \\	
		& Parametrization & \{Escort, Sigmoid\} \\
		& Estimator & \{ARMS, LOORF\} \\
		& Learning rate ($\br$) & \{0.1, 0.01, 0.001\} \\
		& Learning rate schedule ($\br$) & \{[60\%]\} \\
		& (\% of training) &  \\
		& Learning rate schedule ($\br$) & \{[0.5]\} \\
		& (multipliers) &  \\		   
		& $L0$-regularization ($\lambda$) & $\{1\scE-1, 5\scE-2,$ \\
		& & \hspace*{\fill}$\dots, 1\scE-5\}$ \\
		& $n$ & $\{2, 10, 100\}$ \\
		\hline
		CP & Optimizer (\br) & \{RMSprop, SGD\} \\	
		& Learning rate ($\br$) & \{0.1, 0.01, 0.001\} \\
		& Learning rate schedule ($\br$) & \{[40\%, 60\%]\} \\
		& (\% of training) &  \\
		& Learning rate schedule ($\br$) & \{[0.1, 0.1]\} \\
		& (multipliers) &  \\		   
		& $L0$-regularization ($\lambda$) & $\{1\scE-1, 5\scE-2,$ \\
		& & \hspace*{\fill}$\dots, 1\scE-5\}$ \\
		\hline 	                    		                    		                    
	\end{tabular}
	\caption{Supermask hyperparameter sweep.}
	\label{tab:app_supermask_hypers}
\end{table}
\Cref{tab:app_supermask_hypers} summarizes the hyperparameter sweep used. As mentioned before, sigmoid and LOORF had similar, but slightly worse results than escort and ARMS, so we only include these last two in the results. Remaining details for architectures and data sets are as described in \textcite[section S1]{zhou2019deconstructing}, where they refer to Lenet as MNIST-FC.

\FloatBarrier
\subsubsection{Joint Pruning}
\label{sec:app_joint_pruning_setup}
\begin{table}[htb!]
	\centering
	\begin{tabular}{ | c | l | l |} 
		\hline
		Method & Parameter & Values \\
		\hline
		Shared & Batch size & \{128\} for Resnet-20 \\
		 &  & \{64\} for VGG \\
		& Epochs & \{200\} \\
		& Finetune only & $\{80\%\}$ \\
		& (\% of training) &  \\
		& Random seed & $\{0, 1, 2, 3, 4\}$ \\		
		\hline
		MC & Parametrization & \{Escort\} \\
		& Estimator & \{ARMS\} \\
		& Optimizer (\br) & \{RMSprop\} \\		       
		& $\btheta_0 = \theta(\br_0)$ & $\{[0.5, \dots, 0.5]^\top\}$ \\
		& Learning rate ($\br$) & \{0.1, 0.01, 0.001\} \\
		& Learning rate schedule ($\br$) & \{[60\%]\} \\
		& (\% of training) &  \\
		& Learning rate schedule ($\br$) & \{[0.5]\} \\
		& (multipliers) &  \\		   
		& $L0$-regularization ($\lambda$) & $\{1\scE-1, 5\scE-2,$ \\
		& & \hspace*{\fill}$\dots, 1\scE-5\}$ \\
		& $n$ & $\{2, 10\}$ \\
		& Start training $\br$ & \{[10\%]\} \\
		& (\% of training) &  \\				
		\hline
		MC ($n=100$) & Learning rate ($\br$) & \{0.01\} \\
		& $n$ & $\{100\}$ \\
		&(Rest is the same as MC) & \\
		\hline		
		CP 	& Optimizer (\br) & \{RMSprop, SGD\} \\		       
		& $\btheta_0 = \theta(\br_0)$ & $\{[0.5, \dots, 0.5]^\top\}$ \\			 
		& Learning rate ($\br$) & \{0.1, 0.01, 0.001\} \\
		& Learning rate schedule ($\br$) & \{[40\%, 60\%]\} \\
		& (\% of training) &  \\
		& Learning rate schedule ($\br$) & \{[0.1, 0.1]\} \\
		& (multipliers) &  \\		   
		& $L0$-regularization ($\lambda$) & $\{1\scE-1, 5\scE-2,$ \\
		& & \hspace*{\fill}$\dots, 1\scE-8\}$ \\ 
		& & $\cup\ \{0.2, 0.3, \dots, 1\}$ \\ \hline
		GMP & Final weights remaining & $\{50\%,\ 10\%, \dots$\\
		& & \hspace*{\fill}$\dots,\ 0.1\%,\ 0.05\%\}$\\
		&(Rest is the same as & \\
		& \textcite{zhu2017prune}) & \\	
		\hline
		MP & Final weights remaining & $\{50\%,\ 10\%, \dots$\\
		& & \hspace*{\fill}$\dots,\ 0.1\%,\ 0.05\%\}$\\
		& Global prune & \{True\}\\
		\hline				
	\end{tabular}
	\caption{Joint pruning broad sweep.}
	\label{tab:app_pruning_broad}
\end{table}
%

\begin{table}[htb!]
	\centering
	\begin{tabular}{ | c | l | p{0.2\textwidth} | p{0.2\textwidth} |} 
		\hline
		Method & Parameter & Resnet-20 & VGG \\
		\hline
		MC & n & \{100\} & \{2, 10, 100\} \\
		   & Learning rate ($\br$) & \{0.01\}  & \{0.01\} \\
		   & $L0$-regularization ($\lambda$) &  \{0.075, 0.045, 0.04, 0.035, 0.03, 0.025, 0.02, 0.015, 0.0035, 0.0025, $1\scE{-6}$\} & \{$5\scE{-6}$, $1\scE{-6}$, $\dots$, $1\scE{-8}$\} \\ \hline
		CP & Optimizer (\br) & N/A & \{SGD\} \\	
		& Learning rate ($\br$) & N/A  & \{0.1\} \\
		& $L0$-regularization ($\lambda$) & N/A  & \{$5\scE{-9}$, $1\scE{-9}$, $\dots$, $1\scE{-10}$\} \\
		\hline
		GMP and MP & Final weights remaining & \{32\%, 21\%, 16\%, 7.5\%, 3\%, 2.6\%, 2\%, 1.4\%, 1.1\%\}  & Same, but no MP \\
		\hline
		GMP only & Final weights remaining & \{0.45\%, 0.35\%, 0.3\%, 0.25\%,\ 0.15\%\} $\cup$ \{0.04\%, 0.03\%, 0.02\%, 0.01\%\} & Same \\
		\hline
	\end{tabular}
	\caption{Joint pruning ``specialized'' sweep. Other parameters are the same as \Cref{tab:app_supermask_hypers}.}
	\label{tab:app_pruning_specialized}
\end{table}

Differently from the supermask experiments, in joint pruning we use image augmentation on the training set in the form of random horizontal flips and random crops. Our backbone experimental settings mostly follow the Resnet-20 and VGG descriptions from \textcite[figure 2]{frankle2018lottery} (they call the former Resnet-18), except for the learning rate schedule and the total number of epochs, which are based on \textcite{savarese2020winning}.

Inspired by \textcite{hoefler2021sparsity}, MC trains only the dense network in the first few epochs, the joint-training starts later. In the original GMP paper \parencite{zhu2017prune}, they also train this same way. Since CP uses smooth masks, joint training is already easier on the first epochs and there is no need for freezing masks. Similarly to \textcite{savarese2020winning} masks are frozen in the last epochs and only the main weights are fine-tuned.

As mentioned in the text, tentative hyperparameter values for $n=100$ were based on results for $n=10$. Our sweep was performed in two stages. First we used a broader range of hyperparameters, which we show in \Cref{tab:app_pruning_broad}. Then, after analyzing its results, we ran a second ``specialized'' sweep (\Cref{tab:app_pruning_specialized}) to better cover some sparsity ranges. On the main paper, our plots correspond to both sweeps combined.

\FloatBarrier
\clearpage
\begingroup
\def\conclusionParSkip{5pt}
\def\conclusionWidthOne{2.7cm}
\def\conclusionWidthTwo{2.75cm}
\def\conclusionWidthThree{8cm}
\newcommand{\conclusionParBox}[3][t]{\parbox[#1]{#2}{\vspace{.4em}\setlength{\parskip}{\conclusionParSkip}#3\vspace{.8em}}}
\subsection{Summary of Results}
For convenience, \Cref{tab:app_results_summ} summarizes the results from the main text.

\begin{longtblr}[caption = {Summary of all experiments performed}, label={tab:app_results_summ}]
	{|Q[m,c,\conclusionWidthOne]|Q[m,c,\conclusionWidthTwo]|Q[m,c,\conclusionWidthThree]|}
	\hline
	Experiment &  Comparison & Summary \\ \hline
	\SetCell[r=1]{m}{\centering Microworlds, \\ Variance \\ $(d \leq 10)$}
	& Estimators &
	\conclusionParBox[c]{\conclusionWidthThree}{
		$\bullet\ $REINFORCE sometimes had lower variance than ARMS/LOORF \par
		$\bullet\ $ARMS always had lower variance than LOORF \par
		$\bullet\ $ $\beta^*$ had significantly lower variance than the others
	}
	\\ \hline
	\SetCell[r=3]{m}{\centering Microworlds \\ $(d = 10)$}
	& Estimators &
	\conclusionParBox[c]{\conclusionWidthThree}{
		$\bullet\ $ REINFORCE performed the worst \par
		$\bullet\ $ LOORF/ARMS/$\beta^*$ performed similarly \par
		$\bullet\ $ True gradient failed to reach the correct solution in NNLoss
	}
	\\ \cline{2-3}
	& Parametrization &
	\conclusionParBox[c]{\conclusionWidthThree}{
		$\bullet\ $ Escort and direct were faster, but converged to worse final solutions \par
		$\bullet\ $ Sigmoid and cosine were slower, but converged to better final solutions \par
	}
	\\ \cline{2-3}
	& Approaches &
	\conclusionParBox[c]{\conclusionWidthThree}{
		$\bullet\ $ CP and ST performed the worst \par
		$\bullet\ $ $\nabla_{\bz}J(\bz)$ did not seem to help hybrid methods 
	}
	\\ \hline
	\SetCell[r=3]{m}{\centering NN regression \\ $(d \approx 8,000)$}
	& Estimators &
	\conclusionParBox[c]{\conclusionWidthThree}{
		$\bullet\ $ REINFORCE performed much worse than the others
	}
	\\ \cline{2-3}
	& Parametrization &
	\conclusionParBox[c]{\conclusionWidthThree}{
		$\bullet\ $ Cosine was still slower, but converged to worse values \par
		$\bullet\ $ Direct was still fast and still converged to poor values \par
		$\bullet\ $ Escort and sigmoid performed relatively well
	}
	\\ \cline{2-3}
	& Approaches &
	\conclusionParBox[c]{\conclusionWidthThree}{
		$\bullet\ $ $\nabla_{\bz}J(\bz)$ started becoming helpful for hybrid methods \par
		$\bullet\ $ CP performed the best, followed by ST and LOORF \par
		$\bullet\ $ CP performed well with SGD, but not with RMSprop
	}
	\\  \hline
	\SetCell[r=3]{m}{\centering Supermask $(d \approx 300,000$ and $2,000,000)$}
	& Estimators &
	\conclusionParBox[c]{\conclusionWidthThree}{
		$\bullet\ $ ARMS performed marginally better than LOORF
	}
	\\  \cline{2-3}
	& Parametrization &
	\conclusionParBox[c]{\conclusionWidthThree}{
		$\bullet\ $ Escort performed marginally better than sigmoid
	}
	\\ \cline{2-3}
	& Approaches &
	\conclusionParBox[c]{\conclusionWidthThree}{
		$\bullet\ $ CP performed better than MC, even when $n=100$ \par
		$\bullet\ $ CP performed much better with RMSprop than with SGD
	}
	\\  \hline		
	\SetCell[r=1]{m}{\centering Pruning \\ $(d \approx 300,000$ and $20,000,000)$}
	& Approaches &
	\conclusionParBox[c]{\conclusionWidthThree}{
		$\bullet\ $ CP performed the best \par
		$\bullet\ $ Per-method solutions were very different qualitatively \par
		$\bullet\ $ MC methods could not change the per-layer allocation as much as CP or GMP \par
		$\bullet\ $ CP performed much better with SGD than with RMSprop
	}
	\\  \hline		
\end{longtblr}

\endgroup

\FloatBarrier

\section{Additional Experiments}
\label{sec:add_exp}
This section provides results from extra experiments.
\subsection{Hyperparameter Generalization}
\label{sec:app_supermask_hypers}
\begin{figure}[htb!]
	\centering
	\includegraphics[height=1.5em]{\main/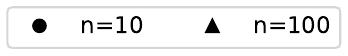}
	\par\vspace{1em}	
	\includegraphics[width=0.4\textwidth]{\main/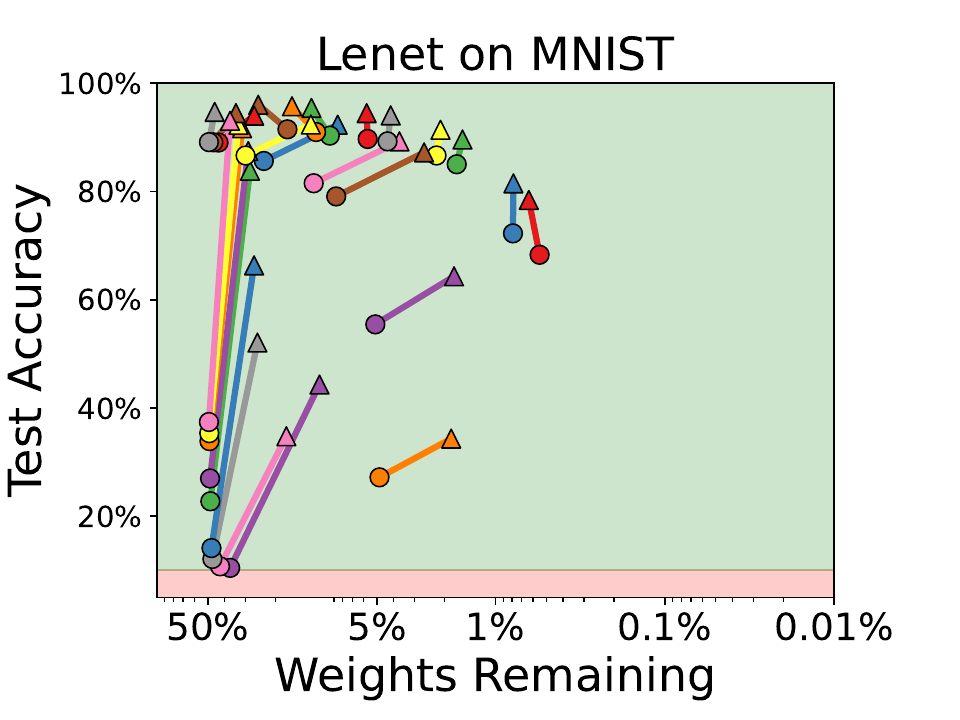}
	\includegraphics[width=0.4\textwidth]{\main/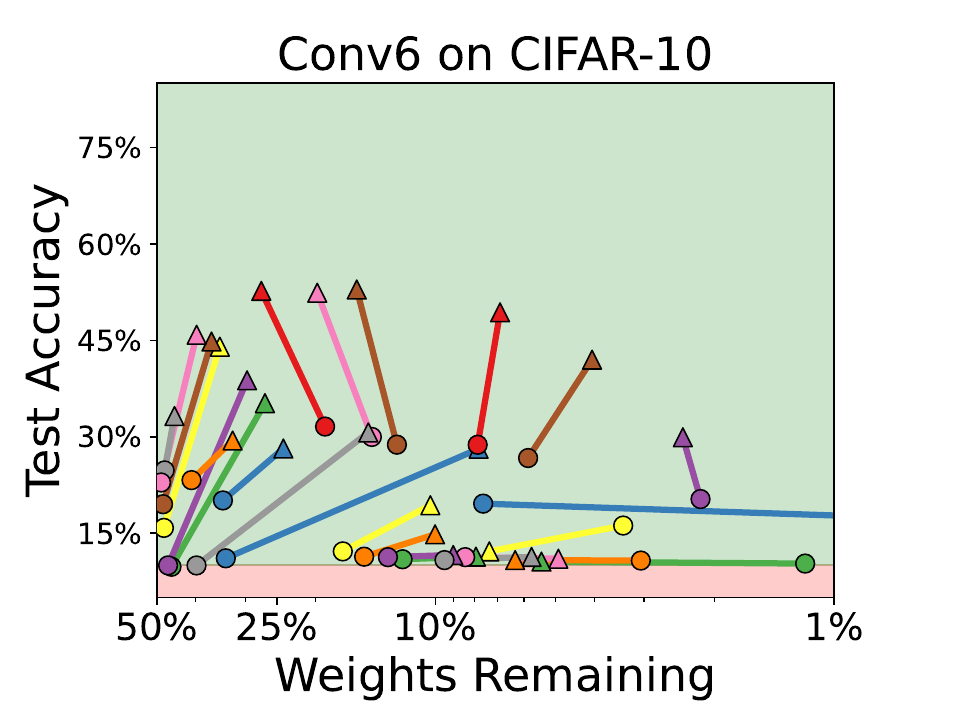}
	\caption{Scatter plot for hyperparameter generalization on supermask results.}
	\label{fig:app_hyper_gen_scatter}
\end{figure}

As mentioned before, we study the possibility of re-using the best hyperparameters for $n=10$ on $n=100$ by looking retroactively at the results from supermask experiments. We focus on escort and ARMS here, although we saw similar results with sigmoid and LOORF. \Cref{fig:app_hyper_gen_scatter} shows a scatter plot, where each point is the average result across seeds. Red regions indicate accuracy akin to chance and runs with the same hyperparameter settings are linked and colored the same. On the majority of cases, runs became sparser or retained similar sparsity, but achieved higher accuracy. The main exception were the sparser runs on the red regions, but those are clearly failed runs.

We note, however, that we tried to extend results from $n=100$ to $n=1,000$ in a similar manner, but the extrapolation did not work as well. It seems that pruning with $n=1,000$ works better with lower $\lambda$ than $n=100$.

\begin{figure}[htb!]
	\centering
	\centering
	\includegraphics[height=1.8em]{\main/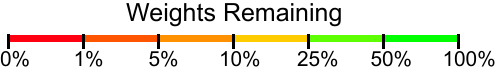}		
	\begin{subfigure}{\textwidth}
		\includegraphics[width=0.333\textwidth]{\main/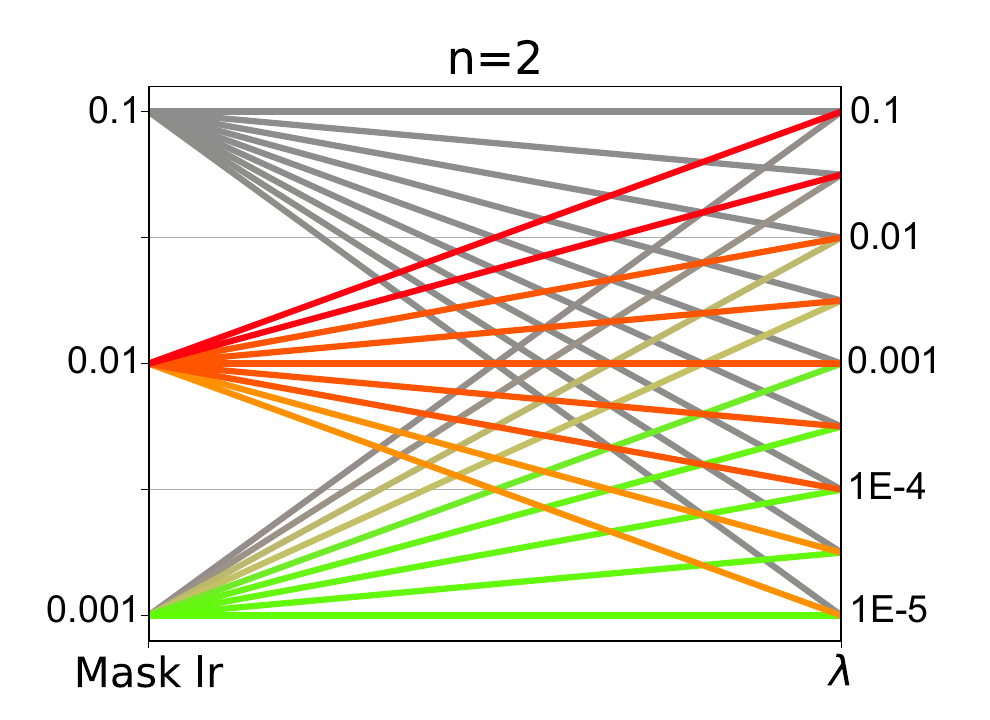}%
		\includegraphics[width=0.333\textwidth]{\main/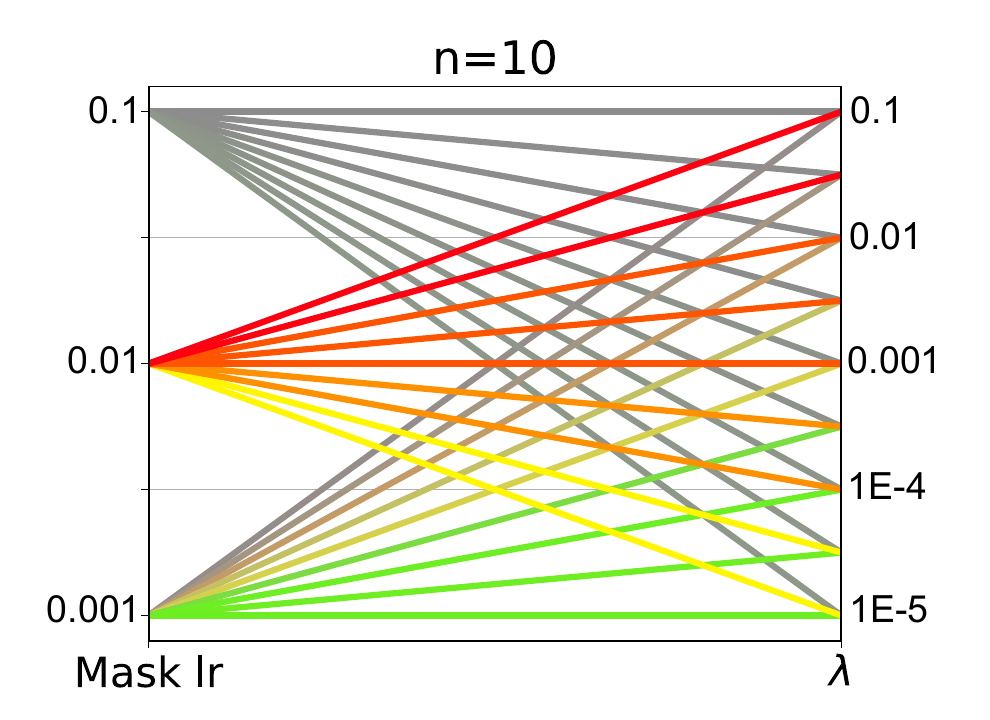}%
		\includegraphics[width=0.333\textwidth]{\main/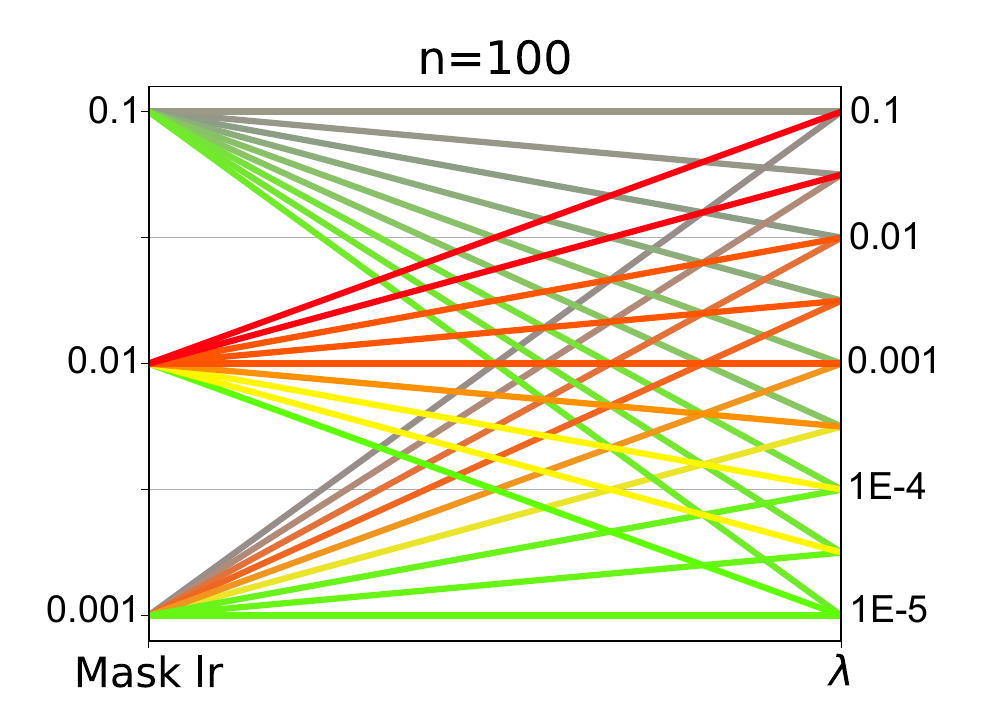}%
		\caption{Lenet on MNIST}
	\end{subfigure}
	\par
	\begin{subfigure}{\textwidth}
		\includegraphics[width=0.333\textwidth]{\main/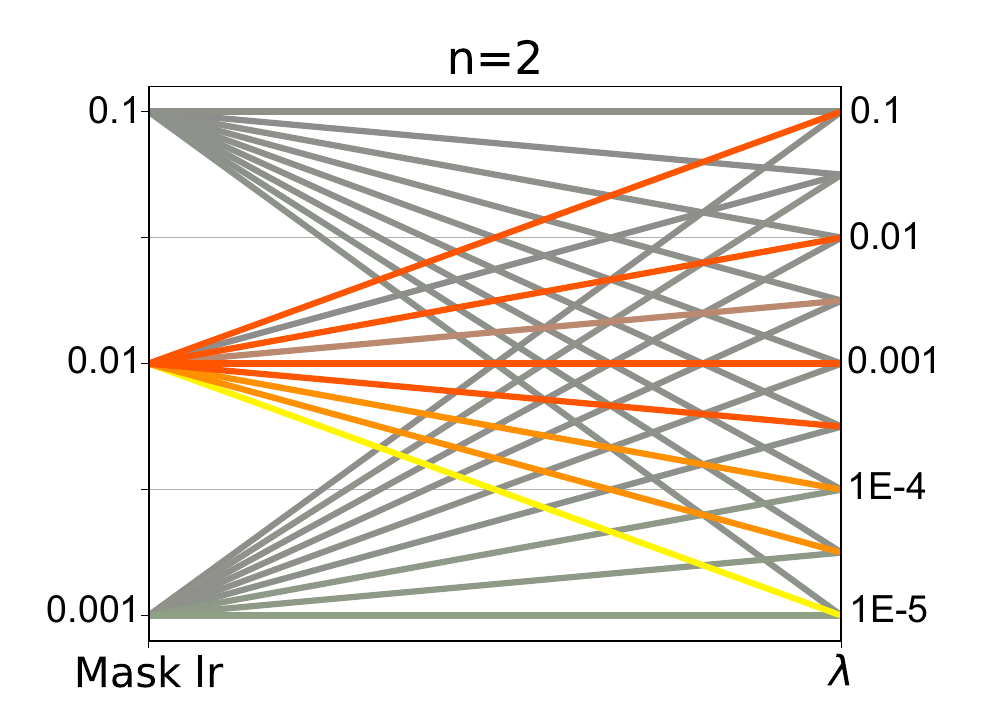}%
		\includegraphics[width=0.333\textwidth]{\main/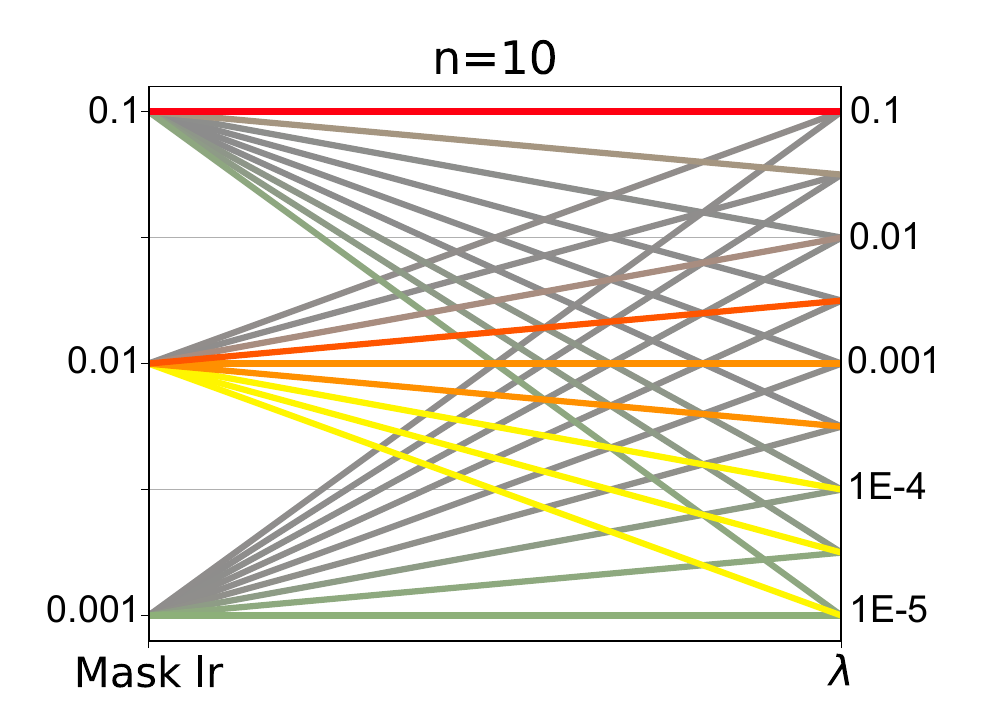}%
		\includegraphics[width=0.333\textwidth]{\main/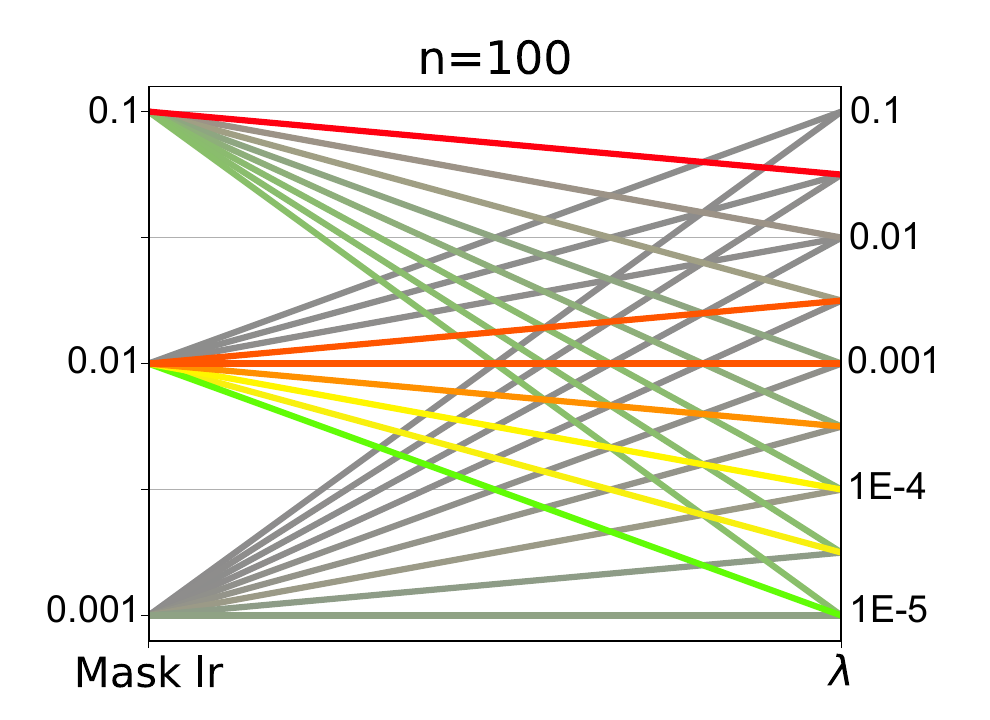}%
		\caption{Conv6 on CIFAR-10}
	\end{subfigure}
	\caption{Parallel coordinates plot for hyperparameter generalization on supermask results. We omit failed runs.}
	\label{fig:app_hyper_gen_parallel}
\end{figure}

\Cref{fig:app_hyper_gen_parallel} shows an alternative view of the same data, where each setting corresponds to a line. Lower saturation (\ie, regions closer to gray) are further from the Pareto front, whereas the more colored regions have better accuracies for their respective sparsity. Hue represents weights remaining, with red regions being sparse and green regions being dense. Overall, we note an agreement across different $n$, with similar hyperparameter combinations leading to similar sparsity and proximity to Pareto front

\subsection{Per-Layer Sparsities Throughout Training}
\label{sec:sparsities_throughout}

To better understand the solutions, we generate \Cref{fig:joint_pruning_per_layer} by selecting one run (out of 5) from each of the settings indicated by arrows in \Cref{fig:joint_pruning_results} and plotting their per-layer sparsities over time. For each method-architecture combination, left plots indicate the percentage of weights remaining relative to the dense version of the respective layer, whereas right plots show how much of the total network capacity is allocated to each layer at a particular time.

For each method-architecture group, inspection of the respective left plots reveals that all methods prioritized the protection of the first layer, pruning mostly the higher-level ones. From the right per-group plots, it is evident that, for both VGG and Resnet-20, dense (\ie, initial) architectures have more weights on high-level layers (notice the red rectangles for the first epochs of GMP and MC). It seems that GMP and CP can prune more aggressively at times, reversing this initial structure in some of the plots. MC, on the other hand, seems to change it to a lesser extent.

In addition to that, CP plots consisted mostly of fixed horizontal stripes, in contrast with the ``jagged'' stripes from MC. CP seems to change the (discrete) masks only in the first epochs, while MC takes longer to find its solutions. Even after that, MC keeps switching between masks (close in Hamming distance) until fine-tuning begins. GMP is the most gradual of the three, using the whole range of epochs by design.

\begin{figure}[htb!]
	\centering
	\includegraphics[width=0.35\textwidth]{\main/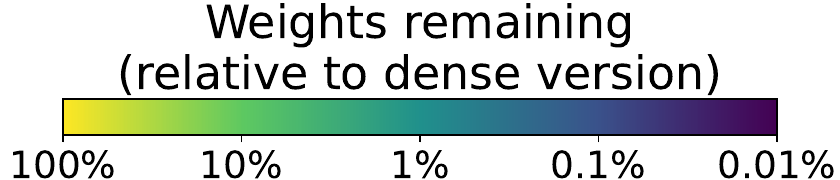}
	\hspace{2em}
	\includegraphics[width=0.35\textwidth]{\main/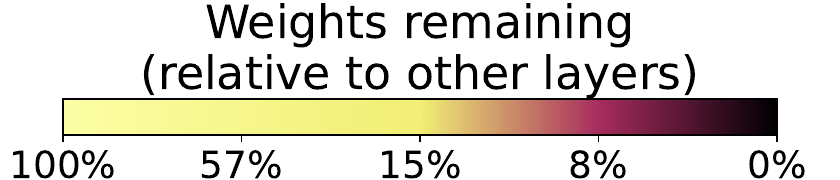}
	\par\vspace{1em}	
	\includegraphics[width=0.5\textwidth]{\main/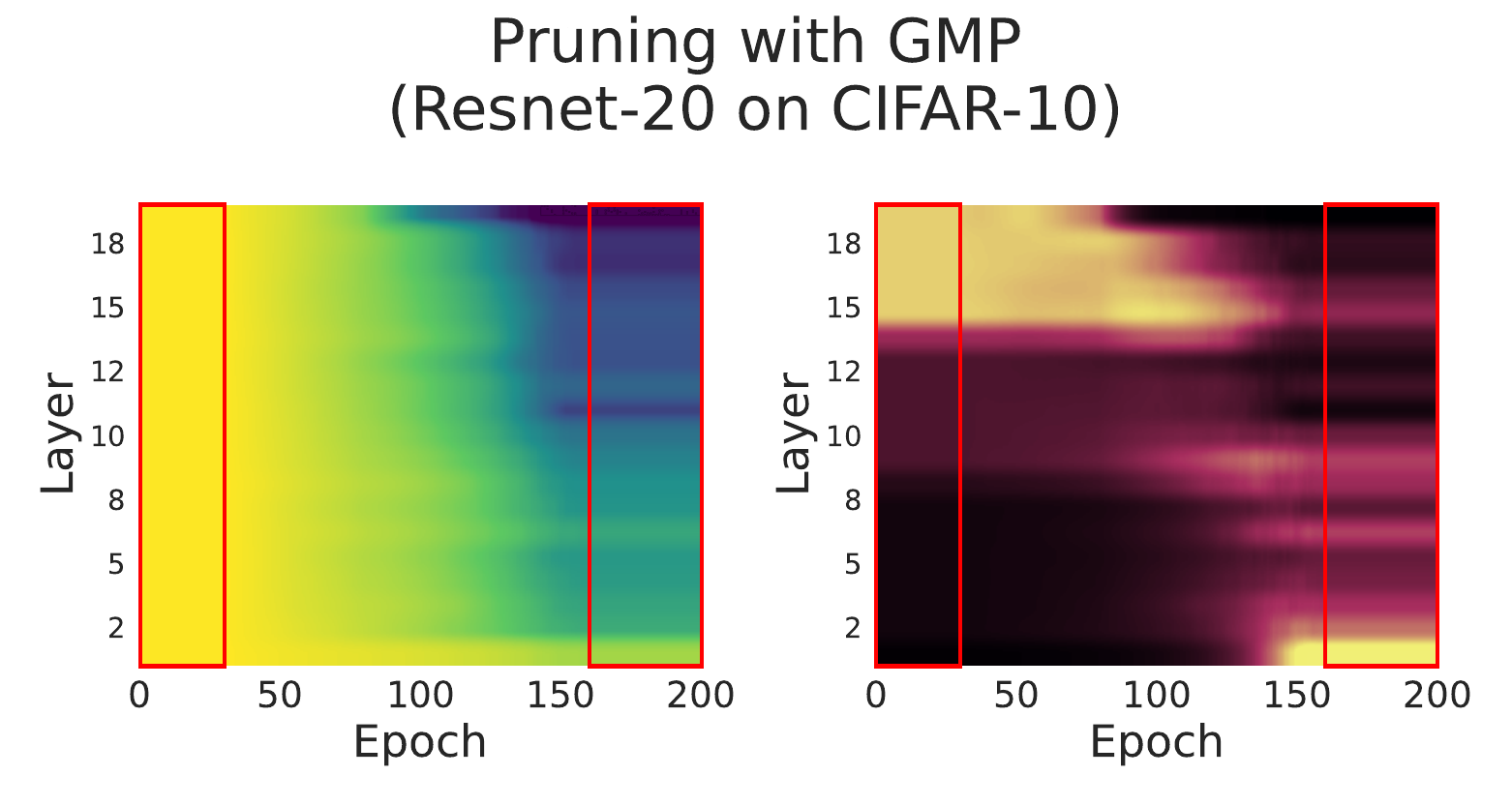}%
	\includegraphics[width=0.5\textwidth]{\main/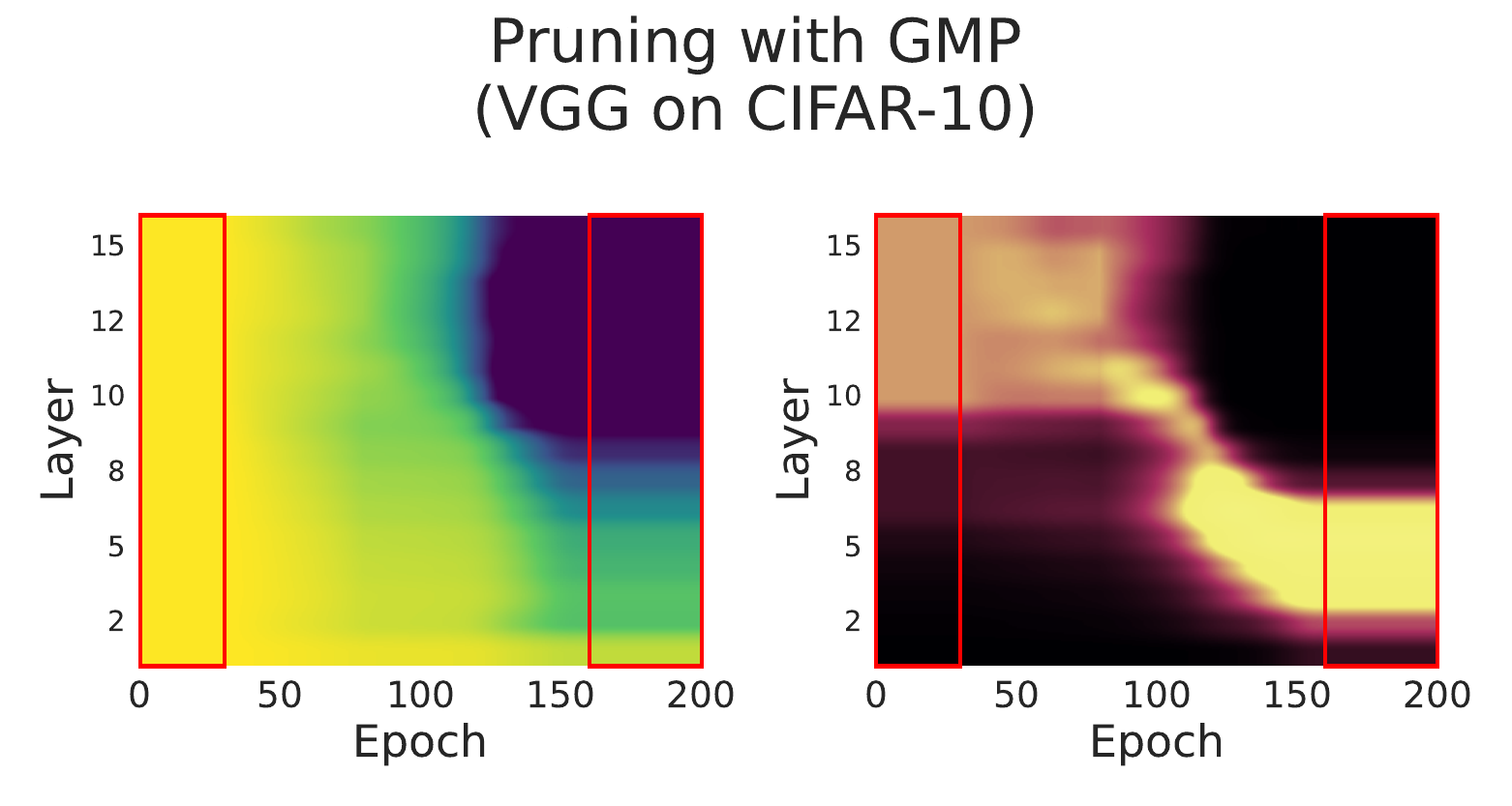}
	\includegraphics[width=0.5\textwidth]{\main/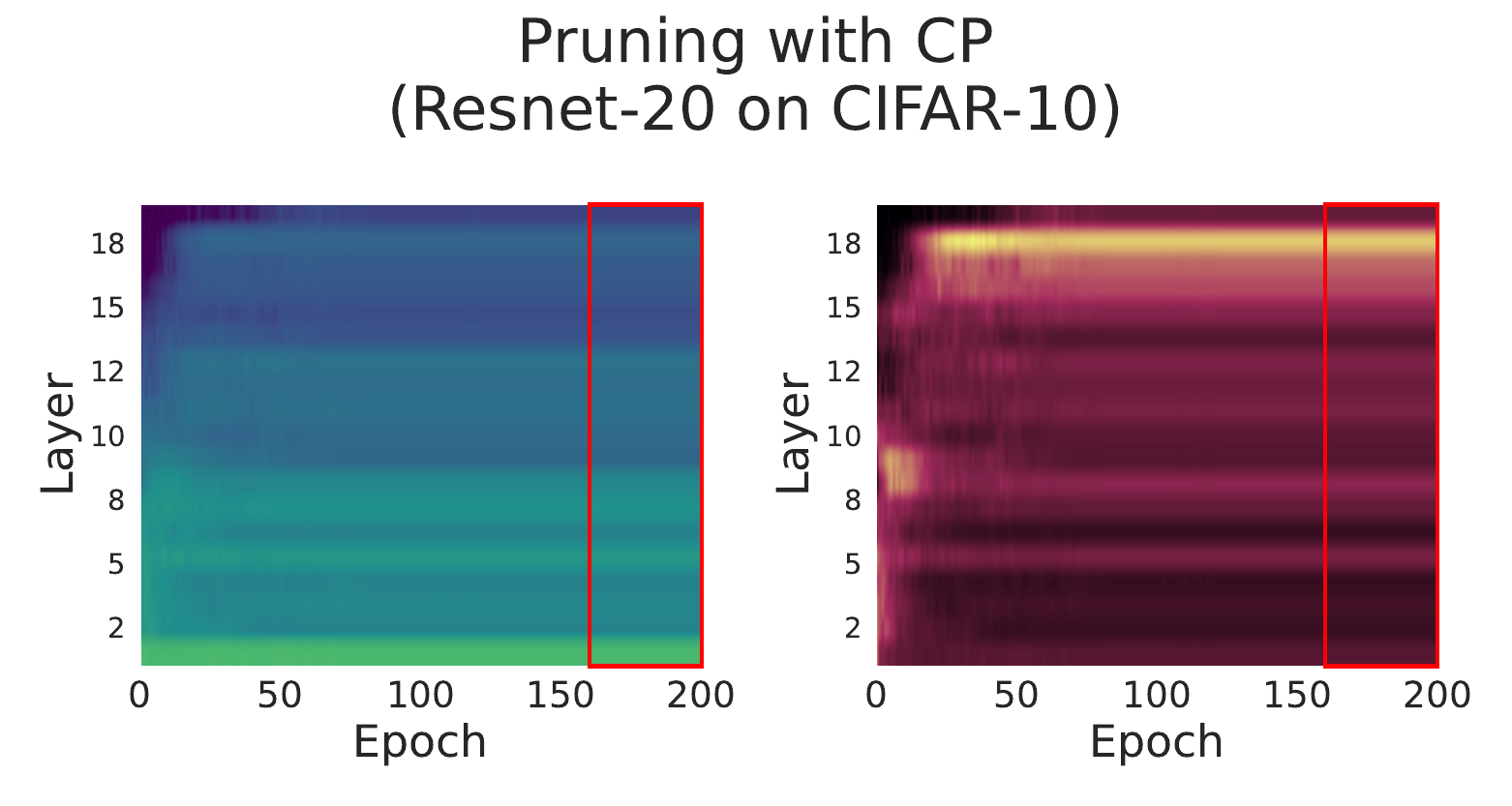}%
	\includegraphics[width=0.5\textwidth]{\main/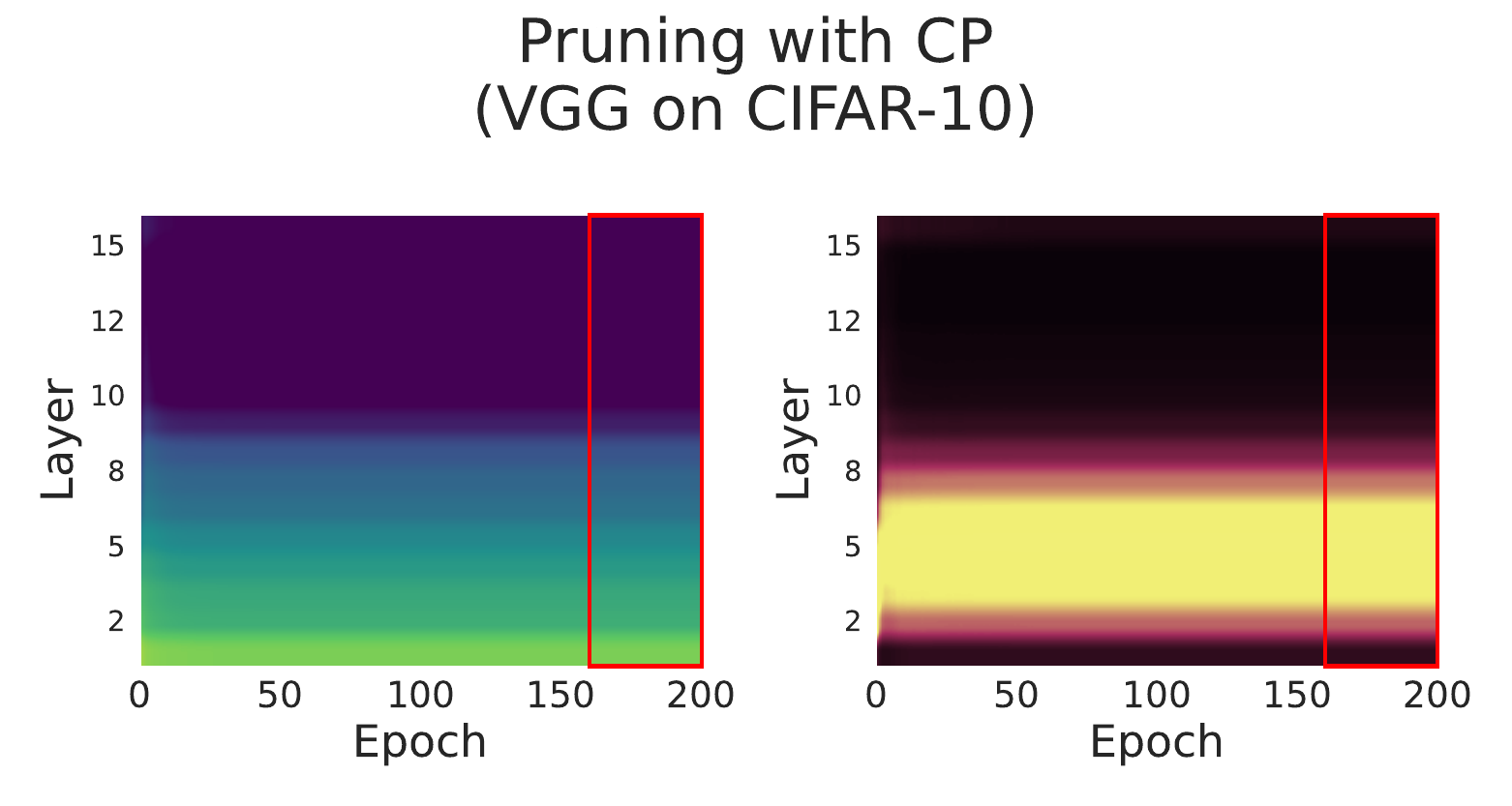}
	\includegraphics[width=0.5\textwidth]{\main/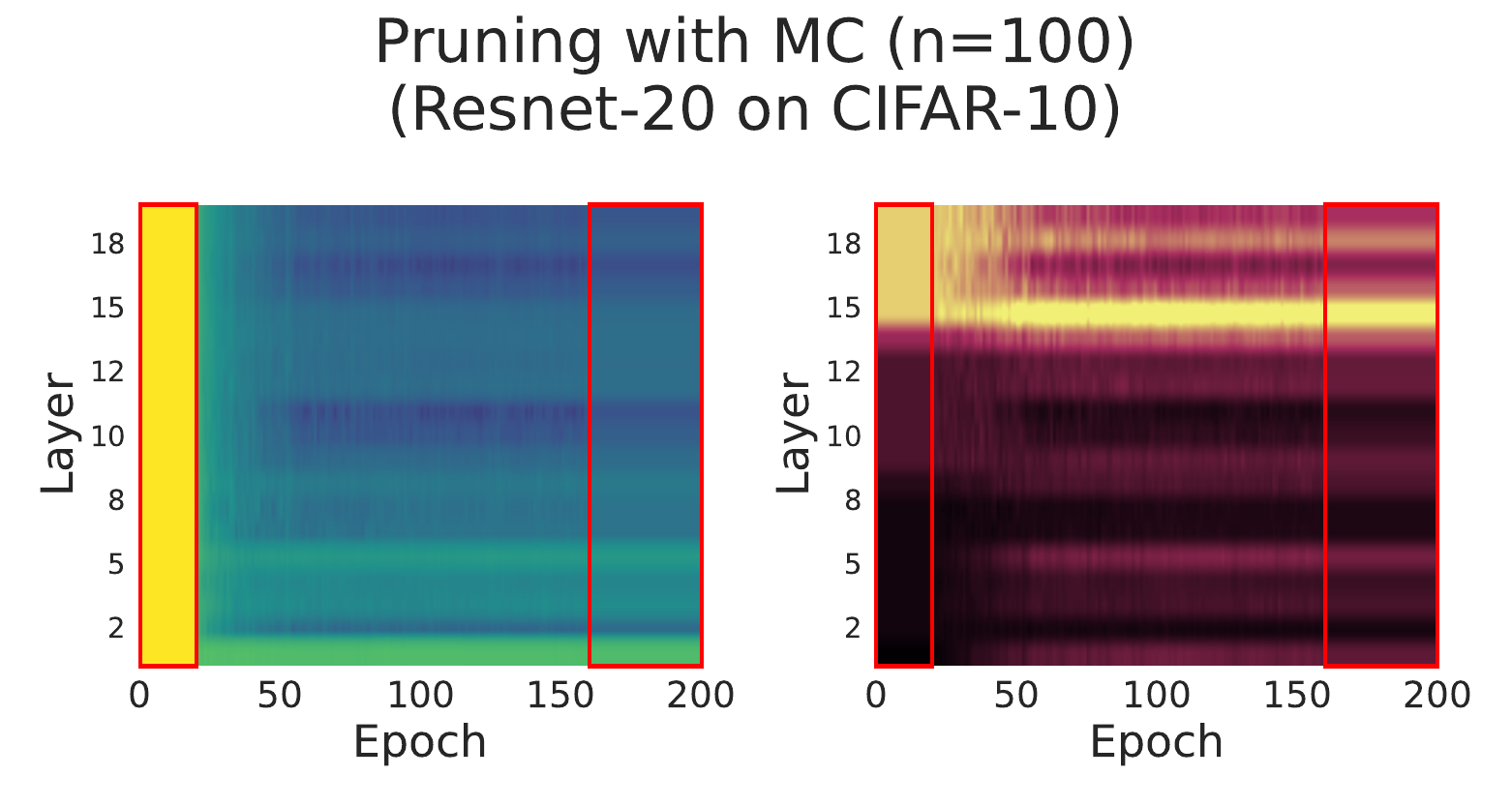}%
	\includegraphics[width=0.5\textwidth]{\main/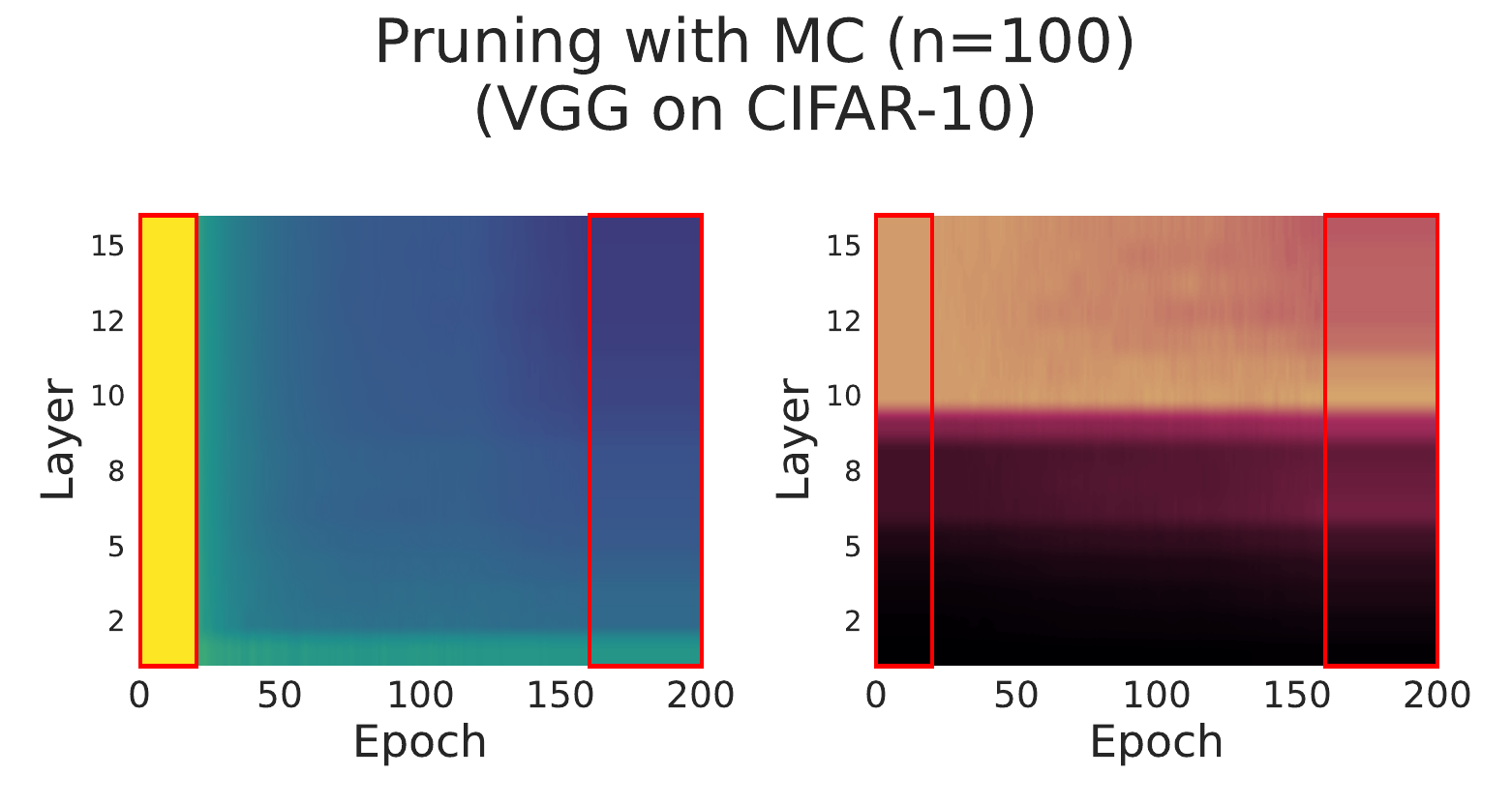}
	\caption{Per-layer sparsity for the runs indicated in \Cref{fig:joint_pruning_results}. Red rectangles correspond to epochs with manually frozen masks.}
	\label{fig:joint_pruning_per_layer}
\end{figure}

\FloatBarrier

\vskip 0.2in
\bibliography{arxiv}

\begin{thebibliography}{132}
\providecommand{\natexlab}[1]{#1}
\providecommand{\url}[1]{\texttt{#1}}
\expandafter\ifx\csname urlstyle\endcsname\relax
  \providecommand{\doi}[1]{doi: #1}\else
  \providecommand{\doi}{doi: \begingroup \urlstyle{rm}\Url}\fi

\bibitem[Allgower and Georg(2003)]{allgower2003introduction}
Eugene~L Allgower and Kurt Georg.
\newblock \emph{Introduction to numerical continuation methods}.
\newblock Society for Industrial and Applied Mathematics, 2003.

\bibitem[Allgower and Georg(2012)]{allgower2012numerical}
Eugene~L Allgower and Kurt Georg.
\newblock \emph{Numerical continuation methods: an introduction}, volume~13.
\newblock Springer Science \& Business Media, 2012.

\bibitem[Andriyash et~al.(2018)Andriyash, Vahdat, and
  Macready]{andriyash2018improved}
Evgeny Andriyash, Arash Vahdat, and Bill Macready.
\newblock Improved gradient-based optimization over discrete distributions.
\newblock \emph{arXiv:1810.00116}, 2018.

\bibitem[Au and Beck(2003)]{au2003important}
Siu-Kui Au and JL~Beck.
\newblock Important sampling in high dimensions.
\newblock \emph{Structural Safety}, 25\penalty0 (2):\penalty0 139--163, 2003.

\bibitem[Azarian et~al.(2020)Azarian, Bhalgat, Lee, and
  Blankevoort]{azarian2020learned}
Kambiz Azarian, Yash Bhalgat, Jinwon Lee, and Tijmen Blankevoort.
\newblock Learned threshold pruning.
\newblock \emph{arXiv:2003.00075}, 2020.

\bibitem[Bellec et~al.(2017)Bellec, Kappel, Maass, and
  Legenstein]{bellec2017deep}
Guillaume Bellec, David Kappel, Wolfgang Maass, and Robert Legenstein.
\newblock Deep rewiring: training very sparse deep networks.
\newblock \emph{arXiv:1711.05136}, 2017.

\bibitem[Bengio et~al.(2013)Bengio, L{\'e}onard, and
  Courville]{bengio2013estimating}
Yoshua Bengio, Nicholas L{\'e}onard, and Aaron Courville.
\newblock Estimating or propagating gradients through stochastic neurons for
  conditional computation.
\newblock \emph{arXiv:1308.3432}, 2013.

\bibitem[Bethge et~al.(2019)Bethge, Yang, Bornstein, and
  Meinel]{bethge2019back}
Joseph Bethge, Haojin Yang, Marvin Bornstein, and Christoph Meinel.
\newblock Back to simplicity: how to train accurate {BNN}s from scratch?
\newblock \emph{arXiv:1906.08637}, 2019.

\bibitem[Blei et~al.(2017)Blei, Kucukelbir, and McAuliffe]{blei2017variational}
David~M Blei, Alp Kucukelbir, and Jon~D McAuliffe.
\newblock Variational inference: a review for statisticians.
\newblock \emph{Journal of the American Statistical Association}, 112\penalty0
  (518):\penalty0 859--877, 2017.

\bibitem[Boros and Hammer(2002)]{boros2002pseudo}
Endre Boros and Peter~L Hammer.
\newblock Pseudo-{B}oolean optimization.
\newblock \emph{Discrete Applied Mathematics}, 123\penalty0 (1-3):\penalty0
  155--225, 2002.

\bibitem[Boros et~al.(1999)Boros, Hammer, Minoux, and
  Rader~Jr]{boros1999optimal}
Endre Boros, Peter~L Hammer, Michel Minoux, and David~J Rader~Jr.
\newblock Optimal cell flipping to minimize channel density in {VLSI} design
  and pseudo-{B}oolean optimization.
\newblock \emph{Discrete Applied Mathematics}, 90\penalty0 (1-3):\penalty0
  69--88, 1999.

\bibitem[Boyd et~al.(2004)Boyd, Boyd, and Vandenberghe]{boyd2004convex}
Stephen Boyd, Stephen~P Boyd, and Lieven Vandenberghe.
\newblock \emph{Convex optimization}.
\newblock Cambridge university press, 2004.

\bibitem[Boykov et~al.(2001)Boykov, Veksler, and Zabih]{boykov2001fast}
Yuri Boykov, Olga Veksler, and Ramin Zabih.
\newblock Fast approximate energy minimization via graph cuts.
\newblock \emph{IEEE Transactions on Pattern Analysis and Machine
  Intelligence}, 23\penalty0 (11):\penalty0 1222--1239, 2001.

\bibitem[Brown et~al.(2020)Brown, Mann, Ryder, Subbiah, Kaplan, Dhariwal,
  Neelakantan, Shyam, Sastry, Askell, Agarwal, Herbert-Voss, Krueger, Henighan,
  Child, Ramesh, Ziegler, Wu, Winter, Hesse, Chen, Sigler, Litwin, Gray, Chess,
  Clark, Berner, McCandlish, Radford, Sutskever, and Amodei]{brown2020language}
Tom Brown, Benjamin Mann, Nick Ryder, Melanie Subbiah, Jared~D Kaplan, Prafulla
  Dhariwal, Arvind Neelakantan, Pranav Shyam, Girish Sastry, Amanda Askell,
  Sandhini Agarwal, Ariel Herbert-Voss, Gretchen Krueger, Tom Henighan, Rewon
  Child, Aditya Ramesh, Daniel Ziegler, Jeffrey Wu, Clemens Winter, Chris
  Hesse, Mark Chen, Eric Sigler, Mateusz Litwin, Scott Gray, Benjamin Chess,
  Jack Clark, Christopher Berner, Sam McCandlish, Alec Radford, Ilya Sutskever,
  and Dario Amodei.
\newblock Language models are few-shot learners.
\newblock \emph{Advances in Neural Information Processing Systems},
  33:\penalty0 1877--1901, 2020.

\bibitem[Bulat et~al.(2019)Bulat, Tzimiropoulos, Kossaifi, and
  Pantic]{bulat2019improved}
Adrian Bulat, Georgios Tzimiropoulos, Jean Kossaifi, and Maja Pantic.
\newblock Improved training of binary networks for human pose estimation and
  image recognition.
\newblock \emph{arXiv:1904.05868}, 2019.

\bibitem[Bulat et~al.(2020)Bulat, Martinez, and Tzimiropoulos]{bulat2020high}
Adrian Bulat, Brais Martinez, and Georgios Tzimiropoulos.
\newblock High-capacity expert binary networks.
\newblock \emph{arXiv:2010.03558}, 2020.

\bibitem[Caruana(1997)]{caruana1997multitask}
Rich Caruana.
\newblock Multitask learning.
\newblock \emph{Machine Learning}, 28:\penalty0 41--75, 1997.

\bibitem[Chong and Zak(2013)]{chong2013introduction}
Edwin~KP Chong and Stanislaw~H Zak.
\newblock \emph{An introduction to optimization}, volume~75.
\newblock John Wiley \& Sons, 2013.

\bibitem[Courbariaux et~al.(2015)Courbariaux, Bengio, and
  David]{courbariaux2015binaryconnect}
Matthieu Courbariaux, Yoshua Bengio, and Jean-Pierre David.
\newblock Binaryconnect: training deep neural networks with binary weights
  during propagations.
\newblock \emph{Advances in Neural Information Processing Systems}, 28, 2015.

\bibitem[Crawshaw(2020)]{crawshaw2020multi}
Michael Crawshaw.
\newblock Multi-task learning with deep neural networks: A survey.
\newblock \emph{arXiv:2009.09796}, 2020.

\bibitem[Dantzig and Fulkerson(2003)]{dantzig2003max}
George Dantzig and Delbert~Ray Fulkerson.
\newblock On the max flow min cut theorem of networks.
\newblock \emph{Linear Inequalities and Related Systems}, 38:\penalty0
  225--231, 2003.

\bibitem[Daulton et~al.(2022)Daulton, Wan, Eriksson, Balandat, Osborne, and
  Bakshy]{daulton2022bayesian}
Samuel Daulton, Xingchen Wan, David Eriksson, Maximilian Balandat, Michael~A
  Osborne, and Eytan Bakshy.
\newblock Bayesian optimization over discrete and mixed spaces via
  probabilistic reparameterization.
\newblock \emph{Advances in Neural Information Processing Systems},
  35:\penalty0 12760--12774, 2022.

\bibitem[De~Lange et~al.(2021)De~Lange, Aljundi, Masana, Parisot, Jia,
  Leonardis, Slabaugh, and Tuytelaars]{de2021continual}
Matthias De~Lange, Rahaf Aljundi, Marc Masana, Sarah Parisot, Xu~Jia,
  Ale{\v{s}} Leonardis, Gregory Slabaugh, and Tinne Tuytelaars.
\newblock A continual learning survey: defying forgetting in classification
  tasks.
\newblock \emph{IEEE Transactions on Pattern Analysis and Machine
  Intelligence}, 44\penalty0 (7):\penalty0 3366--3385, 2021.

\bibitem[Dimitriev and Zhou(2021)]{dimitriev2021arms}
Aleksandar Dimitriev and Mingyuan Zhou.
\newblock {ARMS}: antithetic-{REINFORCE}-multi-sample gradient for binary
  variables.
\newblock In \emph{International Conference on Machine Learning}, pages
  2717--2727. PMLR, 2021.

\bibitem[Dinitz(1970)]{dinic1970algorithm}
Yefim Dinitz.
\newblock Algorithm for solution of a problem of maximum flow in networks with
  power estimation.
\newblock \emph{Soviet Mathematics Doklady}, 11:\penalty0 1277--1280, 01 1970.

\bibitem[Dong et~al.(2020)Dong, Mnih, and Tucker]{dong2020disarm}
Zhe Dong, Andriy Mnih, and George Tucker.
\newblock {DisARM}: an antithetic gradient estimator for binary latent
  variables.
\newblock \emph{Advances in Neural Information Processing Systems},
  33:\penalty0 18637--18647, 2020.

\bibitem[Dong et~al.(2021)Dong, Mnih, and Tucker]{dong2021coupled}
Zhe Dong, Andriy Mnih, and George Tucker.
\newblock Coupled gradient estimators for discrete latent variables.
\newblock \emph{Advances in Neural Information Processing Systems},
  34:\penalty0 24498--24508, 2021.

\bibitem[Edmonds and Karp(1972)]{edmonds1972theoretical}
Jack Edmonds and Richard~M Karp.
\newblock Theoretical improvements in algorithmic efficiency for network flow
  problems.
\newblock \emph{Journal of the ACM}, 19\penalty0 (2):\penalty0 248--264, 1972.

\bibitem[Fan et~al.(2015)Fan, Wang, Beck, Kwok, and Heller]{fan2015fast}
Kai Fan, Ziteng Wang, Jeff Beck, James Kwok, and Katherine~A Heller.
\newblock Fast second order stochastic backpropagation for variational
  inference.
\newblock \emph{Advances in Neural Information Processing Systems}, 28, 2015.

\bibitem[Fedus et~al.(2022)Fedus, Zoph, and Shazeer]{fedus2022switch}
William Fedus, Barret Zoph, and Noam Shazeer.
\newblock Switch transformers: scaling to trillion parameter models with simple
  and efficient sparsity.
\newblock \emph{The Journal of Machine Learning Research}, 23\penalty0
  (1):\penalty0 5232--5270, 2022.

\bibitem[Fernando et~al.(2017)Fernando, Banarse, Blundell, Zwols, Ha, Rusu,
  Pritzel, and Wierstra]{fernando2017pathnet}
Chrisantha Fernando, Dylan Banarse, Charles Blundell, Yori Zwols, David Ha,
  Andrei~A Rusu, Alexander Pritzel, and Daan Wierstra.
\newblock Pathnet: Evolution channels gradient descent in super neural
  networks.
\newblock \emph{arXiv:1701.08734}, 2017.

\bibitem[Frankle and Carbin(2018)]{frankle2018lottery}
Jonathan Frankle and Michael Carbin.
\newblock The lottery ticket hypothesis: finding sparse, trainable neural
  networks.
\newblock \emph{arXiv:1803.03635}, 2018.

\bibitem[French(1999)]{french1999catastrophic}
Robert~M French.
\newblock Catastrophic forgetting in connectionist networks.
\newblock \emph{Trends in Cognitive Sciences}, 3\penalty0 (4):\penalty0
  128--135, 1999.

\bibitem[Gale et~al.(2019)Gale, Elsen, and Hooker]{gale2019state}
Trevor Gale, Erich Elsen, and Sara Hooker.
\newblock The state of sparsity in deep neural networks.
\newblock \emph{arXiv:1902.09574}, 2019.

\bibitem[Gholami et~al.(2022)Gholami, Kim, Dong, Yao, Mahoney, and
  Keutzer]{gholami2022survey}
Amir Gholami, Sehoon Kim, Zhen Dong, Zhewei Yao, Michael~W Mahoney, and Kurt
  Keutzer.
\newblock A survey of quantization methods for efficient neural network
  inference.
\newblock In \emph{Low-Power Computer Vision}, pages 291--326. Chapman and
  Hall/CRC, 2022.

\bibitem[Glasserman(2004)]{glasserman2004monte}
Paul Glasserman.
\newblock \emph{Monte {C}arlo methods in financial engineering}, volume~53.
\newblock Springer, 2004.

\bibitem[Glover(1989)]{glover1989tabu}
Fred Glover.
\newblock Tabu search—part i.
\newblock \emph{ORSA Journal on Computing}, 1\penalty0 (3):\penalty0 190--206,
  1989.

\bibitem[Glover(1990)]{glover1990tabu}
Fred Glover.
\newblock Tabu search—part ii.
\newblock \emph{ORSA Journal on Computing}, 2\penalty0 (1):\penalty0 4--32,
  1990.

\bibitem[Glynn(1990)]{glynn1990likelihood}
Peter~W Glynn.
\newblock Likelihood ratio gradient estimation for stochastic systems.
\newblock \emph{Communications of the ACM}, 33\penalty0 (10):\penalty0 75--84,
  1990.

\bibitem[Grathwohl et~al.(2017)Grathwohl, Choi, Wu, Roeder, and
  Duvenaud]{grathwohl2017backpropagation}
Will Grathwohl, Dami Choi, Yuhuai Wu, Geoffrey Roeder, and David Duvenaud.
\newblock Backpropagation through the void: optimizing control variates for
  black-box gradient estimation.
\newblock \emph{arXiv:1711.00123}, 2017.

\bibitem[Gr{\"o}tschel et~al.(1981)Gr{\"o}tschel, Lov{\'a}sz, and
  Schrijver]{grotschel1981ellipsoid}
Martin Gr{\"o}tschel, L{\'a}szl{\'o} Lov{\'a}sz, and Alexander Schrijver.
\newblock The ellipsoid method and its consequences in combinatorial
  optimization.
\newblock \emph{Combinatorica}, 1:\penalty0 169--197, 1981.

\bibitem[Gu et~al.(2015)Gu, Levine, Sutskever, and Mnih]{gu2015muprop}
Shixiang Gu, Sergey Levine, Ilya Sutskever, and Andriy Mnih.
\newblock Muprop: unbiased backpropagation for stochastic neural networks.
\newblock \emph{arXiv:1511.05176}, 2015.

\bibitem[Haarnoja et~al.(2018)Haarnoja, Zhou, Hartikainen, Tucker, Ha, Tan,
  Kumar, Zhu, Gupta, Abbeel, and Levine]{haarnoja2018soft}
Tuomas Haarnoja, Aurick Zhou, Kristian Hartikainen, George Tucker, Sehoon Ha,
  Jie Tan, Vikash Kumar, Henry Zhu, Abhishek Gupta, Pieter Abbeel, and Sergey
  Levine.
\newblock Soft actor-critic algorithms and applications.
\newblock \emph{arXiv:1812.05905}, 2018.

\bibitem[Hammer and Holzman(1992)]{hammer1992approximations}
Peter~L Hammer and Ron Holzman.
\newblock Approximations of pseudo-{B}oolean functions; applications to game
  theory.
\newblock \emph{Zeitschrift f{\"u}r Operations Research}, 36\penalty0
  (1):\penalty0 3--21, 1992.

\bibitem[Hammer and Shlifer(1971)]{hammer1971applications}
Peter~L Hammer and Eliezer Shlifer.
\newblock Applications of pseudo-{B}oolean methods to economic problems.
\newblock \emph{Theory and Decision}, 1:\penalty0 296--308, 1971.

\bibitem[Han et~al.(2015)Han, Pool, Tran, and Dally]{han2015learning}
Song Han, Jeff Pool, John Tran, and William Dally.
\newblock Learning both weights and connections for efficient neural network.
\newblock \emph{Advances in Neural Information Processing Systems}, 28, 2015.

\bibitem[Hansen(1986)]{hansen1986steepest}
Pierre Hansen.
\newblock The steepest ascent mildest descent heuristic for combinatorial
  programming.
\newblock In \emph{Congress on Numerical Methods in Combinatorial
  Optimization}, pages 70--145, 1986.

\bibitem[Hinton et~al.(2012)Hinton, Srivastava, Krizhevsky, Sutskever, and
  Salakhutdinov]{hinton2012improving}
Geoffrey~E Hinton, Nitish Srivastava, Alex Krizhevsky, Ilya Sutskever, and
  Ruslan~R Salakhutdinov.
\newblock Improving neural networks by preventing co-adaptation of feature
  detectors.
\newblock \emph{arXiv:1207.0580}, 2012.

\bibitem[Hoefler et~al.(2021)Hoefler, Alistarh, Ben-Nun, Dryden, and
  Peste]{hoefler2021sparsity}
Torsten Hoefler, Dan Alistarh, Tal Ben-Nun, Nikoli Dryden, and Alexandra Peste.
\newblock Sparsity in deep learning: pruning and growth for efficient inference
  and training in neural networks.
\newblock \emph{The Journal of Machine Learning Research}, 22\penalty0
  (1):\penalty0 10882--11005, 2021.

\bibitem[Horst and Pardalos(2013)]{horst2013handbook}
Reiner Horst and Panos~M Pardalos.
\newblock \emph{Handbook of global optimization}, volume~2.
\newblock Springer Science \& Business Media, 2013.

\bibitem[Hubara et~al.(2016)Hubara, Courbariaux, Soudry, El-Yaniv, and
  Bengio]{hubara2016binarized}
Itay Hubara, Matthieu Courbariaux, Daniel Soudry, Ran El-Yaniv, and Yoshua
  Bengio.
\newblock Binarized neural networks.
\newblock \emph{Advances in Neural Information Processing Systems}, 29, 2016.

\bibitem[Ioffe and Szegedy(2015)]{ioffe2015batch}
Sergey Ioffe and Christian Szegedy.
\newblock Batch normalization: accelerating deep network training by reducing
  internal covariate shift.
\newblock In \emph{International Conference on Machine Learning}, pages
  448--456. PMLR, 2015.

\bibitem[Jang et~al.(2016)Jang, Gu, and Poole]{jang2016categorical}
Eric Jang, Shixiang Gu, and Ben Poole.
\newblock Categorical reparameterization with {G}umbel-softmax.
\newblock \emph{arXiv:1611.01144}, 2016.

\bibitem[Johnson et~al.(1988)Johnson, Papadimitriou, and
  Yannakakis]{johnson1988easy}
David~S Johnson, Christos~H Papadimitriou, and Mihalis Yannakakis.
\newblock How easy is local search?
\newblock \emph{Journal of Computer and System Sciences}, 37\penalty0
  (1):\penalty0 79--100, 1988.

\bibitem[J{\"u}nger et~al.(1994)J{\"u}nger, Martin, Reinelt, and
  Weismantel]{junger1994quadratic}
Michael J{\"u}nger, Alexander Martin, Gerhard Reinelt, and Robert Weismantel.
\newblock Quadratic 0/1 optimization and a decomposition approach for the
  placement of electronic circuits.
\newblock \emph{Mathematical Programming}, 63:\penalty0 257--279, 1994.

\bibitem[Karp(2010)]{karp2010reducibility}
Richard~M Karp.
\newblock \emph{Reducibility among combinatorial problems}.
\newblock Springer, 2010.

\bibitem[Kingma and Welling(2013)]{kingma2013auto}
Diederik~P Kingma and Max Welling.
\newblock Auto-encoding variational bayes.
\newblock \emph{arXiv:1312.6114}, 2013.

\bibitem[Kool et~al.(2019)Kool, van Hoof, and Welling]{kool2019buy}
Wouter Kool, Herke van Hoof, and Max Welling.
\newblock Buy 4 reinforce samples, get a baseline for free!
\newblock \emph{International Conference on Learning Representations Workshop},
  2019.

\bibitem[Korte and Vygen(2011)]{korte2011combinatorial}
Bernhard~H Korte and Jens Vygen.
\newblock \emph{Combinatorial optimization}, volume~1.
\newblock Springer, 2011.

\bibitem[Kramer and Kramer(2017)]{kramer2017genetic}
Oliver Kramer and Oliver Kramer.
\newblock \emph{Genetic algorithms}.
\newblock Springer, 2017.

\bibitem[Krizhevsky(2009)]{krizhevsky2009learning}
Alex Krizhevsky.
\newblock Learning multiple layers of features from tiny images.
\newblock \emph{University of Toronto}, 04 2009.

\bibitem[Krizhevsky et~al.(2017)Krizhevsky, Sutskever, and
  Hinton]{krizhevsky2017imagenet}
Alex Krizhevsky, Ilya Sutskever, and Geoffrey~E Hinton.
\newblock Imagenet classification with deep convolutional neural networks.
\newblock \emph{Communications of the ACM}, 60\penalty0 (6):\penalty0 84--90,
  2017.

\bibitem[Kubiak(1995)]{kubiak1995new}
Wieslaw Kubiak.
\newblock New results on the completion time variance minimization.
\newblock \emph{Discrete Applied Mathematics}, 58\penalty0 (2):\penalty0
  157--168, 1995.

\bibitem[Ladosz et~al.(2022)Ladosz, Weng, Kim, and Oh]{ladosz2022exploration}
Pawel Ladosz, Lilian Weng, Minwoo Kim, and Hyondong Oh.
\newblock Exploration in deep reinforcement learning: a survey.
\newblock \emph{Information Fusion}, 85:\penalty0 1--22, 2022.

\bibitem[Lan et~al.(2021)Lan, Tosatto, Farrahi, and Mahmood]{lan2021model}
Qingfeng Lan, Samuele Tosatto, Homayoon Farrahi, and Ashique~Rupam Mahmood.
\newblock Model-free policy learning with reward gradients.
\newblock \emph{arXiv:2103.05147}, 2021.

\bibitem[Lawler and Wood(1966)]{lawler1966branch}
Eugene~L Lawler and David~E Wood.
\newblock Branch-and-bound methods: a survey.
\newblock \emph{Operations Research}, 14\penalty0 (4):\penalty0 699--719, 1966.

\bibitem[LeCun et~al.(1998)LeCun, Bottou, Bengio, and
  Haffner]{lecun1998gradient}
Yann LeCun, L{\'e}on Bottou, Yoshua Bengio, and Patrick Haffner.
\newblock Gradient-based learning applied to document recognition.
\newblock \emph{Proceedings of the IEEE}, 86\penalty0 (11):\penalty0
  2278--2324, 1998.

\bibitem[LeCun et~al.(2006)LeCun, Chopra, Hadsell, Ranzato, and
  Huang]{lecun2006tutorial}
Yann LeCun, Sumit Chopra, Raia Hadsell, Marc'Aurelio Ranzato, and Fujie Huang.
\newblock A tutorial on energy-based learning.
\newblock \emph{Predicting Structured Data}, 1\penalty0 (0), 2006.

\bibitem[Lee et~al.(2019)Lee, Ajanthan, Gould, and Torr]{lee2019signal}
Namhoon Lee, Thalaiyasingam Ajanthan, Stephen Gould, and Philip~HS Torr.
\newblock A signal propagation perspective for pruning neural networks at
  initialization.
\newblock \emph{arXiv:1906.06307}, 2019.

\bibitem[Lepikhin et~al.(2020)Lepikhin, Lee, Xu, Chen, Firat, Huang, Krikun,
  Shazeer, and Chen]{lepikhin2020gshard}
Dmitry Lepikhin, HyoukJoong Lee, Yuanzhong Xu, Dehao Chen, Orhan Firat, Yanping
  Huang, Maxim Krikun, Noam Shazeer, and Zhifeng Chen.
\newblock Gshard: scaling giant models with conditional computation and
  automatic sharding.
\newblock \emph{arXiv:2006.16668}, 2020.

\bibitem[Li et~al.(2005)Li, Bengtsson, and Bickel]{li2005curse}
Bo~Li, Thomas Bengtsson, and Peter Bickel.
\newblock Curse-of-dimensionality revisited: collapse of importance sampling in
  very large scale systems.
\newblock \emph{Rapport Technique}, 85:\penalty0 205, 2005.

\bibitem[Li and Ji(2020)]{li2020arm}
Yang Li and Shihao Ji.
\newblock {L0-ARM}: network sparsification via stochastic binary optimization.
\newblock In \emph{The European Conference on Machine Learning}, pages
  432--448. Springer, 2020.

\bibitem[Lorberbom et~al.(2019)Lorberbom, Gane, Jaakkola, and
  Hazan]{lorberbom2019direct}
Guy Lorberbom, Andreea Gane, Tommi Jaakkola, and Tamir Hazan.
\newblock Direct optimization through argmax for discrete variational
  auto-encoder.
\newblock \emph{Advances in Neural Information Processing Systems}, 32, 2019.

\bibitem[Louizos et~al.(2017)Louizos, Welling, and Kingma]{louizos2017learning}
Christos Louizos, Max Welling, and Diederik~P Kingma.
\newblock Learning sparse neural networks through {$L_0$} regularization.
\newblock \emph{arXiv:1712.01312}, 2017.

\bibitem[Luo and Wu(2020)]{luo2020autopruner}
Jian-Hao Luo and Jianxin Wu.
\newblock Autopruner: an end-to-end trainable filter pruning method for
  efficient deep model inference.
\newblock \emph{Pattern Recognition}, 107:\penalty0 107461, 2020.

\bibitem[Maddison et~al.(2014)Maddison, Tarlow, and
  Minka]{maddison2014sampling}
Chris~J Maddison, Daniel Tarlow, and Tom Minka.
\newblock A* sampling.
\newblock \emph{Advances in Neural Information Processing Systems}, 27, 2014.

\bibitem[Maddison et~al.(2016)Maddison, Mnih, and Teh]{maddison2016concrete}
Chris~J Maddison, Andriy Mnih, and Yee~Whye Teh.
\newblock The concrete distribution: a continuous relaxation of discrete random
  variables.
\newblock \emph{arXiv:1611.00712}, 2016.

\bibitem[Madhulatha(2012)]{madhulatha2012overview}
T~Soni Madhulatha.
\newblock An overview on clustering methods.
\newblock \emph{arXiv:1205.1117}, 2012.

\bibitem[Mai et~al.(2022)Mai, Li, Jeong, Quispe, Kim, and
  Sanner]{mai2022online}
Zheda Mai, Ruiwen Li, Jihwan Jeong, David Quispe, Hyunwoo Kim, and Scott
  Sanner.
\newblock Online continual learning in image classification: an empirical
  survey.
\newblock \emph{Neurocomputing}, 469:\penalty0 28--51, 2022.

\bibitem[Mallya and Lazebnik(2018)]{mallya2018packnet}
Arun Mallya and Svetlana Lazebnik.
\newblock Packnet: adding multiple tasks to a single network by iterative
  pruning.
\newblock In \emph{Conference on Computer Vision and Pattern Recognition},
  pages 7765--7773, 2018.

\bibitem[Martinez et~al.(2020)Martinez, Yang, Bulat, and
  Tzimiropoulos]{martinez2020training}
Brais Martinez, Jing Yang, Adrian Bulat, and Georgios Tzimiropoulos.
\newblock Training binary neural networks with real-to-binary convolutions.
\newblock \emph{arXiv:2003.11535}, 2020.

\bibitem[McCarley et~al.(2019)McCarley, Chakravarti, and
  Sil]{mccarley2019structured}
JS~McCarley, Rishav Chakravarti, and Avirup Sil.
\newblock Structured pruning of a {BERT}-based question answering model.
\newblock \emph{arXiv:1910.06360}, 2019.

\bibitem[McCloskey and Cohen(1989)]{mccloskey1989catastrophic}
Michael McCloskey and Neal~J Cohen.
\newblock Catastrophic interference in connectionist networks: the sequential
  learning problem.
\newblock In \emph{Psychology of Learning and Motivation}, volume~24, pages
  109--165. Elsevier, 1989.

\bibitem[McGeoch and Wang(2013)]{mcgeoch2013experimental}
Catherine~C McGeoch and Cong Wang.
\newblock Experimental evaluation of an adiabiatic quantum system for
  combinatorial optimization.
\newblock In \emph{Proceedings of the ACM International Conference on Computing
  Frontiers}, pages 1--11, 2013.

\bibitem[Mehta(2019)]{mehta2019sparse}
Rahul Mehta.
\newblock Sparse transfer learning via winning lottery tickets.
\newblock \emph{arXiv:1905.07785}, 2019.

\bibitem[Mei et~al.(2020)Mei, Xiao, Dai, Li, Szepesv{\'a}ri, and
  Schuurmans]{mei2020escaping}
Jincheng Mei, Chenjun Xiao, Bo~Dai, Lihong Li, Csaba Szepesv{\'a}ri, and Dale
  Schuurmans.
\newblock Escaping the gravitational pull of softmax.
\newblock \emph{Advances in Neural Information Processing Systems},
  33:\penalty0 21130--21140, 2020.

\bibitem[Mohamed et~al.(2020)Mohamed, Rosca, Figurnov, and
  Mnih]{mohamed2020monte}
Shakir Mohamed, Mihaela Rosca, Michael Figurnov, and Andriy Mnih.
\newblock Monte {C}arlo gradient estimation in machine learning.
\newblock \emph{The Journal of Machine Learning Research}, 21\penalty0
  (132):\penalty0 1--62, 2020.

\bibitem[Montanaro(2016)]{montanaro2016quantum}
Ashley Montanaro.
\newblock Quantum algorithms: an overview.
\newblock \emph{npj Quantum Information}, 2\penalty0 (1):\penalty0 1--8, 2016.

\bibitem[Morcos et~al.(2019)Morcos, Yu, Paganini, and Tian]{morcos2019one}
Ari Morcos, Haonan Yu, Michela Paganini, and Yuandong Tian.
\newblock One ticket to win them all: generalizing lottery ticket
  initializations across datasets and optimizers.
\newblock \emph{Advances in Neural Information Processing Systems}, 32, 2019.

\bibitem[Naesseth et~al.(2017)Naesseth, Ruiz, Linderman, and
  Blei]{naesseth2017reparameterization}
Christian Naesseth, Francisco Ruiz, Scott Linderman, and David Blei.
\newblock Reparameterization gradients through acceptance-rejection sampling
  algorithms.
\newblock In \emph{Artificial Intelligence and Statistics}, pages 489--498.
  PMLR, 2017.

\bibitem[O'Donnell(2014)]{o2014analysis}
Ryan O'Donnell.
\newblock \emph{Analysis of {B}oolean functions}.
\newblock Cambridge University Press, 2014.

\bibitem[Owen(2013)]{owen2013monte}
Art~B Owen.
\newblock Monte {C}arlo theory, methods and examples, 2013.

\bibitem[Parisi et~al.(2019)Parisi, Kemker, Part, Kanan, and
  Wermter]{parisi2019continual}
German~I Parisi, Ronald Kemker, Jose~L Part, Christopher Kanan, and Stefan
  Wermter.
\newblock Continual lifelong learning with neural networks: a review.
\newblock \emph{Neural Networks}, 113:\penalty0 54--71, 2019.

\bibitem[Paulus et~al.(2020{\natexlab{a}})Paulus, Choi, Tarlow, Krause, and
  Maddison]{paulus2020gradient}
Max Paulus, Dami Choi, Daniel Tarlow, Andreas Krause, and Chris~J Maddison.
\newblock Gradient estimation with stochastic softmax tricks.
\newblock \emph{Advances in Neural Information Processing Systems},
  33:\penalty0 5691--5704, 2020{\natexlab{a}}.

\bibitem[Paulus et~al.(2020{\natexlab{b}})Paulus, Maddison, and
  Krause]{paulus2020rao}
Max~B Paulus, Chris~J Maddison, and Andreas Krause.
\newblock Rao-blackwellizing the straight-through {G}umbel-softmax gradient
  estimator.
\newblock \emph{arXiv:2010.04838}, 2020{\natexlab{b}}.

\bibitem[Pensia et~al.(2020)Pensia, Rajput, Nagle, Vishwakarma, and
  Papailiopoulos]{pensia2020optimal}
Ankit Pensia, Shashank Rajput, Alliot Nagle, Harit Vishwakarma, and Dimitris
  Papailiopoulos.
\newblock Optimal lottery tickets via subset sum: logarithmic
  overparameterization is sufficient.
\newblock \emph{Advances in Neural Information Processing Systems},
  33:\penalty0 2599--2610, 2020.

\bibitem[Phillips and Rosen(1994)]{phillips1994quadratic}
Andrew~T Phillips and J~Ben Rosen.
\newblock A quadratic assignment formulation of the molecular conformation
  problem.
\newblock \emph{Journal of Global Optimization}, 4:\penalty0 229--241, 1994.

\bibitem[Picard and Ratliff(1978)]{picard1978cut}
Jean-Claude Picard and H~Donald Ratliff.
\newblock A cut approach to the rectilinear distance facility location problem.
\newblock \emph{Operations Research}, 26\penalty0 (3):\penalty0 422--433, 1978.

\bibitem[Qin et~al.(2020)Qin, Gong, Liu, Bai, Song, and Sebe]{qin2020binary}
Haotong Qin, Ruihao Gong, Xianglong Liu, Xiao Bai, Jingkuan Song, and Nicu
  Sebe.
\newblock Binary neural networks: a survey.
\newblock \emph{Pattern Recognition}, 105:\penalty0 107281, 2020.

\bibitem[Richter et~al.(2020)Richter, Boustati, N{\"u}sken, Ruiz, and
  Akyildiz]{richter2020vargrad}
Lorenz Richter, Ayman Boustati, Nikolas N{\"u}sken, Francisco Ruiz, and
  Omer~Deniz Akyildiz.
\newblock Vargrad: a low-variance gradient estimator for variational inference.
\newblock \emph{Advances in Neural Information Processing Systems},
  33:\penalty0 13481--13492, 2020.

\bibitem[Rosenberg(1975)]{rosenberg1975reduction}
Ivo~G Rosenberg.
\newblock Reduction of bivalent maximization to the quadratic case.
\newblock 1975.

\bibitem[Rubinstein and Shapiro(1990)]{rubinstein1990optimization}
Reuven~Y Rubinstein and Alexander Shapiro.
\newblock Optimization of static simulation models by the score function
  method.
\newblock \emph{Mathematics and Computers in Simulation}, 32\penalty0
  (4):\penalty0 373--392, 1990.

\bibitem[Ruiz et~al.(2016)Ruiz, AUEB, Blei, et~al.]{ruiz2016generalized}
Francisco~R Ruiz, Titsias~RC AUEB, David Blei, et~al.
\newblock The generalized reparameterization gradient.
\newblock \emph{Advances in Neural Information Processing Systems}, 29, 2016.

\bibitem[Salamon et~al.(2002)Salamon, Sibani, and Frost]{salamon2002facts}
Peter Salamon, Paolo Sibani, and Richard Frost.
\newblock \emph{Facts, conjectures, and improvements for simulated annealing}.
\newblock Society for Industrial and Applied Mathematics, 2002.

\bibitem[Sanh et~al.(2020)Sanh, Wolf, and Rush]{sanh2020movement}
Victor Sanh, Thomas Wolf, and Alexander Rush.
\newblock Movement pruning: adaptive sparsity by fine-tuning.
\newblock \emph{Advances in Neural Information Processing Systems},
  33:\penalty0 20378--20389, 2020.

\bibitem[Savarese et~al.(2020)Savarese, Silva, and Maire]{savarese2020winning}
Pedro Savarese, Hugo Silva, and Michael Maire.
\newblock Winning the lottery with continuous sparsification.
\newblock \emph{Advances in Neural Information Processing Systems},
  33:\penalty0 11380--11390, 2020.

\bibitem[Schuurmans and Southey(2001)]{schuurmans2001local}
Dale Schuurmans and Finnegan Southey.
\newblock Local search characteristics of incomplete {SAT} procedures.
\newblock \emph{Artificial Intelligence}, 132\penalty0 (2):\penalty0 121--150,
  2001.

\bibitem[Serra et~al.(2018)Serra, Suris, Miron, and
  Karatzoglou]{serra2018overcoming}
Joan Serra, Didac Suris, Marius Miron, and Alexandros Karatzoglou.
\newblock Overcoming catastrophic forgetting with hard attention to the task.
\newblock In \emph{International Conference on Machine Learning}, pages
  4548--4557. PMLR, 2018.

\bibitem[Shekhovtsov and Yanush(2021)]{shekhovtsov2021reintroducing}
Alexander Shekhovtsov and Viktor Yanush.
\newblock Reintroducing straight-through estimators as principled methods for
  stochastic binary networks.
\newblock In \emph{DAGM German Conference on Pattern Recognition}, pages
  111--126. Springer, 2021.

\bibitem[Shi et~al.(2022)Shi, Zhou, Hwang, Titsias, and
  Mackey]{shi2022gradient}
Jiaxin Shi, Yuhao Zhou, Jessica Hwang, Michalis Titsias, and Lester Mackey.
\newblock Gradient estimation with discrete {S}tein operators.
\newblock \emph{Advances in Neural Information Processing Systems},
  35:\penalty0 25829--25841, 2022.

\bibitem[Srinivas et~al.(2017)Srinivas, Subramanya, and
  Venkatesh~Babu]{srinivas2017training}
Suraj Srinivas, Akshayvarun Subramanya, and Radhakrishnan Venkatesh~Babu.
\newblock Training sparse neural networks.
\newblock In \emph{Conference on Computer Vision and Pattern Recognition
  Workshops}, pages 138--145, 2017.

\bibitem[Sutton(1986)]{sutton1986two}
Richard Sutton.
\newblock Two problems with backpropagation and other steepest descent learning
  procedures for networks.
\newblock In \emph{Proceedings of the Annual Conference of the Cognitive
  Science Society}, pages 823--832, 1986.

\bibitem[Szegedy et~al.(2016)Szegedy, Vanhoucke, Ioffe, Shlens, and
  Wojna]{szegedy2016rethinking}
Christian Szegedy, Vincent Vanhoucke, Sergey Ioffe, Jon Shlens, and Zbigniew
  Wojna.
\newblock Rethinking the inception architecture for computer vision.
\newblock In \emph{Conference on Computer Vision and Pattern Recognition},
  pages 2818--2826, 2016.

\bibitem[Szeliski(2022)]{szeliski2022computer}
Richard Szeliski.
\newblock \emph{Computer vision: algorithms and applications}.
\newblock Springer Nature, 2022.

\bibitem[Szeliski et~al.(2008)Szeliski, Zabih, Scharstein, Veksler, Kolmogorov,
  Agarwala, Tappen, and Rother]{szeliski2008comparative}
Richard Szeliski, Ramin Zabih, Daniel Scharstein, Olga Veksler, Vladimir
  Kolmogorov, Aseem Agarwala, Marshall Tappen, and Carsten Rother.
\newblock A comparative study of energy minimization methods for markov random
  fields with smoothness-based priors.
\newblock \emph{IEEE Transactions on Pattern Analysis and Machine
  Intelligence}, 30\penalty0 (6):\penalty0 1068--1080, 2008.

\bibitem[Tavares(2008)]{tavares2008new}
Gabriel Tavares.
\newblock \emph{New algorithms for Quadratic Unconstrained Binary Optimization
  (QUBO) with applications in engineering and social sciences}.
\newblock Rutgers The State University of New Jersey, School of Graduate
  Studies, 2008.

\bibitem[Tibshirani(1996)]{tibshirani1996regression}
Robert Tibshirani.
\newblock Regression shrinkage and selection via the lasso.
\newblock \emph{Journal of the Royal Statistical Society: Series B
  (Methodological)}, 58\penalty0 (1):\penalty0 267--288, 1996.

\bibitem[Titsias and Shi(2022)]{titsias2022double}
Michalis Titsias and Jiaxin Shi.
\newblock Double control variates for gradient estimation in discrete latent
  variable models.
\newblock In \emph{International Conference on Artificial Intelligence and
  Statistics}, pages 6134--6151. PMLR, 2022.

\bibitem[Tucker et~al.(2017)Tucker, Mnih, Maddison, Lawson, and
  Sohl-Dickstein]{tucker2017rebar}
George Tucker, Andriy Mnih, Chris~J Maddison, John Lawson, and Jascha
  Sohl-Dickstein.
\newblock {REBAR}: low-variance, unbiased gradient estimates for discrete
  latent variable models.
\newblock \emph{Advances in Neural Information Processing Systems}, 30, 2017.

\bibitem[Van~Laarhoven et~al.(1987)Van~Laarhoven, Aarts, van Laarhoven, and
  Aarts]{van1987simulated}
Peter~JM Van~Laarhoven, Emile~HL Aarts, Peter~JM van Laarhoven, and Emile~HL
  Aarts.
\newblock \emph{Simulated annealing}.
\newblock Springer, 1987.

\bibitem[Van~Soelen and Sheppard(2019)]{van2019using}
Ryan Van~Soelen and John~W Sheppard.
\newblock Using winning lottery tickets in transfer learning for convolutional
  neural networks.
\newblock In \emph{International Joint Conference on Neural Networks}, pages
  1--8. IEEE, 2019.

\bibitem[Vaswani et~al.(2017)Vaswani, Shazeer, Parmar, Uszkoreit, Jones, Gomez,
  Kaiser, and Polosukhin]{vaswani2017attention}
Ashish Vaswani, Noam Shazeer, Niki Parmar, Jakob Uszkoreit, Llion Jones,
  Aidan~N Gomez, {\L}ukasz Kaiser, and Illia Polosukhin.
\newblock Attention is all you need.
\newblock \emph{Advances in Neural Information Processing Systems}, 30, 2017.

\bibitem[Williams(1992)]{williams1992simple}
Ronald~J Williams.
\newblock Simple statistical gradient-following algorithms for connectionist
  reinforcement learning.
\newblock \emph{Machine Learning}, 8:\penalty0 229--256, 1992.

\bibitem[Williamson and Shmoys(2011)]{williamson2011design}
David~P Williamson and David~B Shmoys.
\newblock \emph{The design of approximation algorithms}.
\newblock Cambridge university press, 2011.

\bibitem[Xu et~al.(2019)Xu, Quiroz, Kohn, and Sisson]{xu2019variance}
Ming Xu, Matias Quiroz, Robert Kohn, and Scott~A Sisson.
\newblock Variance reduction properties of the reparameterization trick.
\newblock In \emph{International Conference on Artificial Intelligence and
  Statistics}, pages 2711--2720. PMLR, 2019.

\bibitem[Yin and Zhou(2018)]{yin2018arm}
Mingzhang Yin and Mingyuan Zhou.
\newblock {ARM}: augment-{REINFORCE}-merge gradient for stochastic binary
  networks.
\newblock \emph{arXiv:1807.11143}, 2018.

\bibitem[Yin et~al.(2019)Yin, Yue, and Zhou]{yin2019arsm}
Mingzhang Yin, Yuguang Yue, and Mingyuan Zhou.
\newblock {ARSM}: augment-{REINFORCE}-swap-merge estimator for gradient
  backpropagation through categorical variables.
\newblock In \emph{International Conference on Machine Learning}, pages
  7095--7104. PMLR, 2019.

\bibitem[Yin et~al.(2020)Yin, Ho, Yan, Qian, and Zhou]{yin2020probabilistic}
Mingzhang Yin, Nhat Ho, Bowei Yan, Xiaoning Qian, and Mingyuan Zhou.
\newblock Probabilistic best subset selection via gradient-based optimization.
\newblock \emph{arXiv:2006.06448}, 2020.

\bibitem[Yuan et~al.(2020)Yuan, Savarese, and Maire]{yuan2020growing}
Xin Yuan, Pedro Savarese, and Michael Maire.
\newblock Growing efficient deep networks by structured continuous
  sparsification.
\newblock \emph{arXiv:2007.15353}, 2020.

\bibitem[Zhou et~al.(2019)Zhou, Lan, Liu, and Yosinski]{zhou2019deconstructing}
Hattie Zhou, Janice Lan, Rosanne Liu, and Jason Yosinski.
\newblock Deconstructing lottery tickets: zeros, signs, and the supermask.
\newblock \emph{Advances in Neural Information Processing Systems}, 32, 2019.

\bibitem[Zhou et~al.(2021)Zhou, Zhang, Xu, and Zhang]{zhou2021effective}
Xiao Zhou, Weizhong Zhang, Hang Xu, and Tong Zhang.
\newblock Effective sparsification of neural networks with global sparsity
  constraint.
\newblock In \emph{Conference on Computer Vision and Pattern Recognition},
  pages 3599--3608, 2021.

\bibitem[Zhu and Gupta(2017)]{zhu2017prune}
Michael Zhu and Suyog Gupta.
\newblock To prune, or not to prune: exploring the efficacy of pruning for
  model compression.
\newblock \emph{arXiv:1710.01878}, 2017.

\end{thebibliography}

\end{document}